\documentclass[11pt]{article}
\usepackage[utf8]{inputenc}
\usepackage[round]{natbib}
\RequirePackage[OT1]{fontenc}
\RequirePackage{amsthm,amsmath}
\allowdisplaybreaks[4]
\usepackage{amssymb}
\usepackage{fullpage}
\usepackage{amsthm}
\usepackage{diagbox}
\usepackage{enumerate}
\usepackage{bbm}
\usepackage{bm}
\usepackage{float}
\usepackage{graphicx}
\usepackage{xr-hyper}
\usepackage{hyperref}[]
\hypersetup{
	colorlinks=true,
	linkcolor=blue,
	filecolor=magenta,      
	urlcolor=cyan,
	citecolor=blue,
}
\usepackage{cleveref}
\usepackage{subcaption}
\usepackage{soul}
\usepackage{tabu}
\setstcolor{Mulberry}

\newcommand{\colorlinks}[2]{{\hypersetup{allcolors=#1}#2}}

\usepackage{algorithm,algpseudocode}
\algnewcommand{\Inputs}[1]{%
  \State \textbf{Input:}
  \Statex \hspace*{\algorithmicindent}\parbox[t]{.8\linewidth}{\raggedright #1}
}
\algnewcommand{\Initialize}[1]{%
  \State \textbf{Initialization:}
  \Statex \hspace*{\algorithmicindent}\parbox[t]{.8\linewidth}{\raggedright #1}
}
\algnewcommand{\Outputs}[1]{%
  \State \textbf{Output:} #1
}

\algdef{SE}[SUBALG]{Indent}{EndIndent}{}{\algorithmicend\ }%
\algtext*{Indent}
\algtext*{EndIndent}
\usepackage{xurl}
\usepackage[dvipsnames]{xcolor}
\usepackage{multirow}
\usepackage[hmargin={0.9in, 0.9in}, vmargin={1in, 1in}]{geometry}

\usepackage{mathtools}
\usepackage{booktabs}
\usepackage{array}
\usepackage{enumitem,epsfig,natbib}
\usepackage{multirow}
\usepackage{multicol}
\newcolumntype{Z}{>{\setbox0=\hbox\bgroup}c<{\egroup}@{\hspace*{-\tabcolsep}}}

\theoremstyle{definition}
\newtheorem{thm}{Theorem}[section]
\newtheorem{ass}{Assumption}[section]
\newtheorem{pro}{Proposition}[section]

\newtheorem{defn}{Definition}[section]
\newtheorem{lem}{Lemma}[section]

\allowdisplaybreaks
\usepackage{setspace}
\usepackage{diagbox}
\usepackage{tabu}
\usepackage{authblk}
\setstretch{1.82} 
\DeclareMathOperator*{\argmin}{arg\,min}
\DeclareMathOperator*{\argmax}{arg\,max}
\DeclareMathOperator*{\loglog}{loglog}
\usepackage{titlesec}
\titlespacing*\section{0pt}{2pt plus 2pt minus 2pt}{0pt plus 2pt minus 2pt}
\titlespacing*\subsection{0pt}{2pt plus 2pt minus 2pt}{0pt plus 2pt minus 2pt}
\titlespacing*\subsubsection{0pt}{0pt plus 2pt minus 2pt}{0pt plus 2pt minus 2pt}



\newcommand{\bepsilon}{\boldsymbol{\varepsilon}}

\newcommand{\be}{\mathbb{E}}

\newcommand{\bp}{\mathbb{P}}



\newcommand{\tr}{\mbox{tr}}



\newcommand{\BE}{\begin{equation}}
\newcommand{\EE}{\end{equation}}
\newcommand{\BEqn}{\begin{eqnarray*}}
\newcommand{\EEqn}{\end{eqnarray*}}

\newcommand{\blind}{0}

\usepackage{xr}
\makeatletter
\newcommand*{\addFileDependency}[1]{
  \typeout{(#1)}
  \@addtofilelist{#1}
  \IfFileExists{#1}{}{\typeout{No file #1.}}
}
\makeatother

\newcommand*{\myexternaldocument}[1]{%
    \externaldocument{#1}%
    \addFileDependency{#1.tex}%
    \addFileDependency{#1.aux}%
}
\myexternaldocument{main}

\allowdisplaybreaks[3]
\begin{document}
\setlength{\abovedisplayskip}{2pt}
\setlength{\belowdisplayskip}{2pt}
\setlength{\abovedisplayshortskip}{1pt}
\setlength{\belowdisplayshortskip}{1pt}
\setlength\intextsep{4pt}

\if1\blind
{
\title{Supplementary Material -- Contextual Dynamic Pricing: Algorithms,\\[-1ex] Optimality, and Local Differential Privacy Constraints}	
	\author{
 	Zifeng Zhao$^{1}$, Feiyu Jiang$^{2}$, Yi Yu$^{3}$\footnote{1. University of Notre Dame; 2. Fudan University; 3. University of Warwick}	
        }
	\date{}	
	\maketitle
} \fi

\if0\blind
{
\title{Supplementary Material -- Contextual Dynamic Pricing: Algorithms,\\[-1ex] Optimality, and Local Differential Privacy Constraints}	
	\author{}
	\date{}
	\maketitle
} \fi

The supplementary material is organized as follows. \Cref{sec-literature} provides an in-depth literature review. Section \ref{sec:add_num_supplement} provides additional results for simulation and real data application. \Cref{sec-supCB} provides details of the supCB algorithm for non-private dynamic pricing. \textcolor{black}{\Cref{sec:adversarial_supplement} provides details of the UCB algorithm for dynamic pricing under adversarial contexts.} \Cref{sec:supplement_proof_nonprivate} provides technical details for theoretical guarantees of the supCB and ETC algorithms in \Cref{sec-without-LDP} of the main text. \Cref{sec:supplement_proof_private} provides technical details for theoretical guarantees of the ETC-LDP and \textcolor{black}{ETC-LDP-Mixed algorithm} in \Cref{sec-ldp} of the main text. \textcolor{black}{An ETC-LDP-Approx algorithm is further proposed and analyzed for dyanmic pricing under $(\epsilon,\Delta)$-LDP.} \Cref{sec:supplement_proof_lb} provides technical details for the minimax lower bound results in \Cref{sec-optimality} of the main text.

\section{Literature review}\label{sec-literature}
Three streams of literature are closely related to our paper: dynamic pricing with demand learning, contextual multi-armed bandit, and differential privacy for online learning.

\subsection{Dynamic pricing with demand learning} 
Due to the increasing popularity of online retailing, dynamic pricing with demand learning has been extensively studied across statistics, machine learning, and operations research. Early works in dynamic pricing mainly focus on the context-free setting, see e.g.\ \cite{besbes2009dynamic} and \cite{broder2012dynamic}. In particular, \cite{broder2012dynamic} show that there exist two regimes in context-free dynamic pricing, the well-separated case where demand curves do not overlap at any price and the general case where there exists an {uninformative price}. \cite{broder2012dynamic} show that the optimal regret for the well-separated case is $O(\log(T))$ and for the general case is $O(\sqrt{T})$.  

For parametric contextual dynamic pricing, \cite{qiang2016dynamic} consider the well-separated case with a linear demand function and derive a regret upper bound of order $O(d\log (T))$, which is further corroborated by \cite{javanmard2019dynamic}, with additional results in high-dimensional covariates cases. For the general case, \cite{ban2021personalized} adapt the MLE-cycle algorithm proposed in \cite{broder2012dynamic} and achieve a regret of order $\widetilde{O}(d\sqrt{T})$ assuming GLM demand functions.  Allowing for adversarial contexts, \cite{wang2021dynamic} propose a UCB algorithm and derive the same $\widetilde{O}(d\sqrt{T})$ rate for the regret under GLM demand functions assumptions. 

Beyond parametric models, \cite{chen2021nonparametric} study contextual dynamic pricing under nonparametric demand models, while \cite{wang2023online}, \cite{luo2024distribution} and \cite{fan2024policy} study semiparametric cases. In particular, \cite{fan2024policy} show that under a linear valuation model and an infinitely differentiable market noise function, the optimal regret is of order $\widetilde O(\sqrt{dT})$, which in some sense complements our result under the parametric GLM model. Contextual dynamic pricing has also been studied from a wide range of aspects, such as consumer strategic behavior~\citep{liu2023contextual}, non-stationarity~\citep{zhao2023high}, fairness constraints~\citep{chen2023utility}, and differential privacy constraints~(see later for more detailed discussion).

\subsection{Contextual multi-armed bandit}\label{subsec-lit_MAB}
Most contextual MAB literature focuses on the linear contextual bandit setting where the reward assumes a linear model with $r_{t,a}=x_{t,a}^{\top}\theta+\epsilon_t$, where $x_{t,a}\in \mathcal{X}$ is the context associated with arm $a$, $\theta$ is the unknown parameter and $\epsilon_t$ is the error process. It is well-known that the optimal regret is of order $\widetilde O(d\sqrt{T})$ when the arm space $\mathcal{X}$ is a compact (but otherwise arbitrary) subset of $\mathbb R^d$ and can be achieved via UCB~\citep{dani2008stochastic,abbasi2011improved}. In contrast, for the setting where the number of available arms $K$ in each round is finite, the regret can be further improved to $\widetilde O(\sqrt{dT\log(K)})$~\citep{auer2002using,chu2011contextual}. However, such a regret bound can only be achieved via a more involved confidence bound-type algorithm. This type of algorithm is first proposed by \cite{auer2002using} under the name sup-LinRel and is extended to the generalized linear bandit in \cite{li2017provably}, which inspires the design of the proposed supCB algorithm in our paper. Note that contextual dynamic pricing can in some sense be viewed as a contextual MAB with infinite number of arms, as the price can take any value in $[l,u].$ However, the key insight of our work is that, despite being uncountable, this action space is of dimension \textit{one} regardless of the contextual dimensionality $d$, and thus can be well-approximated by a contextual MAB with \textit{reasonably many} arms via discretization and achieves an optimal regret of order $\widetilde O(\sqrt{dT})$.

\subsection{Differential privacy for online learning}
The most prevalent privacy concept used in both academia and industry is the notion of differential privacy~(DP) proposed by \cite{dwork2006calibrating}. In the DP literature, there are two widely used privacy schemes known as central differential privacy~\citep[CDP,][]{dwork2006calibrating} and local differential privacy~\citep[LDP,][]{evfimievski2003limiting}.

CDP assumes the existence of a trusted central handler~(e.g.\ the firm) that can securely store and process raw consumer information. This may raise concerns about data vulnerability, particularly the risk of attacks on the central server, leading to unauthorized access to sensitive customer information. In contrast, LDP offers a decentralized approach that prioritizes individual privacy. Instead of relying on a central authority to store and process data, LDP allows each customer to locally randomize their data before sharing them with the firm or any other party.

Due to the practical importance of privacy, there is a recent surge in designing online learning algorithms with DP guarantees for contextual MAB and contextual dynamic pricing. \cite{shariff2018differentially} study linear contextual bandits under CDP, while \cite{zheng2020locally} explore them under LDP. Based on a covariate-diversity assumption, \cite{han2021generalized} propose an exploration-free algorithm for generalized linear bandits under LDP. However, this assumption in general does not hold under dynamic pricing as the adaptively chosen price is part of the covariate.

\cite{chen2022privacy} examine dynamic pricing under a GLM demand model with CDP constraints and \cite{chen2022differential}  study nonparametric dynamic pricing under both LDP and CDP constraints. Under the offline setting, where all historical data are accessible for learning, \cite{lei2023privacy} study contextual dynamic pricing for a linear demand model under LDP, focusing primarily on preserving the privacy of contexts. Leveraging the structure of the linear model, they propose a private ordinary least squares estimator, which cannot be used for GLM. In this paper, we consider contextual dynamic pricing with a GLM demand function under the online setting and propose a stochastic gradient descent based algorithm that offers LDP guarantees for both consumer responses and contexts.

\section{Additional numerical results}\label{sec:add_num_supplement}

\subsection{Additional results for ETC and supCB with known $T$}\label{subsubsec:SupCB_ETC}
This section provides additional numerical results for the regret and computational time of ETC and supCB with known $T$ as in \Cref{subsubsec:ETC} of the main text.

\noindent\textbf{Implementation details of supCB}. Following \Cref{thm:ucb_regret}, we set the number of stages as $S=\lfloor \log_2 (T)\rfloor$, the length of price experiments as $\tau=\bigl\lceil\sqrt{dT}\,\bigr\rceil$ and the discretization rate as $K=\bigl\lceil\sqrt{T/d}/\log (T)\,\bigr\rceil$. We set $\alpha=\loglog (T)\cdot \sqrt{\log (3T^{1.5}KS)}$. Here, a $\loglog (T)$ factor is included in $\alpha$ as in practice $(\sigma, M_{\psi2}, \kappa)$ is unknown. This at most inflates the regret of supCB by $\loglog (T).$ 

\noindent\textbf{Results}. \Cref{fig:SupCB1} reports the performance of supCB under (S1). The estimated linear regression based on \eqref{eq:linear_reg} with an offset term of $1.5\loglog (T)$ gives $(\beta_d,\beta_T)=(0.50,0.49)$, which provides numerical evidence for Theorem \ref{thm:ucb_regret}. However, though its regret scales sublinearly with $(d,T)$, supCB records an expected regret that is overall 3 to 4 times higher than ETC (see \Cref{fig:ETC1} of the main text). The performance of ETC and supCB under (S2) is reported in \Cref{fig:ETC2} and \Cref{fig:SupCB2}, where similar phenomenon is observed. Therefore, in terms of statistical efficiency, ETC is preferred over supCB. 

\begin{figure}[ht]
\vspace{-4mm}
\hspace*{-6mm}
    \begin{subfigure}{0.32\textwidth}
	\includegraphics[angle=270, width=1.1\textwidth]{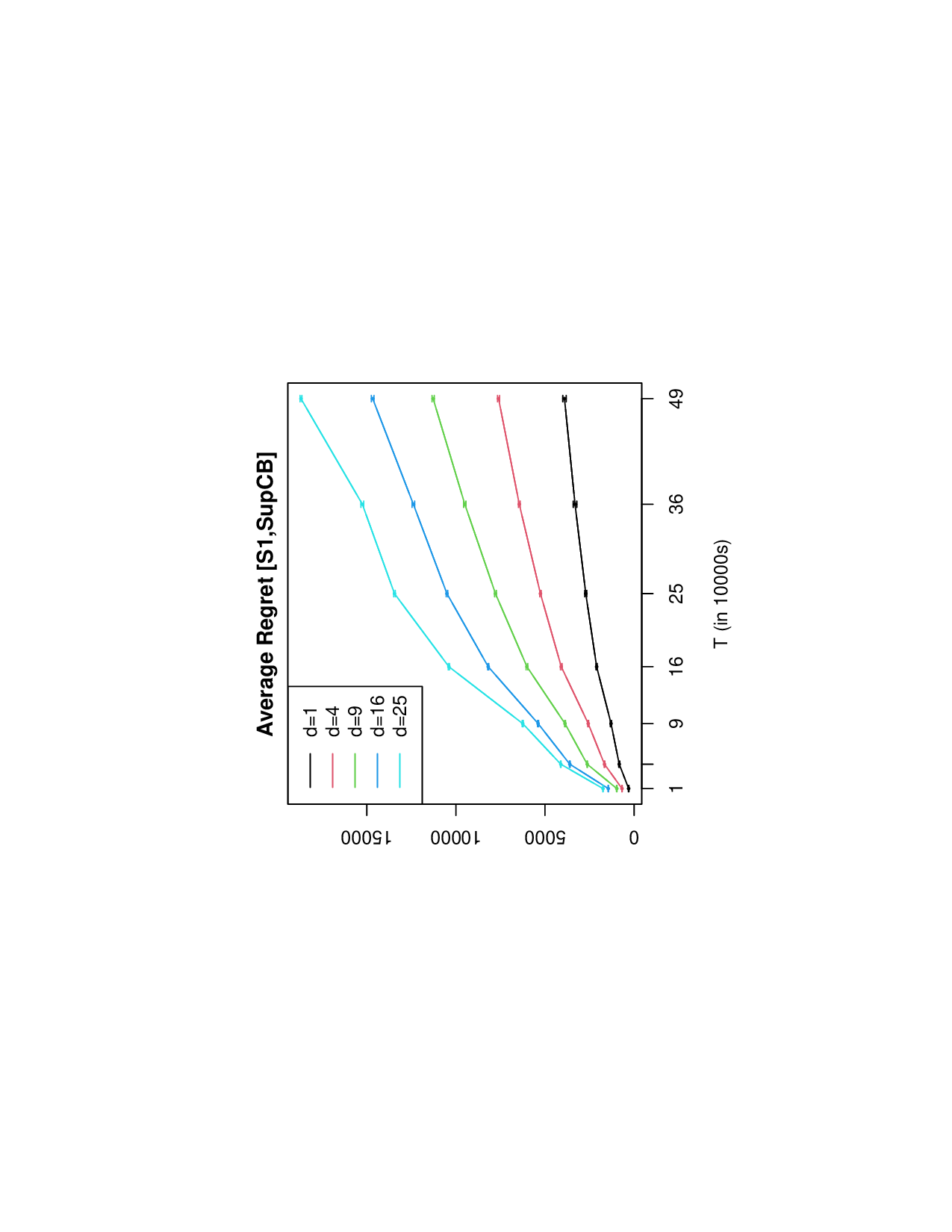}
	\vspace{-0.5cm}
    \end{subfigure}
    ~
    \begin{subfigure}{0.32\textwidth}
	\includegraphics[angle=270, width=1.1\textwidth]{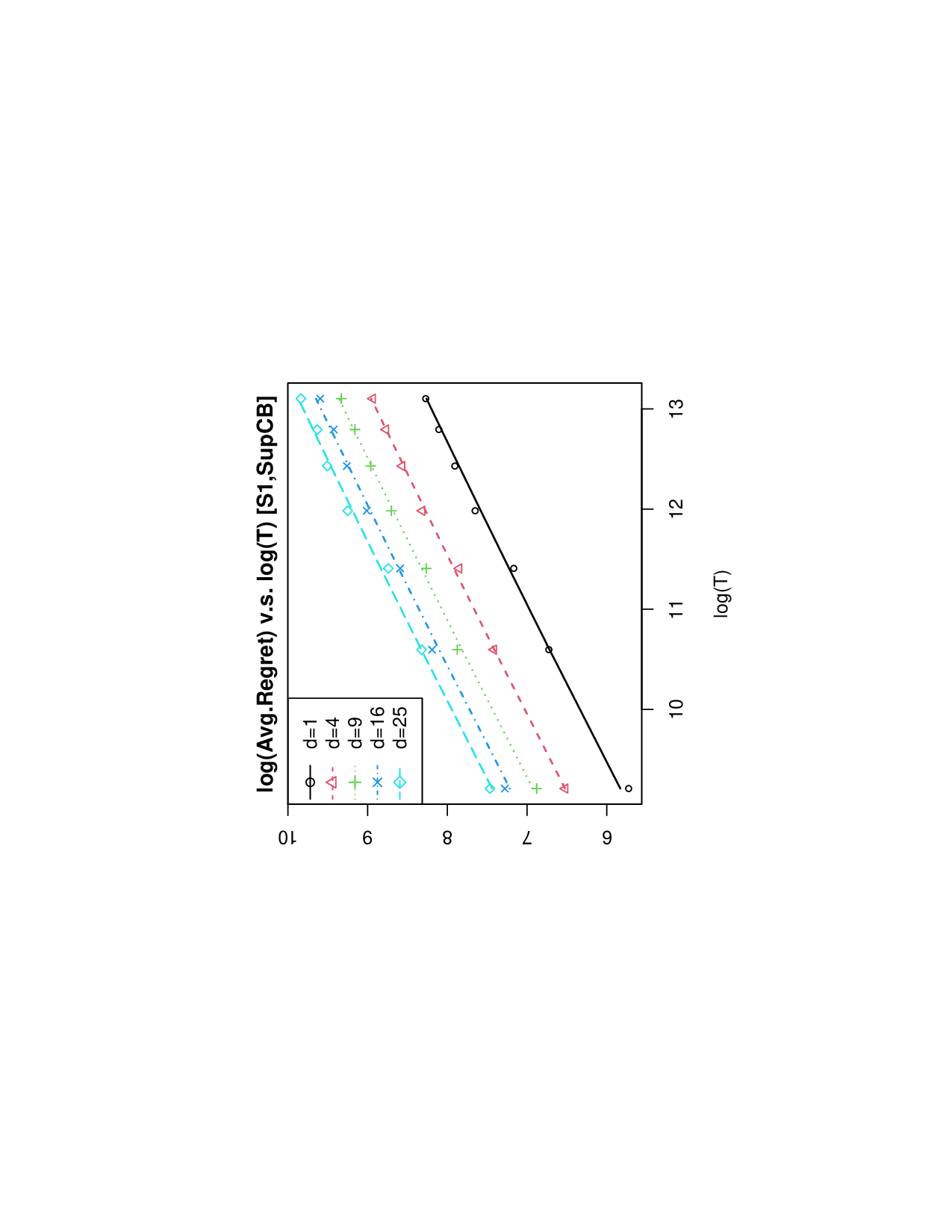}
	\vspace{-0.5cm}
    \end{subfigure}
    ~
    \begin{subfigure}{0.32\textwidth}
	\includegraphics[angle=270, width=1.1\textwidth]{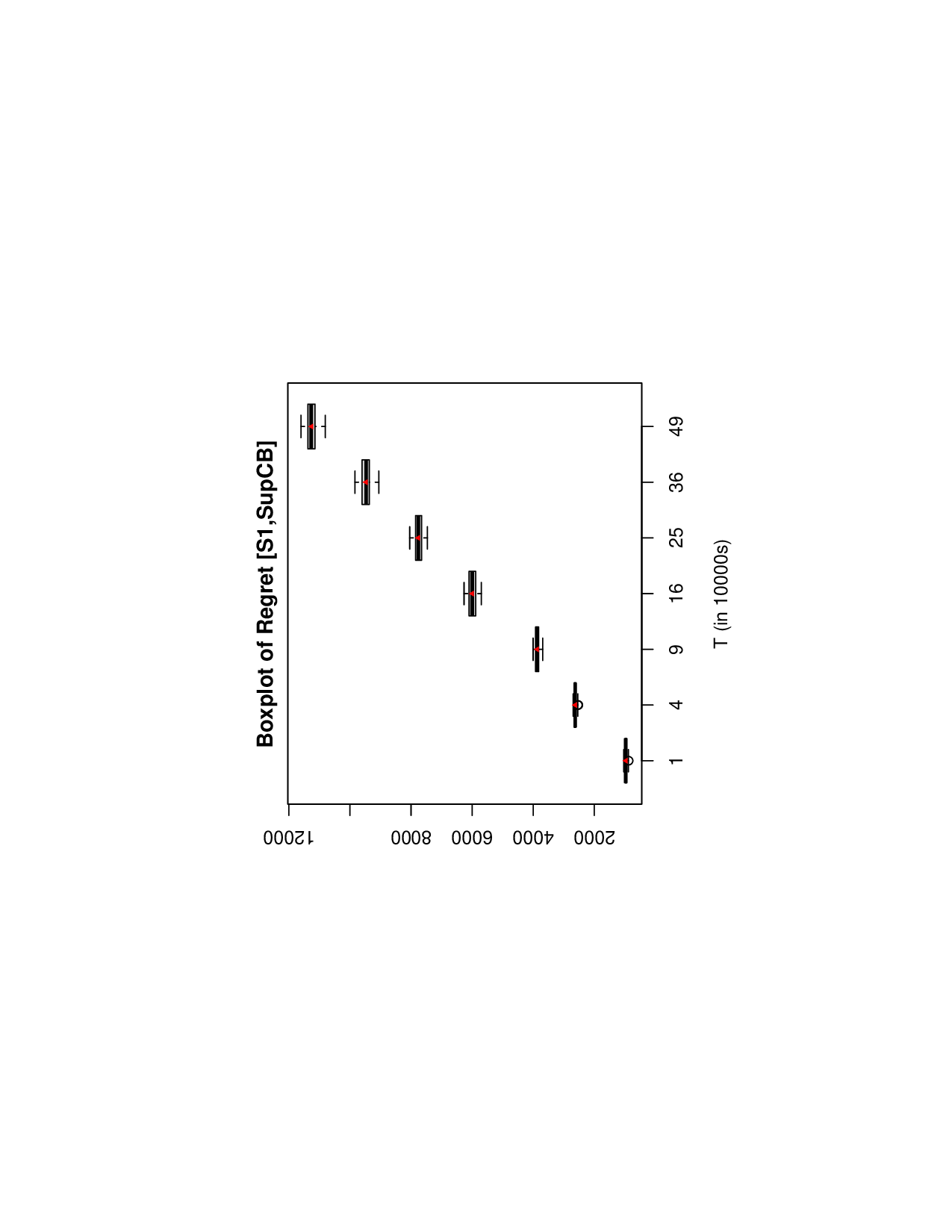}
	\vspace{-0.5cm}
    \end{subfigure}
    \caption{supCB under (S1). [Left]:  Mean regret (with C.I.) under different $(d,T)$. [Middle]: Mean regret (in log scale) with fitted regression lines. [Right]: Boxplot of regrets at different $T$ ($d=9$).}
    \label{fig:SupCB1}
\end{figure}

\begin{figure}[H]
\vspace{-4mm}
\hspace*{-6mm}
    \begin{subfigure}{0.32\textwidth}
	\includegraphics[angle=270, width=1.1\textwidth]{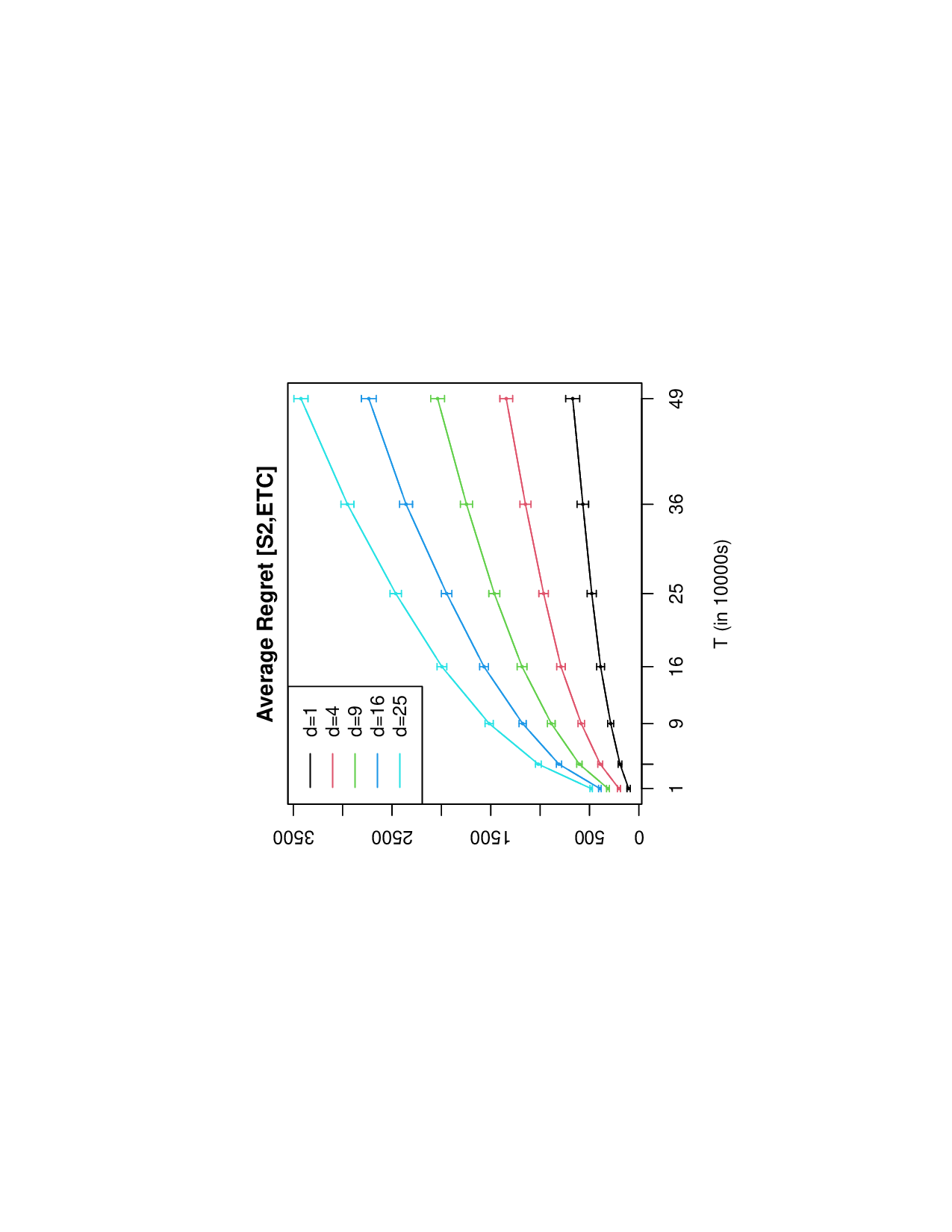}
	\vspace{-0.5cm}
    \end{subfigure}
    ~
    \begin{subfigure}{0.32\textwidth}
	\includegraphics[angle=270, width=1.1\textwidth]{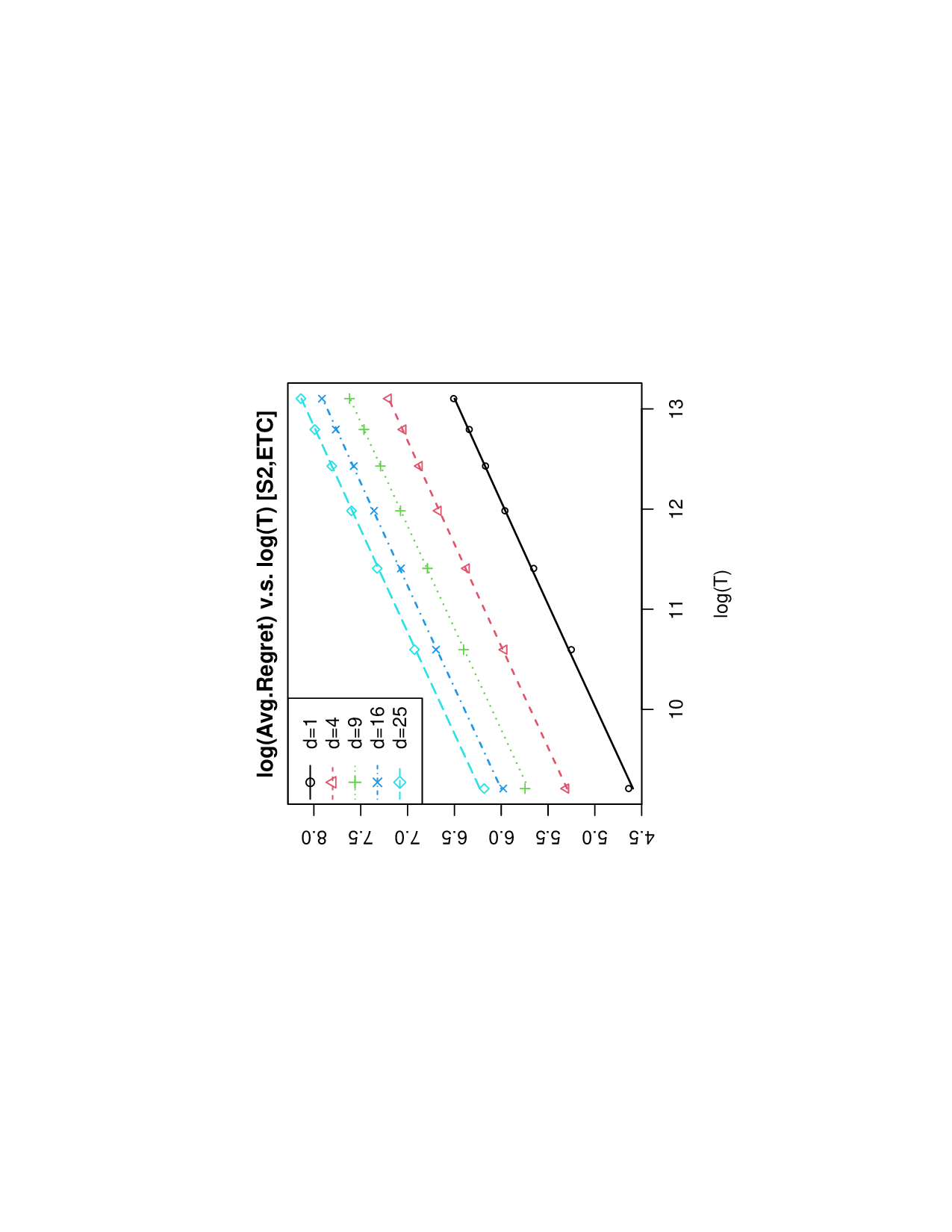}
	\vspace{-0.5cm}
    \end{subfigure}
    ~
    \begin{subfigure}{0.32\textwidth}
	\includegraphics[angle=270, width=1.1\textwidth]{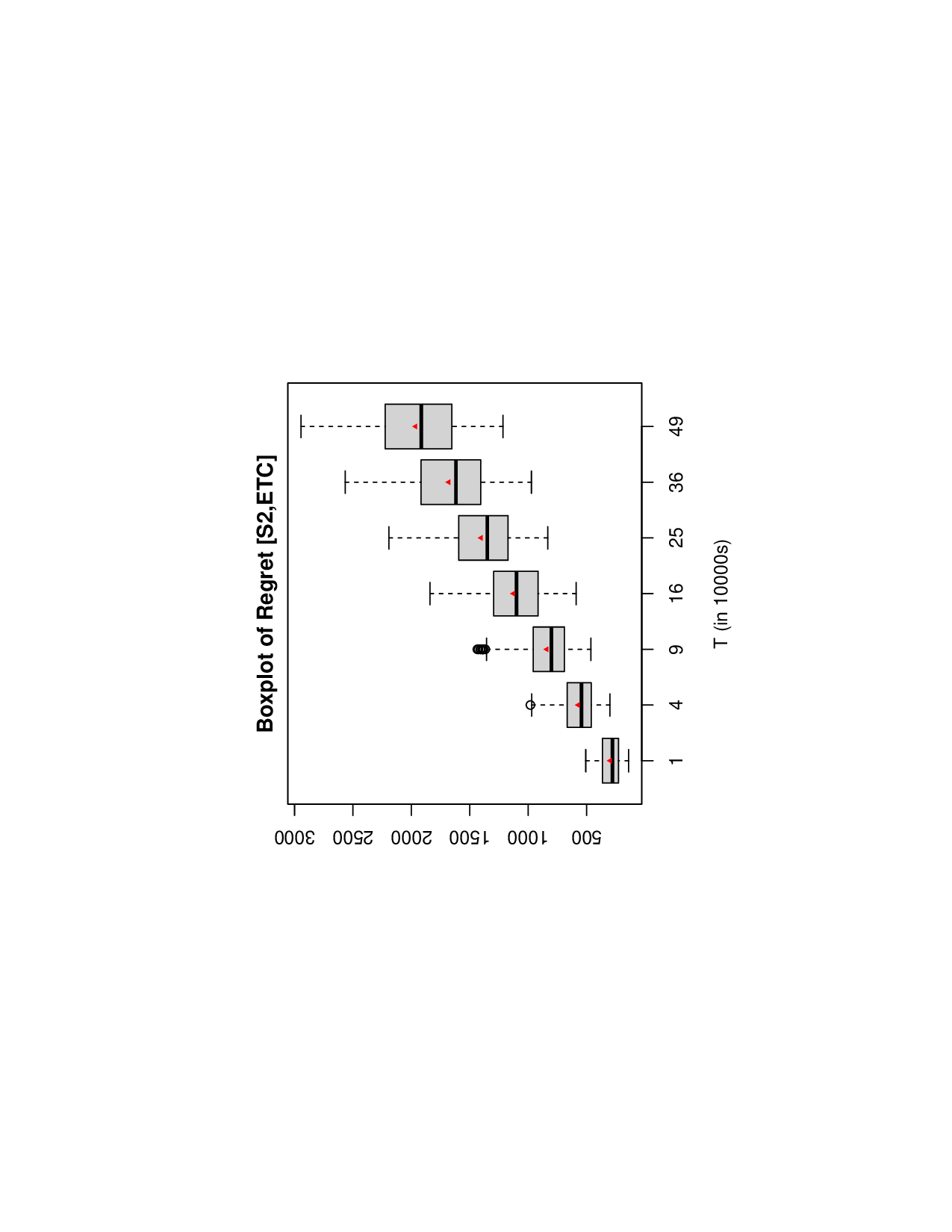}
	\vspace{-0.5cm}
    \end{subfigure}
    \caption{Performance of ETC under (S2) [Left]:  Mean regret (with C.I.) under different $(d,T)$. [Middle]: Mean regret (in log scale) with fitted linear regression lines ($(\beta_d,\beta_T)=(0.51,0.45)$). [Right]: Boxplot of regrets (based on 500 experiments) at different $T$ (with $d=9$).}
    \label{fig:ETC2}
\end{figure}

\begin{figure}[H]
\vspace{-5mm}
\hspace*{-6mm}
    \begin{subfigure}{0.32\textwidth}
	\includegraphics[angle=270, width=1.1\textwidth]{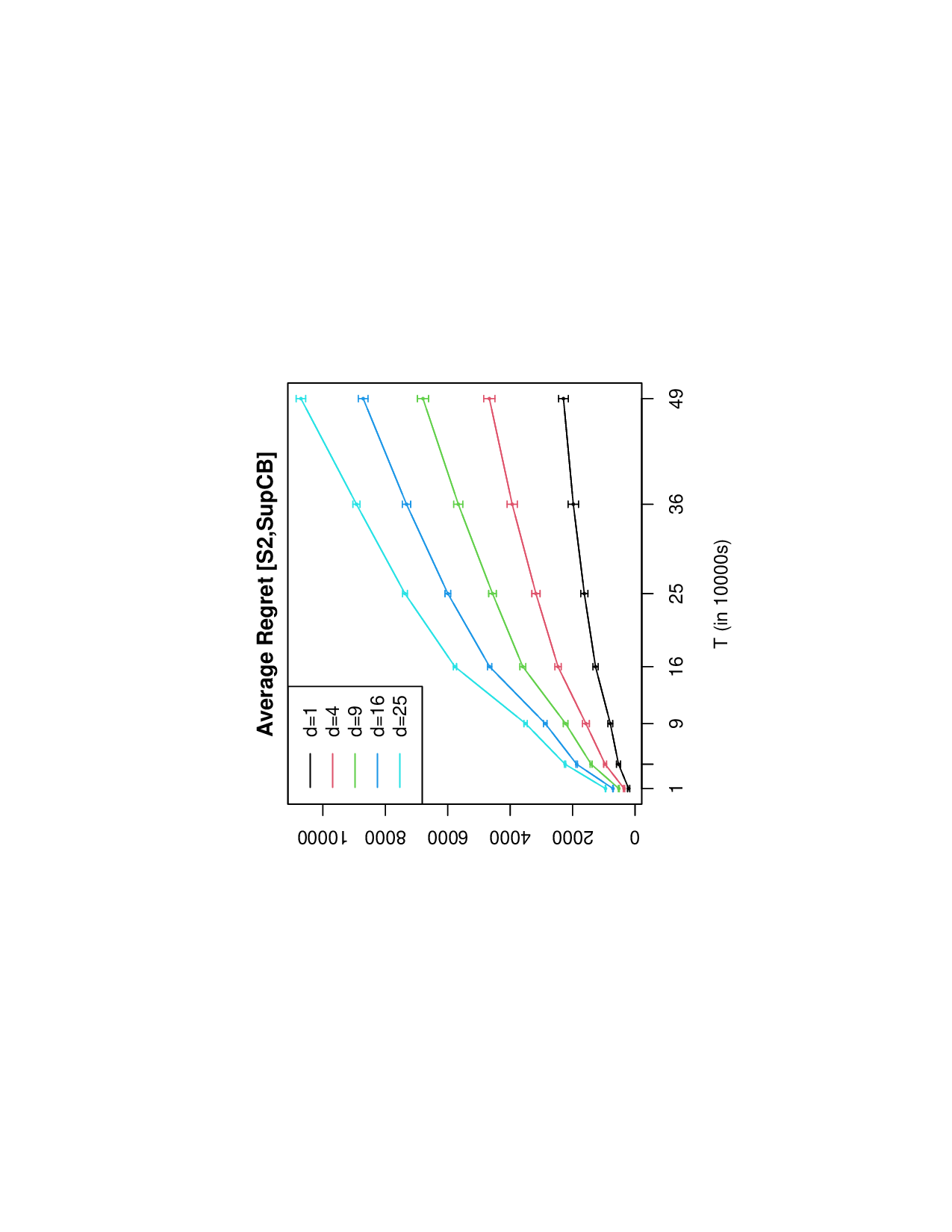}
	\vspace{-0.5cm}
    \end{subfigure}
    ~
    \begin{subfigure}{0.32\textwidth}
	\includegraphics[angle=270, width=1.1\textwidth]{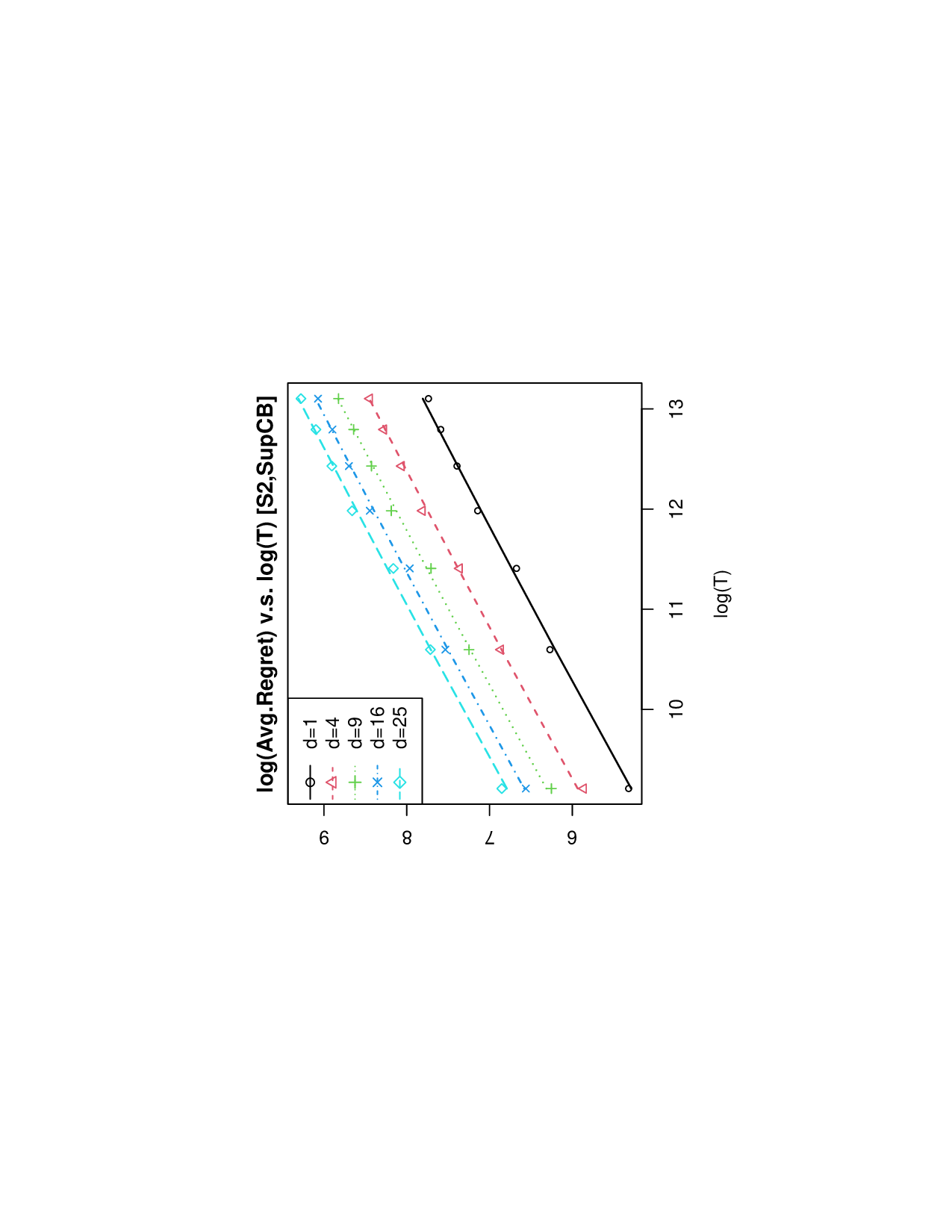}
	\vspace{-0.5cm}
    \end{subfigure}
    ~
    \begin{subfigure}{0.32\textwidth}
	\includegraphics[angle=270, width=1.1\textwidth]{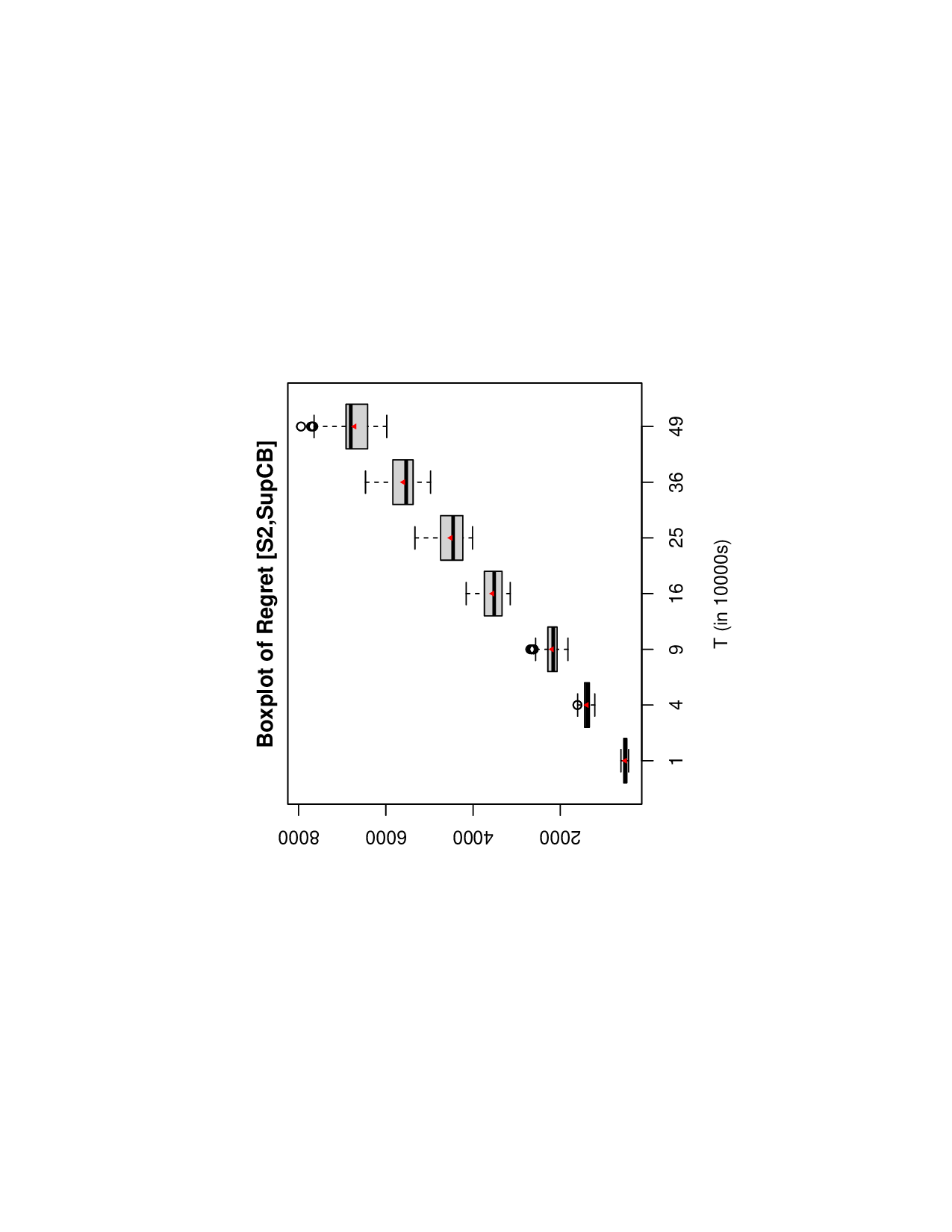}
	\vspace{-0.5cm}
    \end{subfigure}
    \caption{Performance of supCB under (S2) [Left]:  Mean regret (with C.I.) under different $(d,T)$. [Middle]: Mean regret (in log scale) with fitted linear regression lines ($(\beta_d,\beta_T)=(0.47,0.51)$). [Right]: Boxplot of regrets (based on 500 experiments) at different $T$ (with $d=9$).}
    \label{fig:SupCB2}
\end{figure}

\noindent\textbf{Computation time}. \Cref{fig:timeS1} reports the mean computation time of ETC and supCB at different $(T,d)$ computed over 500 experiments under (S1). Overall, conducting ETC is faster than supCB by around 10$\sim$100 times~(with an average of 54 times), where the improvement is larger under larger dimension $d.$ Therefore, to summarize, in practice, we recommend to use ETC as the default algorithm for dynamic pricing, as it is simple and achieves better statistical and computational efficiency.

\begin{figure}[H]
\vspace{-4mm}
\hspace*{-18mm}
\centering
    \begin{subfigure}{0.33\textwidth}
	\includegraphics[angle=270, width=1.1\textwidth]{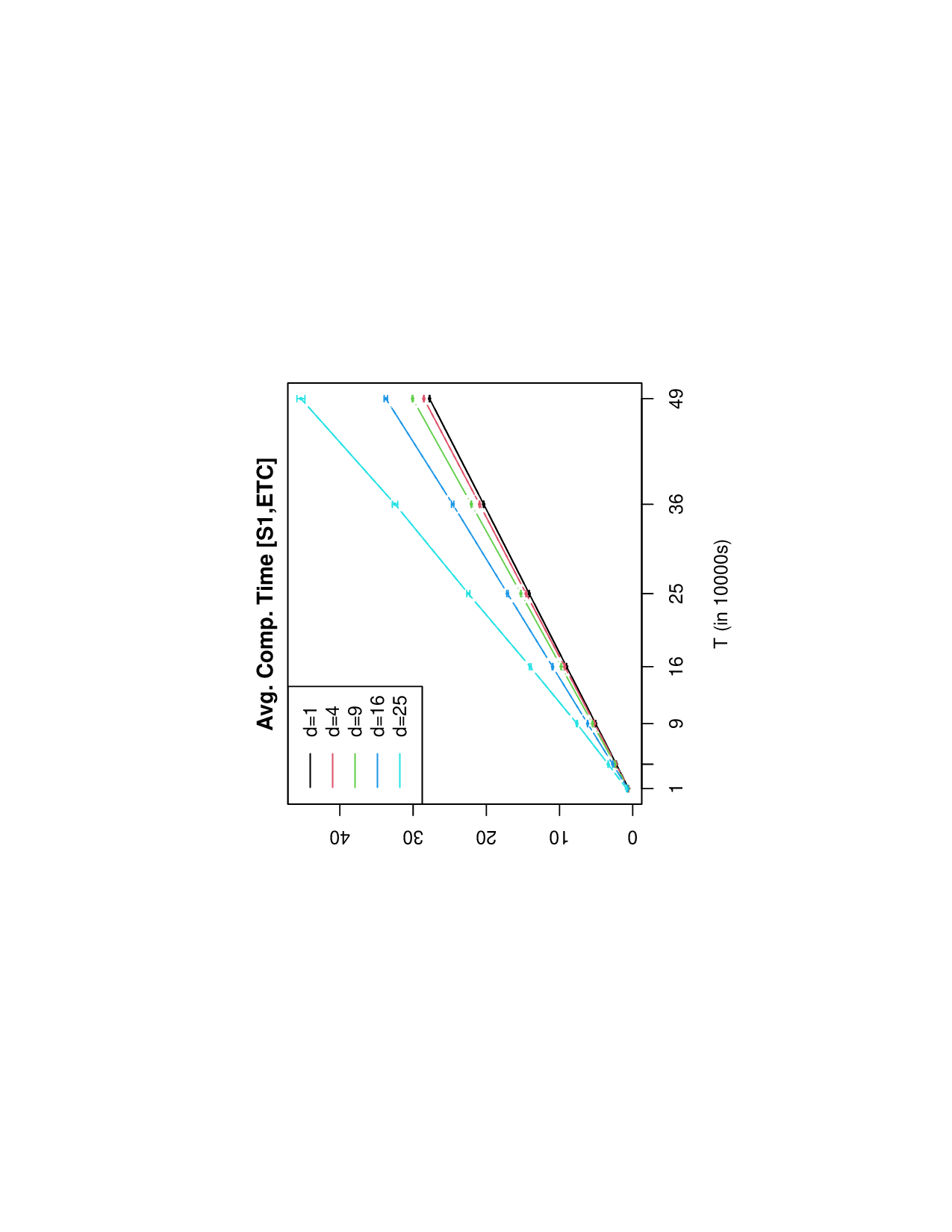}
	\vspace{-0.5cm}
    \end{subfigure}
    ~
    \begin{subfigure}{0.33\textwidth}
	\includegraphics[angle=270, width=1.1\textwidth]{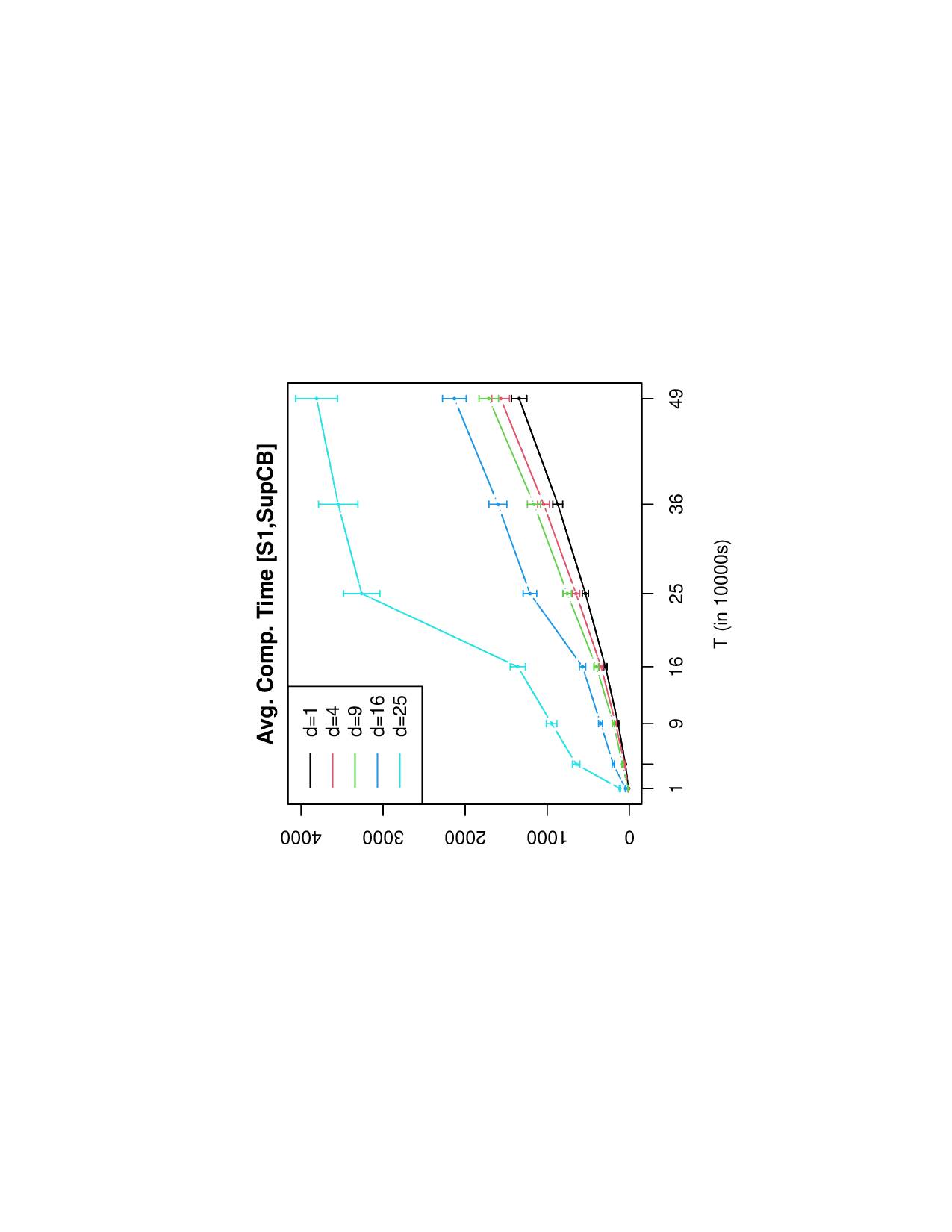}
	\vspace{-0.5cm}
    \end{subfigure}
    \caption{Average computation time (with C.I.) of ETC and supCB under (S1).}
    \label{fig:timeS1}
\end{figure}

\subsection{Additional results for experiments on synthetic data}\label{subsec:add_num}

\begin{table}[H]
\caption{Mean and sample SD (in parentheses) of regrets and computational time (in seconds) over 500 experiments by ETC-Doubling and \textcolor{black}{\textit{modified} MLE-Cycle and \textit{modified} Semi-Myopic} under (S1) with unknown horizon $T=490000.$}
\centering
\small
\begin{tabu}{lccccc|lccccc}
  \hline\hline
 & \multicolumn{10}{c}{ETC-Doubling}  \\\hline
 $d$ &  1 & 4 & 9 & 16 & 25 & $d$ & 1 & 4 & 9 & 16 & 25\\ \hline
 Regret & 1048.8 & 1982.4 & 2897.6 & 3790.2 & 4679.2 & Time & 58.2 & 59.1 & 61.9 & 66.5 & 85.5 \\ 
 & (478.9) & (436.4) & (411.0) & (399.9) & (381.5) & & (0.6) & (1.0) & (0.5) & (1.6) & (5.5)\\ \hline
 & \multicolumn{10}{c}{Modified MLE-Cycle}  \\\hline
 $d$ &  1 & 4 & 9 & 16 & 25 & $d$ & 1 & 4 & 9 & 16 & 25\\ \hline
 Regret & 1148.8 & 2076.2 & 3060.3 & 4038.7 & 5004.4 & Time & 59.3 & 84.3 & 144.9 & 282.3 & 545.9 \\ 
 & (451.6) & (471.9) & (436.1) & (444.2) & (425.8) & & (2.5) & (2.6) & (6.9) & (21.5) & (71.7) \\ \hline
 \rowfont{\color{black}}
 & \multicolumn{10}{c}{Modified Semi-Myopic}  \\\hline
 \rowfont{\color{black}}
 $d$ &  1 & 4 & 9 & 16 & 25 & $d$ & 1 & 4 & 9 & 16 & 25\\ \hline
 \rowfont{\color{black}}
 Regret & 1350.0 & 2772.6 & 3884.2 & 4906.2 & 5872.6 & Time & 59.6 & 135.0 & 296.7 & 621.3 & 1218.1 \\ 
 \rowfont{\color{black}}
 & (639.6) & (797.3) & (657.1) & (644.8) & (683.5) & & (8.6) & (37.6) & (93.7) & (180.5) & (381.9) \\ 
 \hline\hline
\end{tabu}\label{tab:unknownT}
\end{table}

\begin{figure}[H]
\vspace{-3mm}
\hspace*{-6mm}
    \begin{subfigure}{0.32\textwidth}
	\includegraphics[angle=0, width=1.1\textwidth]{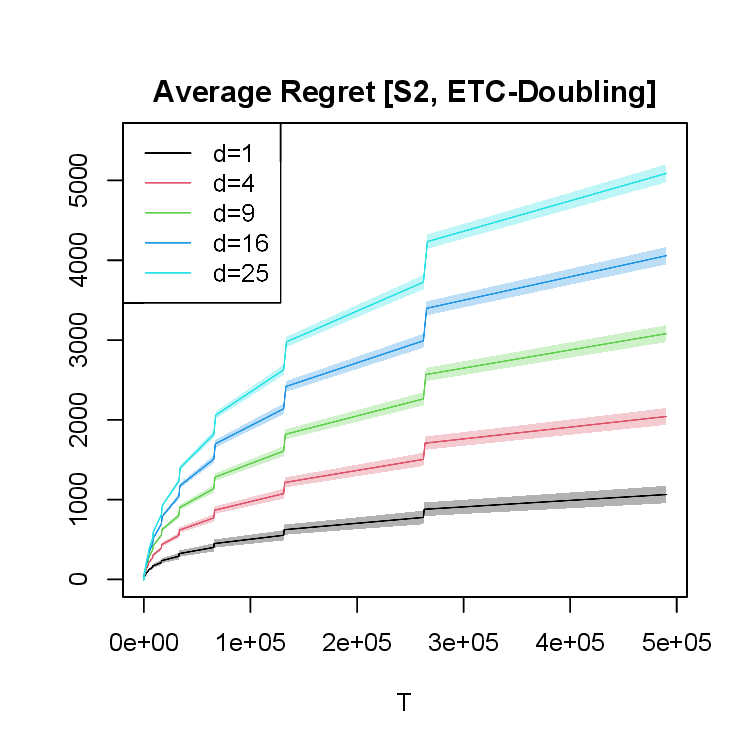}
	\vspace{-0.5cm}
    \end{subfigure}
    ~
    \begin{subfigure}{0.32\textwidth}
	\includegraphics[angle=0, width=1.1\textwidth]{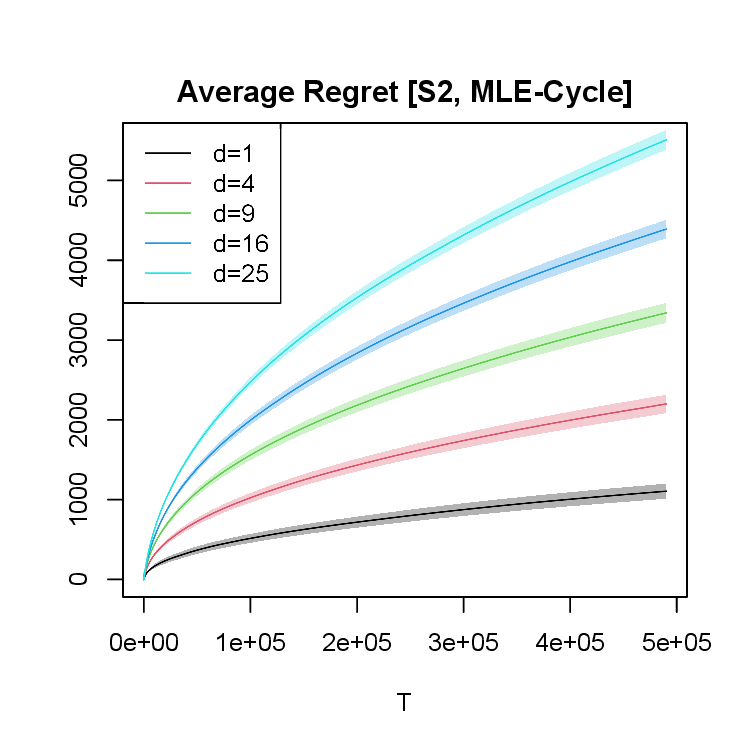}
	\vspace{-0.5cm}
    \end{subfigure}
    ~
    \begin{subfigure}{0.32\textwidth}
	\includegraphics[angle=0, width=1.1\textwidth]{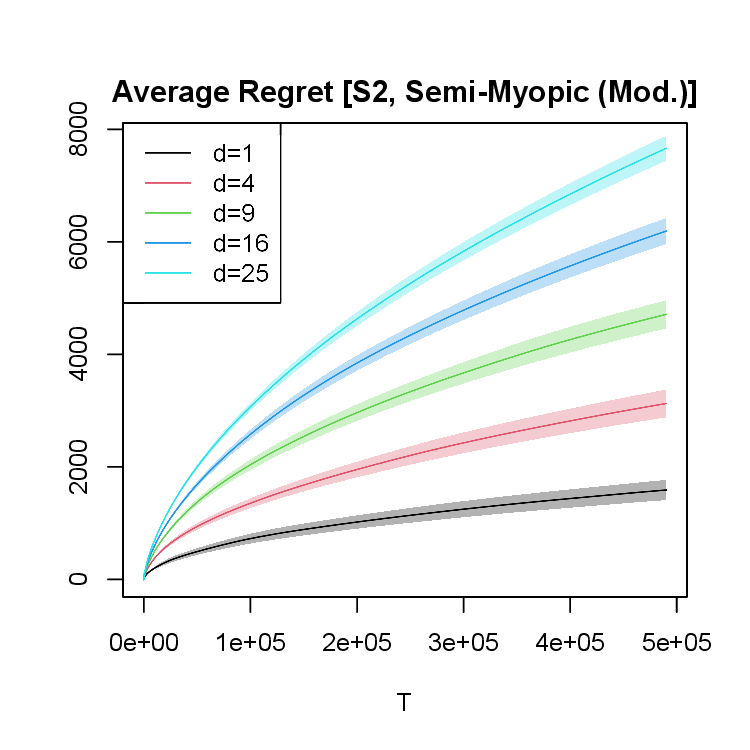}
	\vspace{-0.5cm}
    \end{subfigure}
    \caption{\textcolor{black}{(S2) Mean regret (with C.I.) of ETC-Doubling and \textit{modified} MLE-Cycle and \textit{modified} Semi-Myopic with unknown $T$.}}
    \label{fig:unknownT2}
\end{figure}

\begin{figure}[ht]
\hspace*{-6mm}
    \begin{subfigure}{0.32\textwidth}
	\includegraphics[angle=0, width=1.1\textwidth]{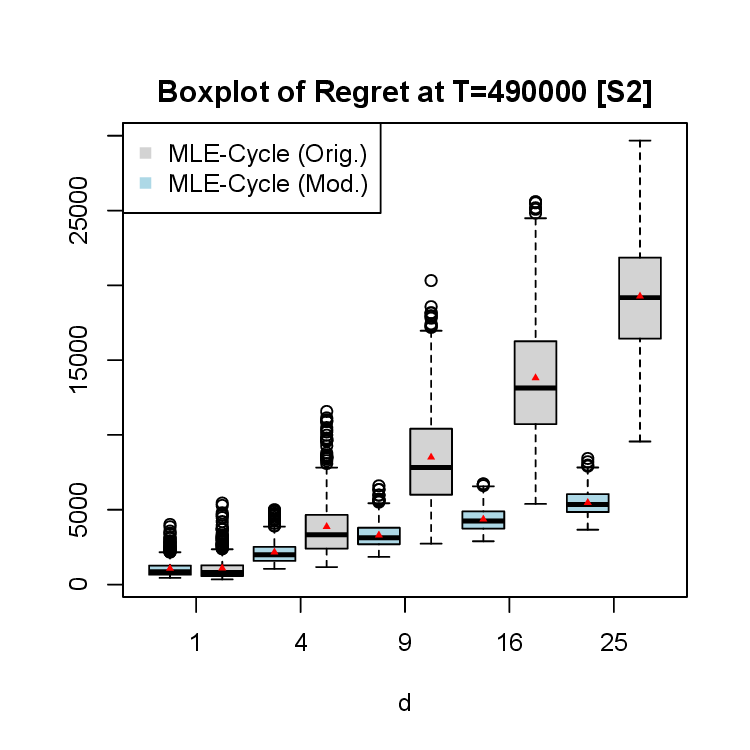}
	\vspace{-0.5cm}
    \end{subfigure}
    ~
    \begin{subfigure}{0.32\textwidth}
	\includegraphics[angle=0, width=1.1\textwidth]{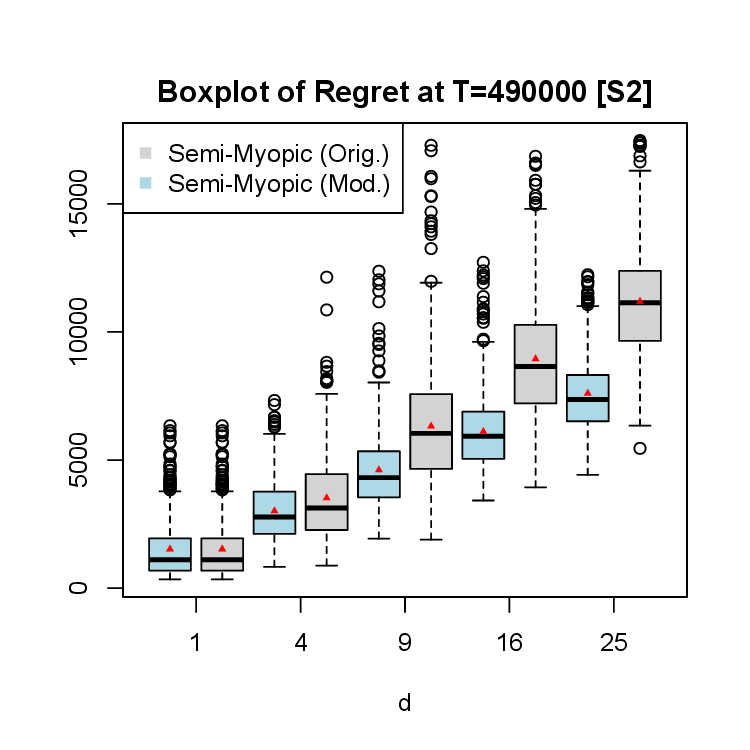}
	\vspace{-0.5cm}
    \end{subfigure}
    ~
    \begin{subfigure}{0.32\textwidth}
	\includegraphics[angle=0, width=1.1\textwidth]{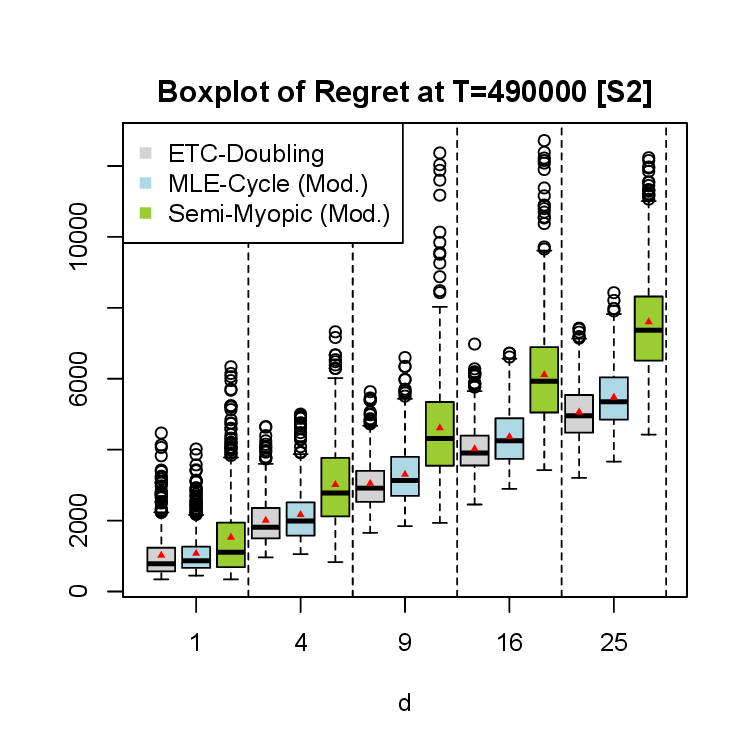}
	\vspace{-0.5cm}
    \end{subfigure}
    \vspace{-0.5cm}
    \caption{\textcolor{black}{(S2) Boxplot of regrets at $T=490000$ for MLE-Cycle (Original v.s.\ Modified), Semi-Myopic (Original v.s.\ Modified), and for \textit{modified} MLE-Cycle v.s.\ \textit{modified} Semi-Myopic v.s.\ ETC-Doubling.}}
    \label{fig:unknownT2_box}
\end{figure}

\begin{figure}[H]
\vspace{-3mm}
\hspace*{-6mm}
    \begin{subfigure}{0.32\textwidth}
	\includegraphics[angle=270, width=1.1\textwidth]{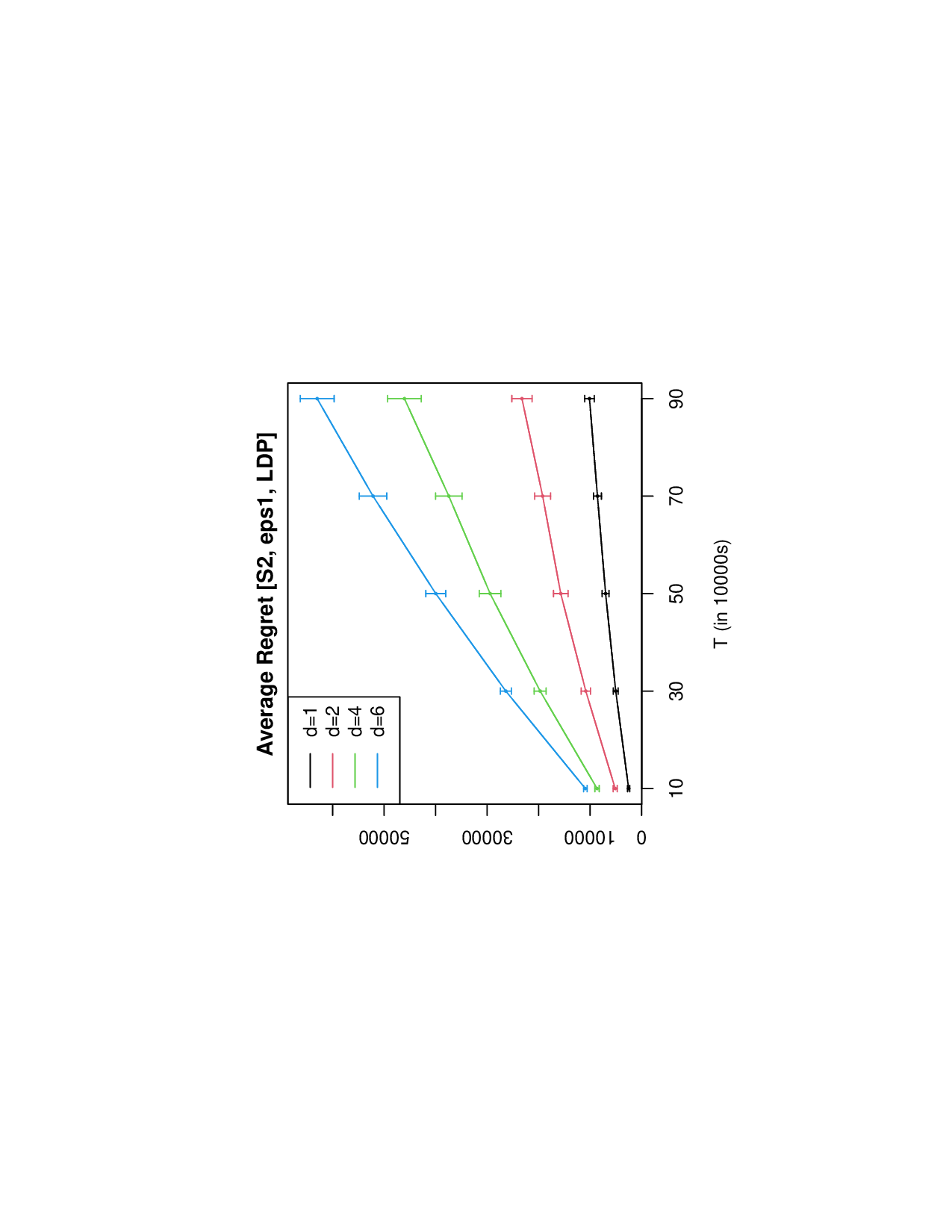}
	\vspace{-0.5cm}
    \end{subfigure}
    ~
    \begin{subfigure}{0.32\textwidth}
	\includegraphics[angle=270, width=1.1\textwidth]{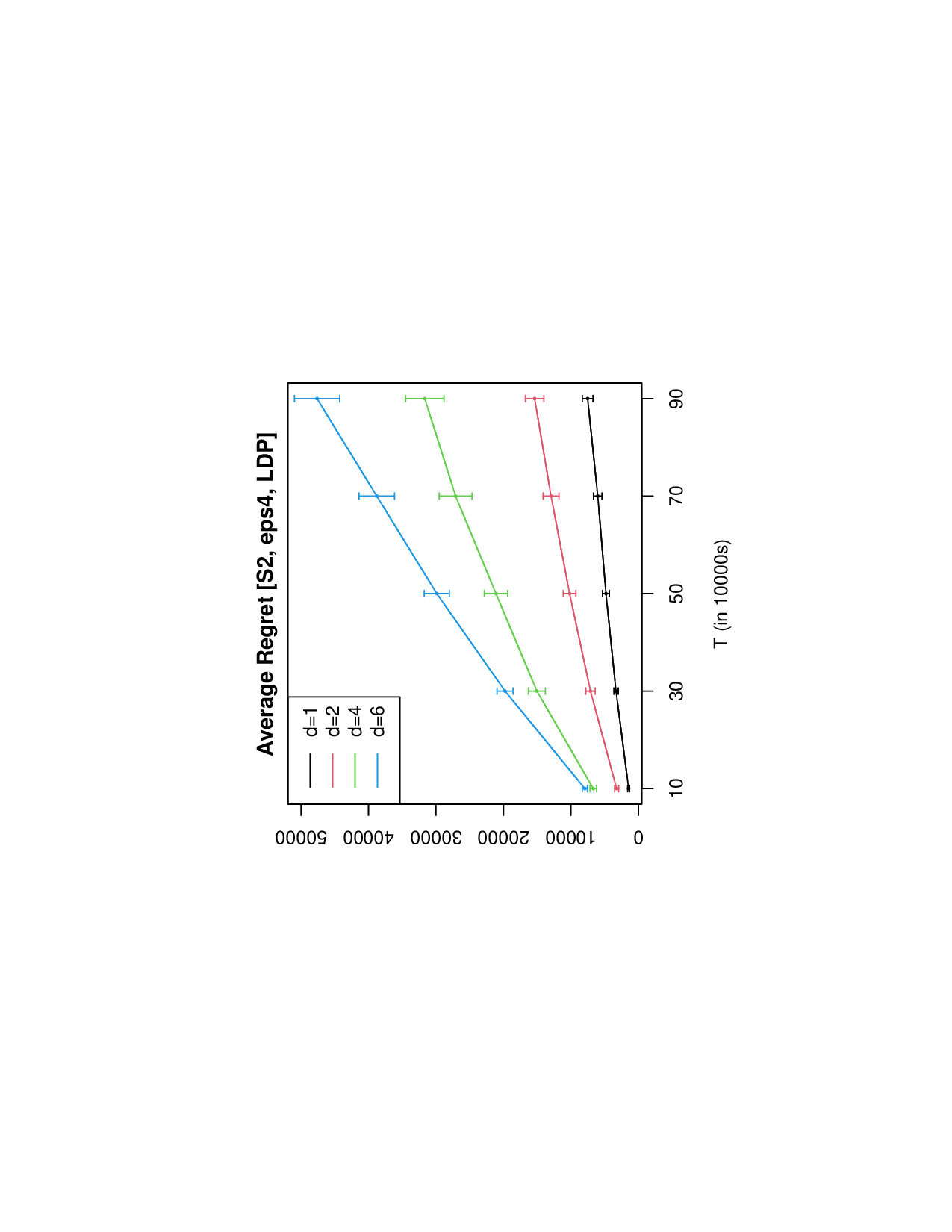}
	\vspace{-0.5cm}
    \end{subfigure}
    ~
    \begin{subfigure}{0.32\textwidth}
	\includegraphics[angle=270, width=1.1\textwidth]{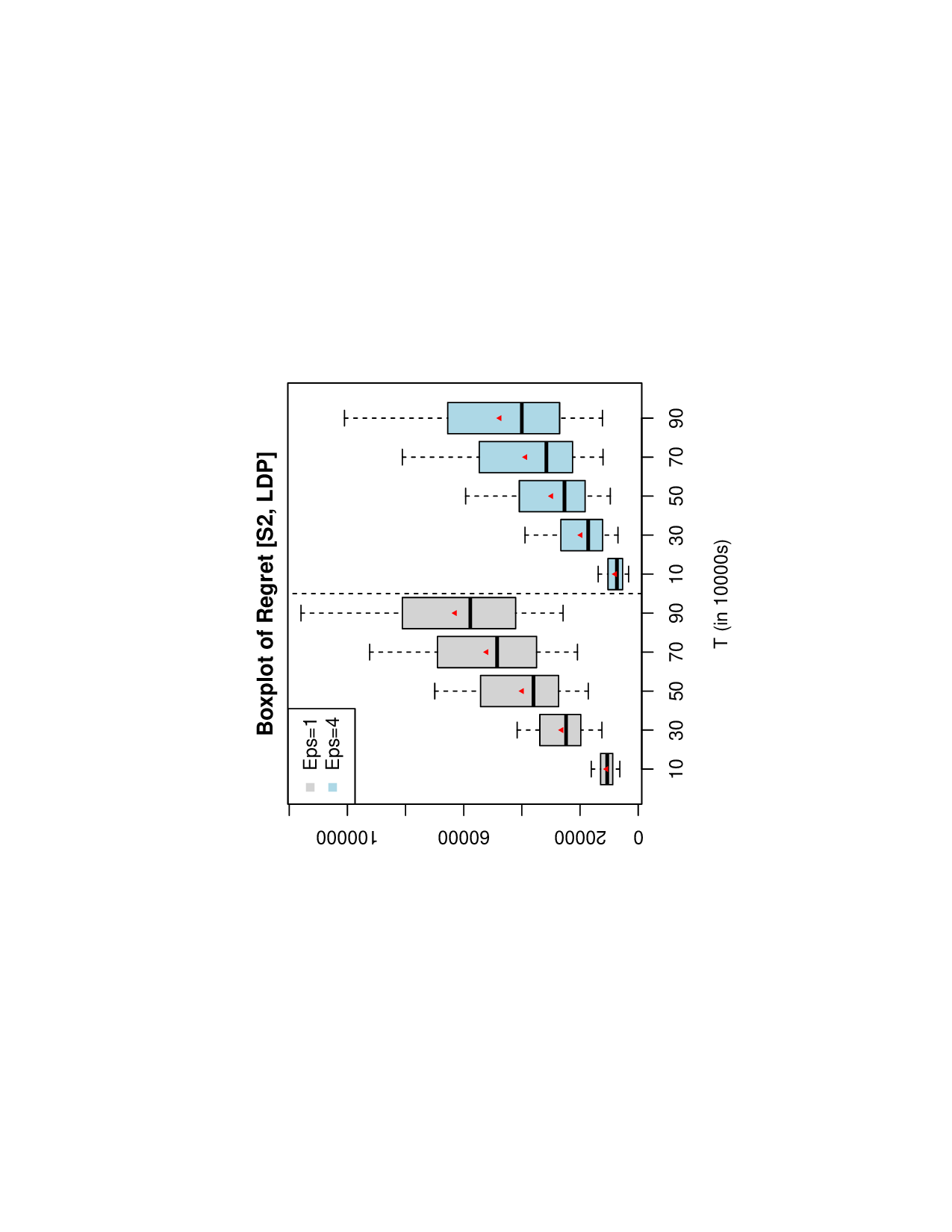}
	\vspace{-0.5cm}
    \end{subfigure}
    \caption{Performance of ETC-LDP under (S2). [Left]:  Mean regret (with C.I.) under different $(d,T)$ and $\epsilon=1$. 
    [Middle]: Mean regret (with C.I.) under different $(d,T)$ and $\epsilon=4$. 
    [Right]: Boxplot of regrets (based on 500 experiments) at different $T$ (with $d=6$) with $\epsilon=(1,4)$.}
    \label{fig:ETCLDP_S2}
\end{figure}

\begin{figure}[H]
\vspace{-5mm}
\hspace*{-6mm}
    \begin{subfigure}{0.32\textwidth}
	\includegraphics[angle=270, width=1.1\textwidth]{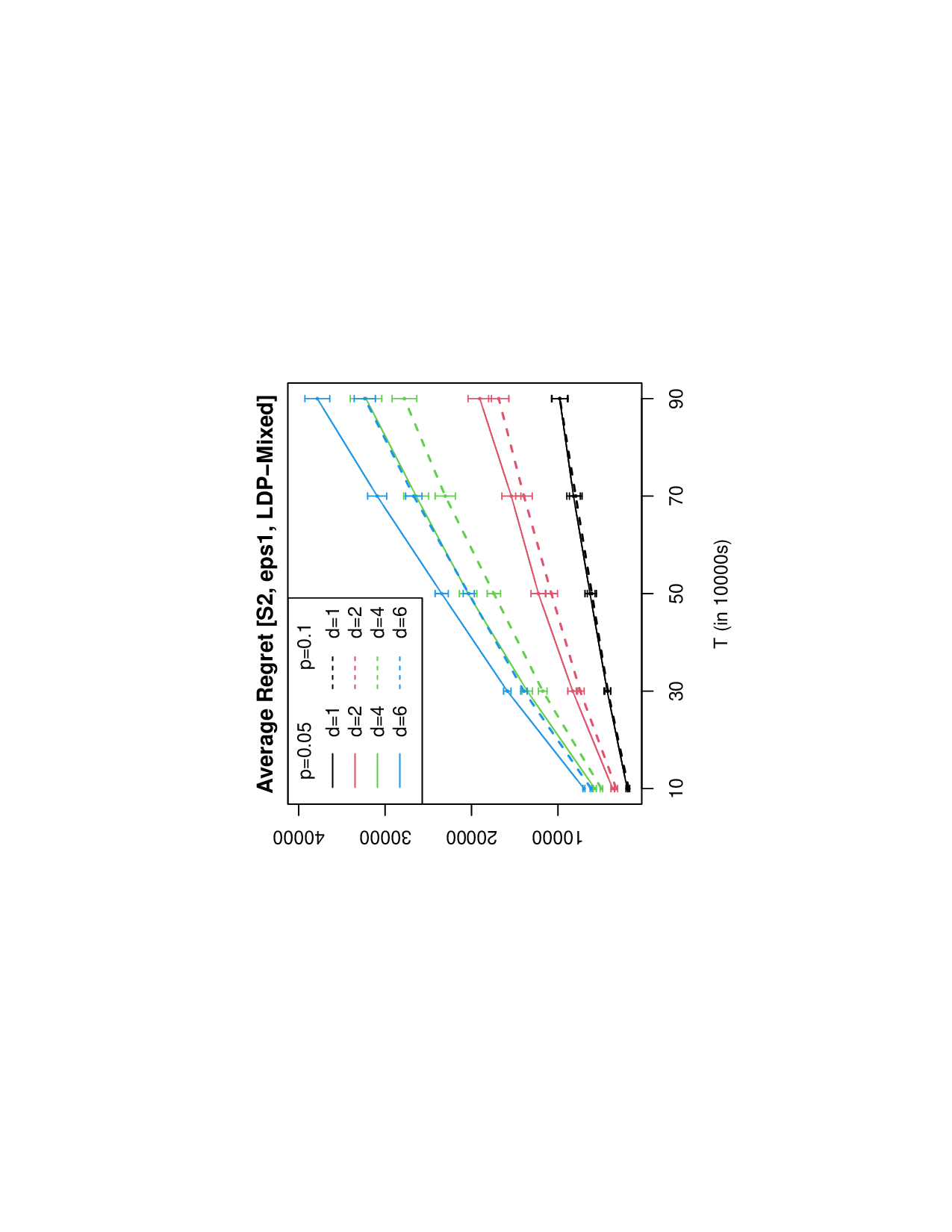}
	\vspace{-0.5cm}
    \end{subfigure}
    ~
    \begin{subfigure}{0.32\textwidth}
	\includegraphics[angle=270, width=1.1\textwidth]{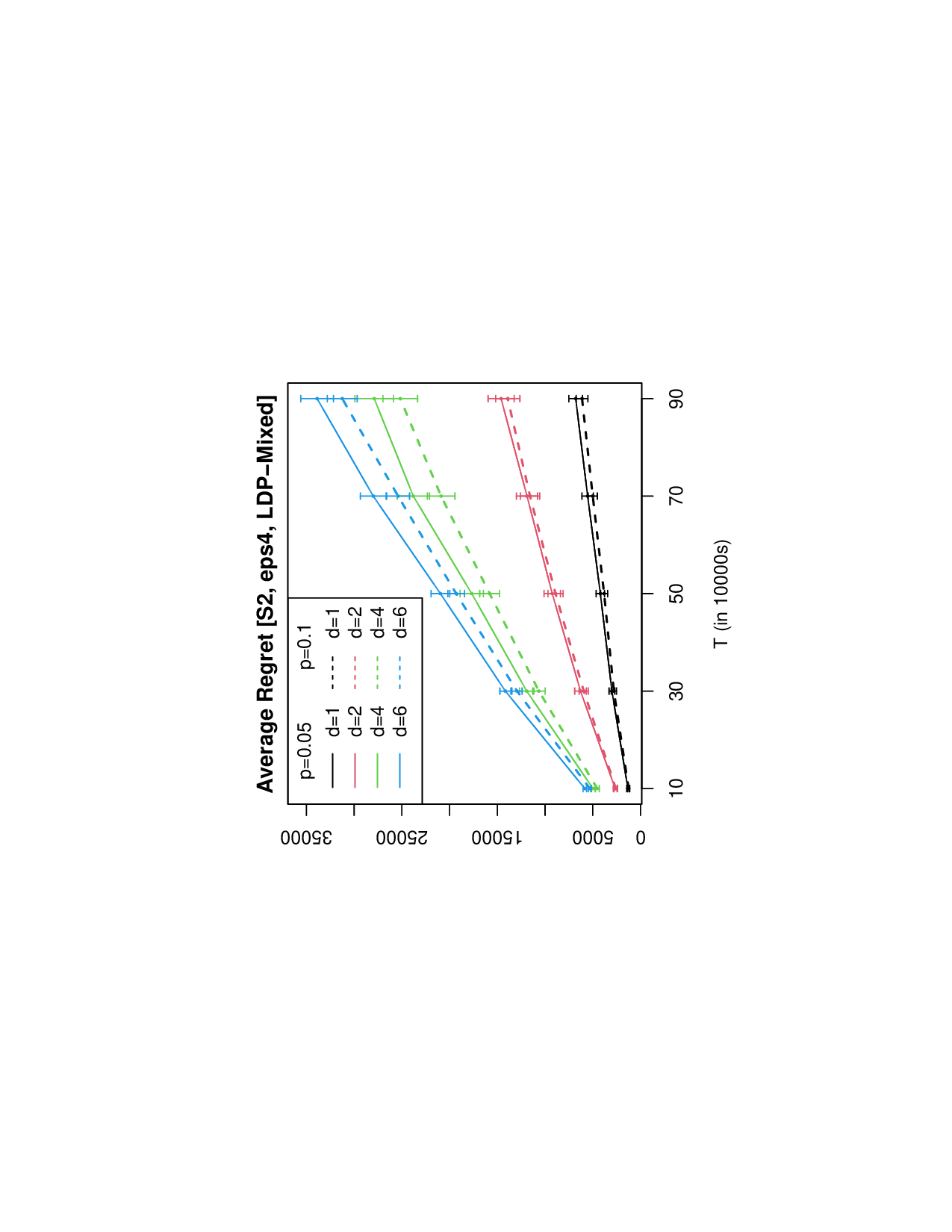}
	\vspace{-0.5cm}
    \end{subfigure}
    ~
    \begin{subfigure}{0.32\textwidth}
	\includegraphics[angle=270, width=1.1\textwidth]{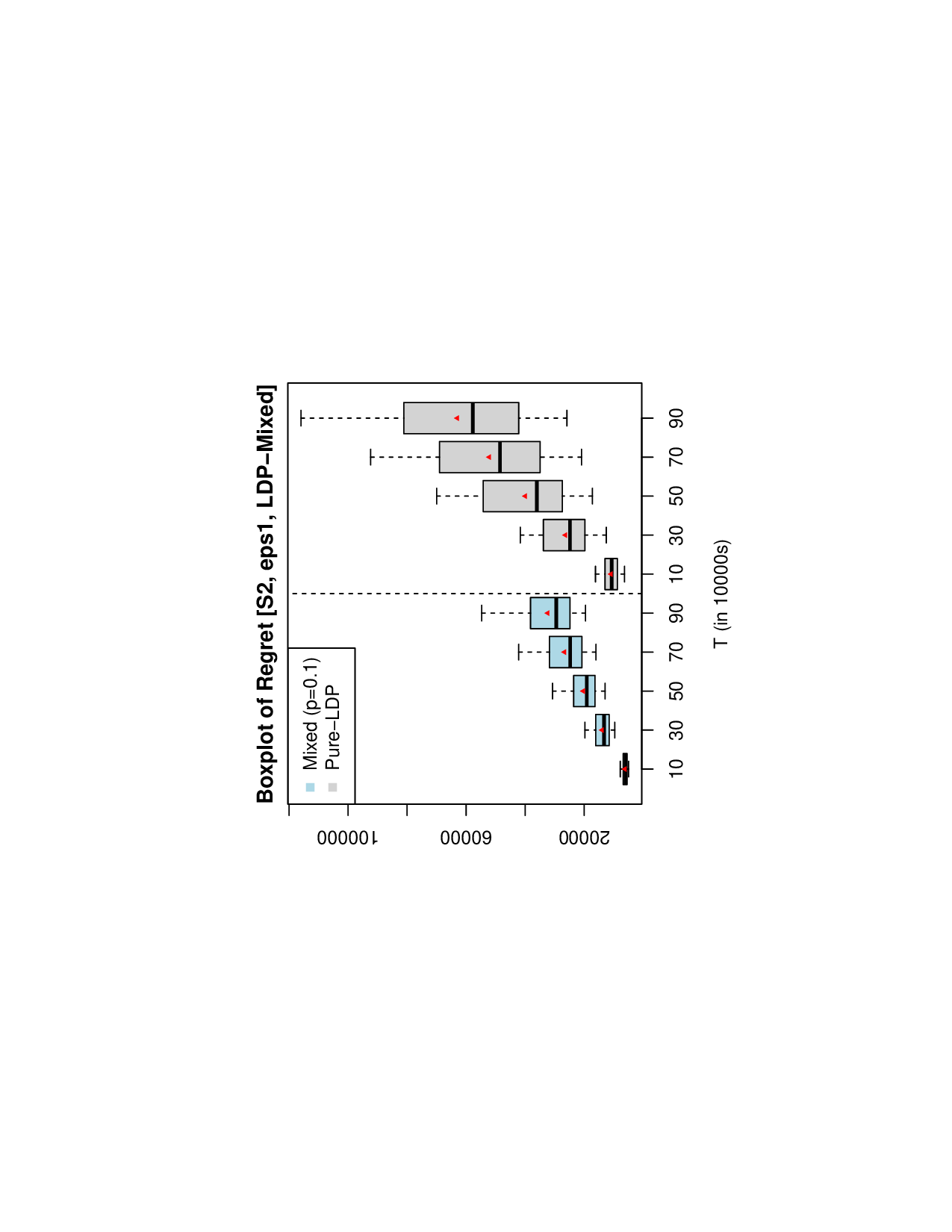}
	\vspace{-0.5cm}
    \end{subfigure}
    \caption{{\color{black} Performance of ETC-LDP-Mixed under (S2). [Left]:  Mean regret (with C.I.) under different $(d,T)$ and $\epsilon=1$ for $p=(0.05,0.1)$. [Middle]: Mean regret (with C.I.) under different $(d,T)$ and $\epsilon=4$ for $p=(0.05,0.1)$. [Right]: Boxplot of regrets (based on 500 experiments) at different $T$ (with $d=6$ and $\epsilon=1$) for pure LDP (i.e. $p=0$) and mixed LDP $(p=0.1)$.}}
    \label{fig:ETCLDP_Mixed_S2}
\end{figure}

\subsection{Additional results for real data analysis}\label{subsec:add_real}

\begin{table}[H]
\caption{Summary of the auto loan dataset used in \Cref{subsec:autoloan}}
\label{tab:loan_covariates}
\begin{tabular}{lll}
\hline\hline
Variable            & Type        & Description                                                                                                               \\\hline
apply               & Binary      & Indicator for eventual contract (dependent variable)                                                                      \\
Price               & Continuous  & Price of the loan                                                                                                         \\
Primary\_FICO       & Continuous  & FICO score                                                                                                                \\
Competition\_rate   & Continuous  & Competitor’s rate                                                                                                         \\
Amount\_Approved    & Continuous  & Loan amount approved                                                                                                      \\
onemonth            & Continuous  & Prime rate                                                                                                                \\
Term                & Continuous  & Approved term in months  \\
\hline\hline
\end{tabular}
\end{table}

\begin{table}[H]
\centering
\caption{Estimated logistic regression on the entire auto loan dataset.}
\label{tab:loan}
\begin{tabular}{rrrrrrr}
  \hline\hline
  & FICO & Competitor Rate & Amount & Prime Rate & Term \\ 
  \hline
  $\alpha$ & $-1.434$ & 0.288 & $-2.348$ & 0.807 & 2.845 \\ 
  $\beta$ & 1.956 & $-1.542$ & $-0.432$ & 0.672 & 0.147 \\ 
  \hline\hline
\end{tabular}
\end{table}

\textbf{Data preprocessing}. \Cref{fig:realdata_description}(left and middle) gives the histograms of covariates norm $\{\|z_t\|_2\}_{t=1}^{208085}$ and price sensitivity $\{z_t^\top \beta^*\}_{t=1}^{208085}$ computed via the fitted ground truth model. The 99\% quantile of covariates norm (which is 3.05) and 1\% quantile of price sensitivity (which is 0.13) are further marked via red vertical lines. To impose an upper bound on the covariates norm and a lower bound on the price sensitivity, we remove covariates $z_t$ with norm larger than 3.05 or price sensitivity smaller than 0.13. We denote the remaining $\{z_t\}$ as $\mathcal{Z}$, which consists of 204782 covariates. \Cref{fig:realdata_description}(right) further gives the histogram of the optimal prices (in thousands) of all covariates in $\mathcal{Z}$ based on the ground truth model, where the maximum is 9.994 and the minimum is 0.632 (in thousands).

\begin{figure}[ht]
\hspace*{-6mm}
    \begin{subfigure}{0.32\textwidth}
	\includegraphics[angle=0, width=1.1\textwidth]{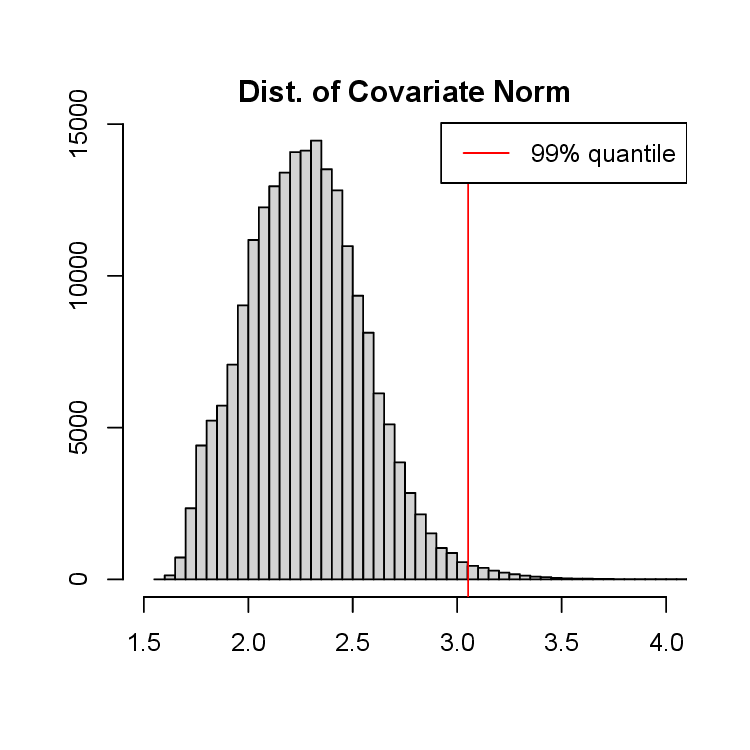}
	\vspace{-0.5cm}
    \end{subfigure}
    ~
    \begin{subfigure}{0.32\textwidth}
	\includegraphics[angle=0, width=1.1\textwidth]{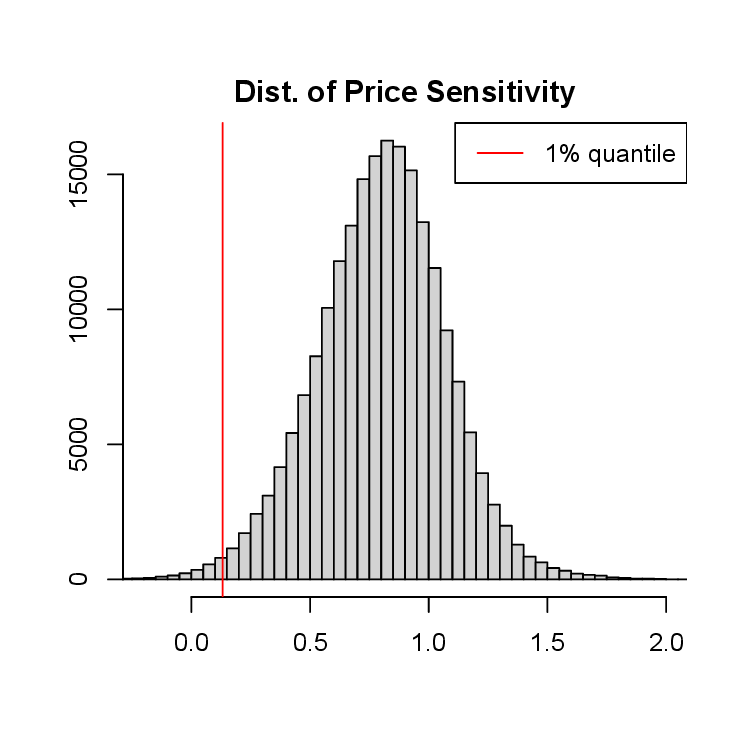}
	\vspace{-0.5cm}
    \end{subfigure}
    ~
    \begin{subfigure}{0.32\textwidth}
	\includegraphics[angle=0, width=1.1\textwidth]{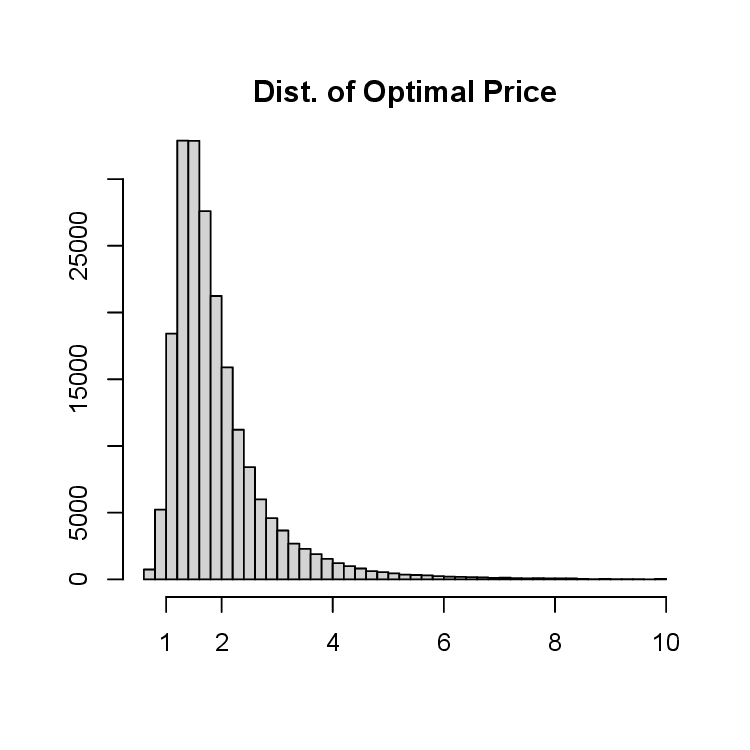}
	\vspace{-0.5cm}
    \end{subfigure}
    \vspace{-4mm}
    \caption{Histograms of covariate norm, price sensitivity and optimal price.}
    \label{fig:realdata_description}
\end{figure}

\noindent\textbf{Implementation details}. For ETC-LDP, the implementation follows that in \Cref{subsubsec:ETC_LDP}, while to handle the unknown horizon $T$, we use doubling trick. In particular, for the $q$th episode, which is of length $T_q=2^q$, we set the number of price experiments as $\tau_q=\min\big\{T_q, d\sqrt{2T_q}\log T_q/\epsilon\big\}$, where recall $\epsilon$ is the privacy parameter. To be more realistic, we set $\Theta=\{\theta: \|\theta-\theta^\circ\|\leq \sqrt{d}\}$, where for each experiment, $\theta^o$ is uniformly sampled from the unit ball centered at $\theta^*.$ In other words, $\theta^o$ is a perturbed version of $\theta^*$. Since ETC-LDP requires more price experiments, we set the price range $[l,u]$ to be $[0,3]$ for price experiments and keep the price range to be $[0,10]$ for price exploitation, which improves its performance.

\section{The supCB algorithm}\label{sec-supCB}
As discussed in the main text, a key element of supCB is to approximate the contextual dynamic pricing problem via contextual MAB based on discretization. In particular, we approximate the price interval $[l,u]$, which is a one-dimensional continuous action space, by a set of $K$ equispaced price points $\mathcal{P}_K=\{p^{(a)}\}_{a \in [K]}$, where $p^{(a)}=l+(a-1) (u-l)/(K-1)$ for $a\in [K]$. We thus have a generalized linear bandit~(GLB) with $K$ arms, where $x_{t,a}=(z_t^\top,-p^{(a)}\cdot z_t^\top)^\top$ is the context associated with arm $a \in [K]$ at time $t.$ We name it \textit{revenue}-GLB.  Unlike the standard GLB~\citep[e.g.][]{filippi2010parametric,li2017provably}, where the reward is defined as the conditional mean $\be (y_t|x_{t,a})=\psi'(x_{t,a}^\top\theta^*)$, our reward here is the more complicated revenue function $r(p^{(a)},z_t,\theta^*)=p^{(a)}\cdot\psi'(x_{t,a}^\top\theta^*).$ The optimal arm at time $t$ is therefore the arm associated with $p_t^o \in \argmax_{p\in\mathcal{P}_K}r(p,z_t,\theta^*)$.

Denote $\pi$ as a policy for the revenue-GLB based on $\mathcal{P}_K$. By simple algebra, its regret for the \textit{original} contextual dynamic pricing problem as defined in \eqref{eq:regret} can be rewritten as
\begin{align*}
        R_T^\pi & = \sum_{t=1}^{T} \{r(p_t^*, z_t, \theta^*)-r(p_t^\pi, z_t, \theta^*)\} =\underbrace{\sum_{t=1}^{T} \{r(p_t^*, z_t, \theta^*)-r(p_t^o, z_t, \theta^*)\}}_{\text{regret due to approximation error}} +\underbrace{\sum_{t=1}^{T} \{r(p_t^o, z_t, \theta^*)-r(p_t^\pi, z_t, \theta^*)\}}_{\text{regret due to revenue-GLB}}.
\end{align*}
Importantly, \Cref{lem:Lips} in the supplement shows that the class of revenue functions $\{r(p, z, \theta^*):z\in\mathcal Z\}$ is uniformly $L_r$-Lipschitz continuous with $L_r:=M_{\psi 1} + uu_b M_{\psi 2}$.  This implies that the approximation error is of order $O(T/K)$ and thus can be easily controlled.

Therefore, the key to an optimal regret $R_T^\pi$ is an algorithm that can efficiently bound the regret of the revenue-GLB. We then propose supCB in \Cref{algorithm:SupCB}, with a CB-GLM subroutine in \Cref{algorithm:CBsubroutine}. Similar to standard UCB algorithms~\citep[e.g.][]{lai1985asymptotically}, the idea of supCB is to balance exploration and exploitation via confidence bounds. However, to achieve the near-optimal $\widetilde O(\sqrt{dT})$ regret, supCB is substantially more involved than the standard UCB in its algorithm design. 

We first discuss the CB-GLM subroutine. In particular, based on an observed sample $\{y_t,x_t\}_{t\in\Psi}$, CB-GLM first estimates $\theta^*$ via the MLE $\widehat\theta$ and constructs the design matrix $V=\sum_{t\in \Psi}x_tx_t^\top$. It then provides an estimator $r_a=p^{(a)}\cdot \psi'( x_a^\top\widehat\theta)$ and a confidence bound $w_a=\alpha \|x_a\|_{V^{-1}}$, for the expected revenue $r(p^{(a)},z,\theta^*)=p^{(a)}\cdot \psi'( x_a^\top\theta^*)$ of arm $x_a=(z^\top,-p^{(a)}z^\top)^\top$ at a new context $z$. Note that given \Cref{thm:theta_err}(iii), by the smoothness of $\psi'(\cdot)$ and boundedness of $p^{(a)}$, we have that $w_a$ can serve as a valid high probability confidence bound for $|r_a- r(p^{(a)},z,\theta^*)|$ with an appropriately chosen $\alpha$.

\vspace{2mm}
\begin{algorithm}[ht]
\begin{algorithmic}
    \State \textbf{Input}: A new context vector $z \in \mathbb{R}^{d}$, price candidate index set $A \subset [K]$, and data index set $\Psi \subset [T]$,\\ confidence level parameter $\alpha > 0$.
    \medskip    
    \State a.\ Let $\widehat\theta$ be the MLE, i.e.~the maximizer of $L(\theta)=\sum_{i\in \Psi}\{y_i x_i^\top \theta - \psi(x_i^\top \theta)\}$.
    \State b.\ Set $V=\sum_{i\in \Psi}x_ix_i^\top$.
    \State c.\ For all $a\in A$, set $x_a=(z^\top,-p^{(a)}z^\top)^\top$, $r_a=p^{(a)}\cdot \psi'( x_a^\top\widehat\theta)$ and $w_a=\alpha \|x_a\|_{V^{-1}}$.
    \State d.\ Return $\{r_a, \, w_a\}_{a \in A}$. 
    \caption{The CB-GLM subroutine}
    \label{algorithm:CBsubroutine}
\end{algorithmic}
\end{algorithm}

However, \Cref{thm:theta_err} requires the crucial assumption that $\{y_t\}_{t\in\Psi}$ are \textit{conditionally independent} given $\{x_t\}_{t\in\Psi}$. Such an assumption in general does not hold when the sample $\{y_t,x_t\}_{t\in\Psi}$ are adaptively generated based on standard bandit algorithms such as UCB, as the outcome of $y_t$ will influence the future arm choice $x_{t'}$ for $t'>t.$ This issue is solved via a \textit{decomposition} technique developed in \cite{auer2002using}.  This technique is previously used in linear contextual bandits \citep[e.g.][]{auer2002using, chu2011contextual} and GLB \citep[e.g.][]{li2017provably} to create conditionally independent samples.

In supCB~(\Cref{algorithm:SupCB}), we adapt this technique to the revenue-GLB. In particular, in each round~$t$, supCB decomposes its decision process into $S=\lfloor \log_2 (T)\rfloor$ stages. At stage $s\in [S]$, in step~I, we first call CB-GLM (based on samples in $\mathcal{F}_s\cup\Psi_s$) to compute the estimated revenue $r_{t,a}^{(s)}$ and confidence bound $w_{t,a}^{(s)}$ for all remaining candidate arms. In step II, if $w_{t,a}^{(s)}>2^{-s}$ for some $a$~(i.e.\ uncertainty is high), supCB will choose arm $a$ for exploration, collect the result in $\Psi_s$ and move to round $t+1$. Otherwise, step III checks whether all $w_{t,a}^{(s)} \leq 1/\sqrt{T}$ and if satisfied, we simply pick the arm with highest $r_{t,a}^{(s)}$ and move to round $t+1$ as no exploration is needed. If not, the arms are filtered in step IV to ensure the price points passed to the next stage $s+1$ are close enough to the optimal one. Since all $w_{t,a}^{(s)}\leq 2^{-s}$ in step IV and $w_{t,a}^{(s)}$ is a high probability bound on $|r_{t,a}^{(s)}- r(p^{(a)},z_t,\theta^*)|$, we know that an arm $a$ cannot be optimal if $r_{t,a}^{(s)} < \max_{j\in A_s} r_{t,j}^{(s)}-2\cdot 2^{-s}$. Note that this decision process is guaranteed to terminate in at most $S$ stages since $1/\sqrt{T}> 2^{-S}$ by design.

Denote $\Psi_s(t)$ as the sample collected in stage $s$ before round $t$ and denote $\Psi_s^o(t)=\Psi_s(t)\cup \mathcal{F}_s.$ The key property of supCB is that due to the decomposition design, whether an arm $x_{t,a}$ is selected in stage~$s$ depends on historical data $\cup_{s'<s}\{y_i,x_i\}_{i\in \Psi_{s'}^o(t)}$ in \textit{previous} stages and the confidence bound $w_{t,a}^{(s)}$, which only depends on historical \textit{covariates} $\{x_i\}_{i \in \Psi_s^o(t)}$ in stage $s$. Importantly, the arm choice does not depend on past \textit{outcomes} $\{y_i\}_{i \in \Psi_s^o(t)}$ in stage $s.$ Intuitively, this provides the desired conditional independence of $\{y_i\}_{i\in \Psi_s^o(t)} $ given $\{x_i\}_{i\in \Psi_s^o(t)}$ and we refer to \Cref{lem:indp} for the formal proof.

Note that supCB starts with a pure exploration phase during which $p_t$ is drawn uniformly from~$\mathcal{P}_K.$ Denote $\mu_p$ and $\sigma_p^2$ as the mean and variance of the uniform distribution on $\mathcal{P}_K$ and denote $\Sigma_p=[1,-\mu_p; -\mu_p, \mu^2_p+\sigma^2_p] \in \mathbb{R}^{2 \times 2}$. For all $K \in \mathbb{N}_+$, it is easy to show that $\mu_p=(u+l)/2$, $\sigma_p^2 \geq (u-l)^2/12$ and that~$\Sigma_p$ is a positive definite matrix with
\begin{equation} \label{eq-Lp-def-supplement}
    \lambda_{\min}(\Sigma_p)\geq L_p:=(u-l)^2/[4(u^2+l^2+ul+3)]. 
\end{equation}

\vspace{2mm}
\begin{algorithm}[ht]
\begin{algorithmic}
    \State \textbf{Input}: Total rounds $T$, price interval $[l,u]$,     discretization rate $K$, exploration length $\tau$, confidence parameter $\alpha$. 
    \vskip 1mm
    
    \State \textbf{Initialization}: Discretize the price interval $[l,u]$ into $K$ equispaced points as in $\mathcal{P}_K=\{p^{(a)}\}_{a\in[K]}$. Set the number of stages $S=\lfloor \log_2 T\rfloor$ and set $\Psi_0=\Psi_1=\cdots=\Psi_S=\varnothing.$
    \vskip 1mm
    
    \For{$s \in [S]$}
    \State Set the exploration set as $\mathcal{F}_s=\{(s-1)\tau+1, (s-1)\tau+2, \cdots, s\tau\}$.
    \For{$t \in \mathcal{F}_s$}
    \State Randomly choose $p_t \in \mathcal{P}_K$, record $y_t$ and $x_t=(z_t^\top,-p_tz_t^\top)^\top$.
    \EndFor
    \EndFor
    \vskip 1mm

    \For{$t= S\tau+1,S\tau+2,\cdots, T$}
    \State Initialize $A_1=[K]$, $s=1$ and $a_t = \mathrm{NULL}$.
    \While{$a_t= \text{NULL}$}
        \State I.\ ~ Run CB-GLM with $\alpha$ and $\Psi_s\cup \mathcal{F}_s$ to obtain $\{r_{t,a}^{(s)}, \, w_{t,a}^{(s)}\}_{a \in A_s}$.
        
        \State II.\ \hspace{0.3mm} If $w_{t,a}^{(s)}>2^{-s}$ for some $a\in A_s$, set $a_t=\arg\max_{a\in A_s}w_{t,a}^{(s)} $, update $\Psi_{s}=\Psi_{s}\cup \{t\}.$
        
        \State III.\ Else if $w_{t,a}^{(s)}\leq 1/\sqrt{T}$ for all $a\in A_s$,
        set $a_t=\arg\max_{a\in A_s} r_{t,a}^{(s)}$ and update $\Psi_{0}=\Psi_{0}\cup \{t\}.$

        \State IV.\ Else if $w_{t,a}^{(s)}\leq 2^{-s}$ for all $a\in A_s$,
update $A_{s+1}=\left\{a\in A_s\big\vert r_{t,a}^{(s)}\geq \max_{j\in A_s} r_{t,j}^{(s)}-2\cdot 2^{-s}\right\}$ and update $s \leftarrow s+1.$
    
    \EndWhile 
    \State Set $p_t=p^{(a_t)}$, record $y_t$ and $x_t=(z_t^\top,-p_tz_t^\top)^\top$.
    
    \EndFor
    \caption{The supCB algorithm for dynamic pricing}
    \label{algorithm:SupCB}
\end{algorithmic}
\end{algorithm}

The pure exploration phase creates an i.i.d.\ sample of size $\tau$~(collected into $\mathcal{F}_s$) for each stage $s\in [S]$, which together with \eqref{lambda_min} ensures that the minimum eigenvalue of the design matrix based on $\Psi_s^o(t)=\Psi_s(t)\cup \mathcal{F}_s$ surpasses the threshold required in \Cref{thm:theta_err}(iii). \Cref{thm:ucb_regret} establishes both high-probability and expectation upper bounds on the regret of supCB when $T$ is known. 

\vspace{-2mm}
\begin{thm}\label{thm:ucb_regret}
Suppose \Cref{assum_feature} holds. For any $\delta \in (0,1),$ set $K=\sqrt{T/d}/\log(T)$, $\tau=\sqrt{dT}$ and $\alpha = {3\sigma u M_{\psi2}}/{\kappa} \cdot \sqrt{\log(3TKS/\delta)}$. Recall $S=\lfloor \log_2(T) \rfloor$. Provided that
    \begin{align}\label{eq:minT_thm_short}
        T\geq \left[B_{S1} (d\lambda_z^4)^{-1} ((\log(d))^2+(\log(T))^2) \right]\vee \left[B_{S2} (d\lambda_z^2)^{-1}(d^4+ (\log(3TKS/\delta))^2) \right],        
    \end{align}
    we have that,  with probability at least $1-\delta-2\log (T)/T$, the regret of the supCB algorithm in \Cref{algorithm:SupCB} is upper bounded by $R_T \leq B_{S3}\cdot \sqrt{dT\log(T) \log(T/\delta)\log (T/d)}$,
    where $B_{S1},B_{S2},B_{S3}>0$ are absolute constants that only depend on quantities in \Cref{assum_feature}. In particular, setting~$\delta=1/\sqrt{T}$, we have that $\mathbb E(R_T)\lesssim\log^{3/2}(T)\sqrt{dT}.$ 
\end{thm}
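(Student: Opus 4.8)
The plan is to work from the decomposition already displayed, $R_T^\pi=(\text{approximation error})+(\text{revenue-GLB regret})$, and to reduce the revenue-GLB part to the SupLinUCB-type argument of \cite{auer2002using,chu2011contextual,li2017provably} adapted to the revenue-GLB, keeping the discretization level $K$ inside logarithms throughout. First I would dispose of the two cheap pieces: by \Cref{lem:Lips} the map $p\mapsto r(p,z,\theta^*)$ is $L_r$-Lipschitz uniformly over $z\in\mathcal Z$, so replacing $p_t^*$ by the nearest grid price costs at most $L_r T(u-l)/(K-1)$, which is $O(\sqrt{dT}\log T)$ for $K=\sqrt{T/d}/\log T$; and the initial pure-exploration phase of $S\tau=\lfloor\log_2 T\rfloor\sqrt{dT}$ rounds costs at most $uM_{\psi1}$ per round, again $O(\sqrt{dT}\log T)$. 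It then remains to control the revenue-GLB regret over the rounds $t>S\tau$, which \Cref{algorithm:SupCB} partitions into the exploitation set $\Psi_0$ and the per-stage exploration sets $\Psi_1,\dots,\Psi_S$.

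Next I would introduce the good event $\mathcal G=\mathcal G_1\cap\mathcal G_2$. On $\mathcal G_1$, for every stage $s\in[S]$ the exploration design $V_s^{\mathcal F}:=\sum_{i\in\mathcal F_s}x_ix_i^\top$ satisfies $\lambda_{\min}(V_s^{\mathcal F})\ge\tfrac12\tau L_p\lambda_z$; since $p_i$ is uniform on $\mathcal P_K$ and independent of $z_i$, the conditional mean $\mathbb{E}[x_ix_i^\top\mid z_i]$ has smallest eigenvalue at least $L_p\lambda_z$ by \eqref{eq-Lp-def-supplement}, and a matrix Chernoff bound gives per-stage failure probability at most $1/T$ under \eqref{eq:minT_thm_short}, hence $\mathbb{P}(\mathcal G_1^c)\le 2\log(T)/T$. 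On $\mathcal G_2$, for every triple $(s,t,a)$ the CB-GLM output obeys $|r_{t,a}^{(s)}-r(p^{(a)},z_t,\theta^*)|\le w_{t,a}^{(s)}$; this is the crucial step, and I would get it by noting that \Cref{lem:indp} makes $\{y_i\}_{i\in\Psi_s^o(t)}$ conditionally independent given $\{x_i\}_{i\in\Psi_s^o(t)}$ — so that on $\mathcal G_1$ the GLM MLE deviation bound \Cref{thm:theta_err}(iii) applies to the sample indexed by $\Psi_s^o(t)$ — and then converting the resulting bound on $|x_a^\top(\widehat\theta-\theta^*)|$ into one for $|r_{t,a}^{(s)}-r(p^{(a)},z_t,\theta^*)|$ via $|\psi''|\le M_{\psi2}$ and $p^{(a)}\le u$; a union bound over the at most $TKS$ triples, calibrated by $\alpha=(3\sigma uM_{\psi2}/\kappa)\sqrt{\log(3TKS/\delta)}$, yields $\mathbb{P}(\mathcal G_2^c)\le\delta$. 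Hence $\mathbb{P}(\mathcal G)\ge 1-\delta-2\log(T)/T$, and it suffices to bound $R_T$ deterministically on $\mathcal G$.

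On $\mathcal G$ the standard induction over stages runs: the grid-optimal arm $a_t^o$ is never eliminated, and any arm surviving into stage $s\ge2$ has per-round revenue-GLB regret at most $8\cdot2^{-s}$, so a round landing in $\Psi_s$ costs $\le 8\cdot2^{-s}$ and a round in $\Psi_0$ costs $\le 2/\sqrt T$ (total $\le 2\sqrt T$); for $s=1$, $\mathcal G_1$ with \eqref{eq:minT_thm_short} forces $w_{t,a}^{(1)}<1/2$ so $\Psi_1=\varnothing$. For $s\ge2$, $t\in\Psi_s$ forces $\alpha\|x_{t,a_t}\|_{V_s^{-1}(t)}>2^{-s}$, while $\mathcal G_1$ with \eqref{eq:minT_thm_short} gives $\|x_{t,a_t}\|^2_{V_s^{-1}(t)}\le1$, so the elliptical-potential ($\log\det$) lemma gives $\sum_{t\in\Psi_s(T)}\|x_{t,a_t}\|^2_{V_s^{-1}(t)}\le 2\log\bigl(\det V_s(T)/\det V_s^{\mathcal F}\bigr)=O(d\log(T/d))$ — the ambient dimension is $2d$ and $\tau=\sqrt{dT}$ makes the logarithm $O(\log(T/d))$ — whence $|\Psi_s(T)|\le 2^{2s}\alpha^2\cdot O(d\log(T/d))$ and trivially $|\Psi_s(T)|\le T$. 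Summing, $\sum_{s\ge1}|\Psi_s(T)|\cdot 8\cdot2^{-s}\le 8\sum_{s\ge1}\min\{T,\,2^{2s}\alpha^2 O(d\log(T/d))\}2^{-s}=O(\alpha\sqrt{dT\log(T/d)})$, obtained by splitting the sum at $2^{s_1}\asymp\sqrt T/(\alpha\sqrt{d\log(T/d)})$ and bounding each (geometric) piece. Since $K,S\le T$ give $\alpha^2=O(\log(T/\delta))$, adding the four contributions gives $R_T\le B_{S3}\sqrt{dT\log(T)\log(T/\delta)\log(T/d)}$ on $\mathcal G$.

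For the expectation bound, off $\mathcal G$ one has $R_T\le uM_{\psi1}T$, so with $\mathbb{P}(\mathcal G^c)\le\delta+2\log(T)/T$ and $\delta=1/\sqrt T$ (so $\log(T/\delta)=\tfrac32\log T$) one gets $\mathbb{E}(R_T)\le B_{S3}\sqrt{\tfrac32}\,\sqrt{dT}\,\log(T)\sqrt{\log(T/d)}+uM_{\psi1}(\sqrt T+2\log T)\lesssim\sqrt{dT}\,\log^{3/2}(T)$ using $\log(T/d)\le\log T$. I expect the main obstacle to be verifying $\mathcal G_2$: one must check that the staged construction in \Cref{algorithm:SupCB} genuinely produces the conditional independence of \Cref{lem:indp} and feed it, together with the stage-wise eigenvalue control on $\mathcal G_1$, into \Cref{thm:theta_err}(iii); a secondary but essential point is to keep the dependence on $K$ logarithmic (so it enters only as $\sqrt{\log K}=O(\sqrt{\log T})$), which is precisely what pins the rate at $\widetilde O(\sqrt{dT})$ instead of $\widetilde O(d\sqrt T)$.
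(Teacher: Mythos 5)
Your proposal is correct and follows essentially the same route as the paper's proof: the same three-way decomposition (Lipschitz discretization error, pure-exploration phase, staged revenue-GLB regret), the same good events built from the exploration-design eigenvalue bound (\Cref{lem:eigen}), the conditional independence of \Cref{lem:indp} feeding \Cref{thm:theta_err}(iii) with a union bound over the $TKS$ triples (this is exactly \Cref{lem:HPCB}), the elimination argument of \Cref{lem:gap_reward}, and a potential-lemma bound on $|\Psi_s(T)|$ (your $\log\det$ argument is the content of Lemma 2 of \cite{li2017provably} used in the paper). The only cosmetic difference is the final summation over stages — you split the geometric sum at a threshold where the paper applies Cauchy--Schwarz across the $S$ stages — which if anything saves a $\sqrt{\log T}$ factor and still implies the stated bound.
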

\vspace{-2mm}

The details of $B_{S1},B_{S2},B_{S3}$ can be found in the proof of \Cref{thm:ucb_regret} in the supplement. \Cref{thm:ucb_regret} states that when the sample size $T$ is sufficiently large, supCB can achieve a regret of order $O(\sqrt{dT}\log^{3/2}(T))$, which is optimal up to logarithmic terms (see the lower bound $\Omega(\sqrt{dT})$ in \Cref{thm_lb} later). To our knowledge, this improves the best available regret in the contextual dynamic pricing literature~\citep[e.g.][]{ban2021personalized,wang2021dynamic} by a factor of $\sqrt{d}$.

As discussed in the introduction, there are two types of contextual MAB problems. For the first type, its arm space $\mathcal{X}$ is a $d$-dimensional (uncountable) compact subset of $\mathbb R^d$ and the optimal regret $\widetilde O(d\sqrt{T})$ can be achieved via standard UCB~\citep{dani2008stochastic,abbasi2011improved}. The second type has a finite $K$ number of arms in each round $t$, where the optimal regret is $\widetilde O(\sqrt{dT\log (K)})$~\citep{auer2002using,chu2011contextual}. The key insight of \Cref{thm:ucb_regret} and supCB is that the contextual dynamic pricing problem can be essentially viewed as a contextual MAB of the second type. The intuition is that, despite being uncountable, the action space $[l,u]$ of dynamic pricing is of dimension \textit{one} regardless of the context dimension $d$. Moreover, thanks to the uniform Lipschitz continuity of the revenue function, we can discretize the action space $[l,u]$ into $K$ arms and achieve a controllable approximation error as long as $K$ is of polynomial order in $T$ (i.e.\ $O(\sqrt{T})$, independent of $d$).

\textbf{Unknown T}. When $T$ is unknown, we run supCB with the standard doubling trick \citep[e.g.][]{auer1995gambling} in the bandit literature. In particular, we partition~$\mathbb{N}_+$ into non-overlapping episodes, with the $k$th episode of length $E_k=2^{k}$. We run \Cref{algorithm:SupCB} with $T=E_k$ for the $k$th episode and then restart at the next episode. To conserve space, we refer to \Cref{thm:ucb_regret_unknownT} for its theoretical guarantee. See also \Cref{algorithm:ETC-Doubling} later for a detailed implementation of ETC with the doubling trick.

{\color{black}
\subsection{A modified supCB for adversarial contexts}\label{sec-supp-supCB-adversarial}
In the following, we propose a (slightly) modified supCB algorithm based on regularized MLE for adversarial contexts. It requires a stronger assumption on the convexity of the link function $\psi(\cdot)$ as is documented in \Cref{assum_feature_adv}(b). In particular, a linear demand model satisfies this assumption as $\psi(x)=1/2x^2$ in such case.

\begin{ass}\label{assum_stronger_psi}
    (a) The feature vector $\{z_t\}_{t \in [T]} \subset \mathcal{Z}\subseteq \{z\in \mathbb R^d: \, \|z\|\leq 1\}$ is an arbitrary sequence of bounded vectors. (b) The function $\psi(\cdot)$ is three times continuously differentiable and its second order derivative $\psi''(a)\geq c_\psi>0$ for all $a\in\mathbb R$, where $c_\psi\leq 1$ is an absolute constant. (c) The true model parameter is bounded with $\|\theta^*\|\leq c_\theta$ for some absolute constant $c_\theta$.
\end{ass}

\textbf{Non-asymptotic bounds for regularized MLE}: For $\tau \in\mathbb{N}_+$, given a sample $\{(y_t,x_t)\}_{t\in[\tau]}$, the log-likelihood of the GLM takes the form $L(\theta)=\sum_{t=1}^\tau \{y_t x_t^\top \theta - \psi(x_t^\top \theta)\}$. We define the regularized (negative) log-likelihood function
\begin{align}\label{eq:ridge_loglik}
    \widetilde{L}(\theta)=-L(\theta)+1/2\|\theta\|_2^2=-\sum_{t=1}^\tau\{y_tx_t^\top\theta -\psi(x_t^\top \theta)\}+1/2\|\theta\|_2^2.
\end{align}
Note that $\widetilde{L}(\theta)$ is strongly convex due to the $\|\theta\|_2^2$ term and thus always has a unique global minimizer~$\widetilde\theta_\tau$ over $\mathbb{R}^{2d}$ for any sample $\{(y_t,x_t)\}_{t\in[\tau]}$. Denote $\widetilde{V}_\tau=\sum_{t=1}^\tau x_tx_t^\top + I_{2d}$ as the regularized design matrix.

\Cref{lem:theta_err_ridge} establishes a non-asymptotic bounds for the regularized MLE $\widetilde{\theta}_\tau$ with \Cref{assum_feature_adv}. It can be viewed as the counterpart of \Cref{thm:theta_err}(iii) under the adversarial contexts.

\vspace{-2mm}
\begin{lem}\label{lem:theta_err_ridge}
Suppose that \Cref{assum_feature}(c)-(d) and \Cref{assum_feature_adv} hold and $\{y_t\}_{t\in[\tau]}$ are {conditionally independent} given $\{x_t\}_{t\in[\tau]}$ such that $f(y_1,\cdots,y_\tau|x_1,\cdots,x_\tau)=\prod_{t=1}^\tau f(y_t|x_t;\theta^*)$, where $f(y_t|x_t;\theta^*)$ is the true GLM model in \eqref{eq:GLM}. We have that for any $\tau \geq 0$, any $\delta \in (0,1)$ and any $x\in \mathbb R^{2d}$, with probability at least $1-\delta$, it holds that
$$\big|x^\top (\widetilde{\theta}_\tau-\theta^*)\big|\leq  3\sigma c_{\psi}^{-1}\sqrt{\log(1/\delta)}\cdot \|x\|_{\widetilde V_\tau^{-1}}.$$
\end{lem}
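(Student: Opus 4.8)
The plan is to start from the stationarity condition of the ridge-penalized log-likelihood, linearize it around $\theta^*$, and then use the hypothesis that $\{y_t\}_{t\in[\tau]}$ are conditionally independent given $\{x_t\}_{t\in[\tau]}$ so as to \emph{condition on the covariates}; this freezes the design matrices and turns the leading stochastic term into a single scalar sum of independent sub-Gaussian variables, which concentrates at the dimension-free $\sqrt{\log(1/\delta)}$ rate rather than the $\sqrt d$ rate that a generic self-normalized bound would give.

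Concretely, I would first use $\nabla\widetilde L(\widetilde\theta_\tau)=0$ together with $\nabla\widetilde L(\theta^*)=-\sum_{t=1}^\tau\epsilon_t x_t+\theta^*$, where $\epsilon_t:=y_t-\psi'(x_t^\top\theta^*)$ is the conditionally $\sigma$-sub-Gaussian GLM noise supplied by \Cref{assum_feature}(c)--(d). Subtracting and applying the mean value theorem gives the estimating equation $\bar H_\tau(\widetilde\theta_\tau-\theta^*)=\sum_{t=1}^\tau\epsilon_t x_t-\theta^*$ with $\bar H_\tau:=\int_0^1\nabla^2\widetilde L\big(\theta^*+s(\widetilde\theta_\tau-\theta^*)\big)\,ds$. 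The crucial consequence of \Cref{assum_stronger_psi}(b) --- the \emph{global} lower bound $\psi''\ge c_\psi$ with $c_\psi\le1$ --- is that $\bar H_\tau\succeq c_\psi\widetilde V_\tau$ holds on every sample path, not merely on a high-probability event; this is exactly what makes the ridge estimator tractable without the minimum-eigenvalue condition needed in the unregularized analysis of \Cref{thm:theta_err}.

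Next I would linearize at $\theta^*$. Write $\bar H_\star:=\nabla^2\widetilde L(\theta^*)=\sum_{t=1}^\tau\psi''(x_t^\top\theta^*)x_tx_t^\top+I_{2d}$, which is deterministic given $\{x_t\}$ and also obeys $\bar H_\star\succeq c_\psi\widetilde V_\tau$ and $\bar H_\star\succeq I_{2d}$, and decompose
\begin{align*}
x^\top(\widetilde\theta_\tau-\theta^*) &= \underbrace{x^\top\bar H_\star^{-1}\textstyle\sum_{t=1}^\tau\epsilon_t x_t}_{\mathrm{(I)}}\;-\;\underbrace{x^\top\bar H_\star^{-1}\theta^*}_{\mathrm{(II)}}\\
&\quad-\;\underbrace{x^\top\bar H_\star^{-1}(\bar H_\tau-\bar H_\star)(\widetilde\theta_\tau-\theta^*)}_{\mathrm{(III)}}.
\end{align*}
For $\mathrm{(I)}$, conditioning on $\{x_t\}$ makes the coefficients $c_t:=x^\top\bar H_\star^{-1}x_t$ constants, so $\mathrm{(I)}=\sum_t\epsilon_t c_t$ is $\sigma\big(\sum_t c_t^2\big)^{1/2}$-sub-Gaussian; the key estimate is $\sum_t c_t^2=x^\top\bar H_\star^{-1}\big(\sum_t x_tx_t^\top\big)\bar H_\star^{-1}x\le x^\top\bar H_\star^{-1}\widetilde V_\tau\bar H_\star^{-1}x\le c_\psi^{-1}x^\top\bar H_\star^{-1}x\le c_\psi^{-2}\|x\|_{\widetilde V_\tau^{-1}}^2$, using $\sum_t x_tx_t^\top\preceq\widetilde V_\tau$ and $\bar H_\star\succeq c_\psi\widetilde V_\tau$ twice. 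A one-sided sub-Gaussian tail then gives $|\mathrm{(I)}|\le\sigma c_\psi^{-1}\sqrt{2\log(2/\delta)}\,\|x\|_{\widetilde V_\tau^{-1}}$ with probability at least $1-\delta$, with no dimension factor --- this is the term producing the asserted rate. Term $\mathrm{(II)}$ is a deterministic ridge bias, bounded by Cauchy--Schwarz in the $\bar H_\star^{-1}$ inner product by $c_\psi^{-1}\|x\|_{\widetilde V_\tau^{-1}}\|\theta^*\|_2$ and hence of lower order. Term $\mathrm{(III)}$ is the Taylor remainder: since $\psi''$ is Lipschitz on the relevant bounded range, $\|\bar H_\tau-\bar H_\star\|$ is controlled in the $\widetilde V_\tau$-geometry by $\|\widetilde\theta_\tau-\theta^*\|_{\widetilde V_\tau}$, so $\mathrm{(III)}$ is quadratic in the estimation error; one handles it by first bootstrapping the crude bound $\|\widetilde\theta_\tau-\theta^*\|_{\widetilde V_\tau}\le c_\psi^{-1}\big(\|\sum_t\epsilon_t x_t\|_{\widetilde V_\tau^{-1}}+\|\theta^*\|_2\big)$, which follows directly from the $c_\psi\widetilde V_\tau$-strong monotonicity of $\theta\mapsto\sum_t\psi'(x_t^\top\theta)x_t+\theta$. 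Collecting $\mathrm{(I)}$--$\mathrm{(III)}$ and absorbing the lower-order pieces and $\sqrt{2\log(2/\delta)}$ into $3\sigma c_\psi^{-1}\sqrt{\log(1/\delta)}$ yields the claim.

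The hard part will be the remainder $\mathrm{(III)}$: because $\bar H_\tau$ depends on the realized $\widetilde\theta_\tau$, and hence on the very noise we wish to concentrate, the scalar-sub-Gaussian argument cannot be applied to it directly, while a naive self-normalized bound on $\|\sum_t\epsilon_t x_t-\theta^*\|_{\widetilde V_\tau^{-1}}$ would reintroduce a $\sqrt d$ factor and destroy the dimension-free constant. The resolution is the two-stage scheme sketched above --- a cheap, monotonicity-based bound on $\|\widetilde\theta_\tau-\theta^*\|_{\widetilde V_\tau}$ first, then fed back to show $\mathrm{(III)}$ is negligible next to the $\sqrt{\log(1/\delta)}$-scale leading term --- and it is here that the magnitude of $\tau$ (equivalently, $\lambda_{\min}(\widetilde V_\tau)$ relative to the dimension) is used; a secondary, routine nuisance is the constant bookkeeping needed to land exactly on $3\sigma c_\psi^{-1}$.
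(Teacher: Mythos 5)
Your setup (stationarity of $\widetilde L$, mean value theorem, the bound $\sum_t(x^\top H^{-1}x_t)^2\le c_\psi^{-2}\|x\|_{\widetilde V_\tau^{-1}}^2$ for the noise term, Cauchy--Schwarz for the ridge-bias term) matches the paper's argument, but your decomposition introduces a third term that the paper does not have and that you cannot actually control under the lemma's hypotheses. The paper never linearizes at $\theta^*$: it inverts the exact mean-value Hessian $\widetilde\Sigma_\tau=\sum_{t=1}^\tau\psi''(x_t^\top\theta^+)x_tx_t^\top+I_{2d}$, writing $\widetilde\theta_\tau-\theta^*=\widetilde\Sigma_\tau^{-1}\bigl(\sum_t\varepsilon_tx_t-\theta^*\bigr)$, and uses the \emph{global} curvature bound $\psi''\ge c_\psi$ — which holds at the random intermediate point $\theta^+$ just as well as at $\theta^*$ — to get $\widetilde\Sigma_\tau^{-1}\preceq c_\psi^{-1}\widetilde V_\tau^{-1}$. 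There is therefore no Taylor remainder at all, and the two bounds you derive for your terms $\mathrm{(I)}$ and $\mathrm{(II)}$ already finish the proof. By replacing $\widetilde\Sigma_\tau$ with the Hessian at $\theta^*$ you buy a cleaner conditioning argument for $\mathrm{(I)}$, but you pay with the remainder $\mathrm{(III)}$, and that trade is fatal here.

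Concretely, your own bootstrap gives $\|\widetilde\theta_\tau-\theta^*\|_{\widetilde V_\tau}\le c_\psi^{-1}\bigl(\|\sum_t\varepsilon_tx_t\|_{\widetilde V_\tau^{-1}}+\|\theta^*\|\bigr)$, and the self-normalized quantity $\|\sum_t\varepsilon_tx_t\|_{\widetilde V_\tau^{-1}}$ scales like $\sigma\sqrt{d\log(\tau/\delta)}$; feeding this into $\mathrm{(III)}=x^\top\bar H_\star^{-1}(\bar H_\tau-\bar H_\star)(\widetilde\theta_\tau-\theta^*)$ with the Lipschitz bound on $\psi''$ yields a term of order $d\log(\tau)\cdot\|x\|_{\widetilde V_\tau^{-1}}$, which \emph{dominates} the claimed $\sqrt{\log(1/\delta)}\,\|x\|_{\widetilde V_\tau^{-1}}$ rather than being negligible. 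Your proposed rescue — using "the magnitude of $\tau$, equivalently $\lambda_{\min}(\widetilde V_\tau)$ relative to the dimension" — is not available: the lemma is asserted for every $\tau\ge0$ with no design or eigenvalue condition, the modified supCB built on it runs with no exploration phase ($\tau\equiv0$), and under adversarial contexts the design matrix can remain rank-deficient forever (as in the experiments where the context covariance has rank $d/2$), so $\lambda_{\min}(\widetilde V_\tau)$ can stay at the ridge level $1$ for all $t$. In short, the remainder-plus-bootstrap route reintroduces exactly the minimum-eigenvalue requirement of the unregularized analysis that this lemma is designed to avoid; the fix is to drop the linearization at $\theta^*$ and run your own argument for $\mathrm{(I)}$ and $\mathrm{(II)}$ with $\widetilde\Sigma_\tau$ (the mean-value Hessian) in place of $\bar H_\star$, exactly as the paper does.
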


\textbf{A modified supCB algorithm}: Based on \Cref{lem:theta_err_ridge}, we modify the CB-GLM subroutine in \Cref{algorithm:CBsubroutine} into a regularized version. In particular, instead of using the log-likelihood $L(\theta)$ and MLE $\widehat{\theta}$, we use the regularized log-likelihood $\widetilde{L}(\theta)$ and MLE $\widetilde{\theta}$
in step a of CB-GLM and further modify the design matrix $V$ to its regularized version $\widetilde{V}=\sum_{i\in\Psi} x_ix_i^\top + I_{2d}$ in step b of CB-GLM. Moreover, due to the use of regularized log-likelihood, we no longer require the exploration stage in the original supCB algorithm and thus set $\tau\equiv 0$ in \Cref{algorithm:SupCB}.

\Cref{thm:ucb_regret_adv} gives theoretical guarantees for the modified supCB algorithm under adversarial contexts. In particular, it shows that the modified supCB algorithm can achieve the near-optimal regret of order $\widetilde{O}(\sqrt{dT})$ under the adversarial contexts with the additional \Cref{assum_feature_adv}.

\vspace{-2mm}
\begin{thm}\label{thm:ucb_regret_adv}
Suppose \Cref{assum_feature} holds. For any $\delta \in (0,1),$ set $K=\sqrt{T/d}/\log(T)$, $\tau=0$ and $\alpha = {3\sigma u M_{\psi2}}/{c_\psi} \cdot \sqrt{\log(3TKS/\delta)}$. Recall $S=\lfloor \log_2(T) \rfloor$. For any $T$, we have that,  with probability at least $1-\delta-2\log (T)/T$, the regret of the modified supCB algorithm is upper bounded by $R_T \leq B_{S3}\cdot \sqrt{dT\log(T) \log(T/\delta)\log (T/d)}$,
where $B_{S1},B_{S2},B_{S3}>0$ are absolute constants that only depend on quantities in \Cref{assum_feature} and \Cref{assum_feature_adv}. In particular, setting~$\delta=1/\sqrt{T}$, we have that $\mathbb E(R_T)\lesssim\log^{3/2}(T)\sqrt{dT}.$ 
\end{thm}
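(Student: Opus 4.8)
The plan is to mirror the proof of \Cref{thm:ucb_regret}, replacing every invocation of \Cref{thm:theta_err}(iii) by \Cref{lem:theta_err_ridge}, and removing the parts of the argument that were only needed to control the minimum eigenvalue of the (unregularized) design matrix. First I would recall the regret decomposition already given in the excerpt: $R_T^\pi = \sum_t \{r(p_t^*,z_t,\theta^*)-r(p_t^o,z_t,\theta^*)\} + \sum_t\{r(p_t^o,z_t,\theta^*)-r(p_t^\pi,z_t,\theta^*)\}$. The first (approximation) term is $O(T/K)$ by the uniform $L_r$-Lipschitz continuity of the revenue functions (\Cref{lem:Lips}), and with $K=\sqrt{T/d}/\log(T)$ this is $O(\sqrt{dT}\log(T))$, which is absorbed into the stated bound; note this step does not use \Cref{assum_feature_adv} at all. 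So the work is entirely in bounding the second (revenue-GLB) term.

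Next I would set up the confidence-bound validity event. Using the decomposition/staging design of \Cref{algorithm:SupCB}, the key structural fact (\Cref{lem:indp}) is that within any stage $s$, the outcomes $\{y_i\}_{i\in\Psi_s^o(t)}$ are conditionally independent given the covariates $\{x_i\}_{i\in\Psi_s^o(t)}$; this fact is purely combinatorial and carries over verbatim since we only changed the estimator inside CB-GLM, not the arm-selection logic. Conditioning on this, I would apply \Cref{lem:theta_err_ridge} with $\widetilde V$ the regularized design matrix and $x=x_a$, and union-bound over all $a\in[K]$, all stages $s\in[S]$, and all rounds $t$ (at most $T$ of them), with per-event failure probability $\delta/(3TKS)$. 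This yields, on an event $\mathcal{E}$ of probability at least $1-\delta$, that simultaneously $|r_{t,a}^{(s)}-r(p^{(a)},z_t,\theta^*)| \le \alpha\|x_a\|_{\widetilde V^{-1}} = w_{t,a}^{(s)}$ for every relevant $(t,a,s)$, provided $\alpha \ge 3\sigma u M_{\psi2}/c_\psi\cdot\sqrt{\log(3TKS/\delta)}$ (the extra $uM_{\psi2}$ factor relative to \Cref{lem:theta_err_ridge} comes from converting the bound on $|x_a^\top(\widetilde\theta-\theta^*)|$ into a bound on $|r_{t,a}^{(s)}-r(p^{(a)},z_t,\theta^*)| = p^{(a)}|\psi'(x_a^\top\widetilde\theta)-\psi'(x_a^\top\theta^*)|$ via $|p^{(a)}|\le u$ and the $M_{\psi2}$-Lipschitzness of $\psi'$, exactly as in the unregularized proof).

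Then, on $\mathcal{E}$, I would run the standard sup-type analysis stage by stage. The filtering rule in step IV guarantees (by the usual induction on $s$) that the optimal arm $p_t^o$ is never eliminated and that every surviving arm at stage $s$ has instantaneous revenue gap at most $O(2^{-s})$ relative to optimal. Hence the per-round regret of the revenue-GLB is $O(2^{-s_t})$ where $s_t$ is the stage at which round $t$ terminates, plus $O(1/\sqrt T)$ for rounds terminating in step III. Summing, the total revenue-GLB regret is bounded by $O(\sqrt T)$ from step-III rounds plus $\sum_{s=1}^S 2^{-s}|\Psi_s|$ from step-II rounds, and the number of exploration rounds $|\Psi_s|$ in stage $s$ is controlled by the standard elliptical-potential / pigeonhole argument: each such round contributes $\|x_i\|_{\widetilde V^{-1}} > 2^{-s}/\alpha$, and $\sum_i \|x_i\|^2_{\widetilde V^{-1}} \lesssim d\log T$ by the potential lemma (using the regularized $\widetilde V \succeq I$, so no minimum-eigenvalue precondition is needed — this is precisely why we can take $\tau=0$), giving $|\Psi_s| \lesssim \alpha^2 4^s d\log T$ and hence $2^{-s}|\Psi_s| \lesssim \alpha^2 2^s d\log T$. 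Summing over $s\le S=\lfloor\log_2 T\rfloor$ telescopes to $O(\alpha^2 \sqrt T d\log T) = \widetilde O(\sqrt{dT})$ after plugging in $\alpha^2 = O(\log(TKS/\delta))$; combined with the approximation term and with $\delta=1/\sqrt T$ for the expectation bound (the failure event contributes at most $O(\delta\cdot uu_bT)=O(\sqrt T)$ to the expected regret), this gives $R_T \le B_{S3}\sqrt{dT\log(T)\log(T/\delta)\log(T/d)}$ and $\mathbb{E}(R_T)\lesssim \log^{3/2}(T)\sqrt{dT}$.

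The main obstacle — and the reason \Cref{assum_feature_adv}(b) is imposed — is that under adversarial contexts one cannot guarantee that the unregularized design matrix $\sum_{i}x_ix_i^\top$ has a minimum eigenvalue bounded away from zero, so the original argument (which relied on the pure-exploration phase $\mathcal F_s$ of length $\tau=\sqrt{dT}$ to certify via \eqref{lambda_min} that the MLE concentration threshold of \Cref{thm:theta_err}(iii) is met) breaks down. Replacing the MLE by the ridge-regularized MLE resolves this: \Cref{lem:theta_err_ridge} holds for \emph{any} $\tau\ge 0$ and \emph{any} covariate configuration, with the deviation bound expressed directly in the $\widetilde V_\tau^{-1}$-norm. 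The price is the global strong-convexity requirement $\psi''\ge c_\psi>0$, which is what makes $\widetilde L$ strongly convex and lets the self-normalized martingale bound go through without a data-dependent eigenvalue condition; verifying that all the constants $B_{S1},B_{S2},B_{S3}$ now depend only on the quantities in \Cref{assum_feature} and \Cref{assum_feature_adv} (in particular that $\kappa$ is replaced throughout by $c_\psi$) is routine bookkeeping.
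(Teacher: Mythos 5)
Your overall architecture is exactly the one the paper intends (the paper only sketches it): keep Lemmas \ref{lem:Lips}, \ref{lem:indp} and \ref{lem:gap_reward} unchanged, re-derive the confidence-bound event of \Cref{lem:HPCB} from \Cref{lem:theta_err_ridge} with $\kappa$ replaced by $c_\psi$ and the union bound over $(t,a,s)$ as before, and drop the exploration phase because $\widetilde V\succeq I_{2d}$ makes the elliptical-potential lemma applicable with no minimum-eigenvalue precondition. Up to the stage-counting step, what you write is correct and matches the paper's route.

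The gap is in your final summation over stages. From the potential argument you correctly get $|\Psi_s|\lesssim\alpha^2 4^s d\log T$, but the claim that $\sum_{s\le S}2^{-s}|\Psi_s|\lesssim\sum_{s\le S}\alpha^2 2^s d\log T$ ``telescopes to $O(\alpha^2\sqrt{T}\,d\log T)$'' is false: the geometric sum is of order $2^S\asymp T$, so as written you only obtain a bound of order $\alpha^2 dT\log T$, linear in $T$. Moreover, even the quantity you claim, $\alpha^2\sqrt{T}\,d\log T$, scales as $d\sqrt{T}$ up to logarithms, not as the theorem's $\sqrt{dT}$, so the concluding ``$=\widetilde O(\sqrt{dT})$'' silently drops a factor $\sqrt{d}$ (and an extra $\alpha$). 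The fix is the one used in the proof of \Cref{thm:ucb_regret}: do not solve the quadratic and substitute back; keep the inequality in the form $2^{-s}|\Psi_s|\le \alpha\sqrt{2d\log(T/d)\,|\Psi_s|}$ (cf.\ \eqref{eq:card_Psi}) and then apply the Cauchy--Schwarz inequality over $s$ together with $\sum_{s\le S}|\Psi_s|\le T$, which gives $\sum_s 2^{-s}|\Psi_s|\le \alpha\sqrt{2SdT\log(T/d)}$ --- a single factor of $\alpha$ and of $\sqrt{d}$ --- and hence the stated bound $B_{S3}\sqrt{dT\log(T)\log(T/\delta)\log(T/d)}$. (Alternatively, cap $|\Psi_s|\le T$ and balance $\min\{\alpha^2 2^s d\log T,\,2^{-s}T\}$ over $s$, which also yields $O(\alpha\sqrt{dT\log T})$; some such step is essential, and without it your argument does not close.)
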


The proof of \Cref{thm:ucb_regret_adv} follows the same structure as that of \Cref{thm:ucb_regret}. In particular, the results in Lemmas \ref{lem:Lips}, \ref{lem:indp} and \ref{lem:gap_reward} continue to hold for the modified supCB algorithm. The only modification is to re-establish the high probability confidence bound results in \Cref{lem:HPCB} (in particular, \eqref{eq-def-event-EX}) based on the newly developed \Cref{lem:theta_err_ridge} (previously established based on \Cref{thm:theta_err}(c)), which is straightforward. The detailed proof is thus omitted to conserve space.

\subsection{Technical details}
\begin{proof}[\textbf{Proof of \Cref{lem:theta_err_ridge}}]
In the proof, we drop the subscript $\tau$ in $\theta$ for notational simplicity. By the strong convexity and differentiability of $\widetilde{L}(\theta)$, we have that $\widetilde{L}'(\widetilde{\theta})=0$, where recall $\widetilde{\theta}$ is the unique global minimizer of $\widetilde{L}(\theta)$ over $\mathbb{R}^{2d}.$

In particular, for any $\theta\in \mathbb{R}^{2d}$, by Taylor expansion, we have that
\begin{align*}
    \widetilde L(\theta)-\widetilde L(\theta^*)=\widetilde L'(\theta^*)^\top(\theta-\theta^*)+1/2 (\theta-\theta^*)^\top\widetilde L''(\theta^+)(\theta-\theta^*),
\end{align*}
for some $\theta^+$ between $\widetilde{\theta}$ and $\theta^*.$ Thus, we have $\widetilde L(\theta)>\widetilde L(\theta^*)$ if it holds that
\begin{align*}
    (\theta-\theta^*)^\top \widetilde L''(\theta^+)(\theta-\theta^*) >   2 |\widetilde L'(\theta^*)^\top(\theta-\theta^*)|,
\end{align*}
By Cauchy-Schwartz inequality and the fact that $\widetilde L''(\theta^+)=\sum_{t=1}^\tau\psi''(x_t^\top \theta^+)x_tx_t^\top + I_{2d} \geq I_{2d}$, the above equation holds if $\|\theta-\theta^*\|>2\|\widetilde L'(\theta^*)\|=2\|\sum_{t=1}^\tau (y_t-\psi'(x_t^\top\theta^*)x_t\|=2\|\sum_{t=1}^\tau \varepsilon_t x_t\|.$ In other words, the unique minimizer $\widetilde{\theta}$ of $\widetilde{L}(\theta)$ over $\mathbb{R}^{2d}$ can always be found in the domain $\big\{\theta: \|\theta-\theta^*\|\leq \|\sum_{t=1}^\tau \varepsilon_t x_t\|\big\}$ and we have that $\widetilde{L}'(\widetilde{\theta})=0$.

Furthermore, by a Taylor expansion of $\widetilde{L}'(\widetilde{\theta})$, we have that $0=\widetilde{L}'(\widetilde{\theta})=\widetilde{L}'(\theta^*) + \widetilde{L}''(\theta^+)(\widetilde{\theta}-\theta^*)$, for some $\theta^+$ between $\widetilde{\theta}$ and $\theta^*.$ Rearranging the equation, we have that
\begin{align*}
    \widetilde{L}''(\theta^+)(\widetilde{\theta}-\theta^*)=& \left[\sum_{t=1}^\tau\psi''(x_t^\top \theta^+)x_tx_t^\top + I_{2d} \right] (\widetilde{\theta}-\theta^*)\\
    =&-\widetilde{L}'(\theta^*)=\left\{\sum_{t=1}^\tau (y_t-\psi'(x_t^\top\theta^*)x_t\right\}-\theta^* =\left\{\sum_{t=1}^\tau\varepsilon_t x_t\right\}-\theta^*.
\end{align*}
Denote $\widetilde{\Sigma}_\tau=\sum_{t=1}^\tau \psi''(x_t^\top \theta^+)x_tx_t^\top + I_{2d}$ and $S_\tau=\sum_{t=1}^\tau\varepsilon_t x_t$. We have that $\widetilde{\theta}-\theta^*=\widetilde{\Sigma}_\tau^{-1}(S_\tau -\theta^*)$. In addition, by \Cref{assum_feature_adv}(ii), we have that
\begin{align}\label{eq:bound_of_adv_Sigma}
    \widetilde{\Sigma}_\tau\geq c_\psi \widetilde V_\tau \text{ and } \widetilde{\Sigma}_\tau^{-1}\leq c_\psi^{-1} \widetilde V_\tau^{-1},
\end{align}
 where recall $\widetilde{V}_\tau=\sum_{t=1}^\tau x_tx_t^\top +I_{2d}$.

Therefore, we have that
\begin{align*}
    \big|x^\top (\widetilde{\theta}_\tau-\theta^*)\big| \leq \big|x^\top\widetilde{\Sigma}_\tau^{-1}S_\tau\big| +\big|x^\top\widetilde{\Sigma}_\tau^{-1}\theta^*\big|\leq \big|x^\top\widetilde{\Sigma}_\tau^{-1}S_\tau\big| +c_\theta\big\|x^\top\widetilde{\Sigma}_\tau^{-1}\big\|.
\end{align*}
We now bound the two terms one by one. For the first term, we have $x^\top\widetilde{\Sigma}_\tau^{-1}S_\tau=\sum_{t=1}^\tau x^\top \widetilde{\Sigma}_\tau^{-1}x_t\varepsilon_t.$ Therefore, by \Cref{assum_feature}(d) and Azuma-Hoeffding inequality, we have that
\begin{align*}
    \mathbb{P}\left\{|x^\top\widetilde{\Sigma}_\tau^{-1}S_\tau|> c\right\}\leq 2\exp\left\{-\frac{c^2}{2\sum_{t=1}^\tau (x^\top \widetilde{\Sigma}_\tau^{-1}x_t)^2\sigma^2} \right\}\leq 2\exp\left\{-\frac{c^2}{2\sigma^2 c_\psi^{-2}\|x\|_{\widetilde{V}_\tau^{-1}}^2} \right\},
\end{align*}
where the second inequality follows from that
\begin{align*}
    \sum_{t=1}^\tau (x^\top \widetilde{\Sigma}_\tau^{-1}x_t)^2=x^\top \widetilde{\Sigma}_\tau^{-1} \sum_{t=1}^\tau x_tx_t^\top\widetilde{\Sigma}_\tau^{-1} \leq x^\top \widetilde{\Sigma}_\tau^{-1} \widetilde{V}_\tau\widetilde{\Sigma}_\tau^{-1}x\leq c_\psi^{-2} x^\top\widetilde{V}_\tau^{-1}x = c_\psi^{-2}\|x\|_{\widetilde{V}_\tau^{-1}}^2.
\end{align*}
Setting $c=\sqrt{2}\sigma c_{\psi}^{-1}\|x\|_{\widetilde{V}_\tau^{-1}}\sqrt{\log(1/(4\delta))}$, we have with probability at least $1-\delta$, it holds that
\begin{align*}
    |x^\top\widetilde{\Sigma}_\tau^{-1}S_\tau|\leq \sqrt{2}\sigma c_{\psi}^{-1}\sqrt{\log(1/(2\delta))}\|x\|_{\widetilde{V}_\tau^{-1}}.
\end{align*}

For the second term, we have
\begin{align*}
    c_\theta\big\|x^\top\widetilde{\Sigma}_\tau^{-1}\big\|\leq c_\theta\sqrt{x^\top\widetilde{\Sigma}_\tau^{-1}\widetilde{\Sigma}_\tau^{-1}x}\leq c_\theta\sqrt{x^\top\widetilde{\Sigma}_\tau^{-1}\widetilde{V}_\tau\widetilde{\Sigma}_\tau^{-1}x}\leq c_\theta c_\psi^{-1}\|x\|_{\widetilde{V}_\tau^{-1}}.
\end{align*}
Combining the two results together finishes the proof.
\end{proof}

}

{\color{black}
\section{Dynamic pricing under adversarial contexts}\label{sec:adversarial_supplement} 

In this section, we further study dynamic pricing under adversarial contexts. In particular, we relax the stochastic contexts condition in \Cref{assum_feature}(a)-(b) of the main text to \Cref{assum_feature_adv}, where we only assume $\{z_t\}$ is a sequence of bounded vectors.

\begin{ass}\label{assum_feature_adv}
(a) The feature vector $\{z_t\}_{t \in [T]} \subset \mathcal{Z}\subseteq \{z\in \mathbb R^d: \, \|z\|\leq 1\}$ is an arbitrary sequence of bounded vectors. (b).\ The parameter space $\Theta$ is compact such that $\|\Theta\|\leq c_\theta$.
\end{ass}

Due to the adversarial nature of $\{z_t\}$, the design matrix of the log-likelihood function $L(\theta)$ can be rank deficient. Therefore, for model estimation, we instead rely on the regularized negative log-likelihood function 
$$\widetilde L_t(\theta)=-\left\{\sum_{i=1}^t y_ix_i^\top \theta -\psi(x_i^\top \theta)\right\}+\lambda\|\theta\|_2^2,$$
where $\lambda>0$ is a constant (e.g.\ $\lambda=1$).

\Cref{algorithm:UCB_adv} proposes a novelly designed UCB algorithm for dynamic pricing under the adversarial contexts. We further establish its theoretical guarantees in \Cref{thm:UCB_adv}, where we show that it can achieve a  regret upper bound of order $\widetilde{O}(d\sqrt{T})$ under adversarial contexts, matching prior works in the literature~\citep{wang2021dynamic}.

\vspace{2mm}
\begin{algorithm}[ht]
{\color{black}
\begin{algorithmic}
    \State \textbf{Input}: Tuning parameter $\lambda$, price interval $[l,u]$, confidence parameter $\alpha$, parameter space $\Theta$.
    \vskip 1mm
    
    \State \textbf{Initialization}: Set $V_0=\lambda I_{2d}$.
    \vskip 1mm

    \For{$t= 1, 2, \cdots, T$}
    \State a.\ Solve the regularized regression and obtain $\widehat{\theta}_t=\argmin_{\theta\in\Theta}\widetilde L_{t-1}(\theta)$.
    \State b.\ Solve the one-dimensional optimization
    \begin{align}\label{eq:ucb_price}
        p_t=\argmax_{p\in [l,u]} U(p,z_t):= p\cdot \psi'\left( x_t^{(p)\top}\widehat\theta_t+ \alpha \|x_t^{(p)}\|_{V_{t-1}^{-1}}\right),
    \end{align}
    \vskip -2mm
    \hspace{1.8mm} where we denote $x_t^{(p)}=(z_t^\top,-pz_t^\top)^\top$.
    \vskip 1mm
    \State c.\ Offer price $p_t$, record $y_t$ and $x_t=(z_t^\top,-p_tz_t^\top)^\top$. Set $V_t=V_{t-1}+ x_tx_t^\top$.
    \EndFor
    \caption{\color{black} The UCB algorithm for dynamic pricing}
    \label{algorithm:UCB_adv}
\end{algorithmic}}
\end{algorithm}


\begin{thm}\label{thm:UCB_adv}
Suppose Assumption \ref{assum_feature}(c)-(e) and Assumption \ref{assum_feature_adv} hold. Set $\lambda=1$ and $\alpha=B_{A1} \sqrt{ d\log(T(1+u^2)/\delta)}$. We have for any $T>0$, the regret of UCB is upper bounded by $R_T \leq B_{A2}\cdot d\sqrt{T}\log (T/\delta)$,
where $B_{A1}, B_{A2}>0$ are absolute constants that only depend on quantities in Assumption \ref{assum_feature}(c)-(e) and \Cref{assum_feature_adv}.
\end{thm}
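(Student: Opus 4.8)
The plan is to follow the classical optimism-in-the-face-of-uncertainty argument for generalized linear bandits, adapted to the revenue objective $r(p,z,\theta) = p\cdot\psi'(x^{(p)\top}\theta)$. First I would establish a high-probability confidence set: on a good event $\mathcal{E}$ of probability at least $1-\delta$, the regularized MLE $\widehat\theta_t$ satisfies $|x^\top(\widehat\theta_t - \theta^*)| \leq \alpha\|x\|_{V_{t-1}^{-1}}$ simultaneously for all $t$ and all $x$ in the relevant (discretized or covering) set, where $\alpha = B_{A1}\sqrt{d\log(T(1+u^2)/\delta)}$. The ingredients here are a self-normalized martingale bound (Azuma–Hoeffding / the vector-valued concentration of \cite{abbasi2011improved}) applied to $\sum_i \varepsilon_i x_i$, together with the strong convexity from the $\lambda\|\theta\|_2^2$ penalty which controls the Hessian $\widetilde L_t''(\theta) \succeq \sum_i \psi''(x_i^\top\theta)x_ix_i^\top + \lambda I \succeq \lambda I$ (this is essentially the adversarial-contexts analogue of \Cref{lem:theta_err_ridge}, here using \Cref{assum_feature}(e) to lower bound $\psi''$ on the bounded domain rather than the global bound $c_\psi$). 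The $\sqrt{d}$ in $\alpha$ comes from taking a union bound over an $\varepsilon$-net of the $2d$-dimensional unit ball, as is standard.

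Second, I would argue optimism: on $\mathcal{E}$, since $p_t$ maximizes the optimistic revenue $U(p,z_t) = p\cdot\psi'(x_t^{(p)\top}\widehat\theta_t + \alpha\|x_t^{(p)}\|_{V_{t-1}^{-1}})$ and $\psi'$ is increasing, we have $U(p_t,z_t) \geq U(p_t^*,z_t) \geq r(p_t^*,z_t,\theta^*)$, using that $\psi'(x^{(p_t^*)\top}\widehat\theta_t + \alpha\|x^{(p_t^*)}\|_{V_{t-1}^{-1}}) \geq \psi'(x^{(p_t^*)\top}\theta^*)$ on $\mathcal{E}$. Then the per-round regret is bounded by
\begin{align*}
r(p_t^*,z_t,\theta^*) - r(p_t,z_t,\theta^*) &\leq U(p_t,z_t) - r(p_t,z_t,\theta^*) \\
&= p_t\bigl[\psi'\bigl(x_t^{(p_t)\top}\widehat\theta_t + \alpha\|x_t^{(p_t)}\|_{V_{t-1}^{-1}}\bigr) - \psi'(x_t^{(p_t)\top}\theta^*)\bigr].
\end{align*}
Using the mean-value theorem and $\psi'' \leq M_{\psi2}$, this is at most $u M_{\psi2}\bigl(|x_t^{(p_t)\top}(\widehat\theta_t-\theta^*)| + \alpha\|x_t^{(p_t)}\|_{V_{t-1}^{-1}}\bigr) \leq 2u M_{\psi2}\alpha\|x_t\|_{V_{t-1}^{-1}}$ on $\mathcal{E}$.

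Third, I would sum over $t$ and apply the elliptic potential (potential function) lemma: $\sum_{t=1}^T \min(1,\|x_t\|_{V_{t-1}^{-1}}^2) \leq 2\log(\det V_T/\det V_0) \leq 2\cdot 2d\log(1 + T(1+u^2)/(2d\lambda))$, so by Cauchy–Schwarz $\sum_t \|x_t\|_{V_{t-1}^{-1}} \lesssim \sqrt{T\cdot d\log(Tu^2)}$ (the $1+u^2$ arising from $\|x_t\|^2 = \|z_t\|^2 + p_t^2\|z_t\|^2 \leq 1+u^2$; a truncation handles rounds where $\|x_t\|_{V_{t-1}^{-1}}>1$, of which there are $O(d\log T)$). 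Combining, $R_T \leq 2uM_{\psi2}\alpha\sqrt{T}\cdot\sqrt{2d\log(\cdots)} + (\text{lower-order}) \lesssim \alpha\sqrt{dT\log(Tu^2)} \lesssim d\sqrt{T}\log(T/\delta)$, with the failure probability $\delta$ contributing the remaining additive term from the complement of $\mathcal{E}$. Collecting constants into $B_{A1},B_{A2}$ (depending only on $\sigma, c_\theta, M_{\psi1}, M_{\psi2}$, and the lower bound on $\psi''$ on the bounded domain) gives the claim.

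The main obstacle I anticipate is the confidence-set step under adversarial contexts: without the stochastic-design eigenvalue growth of \Cref{assum_feature}(a)-(b), one cannot invoke \Cref{thm:theta_err}(iii) and must instead work entirely in the self-normalized / $V_{t-1}^{-1}$-weighted geometry, carefully handling the nonlinearity of $\psi'$ (the estimation error $\widehat\theta_t - \theta^*$ relates to $\sum \varepsilon_i x_i$ only through the random Hessian $\widetilde\Sigma_t = \sum_i \psi''(x_i^\top\theta^+)x_ix_i^\top + \lambda I$, which sandwiches between $c\,V_t$ and $M_{\psi2}V_t$ via \Cref{assum_feature}(e)). Once that self-normalized bound is in place with the correct $\sqrt{d}$ dependence, the optimism and potential-lemma steps are routine. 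A secondary technical point is that $p_t$ is chosen by a continuous optimization over $[l,u]$ rather than a finite arm set, so the union bound in the confidence step must be over a net of $x$-vectors (or one argues the bound holds uniformly in $x$ at the cost of the $\sqrt{d}$ covering factor already budgeted in $\alpha$); this is why the factor $\sqrt{d}$, not $\sqrt{\log T}$ alone, appears and why the final rate is $d\sqrt{T}$ rather than $\sqrt{dT}$, consistent with \cite{wang2021dynamic}.
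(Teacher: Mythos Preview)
Your optimism and elliptic-potential steps match the paper exactly. The only difference is the confidence-set step, where your account contains a small misattribution. The paper (\Cref{lem:ridge_error}) does not use an $\varepsilon$-net: it bounds $\|\widehat\theta_t-\theta^*\|_{V_{t-1}}$ directly by combining $\widetilde L_{t-1}(\widehat\theta_t)\leq\widetilde L_{t-1}(\theta^*)$, the lower bound $\psi''\geq\kappa^o:=\inf_{x\in\mathcal X,\,\theta\in\Theta}\psi''(x^\top\theta)$ on the compact domain from \Cref{assum_feature}(e) (your diagnosis here is correct), and the self-normalized vector martingale bound of \cite{abbasi2011improved}, which yields $\|\sum_i\varepsilon_ix_i\|_{V_{t-1}^{-1}}^2\leq 2\sigma^2\log\bigl(\det V_{t-1}/(\delta\lambda^{2d})\bigr)\lesssim \sigma^2 d\log(T(1+u^2)/\delta)$ uniformly in $t$. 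The $\sqrt d$ in $\alpha$ is this log-determinant factor, not a covering number. Once $\|\widehat\theta_t-\theta^*\|_{V_{t-1}}\leq\alpha$ is in hand, Cauchy--Schwarz gives $|x^\top(\widehat\theta_t-\theta^*)|\leq\alpha\|x\|_{V_{t-1}^{-1}}$ for \emph{every} $x$ simultaneously, so the continuous price range $[l,u]$ poses no obstacle and your ``secondary technical point'' dissolves.

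A net argument could reach the same order, but beware: the pointwise Azuma--Hoeffding route you allude to (in the style of \Cref{lem:theta_err_ridge}) requires $\{y_t\}$ to be conditionally independent given $\{x_t\}$, which fails here because each $x_t$ is chosen adaptively by the UCB rule using past rewards. That is precisely why the paper switches to the self-normalized bound, which handles adaptive designs and already delivers uniform $V_{t-1}$-norm control, rendering the covering step superfluous.
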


The key novelty of our proposed UCB algorithm lies in its design of the upper confidence bound for the revenue function (i.e.\ \eqref{eq:ucb_price}), which takes a much simpler structure than existing UCB algorithms in prior works (\cite{wang2021dynamic}, see below for more discussions) and thus is computationally more efficient. In particular, simple algebra shows that the upper confidence bound $U(p,z_t)$ in \eqref{eq:ucb_price} takes the form $U(p,z_t)=p\cdot \psi'\left(a_{t0}+a_{t1}p+a_{t2}p^2\right)$, for some $a_{t0},a_{t1},a_{t2}$, which is a {one}-dimensional function of $p$ and can therefore be optimized efficiently. 

\textbf{Intuition of our UCB}: We now give some intuition of our UCB $U(p,z_t)$ in \eqref{eq:ucb_price}. Recall that the true revenue function takes the form $r(p,z_t,\theta^*)=p\cdot\psi'(x_t^{(p)\top}\theta^*)$, where we denote $x_t^{(p)}=(z_t^\top,-pz_t^\top)^\top$. A key observation is that due to the structure of $r(p,z_t,\theta^*)$ and the fact that $p>0$, an upper bound of $\psi'(x_t^{(p)\top}\theta^*)$ directly leads to a upper bound of $r(p,z_t,\theta^*)$. Moreover, due to the intrinsic convexity of GLM (i.e.\ $\psi''(\cdot)>0$, see \Cref{assum_feature}(e)), we have $\psi'(\cdot)$ is a strictly increasing function and thus an upper bound of $x_t^{(p)\top}\theta^*$ directly leas to an upper bound of $\psi'(x_t^{(p)\top}\theta^*)$. This argument leads to our upper confidence bound $U(p,z_t)$, as we show that, with a properly chosen parameter $\alpha$, it holds that $x_t^{(p)\top}\widehat\theta_t+ \alpha \|x_t^{(p)}\|_{V_{t-1}^{-1}}$ is a high probability upper bound for $x_t^{(p)\top}\theta^*$ for all $p\in [l,u]$ and $t\in [T]$.

\textbf{Comparison with existing works}: 
In the literature, \cite{wang2021dynamic} is the only work that considers dynamic pricing under adversarial contexts with GLM demand models. In particular, \cite{wang2021dynamic} proposes a (computationally difficult) UCB algorithm and shows its regret upper bound is $\widetilde{O}(d\sqrt{T}).$ Their UCB takes the form (in our notation)
\begin{align}\label{eq:WTL}
    p_t=\argmax_{p\in[l,u]} \widetilde U(p,z_t):= \max_{\theta=(\alpha,\beta)\in \mathcal{C}_t} p\cdot \psi'(z_t^\top \alpha-z_t^\top\beta p) 
\end{align}
where $C_t=\{\theta\in\Theta: \|\theta-\widehat{\theta}_t\|_{V_{t-1}}\leq \alpha\}$ is the confidence set for $\theta$ at time $t$. As noted by \cite{wang2021dynamic}, the inner optimization of $\widetilde U(p,z_t)$ is a $2d$-dimensional non-convex optimization and is difficult to solve. To bypass the computational difficulty, the authors propose to \textit{approximate} the optimization in \eqref{eq:WTL} via a heuristic Monte Carlo approach, which then incurs a worse regret upper bound than $\widetilde{O}(d\sqrt{T})$ and thus is unsatisfactory.

An an alternative, \cite{wang2021dynamic} further proposes a Thompson Sampling based algorithm that is computationally feasible and achieves the $\widetilde{O}(d\sqrt{T})$ regret upper bound. However, it requires a strong assumption~(see Assumption 4 therein) that there exists an absolute constant $\underline{g}>0$ such that 
\begin{align*}
    \inf_{z\in \mathcal{Z}, p \in [l,u], \theta\in\widetilde{\Theta}}\psi''(z^\top \alpha-z^\top \beta p)>\underline{g}>0,
\end{align*}
where $\widetilde{\Theta}=\left\{\theta \in \mathbb R^{2d}: \|\theta-\theta^*\|_2\leq c\cdot \sqrt{d\log T} \right\}$ for some constant $c>0.$ Note this may not hold even for logistic regression since $\widetilde{\Theta}$ is \textit{not} compact but expands with $d,T$. In addition, our numerical experiments in the next section show that our UCB algorithm is computationally more efficient than the TS algorithm in \cite{wang2021dynamic}.

\subsection{Numerical experiments}
In this section, we conduct numerical experiments to compare the finite sample performance and computational time of our proposed UCB algorithm in \Cref{algorithm:UCB_adv} and the Thompson Sampling based algorithm~(hereafter TS-WTL) in \cite{wang2021dynamic}. Note that we choose not to implement the UCB algorithm in \cite{wang2021dynamic} since it requires a $2d$-dimensional non-convex optimization and thus is computationally highly demanding. Moreover, the numerical result in \cite{wang2021dynamic} indicates that their heuristic Monte Carlo based UCB algorithm has very similar performance in terms of regret as their Thompson Sampling based algorithm.

For the demand model, we follow \Cref{subsec:simu_setting} of the main text and consider the logistic regression setting as in \eqref{eq:logistic_pricing} with simulation scenario {(S1)}, where $\alpha^*=1.6\cdot\mathbf{1}_d/\sqrt{d}$ and $\beta^*=\mathbf{1}_d/\sqrt{d}$. However, the feature vector $\{z_t=(z_{t,1},\cdots,z_{t,d})^{\top}\}_t$ are not i.i.d.\ but adversarial. In particular, we follow the setting in \cite{wang2021dynamic} and consider two scenarios.

\textbf{(A1)}.\ For $t\leq T/2$, we set $z_t=(z_t^*, 1.5\cdot \mathbf{1}_{d/2})/\sqrt{d}$, and for $t>T/2$, we set $z_t=(1.5\cdot \mathbf{1}_{d/2}, z_t^*)/\sqrt{d}$, where $z_t^*$ is a $d/2$-dimensional random vector with $\{z_{t,i}^*\}_{i=1}^{d/2}$ being i.i.d.\ uniform$(0, 3)$ random variables. 

\textbf{(A2)}.\ For all $t\in[T]$, we set $z_t=(z_t^*, 1.5\cdot \mathbf{1}_{d/2})/\sqrt{d}$, where $z_t^*$ is a $d/2$-dimensional random vector with $\{z_{t,i}^*\}_{i=1}^{d/2}$ being i.i.d.\ uniform$(0, 3)$ random variables. 

By design, the feature vector $\{z_t\}$ is adversarial under both (A1) and (A2), where its covariance matrix is always of rank $d/2$ at any time $t\in [T]$, making accurate estimation of the model parameter infeasible. We set the confidence parameter as $\alpha=\sqrt{d\log T}\log(\log T)/10$ for the UCB algorithm in \Cref{algorithm:UCB_adv} and follow \cite{wang2021dynamic} for the implementation of the TS-WTL algorithm. Similar to \cite{wang2021dynamic}, we set the horizon $T\in \{1000, 2500, 5000,7500, 10000\}$ and set the dimension $d\in \{4,8,12,16\}$. 

\Cref{fig:A1} reports the performance of UCB and TS-WTL under (A1). Specifically, \Cref{fig:A1}(left) reports the mean regret $\overline{R}_{T,d}$ (and the confidence interval $I_{T,d}$) of TS-WTL and UCB at different $(T,d)$ under (A1). As can be seen, despite the adversarial nature of the feature vectors, for both UCB and TS-WTL, given a fixed dimension $d$, the regret scales sublinearly with respect to $T$, providing numerical evidence for Theorem \ref{thm:UCB_adv}. In addition, given a fixed horizon $T$, the regret of both algorithms increase with the dimension $d$. Moreover, UCB consistently achieves lower regret than TS-WTL. For illustration, \Cref{fig:A1}(middle) further gives the boxplot of $\{R_{T,d}^{(i)}\}_{i=1}^{500}$ for both UCB and TS-WTL at each $d\in \{4,8,12,16\}$ with a fixed $T=10000.$ \Cref{fig:A1}(right) reports the average computational time (and the confidence interval) of TS-WTL and UCB at different $(T,d)$ under (A1), where UCB is seen to be computationally more efficient than TS-WTL. The performance of UCB and TS-WTL under (A2) is reported in \Cref{fig:A2} of the supplement, where similar phenomenon is observed.

\begin{figure}[H]
\vspace{-1mm}
\hspace*{-6mm}
    \begin{subfigure}{0.32\textwidth}
	\includegraphics[angle=0, width=1.1\textwidth]{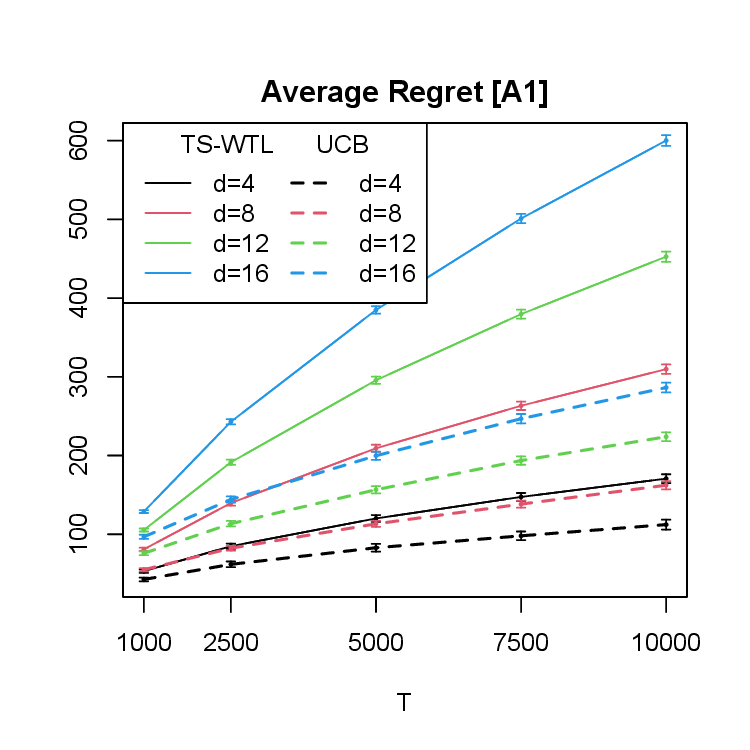}
	\vspace{-0.5cm}
    \end{subfigure}
    ~
    \begin{subfigure}{0.32\textwidth}
	\includegraphics[angle=0, width=1.1\textwidth]{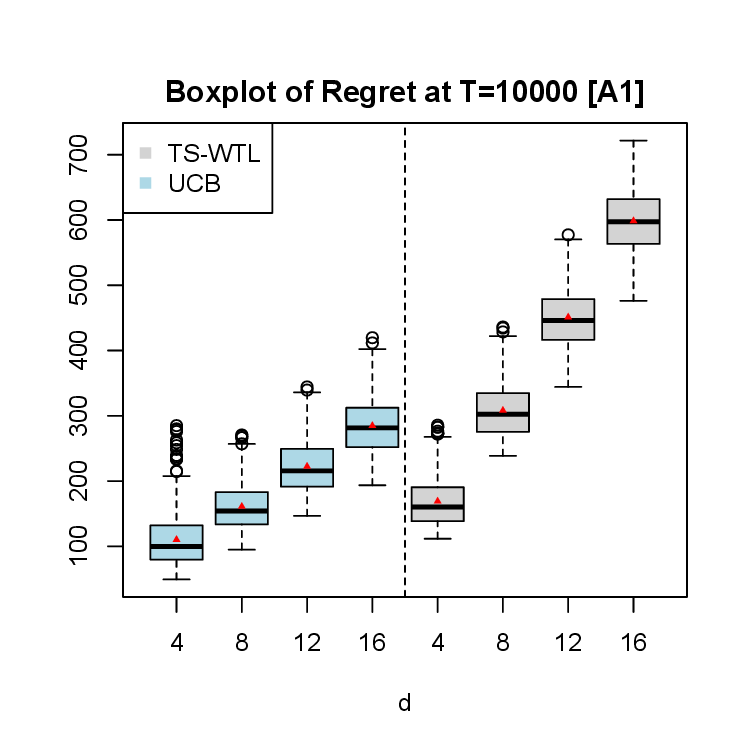}
	\vspace{-0.5cm}
    \end{subfigure}
    ~
    \begin{subfigure}{0.32\textwidth}
	\includegraphics[angle=0, width=1.1\textwidth]{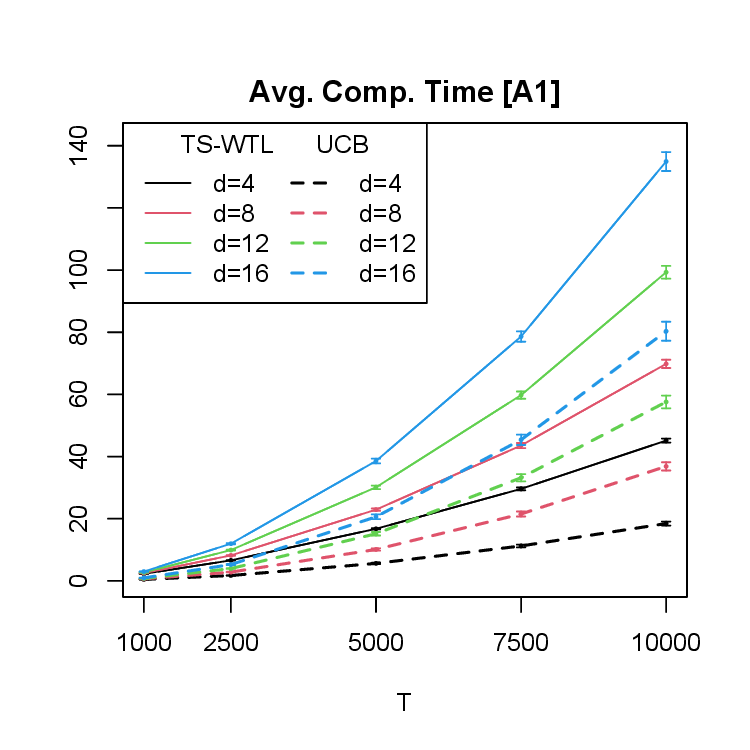}
	\vspace{-0.5cm}
    \end{subfigure}
    \caption{{\color{black} Performance of TS-WTL and UCB under (A1). [Left]:  Mean regret (with C.I.) under different $(d,T)$.  [Middle]: Boxplot of regrets at different $d$ (with $T=10000$). [Right]: Mean comp. time (with C.I.) under different $(d,T)$.}}
    \label{fig:A1}
\end{figure}

\begin{figure}[H]
\hspace*{-6mm}
    \begin{subfigure}{0.32\textwidth}
	\includegraphics[angle=0, width=1.1\textwidth]{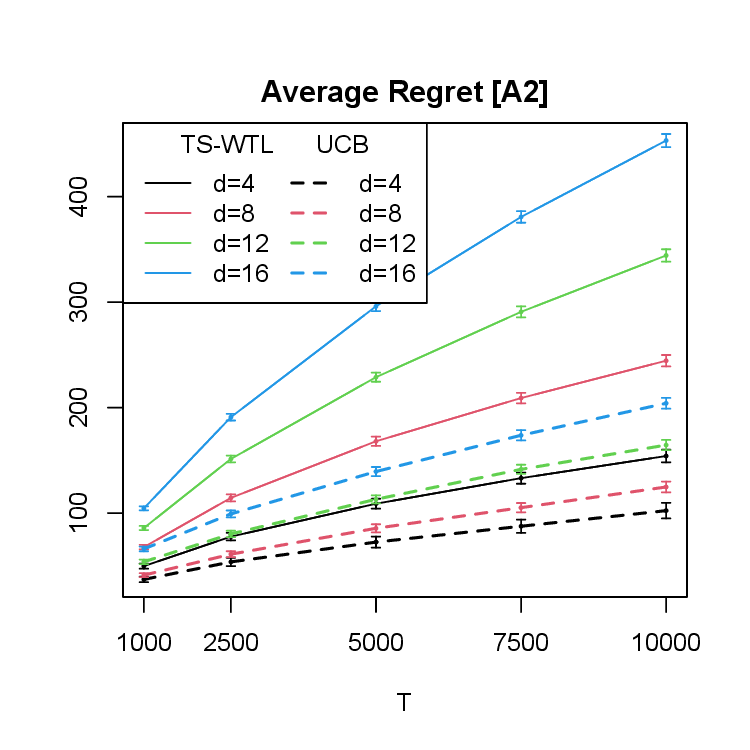}
	\vspace{-0.5cm}
    \end{subfigure}
    ~
    \begin{subfigure}{0.32\textwidth}
	\includegraphics[angle=0, width=1.1\textwidth]{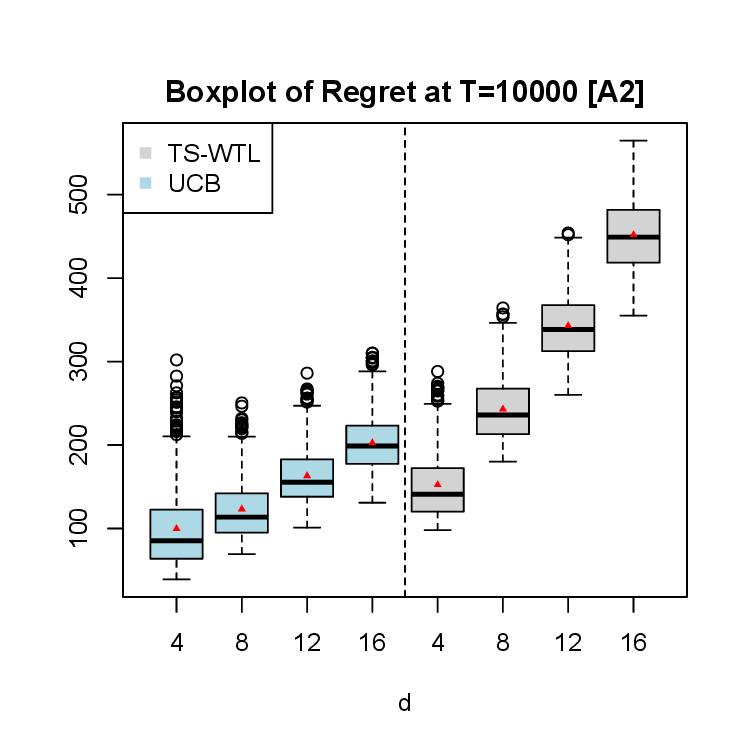}
	\vspace{-0.5cm}
    \end{subfigure}
    ~
    \begin{subfigure}{0.32\textwidth}
	\includegraphics[angle=0, width=1.1\textwidth]{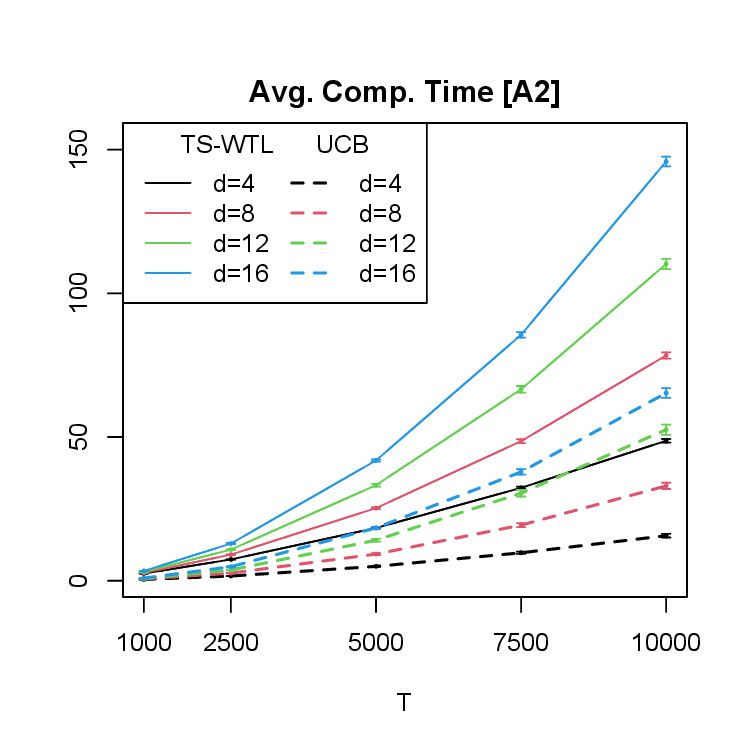}
	\vspace{-0.5cm}
    \end{subfigure}
    \caption{{\color{black} Performance of TS-WTL and UCB under (A2). [Left]:  Mean regret (with C.I.) under different $(d,T)$.  [Middle]: Boxplot of regrets at different $d$ (with $T=10000$). [Right]: Mean comp. time (with C.I.) under different $(d,T)$.}}
    \label{fig:A2}
\end{figure}

\subsection{Proof of \Cref{thm:UCB_adv}}
\begin{proof}[\textbf{Proof of \Cref{thm:UCB_adv}}]
Fix any time point $t,$ denote $p^*_t$ as the optimal price given $z_t$.

First, by \Cref{lem:ridge_error}, with probability at least $1-\delta$, we have that
\begin{align}\label{eq:score_diff}
    |x_t^{(p)\top}(\widehat{\theta}_t-\theta^*)|\leq \|x_t^{(p)}\|_{V_{t-1}^{-1}}\|\widehat{\theta}_t-\theta^*\|_{V_{t-1}}\leq \alpha \|x_t^{(p)}\|_{V_{t-1}^{-1}}
\end{align}
for all $p\in [l,u]$ and $t\in [T].$

Define $M_{\psi 2}^o:=\sup_{x\in\mathcal{X},\theta\in\Theta}|\psi''(x^\top\theta)|\vee 1$, which is an absolute constant. We have that the regret on time $t$ can be bounded via
\begin{align*}
    & r(p_t^*,z_t,\theta^*)-r(p_t,z_t,\theta^*)\\
    =&p_t^*\psi'\left( x_t^{(p^*_t)\top}\theta^*\right)-p_t\psi'\left( x_t^{(p_t)\top}\theta^*\right)\\
    \leq &  p_t^*\psi'\left( x_t^{(p_t^*)\top}\widehat\theta_t+ \alpha \|x_t^{(p_t^*)}\|_{V_{t-1}^{-1}}\right)-p_t\psi'\left( x_t^{(p_t)\top}\theta^*\right)\\
    \leq & p_t \psi'\left( x_t^{(p_t)\top}\widehat\theta_t+ \alpha \|x_t^{(p_t)}\|_{V_{t-1}^{-1}}\right)-p_t\psi'\left( x_t^{(p_t)\top}\theta^*\right)\\
    \leq & p_t M_{\psi 2}^o\cdot\left(x_t^{(p_t)\top}\widehat\theta_t-x_t^{(p_t)\top}\theta^*+\alpha \|x_t^{(p_t)}\|_{V_{t-1}^{-1}}\right)\\
    \leq & 2u M_{\psi 2}^o \alpha \|x_t^{(p_t)}\|_{V_{t-1}^{-1}},
\end{align*}
where the first inequality follows from \eqref{eq:score_diff} and that $\psi'(\cdot)$ is an increasing function, the second inequality follows from the definition of $p_t$, the third inequality follows from a Taylor expansion and the last inequality follows from \eqref{eq:score_diff} and Cauchy-Schwartz. Note that one can clip the upper confidence bound $x_t^{(p)\top}\widehat\theta_t+ \alpha \|x_t^{(p)}\|_{V_{t-1}^{-1}}$ at $\sup_{x\in\mathcal{X},\theta\in \Theta}x^\top \theta$ for all $p$ and $t,$ which ensures $M_{\psi 2}^o$ is sufficient for the third inequality. We omit this for notational simplicity.

Therefore, we have that with probability at least $1-\delta$, the total regret can be bounded via
\begin{align*}
    R_T=&\sum_{t=1}^T r(p^*_t,z_t,\theta^*)-r(p_t,z_t,\theta^*)\leq 2u M_{\psi 2}^o \alpha \sum_{t=1}^T\|x_t^{(p_t)}\|_{V_{t-1}^{-1}}\\
    \leq & 2u M_{\psi 2}^o \alpha\sqrt{T} \sqrt{\sum_{t=1}^T\|x_t^{(p_t)}\|_{V_{t-1}^{-1}}^2}\leq 2u M_{\psi 2}^o \alpha\sqrt{T}\sqrt{2\log(\det (V_T))}\\
    \leq & 2u M_{\psi 2}^o \alpha\sqrt{T}\sqrt{2d\log(T(1+u^2))},
\end{align*}
where the second inequality follows from Cauchy-Schwartz, and the third and fourth inequalities follows from Lemmas 11 and 10 in \cite{abbasi2011improved} respectively.
\end{proof}

\begin{lem}\label{lem:ridge_error}
    Denote $\Delta_t=\widehat{\theta}_t-\theta^*$. For any $\delta>0$, with probability at least $1-\delta$, for all $t>0$, we have that
    \begin{align*}
        \|\Delta_t\|_{V_{t-1}}^2\leq B_{A1} \sqrt{ d\log(t(1+u^2)/\delta)},
    \end{align*}
    where $B_{A1}$ is an absolute constant that only depend on $\lambda$ and quantities in Assumption \ref{assum_feature}(c)-(e) and \Cref{assum_feature_adv}.
\end{lem}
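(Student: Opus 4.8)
The plan is to follow the standard self-normalized martingale analysis for regularized GLM estimation, adapted to the pricing design where $x_t = (z_t^\top, -p_t z_t^\top)^\top$. First I would write down the first-order optimality condition for $\widehat\theta_t = \argmin_{\theta\in\Theta}\widetilde L_{t-1}(\theta)$. Since $\Theta$ is compact and $\widetilde L_{t-1}$ is strictly convex (by the $\lambda\|\theta\|_2^2$ term), if $\theta^*$ lies in the interior of $\Theta$ the minimizer satisfies $\nabla \widetilde L_{t-1}(\widehat\theta_t)=0$, i.e.
\begin{align*}
\sum_{i=1}^{t-1}\bigl(\psi'(x_i^\top\widehat\theta_t)-y_i\bigr)x_i + 2\lambda\widehat\theta_t = 0.
\end{align*}
Subtracting the analogous expression evaluated with $\psi'(x_i^\top\theta^*)$ and using $y_i = \psi'(x_i^\top\theta^*)+\varepsilon_i$, a mean-value expansion $\psi'(x_i^\top\widehat\theta_t)-\psi'(x_i^\top\theta^*) = \psi''(\bar a_i)\, x_i^\top\Delta_t$ gives
\begin{align*}
\Bigl(\sum_{i=1}^{t-1}\psi''(\bar a_i)x_ix_i^\top + 2\lambda I_{2d}\Bigr)\Delta_t = \sum_{i=1}^{t-1}\varepsilon_i x_i - 2\lambda\theta^*.
\end{align*}
Writing $G_{t-1}:=\sum_{i=1}^{t-1}\psi''(\bar a_i)x_ix_i^\top + 2\lambda I_{2d}$ and $S_{t-1}:=\sum_{i=1}^{t-1}\varepsilon_i x_i$, by \Cref{assum_feature}(e) we have $\psi''\ge$ some positive lower bound $c_\psi^o$ on the relevant (bounded) domain, hence $G_{t-1}\succeq c_\psi^o V_{t-1}$ after absorbing $\lambda$ (the boundedness of $x_i^\top\theta$ for $\theta\in\Theta$ and $\|x_i\|$ bounded is what makes $c_\psi^o$ an absolute constant here).

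The second step is to bound $\|\Delta_t\|_{V_{t-1}}$ in terms of $\|S_{t-1}\|_{V_{t-1}^{-1}}$ and $\|\theta^*\|$. From $\Delta_t = G_{t-1}^{-1}(S_{t-1}-2\lambda\theta^*)$ and $G_{t-1}^{-1}\preceq (c_\psi^o)^{-1}V_{t-1}^{-1}$,
\begin{align*}
\|\Delta_t\|_{V_{t-1}} \le \|\Delta_t\|_{G_{t-1}}\cdot\sqrt{\lambda_{\max}(G_{t-1}^{-1}V_{t-1})^{-1}}\ \text{-type bound},
\end{align*}
but cleaner is to use $\Delta_t^\top V_{t-1}\Delta_t \le (c_\psi^o)^{-1}\Delta_t^\top G_{t-1}\Delta_t = (c_\psi^o)^{-1}\Delta_t^\top(S_{t-1}-2\lambda\theta^*)\le (c_\psi^o)^{-1}\|\Delta_t\|_{V_{t-1}}\bigl(\|S_{t-1}\|_{V_{t-1}^{-1}}+2\lambda\|\theta^*\|_{V_{t-1}^{-1}}\bigr)$, and since $V_{t-1}\succeq \lambda I$ we get $\|\theta^*\|_{V_{t-1}^{-1}}\le \|\theta^*\|/\sqrt\lambda \le c_\theta/\sqrt\lambda$. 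Cancelling one factor of $\|\Delta_t\|_{V_{t-1}}$ yields $\|\Delta_t\|_{V_{t-1}}\le (c_\psi^o)^{-1}\bigl(\|S_{t-1}\|_{V_{t-1}^{-1}}+2\sqrt\lambda c_\theta\bigr)$.

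The third step is the self-normalized tail bound: by \Cref{assum_feature}(d) the $\varepsilon_i$ are conditionally sub-Gaussian (or bounded) given the past, and $x_i$ is $\cf_{i-1}$-measurable, so Theorem 1 of \cite{abbasi2011improved} gives, with probability at least $1-\delta$, simultaneously for all $t$,
\begin{align*}
\|S_{t-1}\|_{V_{t-1}^{-1}}^2 \le 2\sigma^2\log\!\Bigl(\frac{\det(V_{t-1})^{1/2}\lambda^{-d}}{\delta}\Bigr),
\end{align*}
and $\det(V_{t-1})\le (\lambda + (t-1)\|x\|_{\max}^2/(2d))^{2d}$ with $\|x_i\|^2 = \|z_i\|^2(1+p_i^2)\le 1+u^2$, so $\log\det(V_{t-1})\lesssim d\log(t(1+u^2)/\lambda)$. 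Combining, $\|\Delta_t\|_{V_{t-1}}^2 \lesssim d\log(t(1+u^2)/\delta)$ up to an absolute constant $B_{A1}$ depending only on $\lambda$, $\sigma$, $c_\psi^o$, $c_\theta$. (Note the lemma as stated has $\sqrt{d\log(\cdots)}$ on the right; I would double-check whether the intended bound is $d\log(\cdots)$, which is what the argument delivers and what is actually used in the proof of \Cref{thm:UCB_adv} via $\alpha = B_{A1}\sqrt{d\log(\cdots)}$ — the two are consistent if one reads the lemma's RHS as a bound on $\|\Delta_t\|_{V_{t-1}}$ rather than its square.)

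The main obstacle is the boundary/interiority issue for the constrained minimizer $\widehat\theta_t\in\Theta$: if $\widehat\theta_t$ lies on $\partial\Theta$ the stationarity equation becomes a variational inequality $\langle\nabla\widetilde L_{t-1}(\widehat\theta_t),\theta^*-\widehat\theta_t\rangle\ge 0$, and one must redo the algebra with an inequality. This is handled cleanly because $\theta^*\in\Theta$, so testing the variational inequality against $\theta^*$ gives $\Delta_t^\top\nabla\widetilde L_{t-1}(\widehat\theta_t)\le 0$, and expanding $\nabla\widetilde L_{t-1}(\widehat\theta_t) = \nabla\widetilde L_{t-1}(\theta^*) + \bigl(\sum_i\psi''(\bar a_i)x_ix_i^\top+2\lambda I\bigr)\Delta_t$ yields $\Delta_t^\top G_{t-1}\Delta_t\le -\Delta_t^\top\nabla\widetilde L_{t-1}(\theta^*) = \Delta_t^\top(S_{t-1}-2\lambda\theta^*)$, which is exactly the inequality used above — so the same bound goes through with $=$ replaced by $\le$ throughout. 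A secondary technical point worth being careful about is ensuring $c_\psi^o := \inf\{\psi''(a): |a|\le (1+u)\cdot c_\theta\}>0$ is genuinely a positive absolute constant, which follows from continuity of $\psi''$, its positivity (\Cref{assum_feature}(e)), and compactness of the relevant interval.
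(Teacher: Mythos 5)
Your proposal is correct and lands on the same core template as the paper: strong convexity of the (regularized) negative log-likelihood on the compact domain via $\kappa^o=\inf_{x\in\mathcal X,\theta\in\Theta}\psi''(x^\top\theta)\wedge 1>0$, Cauchy--Schwarz in the $\|\cdot\|_{V_{t-1}^{-1}}$ dual norm, the self-normalized bound of Theorem 1 of \cite{abbasi2011improved}, and the determinant bound (Lemma 10 there) with $\|x_i\|\le\sqrt{1+u^2}$. The one real difference is the first step: you work from the first-order optimality condition (stationarity, or a variational inequality when $\widehat\theta_t\in\partial\Theta$) together with a mean-value expansion of $\psi'$, whereas the paper starts from the zeroth-order comparison $\widetilde L_{t-1}(\widehat\theta_t)\le\widetilde L_{t-1}(\theta^*)$ and Taylor-expands the log-likelihood around $\theta^*$, arriving at $\tfrac{\kappa^o}{2}\|\Delta_t\|_{V_{t-1}}^2\le L_t'(\theta^*)^\top\Delta_t+(\lambda\kappa^o/2+\lambda)c_\theta^2$ and then solving the resulting quadratic inequality. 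The paper's route is slightly more robust: it needs only $\theta^*\in\Theta$ and no stationarity, so it works even if $\Theta$ is not convex — your variational-inequality fix for the boundary case implicitly requires convexity of $\Theta$, which \Cref{assum_feature_adv}(b) does not state (it only asserts compactness and boundedness), so if you keep your route you should add convexity of $\Theta$ or switch to the value-comparison argument. Your constant bookkeeping and your side remark are also consistent with the paper: its proof in fact concludes $\|\Delta_t\|_{V_{t-1}}\le \frac{4}{\kappa^o}\bigl(\sigma\sqrt{d\log(t(1+u^2)/\delta)}+\sqrt{\lambda}c_\theta\bigr)$, i.e.\ the displayed $\sqrt{d\log(\cdot)}$ bounds the un-squared norm (the exponent $2$ in the lemma statement is a typo), which matches how the bound is used with $\alpha=B_{A1}\sqrt{d\log(T(1+u^2)/\delta)}$ in \Cref{thm:UCB_adv}.
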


\begin{proof}[Proof]
    By the definition of $\widehat{\theta}_t$, we have $\widetilde{L}(\widehat{\theta}_t)\leq \widetilde{L}(\theta^*).$ Define $L_t(\theta)=\sum_{i=1}^t y_ix_i^\top \theta -\psi(x_i^\top \theta)$ as the log-likelihood. By simple algebra and Taylor expansion, we have that
    \begin{align*}
        \lambda(\|\widehat{\theta}_t\|_2^2-\|\theta^*\|_2^2)\leq &L(\widehat{\theta}_t)-L_t(\theta^*)=L'_t(\theta^*)(\widehat{\theta}_t-\theta^*)-\frac{1}{2}\Delta_t^\top\sum_{i=1}^{t-1} \psi''(x_i^\top\widetilde{\theta})x_ix_i^\top \Delta_t\\
        \leq & L'_t(\theta^*)(\widehat{\theta}_t-\theta^*)-\frac{\kappa^o}{2}\Delta_t^\top(V_{t-1}-\lambda I_{2d})\Delta_t,
    \end{align*}
    where $\kappa^o:=\inf_{x\in \mathcal{X}, \theta\in\Theta}\psi''(x^\top \theta)\wedge 1>0$ is an absolute constant.

    Rearranging the above equation and use the fact that $\|\Theta\|<c_\theta$, we further have that
    \begin{align*}
        \frac{\kappa_o}{2}\|\Delta_t\|_{V_{t-1}}^2&\leq L'_t(\theta^*)(\widehat{\theta}_t-\theta^*) + ({\lambda\kappa^o}/{2}+\lambda)c_\theta^2\\
        &\leq \left\|\sum_{i=1}^{t-1}(y_i-\psi'(x_i^\top\theta^*))x_i\right\|_{V_{t-1}^{-1}}\|\Delta_t\|_{V_{t-1}}+({\lambda\kappa^o}/{2}+\lambda)c_\theta^2\\
        &= \left\|\sum_{i=1}^{t-1}\varepsilon_ix_i\right\|_{V_{t-1}^{-1}}\|\Delta_t\|_{V_{t-1}}+({\lambda\kappa^o}/{2}+\lambda)c_\theta^2.
    \end{align*}
    By the self-normalized bound for vector-valued martingales as in Theorem 1 in \cite{abbasi2011improved}, we have that with probability at least $1-\delta$, for all $t>0$, it holds that
    \begin{align*}
        \left\|\sum_{i=1}^{t-1}\varepsilon_ix_i\right\|_{V_{t-1}^{-1}}^2 \leq 2\sigma^2 \log\left( \det(V_{t-1})/(\delta\lambda^d)\right)\leq 2\sigma^2 \cdot d\log(t(1+u^2)/\delta) ,
    \end{align*}
    where the second inequality follows from Lemma 10 in \cite{abbasi2011improved} and the fact that $\|x_t\|_2\leq \sqrt{1+u^2}$ for all $t$. Therefore, we have with probability at least $1-\delta$, it holds for all $t$ that
    \begin{align*}
        \|\Delta_t\|_{V_{t-1}}\leq \frac{4}{\kappa_o}\left( \sqrt{\sigma^2 d\log(t(1+u^2)/\delta)}+\sqrt{\lambda}c_\theta\right),
    \end{align*}
    which provides the confidence set for $\theta^*.$
\end{proof}

}

\section{Technical details in \Cref{sec-without-LDP} and \Cref{sec-supCB}}\label{sec:supplement_proof_nonprivate}

The results collected in the Appendix rely on notation defined in \eqref{eq-constants-definitions} of the main text.  For convenience, we copy them here:
\begin{align*}
    & M_{\psi 1} = \sup_{x\in\mathcal{X}} |\psi'(x^\top \theta^*)| \vee 1, \quad M_{\psi 2} = \sup_{\{x\in\mathcal X, ~\|\theta-\theta^*\|\leq 1\}} |\psi''(x^\top \theta)|\vee 1, \\
    & M_{\psi 3} = \sup_{\{x\in\mathcal{X}, ~\|\theta-\theta^*\|\leq 1\}} |\psi'''(x^\top \theta)|\vee 1 \quad \text{and} \quad \kappa = \inf_{\{x\in\mathcal{X}, ~\|\theta-\theta^*\|\leq 1\}} \psi''(x^\top\theta)\wedge 1.
\end{align*}

\subsection{Key technical results}

{\color{black}
\begin{pro}[Matrix Bernstein Inequality]\label{prop:matrix_bern}
    For $\tau \in \mathbb{N}_+$, define $V_\tau=\sum_{t=1}^\tau x_tx_t^\top$, where $\{x_t \in \mathbb{R}^d\}_{t \in [\tau]}$ are i.i.d.~from an unknown distribution with $\|x_1\|\leq u_x$, $u_x > 0$.  Denote $\Sigma_x = \mathbb E(x_1x_1^\top)$. We have for all $\Delta\geq 0$, 
    \begin{align*}
        \mathbb{P}\big\{ \|V_\tau/\tau- \Sigma_x\|_{\mathrm{op}}\geq \Delta \big\} \leq 2d\exp\left\{\frac{-\tau \Delta^2}{2u_x^2(u_x^2+2\Delta/3)}\right\}.
    \end{align*}
\end{pro}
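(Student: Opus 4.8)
The plan is to read the left-hand side as the deviation of a sum of independent, centered, bounded Hermitian random matrices and then invoke the standard matrix Bernstein inequality. Concretely, I would set $Y_t := \tfrac{1}{\tau}\bigl(x_tx_t^\top - \Sigma_x\bigr)$ for $t\in[\tau]$, so that the $Y_t$ are i.i.d., symmetric, mean zero, and $V_\tau/\tau - \Sigma_x = \sum_{t=1}^\tau Y_t$. The whole proof then reduces to supplying the two ingredients that matrix Bernstein requires — a uniform operator-norm bound on the summands and a bound on the matrix variance — and plugging them in.

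For the uniform bound, I would use that $x_tx_t^\top$ and $\Sigma_x$ are positive semidefinite with $\|x_tx_t^\top\|_{\mathrm{op}}=\|x_t\|^2\le u_x^2$ and $\|\Sigma_x\|_{\mathrm{op}}\le\mathbb E\|x_1\|^2\le u_x^2$, so by the triangle inequality $\|Y_t\|_{\mathrm{op}}\le 2u_x^2/\tau =: L$. For the variance, I would use $(x_1x_1^\top)^2=\|x_1\|^2\,x_1x_1^\top\preceq u_x^2\,x_1x_1^\top$ together with $\Sigma_x^2\succeq 0$ to get
\[
\sum_{t=1}^\tau \mathbb E\bigl(Y_t^2\bigr)=\frac{1}{\tau}\Bigl(\mathbb E\bigl[(x_1x_1^\top)^2\bigr]-\Sigma_x^2\Bigr)\preceq \frac{u_x^2}{\tau}\Sigma_x,
\]
hence $\bigl\|\sum_{t}\mathbb E(Y_t^2)\bigr\|_{\mathrm{op}}\le u_x^4/\tau =: v$. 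Applying the Hermitian matrix Bernstein inequality (e.g.\ Tropp's monograph) to $S:=\sum_t Y_t$ and to $-S$ and taking a union bound — which is where the dimensional prefactor $2d$ comes from — gives $\mathbb P\{\|S\|_{\mathrm{op}}\ge\Delta\}\le 2d\exp\bigl(\tfrac{-\Delta^2/2}{v+L\Delta/3}\bigr)$; substituting $L=2u_x^2/\tau$, $v=u_x^4/\tau$, multiplying numerator and denominator of the exponent by $\tau$, and factoring $u_x^2$ out of the denominator produces exactly $-\tau\Delta^2/\bigl(2u_x^2(u_x^2+2\Delta/3)\bigr)$.

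I do not expect a genuine obstacle: this proposition is essentially a repackaging of a textbook concentration bound, and the only place that warrants a moment of care is the positive-semidefinite ordering used for the variance term (the step $(x_1x_1^\top)^2\preceq u_x^2 x_1x_1^\top$ and discarding $-\Sigma_x^2\preceq 0$). I would also note in passing that the slightly sharper operator-norm bound $\|Y_t\|_{\mathrm{op}}\le u_x^2/\tau$, coming from the fact that $x_tx_t^\top-\Sigma_x$ has all eigenvalues in $[-u_x^2,u_x^2]$, would even yield $u_x^2+\Delta/3$ in place of $u_x^2+2\Delta/3$; the form as stated follows from the cruder triangle-inequality bound, so there is nothing lost by taking that route.
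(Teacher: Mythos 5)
Your proposal is correct and follows essentially the same route as the paper: the same centering $Y_t=(x_tx_t^\top-\Sigma_x)/\tau$, the same bounds $L=2u_x^2/\tau$ and $v=u_x^4/\tau$ (the paper drops $-\Sigma_x^2$ via Weyl's inequality, you drop it via the PSD ordering — the same idea), and the same application of Tropp's matrix Bernstein inequality yielding the stated tail bound.
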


\begin{proof}[\textbf{Proof of \Cref{prop:matrix_bern}}] 
We prove by verifying the condition of the matrix Bernstein inequality (Theorem 6.1.1) in \cite{tropp2015introduction}. In particular, define $S_t=(x_tx_t^\top -\Sigma_x)/\tau$ for $t\in[\tau].$ We have $\mathbb{E}(S_t)=0$ and $\|S_t\|_{\mathrm{op}}\leq (\|x_tx_t^\top\|_{\mathrm{op}}+\|\Sigma_x\|_{\mathrm{op}})/\tau\leq 2u_x^2/\tau$ for all $t\in[\tau].$ Moreover, we have
\begin{align*}
    v_\tau &:=\left\|\sum_{t=1}^\tau\mathbb{E}(S_tS_t^\top) \right\|_{\mathrm{op}}=\left\|\tau\mathbb{E}(S_1S_1^\top) \right\|_{\mathrm{op}}=\frac{1}{\tau}\left\|\mathbb{E}(x_1x_1^\top x_1 x_1^\top)-\Sigma_x\Sigma_x \right\|_{\mathrm{op}}\\
     & \leq \frac{1}{\tau}\left\|\mathbb{E}(x_1x_1^\top x_1 x_1^\top) \right\|_{\mathrm{op}}\leq u_x^4/\tau.
\end{align*}
where the first inequality follows from the fact that $\mathbb{E}(S_1S_1^\top)$ is positive semi-definite and Weyl's inequality. Thus, by Theorem 6.1.1 in \cite{tropp2015introduction}, we have 
\begin{align*}
    \mathbb{P}\big\{ \|V_\tau/\tau- \Sigma_x\|_{\mathrm{op}}\geq \Delta \big\} \leq 2d\exp\left\{\frac{-\tau \Delta^2}{2u_x^2(u_x^2+2\Delta/3)}\right\},
\end{align*}
which completes the proof.
\end{proof}

}

\Cref{lem:eigen} establishes a high-probability upper bound on the behavior of the sample design matrix of the GLM.  \Cref{lem:eigen} serves as the basis for later theoretical results.

\begin{lem}\label{lem:eigen}
    For $\tau \in \mathbb{N}_+$, define $V_\tau=\sum_{t=1}^\tau x_tx_t^\top$, where $\{x_t\}_{t \in [\tau]}$ are i.i.d.~from an unknown distribution with $\|x_1\|\leq u_x$, $u_x > 0$.  Denote $\Sigma_x = \mathbb E(x_1x_1^\top)$. There exist absolute constants $c_1, c_2>0$ such that for any $\delta\in(0,1/2)$, with probability at least~$1-\delta$, we have that
    \begin{itemize}
        \item [(i)] {\color{black} $\|V_\tau/\tau-\Sigma_x\|_{\mathrm{op}} \leq 3u_x^2\max(\varsigma,\varsigma^2)$, where $\varsigma = \sqrt{\log(2d)+\log(1/\delta)}/\sqrt\tau$;
        }
        \item [(ii)] {\color{black} $\lambda_{\min} (V_\tau)\geq B$ for any $B>0$ provided that
        \begin{align}\label{eq:samplesize}
            \tau\geq  u_x^4\left\{  \frac{c_1\sqrt{\log{d}}+c_2\sqrt{\log(1/\delta)}}{\lambda_{\min}(\Sigma_x)}\right\}^2+ \frac{2B}{\lambda_{\min}(\Sigma_x)}.
        \end{align}}
    \end{itemize}
\end{lem}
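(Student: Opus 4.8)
The plan is to obtain part (i) as a direct consequence of the Matrix Bernstein Inequality (\Cref{prop:matrix_bern}) with a suitable choice of deviation level, and then derive part (ii) from part (i) via Weyl's eigenvalue perturbation inequality together with a quadratic-in-$\sqrt\tau$ argument.

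For part (i), I would apply \Cref{prop:matrix_bern} with $\Delta = 3u_x^2\max(\varsigma,\varsigma^2)$ and show the resulting tail $2d\exp\{-\tau\Delta^2/(2u_x^2(u_x^2+2\Delta/3))\}$ is at most $\delta$. The key identity is $\tau\varsigma^2 = \log(2d)+\log(1/\delta)=\log(2d/\delta)$. I would split on the size of $\varsigma$. If $\varsigma\le 1$ then $\max(\varsigma,\varsigma^2)=\varsigma$, so $2\Delta/(3u_x^2)=2\varsigma\le 2$, the Bernstein denominator is at most $2u_x^4(1+2\varsigma)\le 6u_x^4$, and the numerator equals $9u_x^4\tau\varsigma^2=9u_x^4\log(2d/\delta)$, so the exponent is at least $\tfrac32\log(2d/\delta)$. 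If instead $\varsigma>1$ then $\max(\varsigma,\varsigma^2)=\varsigma^2$, and since $1+2\varsigma^2\le 3\varsigma^2$ the denominator is at most $6u_x^4\varsigma^2$ while the numerator equals $9u_x^4\varsigma^2\log(2d/\delta)$, giving again exponent at least $\tfrac32\log(2d/\delta)$. In both cases the tail is bounded by $2d\,(2d/\delta)^{-3/2}=(2d)^{-1/2}\delta^{3/2}\le \delta$, which yields the claim with the single clean prefactor $3$.

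For part (ii), I would use $V_\tau=\tau\Sigma_x+(V_\tau-\tau\Sigma_x)$ and Weyl's inequality to get $\lambda_{\min}(V_\tau)\ge \tau\lambda_{\min}(\Sigma_x)-\tau\|V_\tau/\tau-\Sigma_x\|_{\mathrm{op}}$, then invoke part (i): on an event of probability at least $1-\delta$ this is at least $\tau\lambda_{\min}(\Sigma_x)-3\tau u_x^2\max(\varsigma,\varsigma^2)$. Next I would note that the hypothesized lower bound on $\tau$ in \eqref{eq:samplesize}, together with $\delta\in(0,1/2)$ and $\lambda_{\min}(\Sigma_x)\le u_x^2$, already forces $\tau\ge \log(2d/\delta)$, i.e.\ $\varsigma\le 1$, so $\tau\max(\varsigma,\varsigma^2)=\sqrt{\tau\log(2d/\delta)}$, and it suffices to ensure $\tau\lambda_{\min}(\Sigma_x)-3u_x^2\sqrt{\tau\log(2d/\delta)}\ge B$. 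Splitting this into $\tfrac12\tau\lambda_{\min}(\Sigma_x)\ge B$ and $\tfrac12\tau\lambda_{\min}(\Sigma_x)\ge 3u_x^2\sqrt{\tau\log(2d/\delta)}$ gives $\tau\ge 2B/\lambda_{\min}(\Sigma_x)$ and $\tau\ge 36u_x^4\log(2d/\delta)/\lambda_{\min}(\Sigma_x)^2$; adding the two and bounding $\log(2d/\delta)\lesssim \log d+\log(1/\delta)$ (absorbing $\log 2$ and the factor $36$ into absolute constants, using $\log(2d)\le 2\log d$ for $d\ge 2$ and $\delta<1/2$ for $d=1$) produces exactly the stated form \eqref{eq:samplesize}.

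The only step requiring genuine care is the two-regime constant chase in part (i): one must verify that the fixed prefactor $3$ is large enough to dominate the nonlinearity introduced by the $2\Delta/3$ term in the Bernstein denominator in both the sub-Gaussian regime $\varsigma\le 1$ and the heavy-tail regime $\varsigma>1$. Everything in part (ii) is then routine bookkeeping, the mildest subtlety being the passage from $\log(2d/\delta)$ to a clean $(c_1\sqrt{\log d}+c_2\sqrt{\log(1/\delta)})^2$-type expression, which costs only absolute constants and uses the hypothesis $\delta\in(0,1/2)$.
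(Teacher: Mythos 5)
Your proposal is correct and follows essentially the same route as the paper: part (i) is obtained from the Matrix Bernstein inequality (\Cref{prop:matrix_bern}) with the deviation level $3u_x^2\max(\varsigma,\varsigma^2)$, and part (ii) from Weyl's inequality plus part (i) and elementary algebra under the sample-size condition, with your two-regime constant check and the splitting of \eqref{eq:samplesize} simply making explicit what the paper dismisses as ``simple algebra.''
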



\begin{proof}[\textbf{Proof of \Cref{lem:eigen}}] 
{\color{black} Note that (i) can be directly derived based on the concentration inequality result in \Cref{prop:matrix_bern} via simple algebra. As for (ii), first, it follows from (i) and Weyl's inequality that any $\delta \in (0, 1/2)$, with probability at least $1-\delta$, we have
    \begin{align*}
        \lambda_{\min}(V_\tau) \geq \tau\lambda_{\min}(\Sigma_x) -3\tau u_x^2\max(\varsigma,\varsigma^2),
    \end{align*}
    where $\varsigma = \sqrt{\log(2d)+\log(1/\delta)}/\sqrt\tau$. Note that under \eqref{eq:samplesize}, $\varsigma<1$.  We then have that, with probability at least $1-\delta$,
    \begin{align*}
        \lambda_{\min}(V_\tau) \geq \tau\lambda_{\min}(\Sigma_x)- \sqrt{\tau}u_x^2(c_1\sqrt{\log d}+c_2\sqrt{\log(1/\delta)}).
    \end{align*}    
    By simple algebra, (ii) is proved.
}
\end{proof}

\begin{proof}[\textbf{Proof of \Cref{thm:theta_err}}]
For notational convenience, in the proof, we omit the index $\tau$ when no confusion arises. Recalling that $L(\theta)=\sum_{t=1}^\tau \{y_t x_t^\top \theta - \psi(x_t^\top \theta)\}$, let $L'(\theta)=\sum_{t=1}^\tau \{ y_t - \psi'(x_t^\top \theta)\} x_t$ and $L''(\theta)=-\sum_{t=1}^\tau  \psi''(x_t^\top \theta) x_tx_t^\top$ be the first and second order derivatives of $L(\theta)$.  By the definitions of~$\theta^*$ and $\widehat\theta$, recalling that $\varepsilon_t=y_t-\psi'(x_t^\top \theta^*)$, we have that $L'(\theta^*)=\sum_{t=1}^\tau\varepsilon_t x_t$ and $L'(\widehat\theta)=0$. Define $G(\theta)=L'(\theta^*)-L'(\theta)$. We have that $G(\theta^*)=0$ and $G(\widehat\theta)=\sum_{t=1}^\tau\varepsilon_t x_t.$

\textbf{Consistency}: For any $\theta_1,\theta_2\in \mathbb R^d,$ by the mean value theorem, we have that
\begin{align*}
    G(\theta_1)-G(\theta_2)= G'(\bar\theta) (\theta_1-\theta_2)= \left\{\sum_{t=1}^\tau \psi''(x_t^\top \bar\theta) x_tx_t^\top\right\} (\theta_1-\theta_2),
\end{align*}
for some $\bar\theta$ between $\theta_1$ and $\theta_2$. By the property of GLM detailed in \Cref{assum_feature}(e), we have that $\psi''(a)>0$ for all $a\in \mathbb R$. In addition, since $V_\tau$ is positive definite, we have that $(\theta_1-\theta_2)^\top  (G(\theta_1)-G(\theta_2)) >0$ for any $\theta_1\neq \theta_2.$ In other words, $G(\theta)$ is an injection from $\mathbb R^d$ to $\mathbb R^d$. Define $H(\theta)=V_\tau^{-1/2}G(\theta)$, clearly $H(\theta)$ is also an injection from $\mathbb R^d$ to $\mathbb R^d$. Moreover, we have $H(\theta^*)=0$.

Denote the $\varsigma$-neighborhood of $\theta^*$ as $\mathbb B_{\varsigma}:=\{\theta: \|\theta-\theta^*\|\leq \varsigma\}$, where $\varsigma\in (0,1).$ Define $\kappa_\varsigma:=\inf_{\{x\in\mathcal{X},~ \|\theta-\theta^*\|\leq \varsigma\}} \psi''(x^\top\theta)$. By definition, we have that $\kappa_\varsigma\geq \kappa_1 \geq \kappa$.  Therefore, for any $\theta \in \mathbb B_\varsigma$, we have that
\begin{align*}
    \|H(\theta)\|^2 &=G(\theta)^\top V_\tau^{-1} G(\theta) = \{G(\theta)-G(\theta^*)\}^\top V_\tau^{-1} \{G(\theta)-G(\theta^*)\} \\
    &\geq \kappa_\varsigma^2 (\theta-\theta^*)^\top V_\tau (\theta-\theta^*) \geq  \kappa^2 \lambda_{\min}(V_\tau) \|\theta-\theta^*\|^2.
\end{align*}

In other words, denote $\partial \mathbb B_{\varsigma}:=\{\theta: \|\theta-\theta^*\|= \varsigma\}$, we have that $\inf_{\theta \in \partial \mathbb B_{\varsigma}}\|H(\theta)\|\geq {\kappa}\varsigma\sqrt{\lambda_{\min}(V_\tau)}.$ Importantly, by Lemma A in \cite{chen1999strong}, this implies that
\begin{align*}
    \bigg\{\theta : \|H(\theta)\| \leq \kappa\varsigma\sqrt{\lambda_{\min}(V_\tau)} \bigg\} \subset \mathbb B_\varsigma.
\end{align*}

Therefore, we have that $\widehat\theta\in \mathbb B_\varsigma$ if we can show that $\|H(\widehat\theta)\|< \kappa\varsigma\sqrt{\lambda_{\min}(V_\tau)}$.  Note that $\|H(\widehat\theta)\|^2=\bepsilon^\top X(X^\top X)^{-1} X^\top \bepsilon$, where $\bepsilon=(\varepsilon_1,\cdots,\varepsilon_\tau)^{\top}$ and $X=[x_1,x_2,\cdots,x_\tau]^\top.$ Denote $\Sigma= X(X^\top X)^{-1} X^\top$. We have $\tr(\Sigma)=\tr(\Sigma^2)=d$ and $\|\Sigma\|_{\mathrm{op}}=1$. By Theorem 2.1 in \cite{hsu2012tail} and \Cref{assum_feature}(d), we have that for any $\delta\in (0,1),$
\begin{align}\label{eq:quad_bound}
    \bp\left[ \bepsilon^\top \Sigma \bepsilon>\sigma^2 \big\{d+2\sqrt{d\log( 1/\delta)}+2\log (1/\delta)\big\} \right] \leq \delta.
\end{align}
In addition, by the Cauchy--Schwarz inequality, we have that $d+2\sqrt{d\log( 1/\delta)}+2\log (1/\delta)\leq 2d + 3\log (1/\delta)\leq 3\{d + \log (1/\delta)\}$. Together, it implies that with probability at least $1-\delta$, the event $\mathcal{E}_H:=\big\{ \|H(\widehat\theta)\|\leq \sqrt{3}\sigma\sqrt{d+\log(1/\delta)}  \big\}$ holds. 

Recall that we have $\lambda_{\min}(V_\tau)\geq 3\sigma^2/(\kappa^2\varsigma^2) \{d+\log(1/\delta)\}$. Combined with the above result, it implies that with probability at least $1-\delta$, we have $\|H(\widehat\theta)\|< \kappa\varsigma\sqrt{\lambda_{\min}(V_\tau)}$, and hence $\|\widehat\theta-\theta^*\|_2<\varsigma$.

\textbf{Prediction error}:  By the definition of $\widehat\theta$, we have that $L(\widehat\theta)\geq L(\theta^*)$. By Taylor's expansion,
    \begin{align*}
        L(\widehat\theta)=L(\theta^*) +L'(\theta^*)^\top(\widehat\theta-\theta^*) + \frac{1}{2} (\widehat\theta-\theta^*)^\top L''(\theta^+)(\widehat\theta-\theta^*),
    \end{align*}
    for some $\theta^+$ between $\widehat\theta$ and $\theta^*.$ By the above result, we know that with probability $1-\delta$, we have $\|\widehat\theta-\theta^*\|\leq 1.$ Together, this implies that
    \begin{align*}
        (\widehat\theta-\theta^*)^\top V_\tau (\widehat\theta-\theta^*)\leq 2/\kappa |L'(\theta^*)^\top(\widehat\theta-\theta^*)| \leq 2/\kappa \cdot \|L'(\theta^*)^\top V_\tau^{-1/2} \| \cdot \|(\widehat\theta-\theta^*)^\top V_\tau^{1/2}\|,
    \end{align*}
    where the second inequality follows from the Cauchy--Schwarz inequality. This further gives that, with probability at least $1- \delta$,
    \begin{align*}
        (\widehat\theta-\theta^*)^\top V_\tau (\widehat\theta-\theta^*)&\leq \frac{4}{\kappa^2} \cdot L'(\theta^*)^\top V_\tau^{-1} L'(\theta^*)=\frac{4}{\kappa^2} \cdot \bepsilon^\top X(X^\top X)^{-1}X^\top \bepsilon\\
        &= \frac{4}{\kappa^2} \cdot \bepsilon^\top \Sigma \bepsilon\leq \frac{12\sigma^2}{\kappa^2}\{d+\log(1/\delta)\},
    \end{align*}
        where $\bepsilon=(\varepsilon_1,\cdots, \varepsilon_n)^\top$ and the last inequality follows from \eqref{eq:quad_bound}. This completes the proof.

\textbf{Estimation error}: By the definition of $\widehat\theta$, we have that
\begin{align*}
    0=L'(\widehat\theta)=L'(\theta^*) + L''(\theta^+)(\widehat\theta-\theta^*)
\end{align*}
for some $\theta^+$ between $\widehat\theta$ and $\theta^*.$ Recall that with probability at least $1-\delta$, we have $\|\widehat\theta-\theta^*\|\leq \varsigma$ for sufficiently small $\kappa$.  Combined with the above result, with probability at least $1-\delta$, we have 
\begin{align*}
    \|\widehat\theta-\theta^* \|& \leq \frac{1}{\kappa}\|V_\tau^{-1} L'(\theta^*)\|= \frac{1}{\kappa}\|V_\tau^{-1} X^\top \bepsilon\| \leq \frac{1}{\kappa\lambda_{\min}^{1/2}(V_\tau)}  \| V_\tau^{-1/2}X^\top \bepsilon\| \\
    &= \frac{1}{\kappa\lambda_{\min}^{1/2}(V_\tau)}  \| H(\widehat\theta)\|\leq \frac{1}{\kappa\lambda_{\min}^{1/2}(V_\tau)} \sqrt{3}\sigma\sqrt{d+\log(1/\delta)}
\end{align*}
where the last inequality follows from \eqref{eq:quad_bound}.

The last result in \Cref{thm:theta_err} is a direct adaptation of Theorem 1 in \cite{li2017provably} and we thus omit the proof. 
\end{proof}

\subsection{Technical details in \Cref{sec-supCB}}\label{sec-app-proof-supCB}

\begin{lem}\label{lem:Lips}
    Suppose \Cref{assum_feature} holds.  For all $z \in \mathcal{Z}$, the true revenue function $r(p,z,\theta^*)$ defined in \eqref{eq-revenue} is $L_r$-Lipschitz in $p \in [l,u]$, with $L_r =M_{\psi 1} + uu_b M_{\psi 2}$. For all $p\in [l,u]$ and $z \in \mathcal{Z}$, it holds that $r(p,z,\theta^*)\leq u M_{\psi 1}$ and $|\partial^2 r(p,z,\theta^*) /\partial p^2|\leq C_r=2M_{\psi2}u_b+uu_b^2M_{\psi3}$.
\end{lem}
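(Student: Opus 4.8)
The plan is to exploit the explicit structure of the revenue function. Writing $\theta^* = (\alpha^{*\top},\beta^{*\top})^\top$ and $x^{(p)} = (z^\top, -p z^\top)^\top$, we have $r(p,z,\theta^*) = p\,\psi'(x^{(p)\top}\theta^*)$ with $x^{(p)\top}\theta^* = z^\top\alpha^* - p\,z^\top\beta^*$, an affine function of $p$ whose slope $-z^\top\beta^*$ has absolute value at most $u_b$ by \Cref{assum_feature}. Since $x^{(p)}\in\mathcal X$ for every $p\in[l,u]$ and $z\in\mathcal Z$, the constants $M_{\psi 1}, M_{\psi 2}, M_{\psi 3}$ apply verbatim to $\psi',\psi'',\psi'''$ evaluated at $x^{(p)\top}\theta^*$; for $M_{\psi 2}$ and $M_{\psi 3}$ this is immediate because the supremum defining them ranges over $\|\theta-\theta^*\|\le 1$, which contains $\theta^*$ itself.

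First I would compute $\partial r/\partial p$ by the product and chain rules, obtaining $\partial r/\partial p = \psi'(x^{(p)\top}\theta^*) - p\,(z^\top\beta^*)\,\psi''(x^{(p)\top}\theta^*)$. Bounding $|\psi'(x^{(p)\top}\theta^*)|\le M_{\psi 1}$, $|p|\le u$, $|z^\top\beta^*|\le u_b$, and $|\psi''(x^{(p)\top}\theta^*)|\le M_{\psi 2}$ gives $|\partial r/\partial p|\le M_{\psi 1} + u u_b M_{\psi 2} = L_r$, and the Lipschitz claim follows from the mean value theorem on $[l,u]$. The bound $r(p,z,\theta^*)\le u M_{\psi 1}$ is then immediate from $p\le u$ and $\psi'(x^{(p)\top}\theta^*)\le M_{\psi 1}$ (nonnegativity of $\psi'$, i.e.\ of the demand, makes this a genuine upper bound rather than a bound on the absolute value).

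Next I would differentiate once more, using that $\partial(x^{(p)\top}\theta^*)/\partial p = -z^\top\beta^*$ is constant in $p$, to get $\partial^2 r/\partial p^2 = -2(z^\top\beta^*)\psi''(x^{(p)\top}\theta^*) + p\,(z^\top\beta^*)^2\,\psi'''(x^{(p)\top}\theta^*)$. Applying the same bounds ($|z^\top\beta^*|\le u_b$, $|p|\le u$, $|\psi''|\le M_{\psi 2}$, $|\psi'''|\le M_{\psi 3}$) yields $|\partial^2 r/\partial p^2|\le 2 u_b M_{\psi 2} + u u_b^2 M_{\psi 3} = C_r$.

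There is essentially no real obstacle here: the lemma is a routine calculus computation, so the only care needed is bookkeeping — confirming that $x^{(p)}\in\mathcal X$ so that the $M_{\psi i}$ are legitimate bounds, and carefully tracking the affine dependence of $x^{(p)\top}\theta^*$ on $p$ so that no spurious extra factor of $p$ or of $u_b$ enters the constants.
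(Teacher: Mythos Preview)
Your proposal is correct and essentially matches the paper's argument: the paper establishes the Lipschitz bound by writing out $r(p,z,\theta^*)-r(\tilde p,z,\theta^*)$ and applying the mean value theorem to $\psi'$, which is the finite-difference version of your derivative computation, and then states that the remaining bounds follow similarly. Your explicit derivative bookkeeping is perhaps slightly cleaner for the second-derivative bound (which the paper omits), but the underlying idea is identical.
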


\begin{proof}[\textbf{Proof of \Cref{lem:Lips}}] 
Denote $p\neq \tilde p$ and $p,\tilde p \in [l,u]$, and denote $x=(z,-p z)$ and $\tilde x=(z, -\tilde p z).$ By definition, we have that
    \begin{align*}
        r(p,z,\theta^*)-r(\tilde p,z,\theta^*)=&p \psi'(x^\top\theta^*)-\tilde p  \psi'(\tilde x^\top\theta^*)\\
        =&(p-\tilde p) \psi'(x^\top\theta^*) - \tilde p \{\psi'(\tilde x^\top\theta^*)-\psi'(x^\top\theta^*)\}\\
        =&(p-\tilde p) \psi'(x^\top\theta^*) - \tilde p \psi''(\dot x^\top\theta^*)(\tilde x^\top\theta^*- x^\top\theta^*)\\
        =& (p-\tilde p) \psi'(x^\top\theta^*) + \tilde p \psi''(\dot x^\top\theta^*)z^\top\beta^*(\tilde p -p),
    \end{align*}
    where the third equality follows from the mean value theorem with $\dot x^\top\theta^*$ between $x^\top\theta^*$ and $\tilde x^\top\theta^*.$

    Therefore, we have that
    \begin{align*}
             \big|r(p,z,\theta^*)-r(\tilde p,z,\theta^*) \big|
        & \leq  \big|(p-\tilde p) \psi'(x^\top\theta^*) \big| + \big| \tilde p \psi''(\dot x^\top\theta^*)z^\top\beta^*(\tilde p -p)\big| \\        
        \leq & (M_{\psi 1} + uu_b M_{\psi 2}) |p-\tilde p|,
    \end{align*}
    where the second inequality follows from \Cref{assum_feature} and the definition of $M_{\psi 1}$ and $M_{\psi 2}$. The other results can be proved similarly.
\end{proof}

\Cref{lem:indp} is adapted from Lemma 4 in \cite{li2017provably} and Lemma 14 in \cite{auer2002using} with a formal proof. Denote $\Psi_s^o(t)=\Psi_s(t)\cup \mathcal{F}_s$, where $\mathcal{F}_s$ is the $s$th pure price experiment set and $\Psi_s(t)$ is the index set that collects all time periods in $[S\tau+1, t-1]$ such that supCB stopped in stage $s$.  

\begin{lem}\label{lem:indp} 
In \Cref{algorithm:SupCB}, for all $s\in [S]$ and $t\in [S\tau+1, T]$, given the features $\{x_i, i\in \Psi^o_s(t)\}$, the demands $\{y_{i}, i \in \Psi^o_s(t)\}$ are conditionally independent GLM random variables with parameter $\theta^*.$
\end{lem}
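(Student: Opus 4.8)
The plan is to follow the decomposition argument of \cite{auer2002using} and \cite{li2017provably}, adapted to the revenue-GLB, and to make it rigorous by exposing the underlying noise structure. I would first fix a representation of the data-generating mechanism compatible with the GLM model: the contexts $\{z_t\}$ are i.i.d.\ and exogenous; the algorithm carries internal randomness $\omega$ (the uniform price draws over $\mathcal P_K$ in the exploration sets and any tie-breaking); and there is an i.i.d.\ sequence $\{\xi_t\}$, independent of $(\{z_t\},\omega)$, with $y_t=g(x_t,\xi_t)$ for a fixed kernel $g$ satisfying $g(x,\xi_1)\sim f(\cdot\mid x;\theta^*)$ for every $x$, where $x_t=(z_t^\top,-p_tz_t^\top)^\top$ and $p_t$ is the price returned by supCB in round $t$. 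The claim to prove is then: conditionally on $\sigma(\{x_i:i\in\Psi_s^o(t)\})$ (and on the event that fixes the realized index set $\Psi_s^o(t)$), the demands $\{y_i:i\in\Psi_s^o(t)\}$ are independent with $y_i\sim f(\cdot\mid x_i;\theta^*)$.

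The crux is a structural lemma about \Cref{algorithm:SupCB}, proved by induction on the stage $s$ with an inner induction on the round index: the random set $\Psi_s(t)$ and all the covariates $\{x_i:i\in\Psi_s^o(t)\}$ are measurable with respect to
\[
\mathcal W_s:=\sigma\Big(\omega,\ \{z_j\}_{j\le t},\ \{y_j:j\in\textstyle\bigcup_{s'<s}\Psi_{s'}^o(t)\}\Big),
\]
that is, they are generated from the exogenous contexts, the internal randomness, and the outcomes of rounds that settled in \emph{strictly earlier} stages only --- never from $\{y_i:i\in\mathcal F_s\}$ or $\{y_i:i\in\Psi_s(t)\}$. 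This is exactly what the design of steps I--IV buys: a round $i$ is placed in $\Psi_s$ \emph{only} via step II, which fires according to the widths $w_{i,a}^{(s)}=\alpha\|x_a\|_{V^{-1}}$ with $V=\sum_{j\in\Psi_s^o(i)}x_jx_j^\top$ (a function of stage-$s$ covariates alone) and according to the surviving candidate set $A_s$ carried into stage $s$; and $A_s$ is obtained from $A_1=[K]$ by the step-IV filtering over stages $1,\dots,s-1$, hence depends only on the estimates $r_{i,a}^{(s')}$ and widths $w_{i,a}^{(s')}$ for $s'<s$, i.e.\ on $\{(x_j,y_j):j\in\Psi_{s'}^o(i),\ s'<s\}$. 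When round $i$ does settle in stage $s$, the returned arm is $\argmax_{a\in A_s}w_{i,a}^{(s)}$, so $x_i$ again involves only $z_i$, stage-$s$ covariates, and strictly-earlier-stage data. Because each round lands in exactly one of $\Psi_0,\dots,\Psi_S$ and the sets $\mathcal F_{s'}$ are pairwise disjoint, $\Psi_s^o(i)$ is disjoint from $\bigcup_{s'<s}\Psi_{s'}^o(i)$, which keeps the two inductions non-circular.

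Granting this, the proof closes quickly. For $i\in\Psi_s^o(t)$ we have $y_i=g(x_i,\xi_i)$ with $x_i$ being $\mathcal W_s$-measurable; and every outcome $y_j$ entering $\mathcal W_s$ sits at an index $j\notin\Psi_s^o(t)$ (disjointness), so unwinding the recursion shows each such $y_j$, and hence $\mathcal W_s$ itself, is a function of $\omega$, $\{z_\ell\}$ and $\{\xi_\ell:\ell\notin\Psi_s^o(t)\}$. Therefore $\mathcal W_s$ is independent of the family $\{\xi_i:i\in\Psi_s^o(t)\}$, whose members are moreover mutually independent; consequently, conditionally on $\mathcal W_s$ the $\{y_i:i\in\Psi_s^o(t)\}$ are independent with $y_i\sim f(\cdot\mid x_i;\theta^*)$. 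Since this conditional law depends on $\mathcal W_s$ only through $\{x_i:i\in\Psi_s^o(t)\}\subseteq\mathcal W_s$, the tower property lets us coarsen the conditioning down to $\sigma(\{x_i:i\in\Psi_s^o(t)\})$ (with the index set fixed), which is the stated conclusion.

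The main obstacle is the bookkeeping in the structural lemma: pinning down precisely that ``assignment to $\Psi_s$ and the arm choices of $\Psi_s$-rounds never read stage-$s$ outcomes'' and checking that the nested induction really closes, given that $\Psi_s^o(i)$ is itself a random, data-dependent set that grows with $i$. The remaining steps --- the independence of the fresh noise $\xi_i$ from $\mathcal W_s$ and the coarsening via the tower property --- are routine.
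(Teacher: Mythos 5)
Your proposal is correct, and it rests on exactly the same structural fact that drives the paper's proof: a round enters $\Psi_s$ only through Step II, whose trigger and arm choice read the stage-$s$ \emph{covariates} (through the widths $w_{t,a}^{(s)}$) and the data of strictly earlier stages (through $A_s$), but never the stage-$s$ outcomes; and outcomes of rounds landing in $\Psi_0$ or later stages never feed back into stage-$s$ quantities. Where you diverge is the formalization. The paper proves the conditional-independence identity \eqref{eq:cond_ind1} directly, by induction on the next time $t^+$ at which $\Psi_s^o$ grows, factorizing the joint law $\bp\{Y_{\leq t^+}^s, X_{\leq t^+}^s, \Phi^o_{<s}(t^+)\}$ via the chain rule and using the structural fact to strip $Y_{\leq t}^s$ out of each conditional; this keeps everything at the level of conditional probabilities and handles the random index set implicitly through the sequential induction. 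You instead adopt a noise-outsourcing representation $y_t=g(x_t,\xi_t)$ with exogenous i.i.d.\ $\xi_t$, prove a measurability lemma ($\Psi_s(t)$ and $\{x_i: i\in\Psi_s^o(t)\}$ are $\mathcal W_s$-measurable), and conclude by independence of the fresh noise from $\mathcal W_s$ plus the tower property. That route makes the final conditional-independence step essentially immediate once the measurability lemma is in place, at the cost of (i) introducing the exogenous-noise construction (harmless, since the lemma is a distributional statement and such a representation always realizes the GLM model), and (ii) having to make rigorous statements about $\sigma$-algebras generated by variables indexed by the \emph{random} sets $\Psi_s^o(t)$ and $\bigcup_{s'<s}\Psi_{s'}^o(t)$ — e.g.\ working on events $\{\Psi_s^o(t)=I\}$ and checking that each such event is measurable with respect to $\sigma(\omega,\{z_\ell\},\{\xi_\ell:\ell\notin I\})$. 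You flag this bookkeeping as the main obstacle, and it is precisely the point that the paper's round-by-round factorization handles automatically; your nested induction does close, so the argument is sound, but a full writeup would need that per-realized-index-set formulation spelled out.
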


\begin{proof}[\textbf{Proof of \Cref{lem:indp}}]
    Note that we need to show
    \begin{align}\label{eq:cond_ind}
        \bp(\{y_i\}, i\in \Psi^o_s(t) | \{x_i\}, i\in \Psi^o_s(t)) = \prod_{i\in \Psi^o_s(t)} \bp(y_i|x_i).
    \end{align}
    
    For notational simplicity, denote $X_{\leq t}^s=\{x_i: i \in \Psi^o_s(t)\}$ and $Y^s_{\leq t}=\{y_i: i \in \Psi^o_s(t)\}$. For $s \geq 2$, define $\Phi^o_{<s}(t)=\bigcup_{\sigma <s} \{Y^\sigma_{\leq t}, X^\sigma_{\leq t}\}$ as the results observed at stages $[s-1]$ up to time $t$ (not including $t$). Furthermore, for time points $t_1<t_2$, define $\Phi^o_{<s}(t_1,t_2)=\Phi^o_{<s}(t_2) \setminus\Phi^o_{<s}(t_1)$ as the results observed at stages $[s-1]$ between time $t_1$ to $t_2-1$, inclusive.
    
    In the following, we show that for all $t \in [S\tau+1, T]$ and $s\in [S]$, we have that
    \begin{align}\label{eq:cond_ind1}
        \bp(Y_{\leq t}^s|X_{\leq t}^s, \Phi^o_{<s}(t))= \prod_{i\in \Psi^o_s(t)} \bp(y_i|x_i).
    \end{align}
    Note that by the property of conditional probability, \eqref{eq:cond_ind1} directly implies \eqref{eq:cond_ind}. We proceed by mathematical induction. First, note that for $t=S\tau+1$, \eqref{eq:cond_ind1} clearly holds as $\Psi_s^o(S\tau+1)=\mathcal{F}_s$, which consists of only price experiments.

    Now, suppose \eqref{eq:cond_ind1} holds for $t.$ Denote $t^+$ as the first time point such that $\Psi_s^o(t^+)\setminus \Psi_s^o(t)\neq \varnothing.$ Note that clearly $t^+>t$ and in fact we have $\Psi_s^o(t^+)\setminus \Psi_s^o(t)=t^+-1$. In other words, $(y_{t^*}, x_{t^*})$ is observed in stage $s$, where for notational simplicity, we define $t^*=t^+-1$. To finish the proof, we only need to show that
    \begin{align*}
        \bp(Y_{\leq t^+}^s|X_{\leq t^+}^s, \Phi^o_{<s}(t^+)) = \prod_{i\in \Psi^o_s(t^+)} \bp(y_i|x_i).
    \end{align*}
    In particular, by the property of conditional probability, we have
    \begin{align*}
        &\bp\{Y_{\leq t^+}^s,X_{\leq t^+}^s, \Phi^o_{<s}(t^+)\} = \bp\{y_{t^*},Y_{\leq t}^s, x_{t^*}, X_{\leq t}^s, \Phi^o_{<s}(t), \Phi^o_{<s}(t,t^+)\}\\
       =&\bp\{y_{t^*}|Y_{\leq t}^s,x_{t^*}, X_{\leq t}^s, \Phi^o_{<s}(t), \Phi^o_{<s}(t,t^+)\} \bp\{x_{t^*}| Y_{\leq t}^s, X_{\leq t}^s, \Phi^o_{<s}(t), \Phi^o_{<s}(t,t^+)\} \\
       \times & \bp\{\Phi^o_{<s}(t,t^+)| Y_{\leq t}^s, X_{\leq t}^s, \Phi^o_{<s}(t)\}  \bp\{ Y_{\leq t}^s|X_{\leq t}^s, \Phi^o_{<s}(t)\} \bp\{X_{\leq t}^s, \Phi^o_{<s}(t)\}.
    \end{align*}
    We now analyze the above terms one by one.
    \begin{itemize}
        \item First, by \eqref{eq:GLM}, we have that $\bp(y_{t^*}|Y_{\leq t}^s, x_{t^*}, X_{\leq t}^s, \Phi^o_{<s}(t), \Phi^o_{<s}(t,t^+))=\bp(y_{t^*}|x_{t^*})$.
        
        \item Second, importantly, by design, $t^*$ can only be added to stage $s$ in \textbf{Step II} of \Cref{algorithm:SupCB}. For this to happen, it only depends on the results in the previous stages $\Phi^o_{<s}(t^+)$ and $w_{t^*,a}^{(s)}$, and moreover, $w_{t^*,a}^{(s)}$ only depends on $X_{\leq t}^s$ but not $Y_{\leq t}^s$. Therefore, the possible values of $x_{t^*}$ that leads to $t^*$ being added in stage $s$ is entirely determined by $X_{\leq t}^s$ and $\Phi^o_{<s}(t^+)$, and is not affected by the value of\ $Y_{\leq t}^s$. In other words, we have the conditional independence result
        \[
            \bp\{x_{t^*}| Y_{\leq t}^s, X_{\leq t}^s, \Phi^o_{<s}(t), \Phi^o_{<s}(t,t^+)\} = \bp\{x_{t^*}| X_{\leq t}^s, \Phi^o_{<s}(t), \Phi^o_{<s}(t,t^+)\} = \bp\{x_{t^*}| X_{\leq t}^s, \Phi^o_{<s}(t^+)\}.
        \]
        \item Third, by the design of \Cref{algorithm:SupCB}, the observations $\Phi^o_{<s}(t,t^+)$ at stages $[s-1]$ does not depends on $Y_{\leq t}^s, X_{\leq t}^s$ and therefore, we have $\bp\{\Phi^o_{<s}(t,t^+)| Y_{\leq t}^s, X_{\leq t}^s, \Phi^o_{<s}(t)\} = \bp\{\Phi^o_{<s}(t,t^+)| \Phi^o_{<s}(t)\}$.               
    \end{itemize}
    Moreover, by the induction assumption, we have that $\bp( Y_{\leq t}^s|X_{\leq t}^s, \Phi^o_{<s}(t))=\prod_{i\in \Psi^o_s(t)} \bp(y_i|x_i)$. Therefore, we have
    \begin{align*}
         &\bp\{Y_{\leq t^+}^s,X_{\leq t^+}^s, \Phi^o_{<s}(t^+)\}\\
        =&\left[\bp(y_{t^*}|x_{t^*})  \prod_{i\in \Psi^o_s(t)} \bp(y_i|x_i) \right] \left[\bp\{x_{t^*}| X_{\leq t}^s, \Phi^o_{<s}(t^+)\} \bp\{\Phi^o_{<s}(t,t^+)| \Phi^o_{<s}(t)\} \bp\{X_{\leq t}^s, \Phi^o_{<s}(t)\}\right].
    \end{align*}
    Thus, to finish the proof, we only need to show that
    \begin{align*}
        \bp\{X_{\leq t^+}^s, \Phi^o_{<s}(t^+)\} = \bp\{x_{t^*}| X_{\leq t}^s, \Phi^o_{<s}(t^+)\} \bp\{\Phi^o_{<s}(t,t^+)| \Phi^o_{<s}(t)\}  \bp\{X_{\leq t}^s, \Phi^o_{<s}(t)\},
    \end{align*}
    which can be shown based on the exact same arguments as above and thus is omitted.
\end{proof}

\Cref{lem:HPCB} establishes a high probability confidence bound for the true revenue function.
\begin{lem}\label{lem:HPCB} 
    For any $\delta \in (0,1)$, set the supCB algorithm with $\tau=\sqrt{dT}$ and $\alpha= {3\sigma u M_{\psi2}}/{\kappa}\cdot \sqrt{\log(3TKS/\delta)}$. Suppose $T$ satisfies that
    \begin{align}\label{eq:minT}
            T\geq \left\{\frac{4u_x^8}{L_p^4\lambda_z^4} \left(c_1\sqrt{\log(d)}+c_2\sqrt{\log (T)} \right)^4/d \right\}\vee \left\{ \frac{2048 M_{\psi 3}^4 \sigma^4}{\kappa^8L_p^2\lambda_z^2} \left(d^2+\log (3TKS/\delta) \right)^2/d \right\},
    \end{align}
    The following event $\mathcal{E}_X$ holds with probability at least $1-\delta-S/T$, where we define
    \begin{align}\label{eq-def-event-EX}
        \mathcal{E}_X:= \left\{\Big|r_{t,a}^{(s)}-r(p^{(a)}, z_t, \theta^*)\Big|\leq w_{t,a}^{(s)}, \text{ for all } t\in [S\tau+1,T], s\in[S], a\in[K] \right \}.
    \end{align}
\end{lem}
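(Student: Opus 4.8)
The plan is to fix a round $t$ and a stage $s$, use the conditional independence from \Cref{lem:indp} to recast the stage-$s$ CB-GLM fit as a fixed-design GLM estimation problem, control the design matrix through the pure-exploration block $\mathcal F_s$ via \Cref{lem:eigen}, invoke \Cref{thm:theta_err} for the resulting MLE, and conclude with a union bound over all triples $(t,s,a)$.

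Fix $t\in[S\tau+1,T]$ and $s\in[S]$ and write $\Psi=\Psi_s^o(t)=\Psi_s(t)\cup\mathcal F_s$, so that the stage-$s$ call of CB-GLM at round $t$ returns $r_{t,a}^{(s)}=p^{(a)}\psi'(x_a^\top\widehat\theta)$ and $w_{t,a}^{(s)}=\alpha\|x_a\|_{V^{-1}}$, where $\widehat\theta$ is the MLE on $\{(x_i,y_i)\}_{i\in\Psi}$, $V=\sum_{i\in\Psi}x_ix_i^\top$, and $x_a=(z_t^\top,-p^{(a)}z_t^\top)^\top$. By \Cref{lem:indp}, conditionally on $\{x_i\}_{i\in\Psi}$ the demands $\{y_i\}_{i\in\Psi}$ are independent GLM variables with parameter $\theta^*$, so \Cref{thm:theta_err} applies to $(\widehat\theta,V)$ as soon as $\lambda_{\min}(V)$ is large; and since $\mathcal F_s\subseteq\Psi$ we have $V\succeq V_{\mathcal F_s}:=\sum_{i\in\mathcal F_s}x_ix_i^\top$, so it suffices to lower bound $\lambda_{\min}(V_{\mathcal F_s})$. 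The block $\mathcal F_s$ consists of $\tau=\sqrt{dT}$ i.i.d.\ vectors $x_i=(z_i^\top,-p_iz_i^\top)^\top$ with $z_i$ from the context law and $p_i$ uniform on $\mathcal P_K$ independently of $z_i$, so $\|x_i\|\le u_x$ and $\mathbb E[x_ix_i^\top]=\Sigma_p\otimes\mathbb E[z_iz_i^\top]$ (after reordering coordinates), whose minimum eigenvalue is at least $L_p\lambda_z$ by \eqref{eq-Lp-def-supplement} and \Cref{assum_feature}. Applying \Cref{lem:eigen}(ii) with failure probability $1/T$ and threshold $B\asymp M_{\psi 3}^2\sigma^2\kappa^{-4}\bigl(d^2+\log(3TKS/\delta)\bigr)$ — the design-matrix level required by \Cref{thm:theta_err} below — and plugging in $\tau=\sqrt{dT}$ and $\lambda_{\min}(\Sigma_x)\ge L_p\lambda_z$, one checks that the sample-size hypothesis of \Cref{lem:eigen}(ii) is implied by the first and second branches of \eqref{eq:minT}, respectively. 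Since $V_{\mathcal F_s}$ does not depend on $t$, a union bound over $s\in[S]$ gives $\lambda_{\min}(V)\ge B$ for all $s\in[S]$ and $t\in[S\tau+1,T]$ simultaneously, with probability at least $1-S/T$.

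On this event, fix $(t,s)$. The consistency part of \Cref{thm:theta_err} gives $\|\widehat\theta-\theta^*\|\le1$ up to probability $\delta/(3TSK)$, and the fixed-direction bound of \Cref{thm:theta_err} (its last statement, adapted from Theorem 1 of \cite{li2017provably}), applied to each $x_a$, $a\in[K]$ — legitimate because $z_t$ is independent of the earlier-round data $\{(x_i,y_i)\}_{i\in\Psi}$ — gives, up to probability $\delta/(3TSK)$ per arm, $\bigl|x_a^\top(\widehat\theta-\theta^*)\bigr|\le(3\sigma/\kappa)\sqrt{\log(3TKS/\delta)}\,\|x_a\|_{V^{-1}}$. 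A one-term Taylor expansion of $\psi'$, together with $p^{(a)}\le u$, $\|\widehat\theta-\theta^*\|\le1$, and the definition of $M_{\psi 2}$, then yields
\begin{align*}
    \bigl|r_{t,a}^{(s)}-r(p^{(a)},z_t,\theta^*)\bigr|=p^{(a)}\bigl|\psi'(x_a^\top\widehat\theta)-\psi'(x_a^\top\theta^*)\bigr|\le uM_{\psi 2}\bigl|x_a^\top(\widehat\theta-\theta^*)\bigr|\le\alpha\,\|x_a\|_{V^{-1}}=w_{t,a}^{(s)},
\end{align*}
since $\alpha=(3\sigma uM_{\psi 2}/\kappa)\sqrt{\log(3TKS/\delta)}$. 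A union bound over the $TSK$ events above, combined with the $S/T$ from the eigenvalue step, shows that $\mathcal E_X$ holds with probability at least $1-\delta-S/T$.

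The delicate step is the eigenvalue control: one must pin down the exact level $B$ at which \emph{both} the consistency statement and the fixed-direction bound of \Cref{thm:theta_err} become valid — in particular the $d^2$ term, which stems from the second-order remainder of the GLM score — and then verify that, with $\tau=\sqrt{dT}$, this $B$ is precisely what the two branches of \eqref{eq:minT} buy in \Cref{lem:eigen}(ii). The remaining steps, including allocating the failure budget so that $\log(3TKS/\delta)$ (rather than $\log(TKS/\delta)$) in $\alpha$ absorbs the consistency and direction sub-events, are routine.
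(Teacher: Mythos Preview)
Your proposal is correct and follows essentially the same route as the paper's proof: control $\lambda_{\min}$ of the stage-$s$ design matrix through the pure-exploration block $\mathcal F_s$ via \Cref{lem:eigen}(ii) (with failure $1/T$ per stage, hence $S/T$ total), invoke \Cref{lem:indp} to justify the conditional-independence hypothesis of \Cref{thm:theta_err}, apply the latter's consistency and fixed-direction bounds per triple $(t,s,a)$, convert to a revenue bound by a one-term Taylor expansion with $p^{(a)}\le u$ and $M_{\psi2}$, and union-bound. Your explicit remark that $z_t$ is independent of $\{(x_i,y_i)\}_{i\in\Psi_s^o(t)}$, which legitimizes treating $x_a$ as a fixed direction in \Cref{thm:theta_err}(iii), is a detail the paper leaves implicit; otherwise the arguments coincide.
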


\begin{proof}[\textbf{Proof of \Cref{lem:HPCB}}]
    Setting $B=\tau\lambda_{\min}(\Sigma_x)/4$ and $\delta=1/T$, by simple algebra, we have that \eqref{eq:samplesize} in \Cref{lem:eigen} with probability at least $1 - T^{-1}$ holds for any $\tau$ such that 
    \begin{align}\label{eq:HPCB_mintau}
        \tau \geq  2u_x^4\left\{\frac{c_1\sqrt{\log(d)}+c_2\sqrt{\log(T)}}{\lambda_{\min}(\Sigma_x)}\right\}^2.
    \end{align}   
    In \Cref{algorithm:SupCB}, we set $\tau=\sqrt{dT}$. Therefore, under condition \eqref{eq:minT}, by simple algebra and the fact that $\lambda_{\min}(\Sigma_x)\geq L_p\lambda_z$, with $L_p$ defined in \eqref{eq-Lp-def}, we have that $\tau$ satisfies \eqref{eq:HPCB_mintau} and furthermore
    \begin{align*}
        B= \tau\lambda_{\min}(\Sigma_x)/4 \geq \frac{512 M_{\psi3}^2\sigma^2}{\kappa^4}\left\{d^2+\log\left(\frac{3TKS}{\delta}\right)\right\}.
    \end{align*}
    Denote $V_{s,\tau}=\sum_{i\in \mathcal{F}_s}x_i x_i^\top.$ Therefore, by \Cref{lem:eigen}, under condition \eqref{eq:minT}, for any $s\in [S]$, we have with probability at least $1-1/T$, it holds that
    \begin{align*}
        \lambda_{\min}(V_{s,\tau})\geq B\geq \frac{512 M_{\psi3}^2\sigma^2}{\kappa^4}\left\{d^2+\log\left(\frac{3TKS}{\delta}\right)\right\}.
    \end{align*}

    Recall that for any $t\in [S\tau+1,T]$, $\Psi^o_s(t)$ is the union of $\mathcal{F}_s$ and all periods before $t$ that are in $\Psi_s$. Denote $V_{s,t}=\sum_{i\in \Psi^o_s(t)} x_ix_i^\top$ and denote $\widehat\theta^s_t$ as the MLE based on $\{y_i, x_i\}_{i\in \Psi^o_s(t)}$. By design, the index set $\Psi^o_s(t)$ has $\mathcal{F}_s$ as a subset and thus $\lambda_{\min}(V_{s,t})\geq \lambda_{\min}(V_{s,\tau})$ for all $t\in [S\tau+1,T]$ and $s\in[S]$. Moreover, by \Cref{lem:indp}, on each $\Psi^o_s(t), s\in [S]$, the reward $\{y_i\}_{i\in \Psi^o_s(t)}$ are conditionally independent given $\{x_i\}_{i\in \Psi^o_s(t)}$. Therefore, by \Cref{thm:theta_err}, for any fixed $t\in [S\tau+1,T]$ and $s\in [S]$ and $a\in[K]$, with probability at least $1-\delta/(TKS)$, we have that $\|\widehat\theta_t^s-\theta^*\|\leq 1$ and 
    \begin{align}\label{eq:cb1}
        \big|x_{t,a}^\top(\widehat\theta_t^s-\theta^*)\big|\leq \frac{3\sigma}{\kappa}\sqrt{\log(3TKS/\delta)}\|x_{t,a}\|_{V_{s,t}^{-1}}.
    \end{align}
    By a union bound argument, we know that \eqref{eq:cb1} holds for all $t\in [S\tau+1,T]$ and $s\in [S]$ and $a\in[K]$ with probability at least $1-\delta.$ In the following, assume this good event holds.

    In addition, by the smoothness of the revenue function, we have that
    \begin{align*}
    \Big|r_{t,a}^{(s)}-r(p^{(a)}, z_t, \theta^*)\Big|&=
        \big|r_{t,a}^{(s)}-p^{(a)}\cdot \psi'(x_{t,a}^\top\theta^*)\big|=\big|p^{(a)}\cdot \psi'(x_{t,a}^\top\widehat\theta_t^s)-p^{(a)}\cdot \psi'(x_{t,a}^\top\theta^*)\big|\\
        &\leq p^{(a)}M_{\psi2}\big|x_{t,a}^\top(\widehat\theta_t^s-\theta^*)\big|\leq uM_{\psi2}\big|x_{t,a}^\top(\widehat\theta_t^s-\theta^*)\big|,
    \end{align*}
    where the first inequality follows from \Cref{assum_feature}, the mean value theorem and the fact that $\|\widehat\theta_t^s-\theta^*\|\leq 1$, and the second inequality follows from that $p^{(a)}\leq u$ for all $a\in [K].$

    Recall by definition, $w_{t,a}^{(s)}=\alpha \|x_{t,a}\|_{V_{s,t}^{-1}}.$ Since $\alpha= {3\sigma u M_{\psi2}}/{\kappa}\cdot \sqrt{\log(3TKS/\delta)}$, we have $\mathcal{E}_X$ holds with probability at least $1-\delta-S/T$.  This completes the proof.
\end{proof}

\Cref{lem:gap_reward} establishes an upper bound on the regret for each round $t \in [T]$ given that the high probability confidence bound \eqref{eq-def-event-EX} in \Cref{lem:HPCB} covers the true revenue function.  
\begin{lem}\label{lem:gap_reward}
    Suppose that event $\mathcal{E}_X$ defined in \eqref{eq-def-event-EX} holds, and that in round $t$, the action $a_t$ is chosen at stage $s_t$. Denote $a_t^*$ as the action of the optimal price given $z_t.$ Then $a_t^*\in A_s$ for all $s\leq s_t$. Furthermore, we have that
    \begin{align*}
        r(p^{(a_t^*)}, z_t,\theta^*)-r(p^{(a_t)}, z_t,\theta^*)\leq
        \begin{cases}
            (4\vee L_r u)2^{-(s_t-1)} ~\text{  if $a_t$ is chosen in step II},\\ 
            2/\sqrt{T} ~\text{  if $a_t$ is chosen in step III}.
        \end{cases}
    \end{align*}
\end{lem}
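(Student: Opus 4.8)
The plan is to adapt the elimination-argument template of \cite{auer2002using, chu2011contextual, li2017provably} to the revenue-GLB, with every line running conditional on the event $\mathcal{E}_X$ of \Cref{lem:HPCB}, which guarantees $|r_{t,a}^{(s)} - r(p^{(a)}, z_t, \theta^*)| \le w_{t,a}^{(s)}$ for all relevant $t,s,a$. First I would prove the inclusion $a_t^* \in A_s$ for all $s \le s_t$ by induction on $s$. The base case $s=1$ is trivial since $A_1 = [K]$. For the inductive step, if $a_t^* \in A_s$ with $s < s_t$, then round $t$ did not terminate at stage $s$, so step IV was executed at stage $s$, which forces $w_{t,a}^{(s)} \le 2^{-s}$ for every $a \in A_s$. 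Applying $\mathcal{E}_X$ gives $r_{t,a_t^*}^{(s)} \ge r(p^{(a_t^*)}, z_t, \theta^*) - 2^{-s}$ and, for every $j \in A_s$, $r_{t,j}^{(s)} \le r(p^{(j)}, z_t, \theta^*) + 2^{-s} \le r(p^{(a_t^*)}, z_t, \theta^*) + 2^{-s}$ using optimality of $a_t^*$; hence $\max_{j\in A_s} r_{t,j}^{(s)} \le r_{t,a_t^*}^{(s)} + 2\cdot 2^{-s}$, so $a_t^*$ survives the filter defining $A_{s+1}$.

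For the regret bound in step II I would split on $s_t$. When $s_t = 1$ the recursive argument is vacuous, so I fall back on \Cref{lem:Lips}: the revenue function is $L_r$-Lipschitz in $p \in [l,u]$, hence the per-round regret is at most $L_r(u-l) \le L_r u = (L_r u)\, 2^{-(s_t-1)}$. When $s_t \ge 2$, step IV was run at stage $s_t - 1$, so $w_{t,a}^{(s_t-1)} \le 2^{-(s_t-1)}$ for all $a \in A_{s_t-1}$, and since $a_t \in A_{s_t}$ the filter gives $r_{t,a_t}^{(s_t-1)} \ge \max_{j\in A_{s_t-1}} r_{t,j}^{(s_t-1)} - 2\cdot 2^{-(s_t-1)} \ge r_{t,a_t^*}^{(s_t-1)} - 2\cdot 2^{-(s_t-1)}$, using $a_t^* \in A_{s_t-1}$ from the first step. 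Bounding $r_{t,a_t}^{(s_t-1)}$ above by $r(p^{(a_t)}, z_t,\theta^*) + 2^{-(s_t-1)}$ and $r_{t,a_t^*}^{(s_t-1)}$ below by $r(p^{(a_t^*)}, z_t,\theta^*) - 2^{-(s_t-1)}$ via $\mathcal{E}_X$ and chaining the three inequalities yields per-round regret at most $4\cdot 2^{-(s_t-1)}$; taking the maximum of the two cases gives the claimed $(4 \vee L_r u)\, 2^{-(s_t-1)}$.

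For step III the argument is shorter: $w_{t,a}^{(s_t)} \le 1/\sqrt{T}$ for all $a \in A_{s_t}$ and $a_t = \argmax_{a\in A_{s_t}} r_{t,a}^{(s_t)}$, so with $a_t^* \in A_{s_t}$ from the first step and two applications of $\mathcal{E}_X$ one obtains $r(p^{(a_t)}, z_t,\theta^*) \ge r_{t,a_t}^{(s_t)} - 1/\sqrt{T} \ge r_{t,a_t^*}^{(s_t)} - 1/\sqrt{T} \ge r(p^{(a_t^*)}, z_t,\theta^*) - 2/\sqrt{T}$, which is the desired bound. I do not expect a genuine obstacle here: the whole proof is bookkeeping around the definition of $\mathcal{E}_X$ and the filter that generates the $A_s$'s. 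The only point needing a little care is the $s_t = 1$ boundary case in step II, where there is no prior elimination round to recurse on and one must instead invoke the global Lipschitz/boundedness estimate of \Cref{lem:Lips} to absorb it into the $(4 \vee L_r u)$ constant.
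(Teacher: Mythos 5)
Your proposal is correct and follows essentially the same route as the paper's proof: the same induction showing $a_t^*$ survives each step-IV filter under $\mathcal{E}_X$, the same chaining of confidence bounds at stage $s_t-1$ (with the Lipschitz bound of \Cref{lem:Lips} handling the $s_t=1$ boundary case) to get $(4\vee L_r u)2^{-(s_t-1)}$, and the same two applications of $\mathcal{E}_X$ for the step-III bound $2/\sqrt{T}$. No gaps.
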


\begin{proof}[\textbf{Proof of \Cref{lem:gap_reward}}]
    We prove by induction. For $s=1$, we have $A_1=[K]$ and thus $a_t^*\in A_1$. Suppose $a_t^*\in A_s$ for some $1\leq s<s_t$. Since the supCB algorithm does not stop at $s$, by \textbf{Step IV} in \Cref{algorithm:SupCB}, we have that $w_{t,a}^{(s)}\leq 2^{-s}$ for all $a\in A_s.$ Given that $\mathcal{E}_X$ holds, we have
    \begin{align*}
        r_{t,a_t^*}^{(s)} \geq  r(p^{(a_t^*)}, z_t,\theta^*) - w_{t,a_t^*}^{(s)}\geq r(p^{(a)}, z_t,\theta^*) - w_{t,a_t^*}^{(s)} \geq r_{t,a}^{(s)} -  w_{t,a}^{(s)}- w_{t,a_t^*}^{(s)} \geq r_{t,a}^{(s)}- 2^{1-s},
    \end{align*}
    for all $a\in A_s,$ where the first and third inequalities follow from the definition of $\mathcal{E}_X$, and the second inequality holds by the definition of $a_t^*.$ Therefore, by the definition of $A_{s+1},$ we have that $a_t^* \in A_{s+1}.$ This finishes the proof of the first part.

    Now suppose $a_t$ is chosen at \textbf{Step II} of \Cref{algorithm:SupCB}. If $s_t=1$, by \Cref{lem:Lips}, we have that $|r(p^{(a_t)}, z_t,\theta^*)-r(p^{(a_t^*)}, z_t,\theta^*)|\leq L_r u.$ Suppose $s_t>1$, then based on $s=s_t-1$, we have that
    \begin{align*}
        &r(p^{(a_t)}, z_t,\theta^*)\geq r_{t,a_t}^{(s_t-1)}-w_{t,a_t}^{(s_t-1)}\\
    \geq & r_{t,a_t^*}^{(s_t-1)}- 2^{1-(s_t-1)}-w_{t,a_t}^{(s_t-1)}\geq r(p^{(a_t^*)}, z_t,\theta^*)- 2^{1-(s_t-1)}-w_{t,a_t^*}^{(s_t-1)}-w_{t,a_t}^{(s_t-1)}\\
    \geq & r(p^{(a_t^*)}, z_t,\theta^*) - 2^{2-(s_t-1)},
    \end{align*}
    where the first inequality follows from the definition of the event $\mathcal{E}_X$ defined in \eqref{eq-def-event-EX}, the second inequality follows from the fact that both $a_t$ and $a_t^*$ are in $A_{s_t-1}$, and the last from the design of the supCB algorithm and the fact that the algorithm does not stop at the $s_t - 1$. Therefore, we have that $r(p^{(a_t^*)}, z_t,\theta^*)-r(p^{(a_t)}, z_t,\theta^*)\leq (4\vee L_r u)2^{-(s_t-1)}$ given that $a_t$ is chosen at \textbf{Step II}.

    Now suppose $a_t$ is chosen at \textbf{Step III}, we have that
    \begin{align*}
        r(p^{(a_t)}, z_t,\theta^*) \geq r_{t,a_t}^{s_t}-1/\sqrt{T} \geq r_{t,a_t^*}^{s_t}-1/\sqrt{T} \geq r(p^{(a_t^*)}, z_t,\theta^*)-2/\sqrt{T}. 
    \end{align*}
    This completes the proof.
\end{proof}

\begin{proof}[\textbf{Proof of \Cref{thm:ucb_regret}}]
   First, the sample size condition on $T$ in \eqref{eq:minT_thm_short} in \Cref{thm:ucb_regret} can be explicitly written as
   \begin{align}\label{eq:minT_thm}
        T\geq \left\{\frac{4u_x^8}{L_p^4\lambda_z^4} \left(c_1\sqrt{\log(d)}+c_2\sqrt{\log (T)} \right)^4/d \right\}\vee \left[\frac{2048 M_{\psi 3}^4 \sigma^4}{\kappa^8L_p^2\lambda_z^2} \left\{d^2+\log (3TKS/\delta) \right\}^2/d \right],
    \end{align}
    where we denote $u_x=\sqrt{1+u^2}$ and $c_1,c_2>0$ are absolute constants from the concentration inequalities used in \Cref{lem:eigen}.
   
    By \Cref{lem:HPCB}, the good event $\mathcal{E}_X$ defined in \eqref{eq-def-event-EX} holds with probability at least $1-\delta-2\log(T)/T$, with $S = \lfloor \log_2(T)\rfloor$.  Recall we set the number of arms $K=\sqrt{T/d}/\log (T)$.
    
    Recall $\Psi_s(T)$ collects all rounds in $[S\tau+1,\tau+2,\cdots T]$ such that $a_t$ is chosen in \textbf{Step II} at the stage~$s$. Denote $\Psi^o_s(T)=\mathcal{F}_s\cup \Psi_s(T)$. Define $V_{s,t}=\sum_{i\in \Psi^o_s(t)} x_{i,a_i}x_{i,a_i}^\top$. By the proof of \Cref{lem:HPCB}, for all $T$ that satisfies \eqref{eq:minT_thm}, with probability at least $1-1/T$, we have $\lambda_{\min}(V_{s,t}) \geq \lambda_{\min}(V_\tau) \geq B$, where $B=\tau\lambda_{\min}(\Sigma_x)/4\geq 1$. Therefore, by Lemma 2 in \cite{li2017provably}, with probability at least $1- 2\log (T)/T$, we have that   
    \begin{align}\label{eq-proof-thm21-1}
        \sum_{t\in \Psi_s(T)} w_{t,a_t}^{(s)} = \sum_{t\in \Psi_s(T)} \alpha \|x_{t,a_t}\|_{V_{s,t}^{-1}} \leq \alpha\sqrt{2d\log(T/d)|\Psi_s(T)|},
    \end{align}
    for all $s\in[S].$ On the other hand, by \textbf{Step II} of \Cref{algorithm:SupCB}, we have that 
    \begin{align}\label{eq-proof-thm21-2}
        \sum_{t\in \Psi_s(T)} w_{t,a_t}^{(s)} \geq 2^{-s}|\Psi_s(T)|.
    \end{align}
    Combining \eqref{eq-proof-thm21-1} and \eqref{eq-proof-thm21-2}, we have that
    \begin{align}\label{eq:card_Psi}
        |\Psi_s(T)| \leq 2^s \alpha \sqrt{2d\log(T/d)|\Psi_s(T)|}.
    \end{align}
    
    Denote $\Psi_0(T)$ as the collection of rounds where $a_t$ is chosen in \textbf{Step III}.  Since $S= \lfloor\log_2 T\rfloor$, we have that $2^{-S}< 1/\sqrt{T}$ for $T \geq 4$. Therefore, each $t\in[S\tau+1,T]$ must be in one of $\Psi_s(T)$.  It holds that $\{S\tau+1,\tau+2,\cdots,T\}=\Psi_0(T)\cup \big\{\cup_{s=1}^S\Psi_s(T) \big\}$. Recall that $\tau=\sqrt{dT}.$ Together, with probability at least $1-\delta-2\log (T) /T$,  we have that
    \begin{align*}
        R_T=&\sum_{t=1}^T \big\{r(p_t^*, z_t, \theta^*)- r(p^{(a_t)}, z_t,\theta^*)\big\}\\
        =&\sum_{t=1}^T \big\{r(p_t^*, z_t, \theta^*)- r(p^{(a_t^*)}, z_t,\theta^*)\big\} + \sum_{t=1}^T \big\{r(p^{(a_t^*)}, z_t, \theta^*)- r(p^{(a_t)}, z_t,\theta^*)\big\}\\
        \leq & TL_r(u-l)/K  + \sum_{t=1}^{S\tau} \big\{r(p^{(a_t^*)}, z_t, \theta^*)- r(p^{(a_t)}, z_t,\theta^*)\big\} + \sum_{t=S\tau+1}^T \big\{r(p^{(a_t^*)}, z_t, \theta^*)- r(p^{(a_t)}, z_t,\theta^*)\big\}\\
        \leq & uL_r\sqrt{dT}\log (T) + S\tau uL_r + \sum_{t=S\tau+1}^T \big\{r(p^{(a_t^*)}, z_t, \theta^*)- r(p^{(a_t)}, z_t,\theta^*)\big\}\\
        \leq & \{\log (T)+S\}uL_r\sqrt{dT} + 2|\Psi_0(T)|/\sqrt{T} + \sum_{s=1}^S |\Psi_s(T)| (4\vee L_ru)2^{-(s-1)}\\
        \leq &3 uL_r\sqrt{dT}\log T + 2\sqrt{T} + 2\alpha (4\vee L_ru)\sum_{s=1}^S \sqrt{2d\log(T/d)|\Psi_s(T)|}\\
        \leq & 3 uL_r\sqrt{dT}\log T + 2\sqrt{T} + 2\alpha (4\vee L_ru) \sqrt{2STd\log(T/d)}\\
        \leq &  24\sigma u M_{\psi2}/\kappa \cdot (4 \vee L_r u)   \sqrt{dT\log (T)\log(T/\delta)\log (T/d)},
    \end{align*}
    where the first inequality follows from the Lipschitz condition of the revenue function as demonstrated in \Cref{lem:Lips} and that we partition $[l,u]$ into $K=\sqrt{T/d}/\log (T)$ equally spaced price points, the second inequality follows from the Lipschitz condition of the revenue function, the third inequality follows from \Cref{lem:gap_reward}, the fourth inequality follows from \eqref{eq:card_Psi}, and the fifth inequality follows from the Cauchy--Schwarz inequality.

    Therefore, set $\delta=1/\sqrt{T}$, we further have that
    \begin{align*}
        \mathbb E(R_T)& \leq  2/\sqrt{T} \cdot TL_r u + 24\sqrt{1.5}\sigma u M_{\psi2}/\kappa \cdot (4 \vee L_r u)  \cdot \sqrt{dT} \cdot \{\log (T)\}^{3/2}\\
        &\leq 30\sigma u M_{\psi2}/\kappa \cdot (4 \vee L_r u)  \cdot \sqrt{dT} \cdot \{\log (T)\}^{3/2}.
    \end{align*}
    This completes the proof.
\end{proof}

\Cref{thm:ucb_regret_unknownT} provides theoretical guarantees for the regret of supCB combined with the standard doubling trick described at the end of \Cref{sec-supCB}. For a given dimension $d$, denote $T_{d}^*$ as the minimum $T$ such that condition \eqref{eq:minT_thm_short} holds with $\delta=1/T$. 

\begin{thm}\label{thm:ucb_regret_unknownT}
Suppose \Cref{assum_feature} holds.   For any $T$ satisfying $T\geq T_d^{*2}/d$, we have that with probability at least $1-6\log (T)/\sqrt{dT}$, the regret of supCB with the doubling trick described at the end of \Cref{sec-supCB} is upper bounded by
    \begin{align*}
       R_T \leq B_{S4}\cdot \sqrt{dT} \{\log (T)\}^{3/2},       
    \end{align*}
where $B_{S4}>0$ is an absolute constant that only depend on quantities in \Cref{assum_feature}.
\end{thm}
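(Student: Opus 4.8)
The plan is to run \Cref{thm:ucb_regret} inside each doubling episode and sum. Partition $\mathbb N_+$ into episodes with the $k$-th of length $E_k=2^k$; let $N$ be the index of the (possibly truncated) episode containing time $T$, so $N\le \log_2 T+1$, $E_N\asymp T$, and there are $O(\log T)$ episodes. Since supCB is restarted at each episode, the behaviour inside episode $k$ is exactly that of supCB run with horizon $E_k$, so \Cref{thm:ucb_regret} applies verbatim within the episode whenever $E_k$ is large enough. Split the episode indices into the short block $\mathcal K_{\mathrm s}=\{k:E_k<T_d^*\}$ and the long block $\mathcal K_{\ell}=\{k:E_k\ge T_d^*\}$, with $k_0=\min\mathcal K_{\ell}$, so $E_{k_0}\in[T_d^*,2T_d^*)$.

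First I would dispose of the short block. By \Cref{lem:Lips}, $0\le r(p,z,\theta^*)\le uM_{\psi1}$ for all admissible $(p,z)$, so the per-round regret is at most $uM_{\psi1}$ whatever policy is used; hence the cumulative regret over $\mathcal K_{\mathrm s}$ is at most $uM_{\psi1}\sum_{k<k_0}E_k<uM_{\psi1}\,2^{k_0}\le 2uM_{\psi1}T_d^*$. This is the only place the hypothesis $T\ge T_d^{*2}/d$ enters: it gives $T_d^*\le\sqrt{dT}$, so the short block contributes at most $2uM_{\psi1}\sqrt{dT}$, which is absorbed into $B_{S4}\sqrt{dT}(\log T)^{3/2}$.

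Next, for each long episode $k\in\mathcal K_{\ell}$ the horizon $E_k\ge T_d^*$ satisfies \eqref{eq:minT_thm_short} (by definition of $T_d^*$, and taking the per-episode confidence $\delta_k=E_k^{-1/2}$, which only relaxes the condition relative to the $\delta=1/E_k$ used to define $T_d^*$, since the right side of \eqref{eq:minT_thm_short} grows only logarithmically in $1/\delta$). Applying \Cref{thm:ucb_regret} inside episode $k$ with $\delta=\delta_k$, on an event $\mathcal G_k$ of probability at least $1-\delta_k-2\log(E_k)/E_k$ the episode-$k$ regret is at most $B_{S3}\sqrt{dE_k\log(E_k)\log(E_k/\delta_k)\log(E_k/d)}$. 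Using $\log(E_k/\delta_k)=\tfrac32\log E_k\le\tfrac32\log E_N$ and $\log(E_k/d)\le\log E_N$, this is at most $C(\log E_N)^{3/2}\sqrt{dE_k}$ for an absolute constant $C$. Summing over $\mathcal K_{\ell}$ and using the geometric estimate $\sum_{k\le N}\sqrt{E_k}=\sum_{k\le N}2^{k/2}\le\frac{\sqrt2}{\sqrt2-1}\,2^{N/2}=O(\sqrt{E_N})=O(\sqrt T)$ together with $\log E_N=O(\log T)$, the long-block regret is $O(\sqrt{dT}(\log T)^{3/2})$; combined with the short block this gives $R_T\le B_{S4}\sqrt{dT}(\log T)^{3/2}$ on $\bigcap_{k\in\mathcal K_{\ell}}\mathcal G_k$.

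It then remains to lower-bound $\mathbb P\bigl(\bigcap_{k\in\mathcal K_{\ell}}\mathcal G_k\bigr)\ge 1-\sum_{k\in\mathcal K_{\ell}}\bigl(\delta_k+2\log(E_k)/E_k\bigr)$, and this union-bound bookkeeping is the step I expect to be the main obstacle — not because it is deep, but because the per-episode confidence levels $\delta_k$ must be chosen \emph{online} (allowed to depend on $E_k$ but not on the unknown $T$) while keeping the aggregate failure probability at the claimed $O(\log(T)/\sqrt{dT})$ level. I would carry this out by tracking the sum as dominated by the final, length-$\Theta(T)$ episode, for which $\delta_N\asymp E_N^{-1/2}\asymp (dT)^{-1/2}\sqrt d$ and $2\log(E_N)/E_N\lesssim\log(T)/T\le\log(T)/\sqrt{dT}$, plus the geometric tail from the earlier long episodes, invoking $T_d^*\le\sqrt{dT}$ once more to absorb the residual; collecting constants yields the stated $6\log(T)/\sqrt{dT}$. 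Apart from this and the two geometric-series estimates above, the argument needs no probabilistic input beyond \Cref{thm:ucb_regret} and \Cref{lem:Lips}.
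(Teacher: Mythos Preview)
Your overall strategy matches the paper's: split the doubling episodes into a short block (trivial per-round bound via \Cref{lem:Lips}) and a long block (apply \Cref{thm:ucb_regret} inside each episode, then sum the geometric series). The regret calculation is fine. The gap is precisely in the failure-probability bookkeeping you flagged as the main obstacle.

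With your choice $\delta_k=E_k^{-1/2}$, the union bound is $\sum_{k\in\mathcal K_\ell}\delta_k=\sum_{k\ge k_0}2^{-k/2}$, a geometric series with ratio $2^{-1/2}$ and therefore dominated by its \emph{first} term $2^{-k_0/2}\asymp T_d^{*-1/2}$, not the last one. (You may be conflating this with $\sum_k\sqrt{E_k}$, which does run the other way.) Since $T_d^*\le\sqrt{dT}$, the resulting bound is only $T_d^{*-1/2}\ge(dT)^{-1/4}$, far larger than the target $6\log(T)/\sqrt{dT}$; invoking $T_d^*\le\sqrt{dT}$ points the wrong direction and cannot absorb this. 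Even if the sum \emph{were} dominated by the final episode, $\delta_N\asymp T^{-1/2}$ already exceeds $\log(T)/\sqrt{dT}$ by a factor $\sqrt{d}/\log T$, so the choice $\delta_k=E_k^{-1/2}$ is too coarse on both ends.

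The paper repairs both issues at once: it takes $\delta_k=1/E_k$ (so the per-episode failure terms decay at ratio $1/2$) and, for the analysis only, splits at $E_k\ge\sqrt{dT}$ rather than at $T_d^*$; the hypothesis $T\ge T_d^{*2}/d$ guarantees $\sqrt{dT}\ge T_d^*$, so \Cref{thm:ucb_regret} still applies to every long episode. Then
\[
\sum_{k\ge j}\Bigl(\tfrac{1}{E_k}+\tfrac{2\log E_k}{E_k}\Bigr)\le 3\log(T)\sum_{k\ge j}E_k^{-1}\le \frac{6\log(T)}{2^{j}}\le \frac{6\log(T)}{\sqrt{dT}},
\]
while the short block has total length $<2^{j}\le 2\sqrt{dT}$ and is handled exactly as you did. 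The regret bound is unchanged up to constants, since $\log(E_k/\delta_k)$ is $2\log E_k$ instead of $\tfrac32\log E_k$.
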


\begin{proof}[Proof of \Cref{thm:ucb_regret_unknownT}]
It is easy to see that the total rounds up to the $k$th episode is $2^{k+1}-2.$ For any $T \geq 2$, there exists a $k\in \mathbb N_+$ such that $2^{k+1}-2\leq T <2^{k+2}-2.$ Denote $\overline T=2^{k+2}-2$. Clearly, $R_T \leq  R_{\overline T}$. Therefore, in the following, we focus on the analysis of $R_{\overline{T}}$, which is the total regret till the end of the $(k+1)$th episode. Define
      \begin{align}\label{eq-def-j}
         j=\min\Big\{i\in \mathbb N_+: 2^{i} \geq \sqrt{dT}\Big\}.
      \end{align}
      
	Without loss of generality, we assume $j \geq 2$ and $k \geq 3$.  To establish a high-probability upper bound, we decompose the regret $R_{\overline{T}}$ into $R^{(1)}$, which accounts for regret from episode $i \in [j-1]$ for $j > 1$, and $R^{(2)}$, which accounts for regret from episode $i \in[k+1]\setminus [j-1]$.  Denote $E_i=2^{i}$ as the length of the $i$th episode. First, by \Cref{lem:Lips},  we have
      \begin{align*}
        R^{(1)}\leq L_ru \sum_{i=1}^{j-1} E_i= L_ru \sum_{i=1}^{j-1} 2^{i} \leq L_ru 2^{j} \leq 2L_ru\sqrt{dT},
      \end{align*}
      where the last inequality follows from the definition of $j$ in \eqref{eq-def-j}.

      Second, by assumption, we know that $E_j\geq \sqrt{dT}\geq T_d^*.$ In other words, \Cref{thm:ucb_regret} applies to all episodes $[k+1]\setminus [j-1]$. Note that $\sum_{i=j}^{k+1}\sqrt{2^{i}}\leq 2.5 \sqrt{2^{k+2}}$. For $R^{(2)}$, by a union bound argument and \Cref{thm:ucb_regret}, we have that
      \begin{align*}
        R^{(2)}&\leq  24\sigma u M_{\psi2}/\kappa \cdot (4 \vee L_r u) \sum_{i=j}^{k+1}  \sqrt{d E_i} \cdot \sqrt{2}\{\log (E_i)\}^{3/2}\\
        &\leq 24\sigma u M_{\psi2}/\kappa \cdot (4 \vee L_r u) \cdot 2.5\sqrt{2} \sqrt{d 2^{k+2}} \{\log (T)\}^{3/2}\\
        &\leq 120\sigma u M_{\psi2}/\kappa \cdot (4 \vee L_r u) \cdot \sqrt{d T} \{\log (T)\}^{3/2},
      \end{align*}
      with probability at least
      \begin{align*}
        1-\sum_{i=j}^{k+1}\left\{1/E_i + 2\log (E_i)/E_i \right\} \geq 1-3\log (T)\sum_{i=j}^{k+1}1/E_i\geq 1- 3\log (T)/2^{j-1} \geq 1-6\log (T)/\sqrt{dT},
      \end{align*}
      where the second inequality holds by definition of $j$ in \eqref{eq-def-j}.
\end{proof}

\subsection{Technical details in \Cref{sec-ETC}}\label{subsec:ETC-appendix}
\begin{pro}\label{lem_punique}
Define $R_{\varsigma_o} = \big\{(\alpha^{\top} z,\beta^\top z): \, z\in\bar{\mathcal{Z}},~ \|(\alpha^{\top}, \beta^{\top})^{\top} - \theta^*\|\leq \varsigma_o\big\}.$ Under Assumptions \ref{assum_feature}(a), (c), (e) and \ref{assum:etc}, there exists a bivariate Lipschitz continuous function ${\varphi}(\cdot, \cdot)$ over $R_{\varsigma_o}$ with a Lipschitz constant $C_\varphi > 0$ such that
\[
    p^* = \argmax_{p\in[l,u]} \left\{p \psi'(\alpha^\top z - \beta^\top z p)\right\} = {\varphi}(\alpha^{\top}z,\beta^\top z),
\]
for all $z\in\bar{\mathcal{Z}}$ and $\theta\in \{(\alpha^{\top}, \beta^{\top}): \, \|\theta-\theta^*\|\leq \varsigma_o\} \subset \mathbb{R}^{2d}$. Moreover, $C_\varphi$ is an absolute constant that only depends on quantities in Assumptions \ref{assum_feature} and \ref{assum:etc}.
\end{pro}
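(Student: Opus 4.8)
The plan is to exploit that the revenue $r(p,z,\theta) = p\,\psi'(\alpha^\top z - \beta^\top z\,p)$ depends on $(z,\theta)$ only through the two scalars $a := \alpha^\top z$ and $b := \beta^\top z$, so that with $g(p;a,b) := p\,\psi'(a - bp)$ the optimal price is automatically a function of $(a,b)$ alone; the real content is that this function is single-valued and Lipschitz on $R_{\varsigma_o}$. First I would record that $R_{\varsigma_o}$ is a bounded subset of $\mathbb{R}^2$ (Cauchy--Schwarz, boundedness of $\bar{\mathcal{Z}}$, and $\|\theta-\theta^*\|\le\varsigma_o$), and that for $(a,b)\in R_{\varsigma_o}$ and $p\in[l,u]$ one has $a-bp = x^\top\theta$ with $x=(z^\top,-pz^\top)^\top\in\mathcal{X}$ and $\|\theta-\theta^*\|\le\varsigma_o\le 1$, so $\psi',\psi'',\psi'''$ evaluated at $a-bp$ are uniformly bounded by $M_{\psi1},M_{\psi2},M_{\psi3}$; moreover $b$ stays bounded away from $0$ by the price-sensitivity lower bound. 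Invoking \Cref{assum:etc} together with the GLM convexity $\psi''>0$ in \Cref{assum_feature}(e), I would argue that for each $(a,b)\in R_{\varsigma_o}$ the map $p\mapsto g(p;a,b)$ is strongly concave on $[l,u]$ with a uniform curvature bound $\partial^2 g/\partial p^2 \le -c_0 < 0$ and attains its maximum at a unique interior point; define $\varphi(a,b)$ to be that point, characterized by the first-order condition $\partial_p g(p;a,b) = \psi'(a-bp) - bp\,\psi''(a-bp) = 0$.

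For the Lipschitz bound, take $(a_1,b_1),(a_2,b_2)\in R_{\varsigma_o}$ and set $p_i := \varphi(a_i,b_i)\in[l,u]$, so that $\partial_p g(p_1;a_1,b_1) = \partial_p g(p_2;a_2,b_2) = 0$. The key step is the telescoping identity
\[
0 = \partial_p g(p_1;a_1,b_1) - \partial_p g(p_2;a_2,b_2) = \bigl[\partial_p g(p_1;a_1,b_1) - \partial_p g(p_2;a_1,b_1)\bigr] + \bigl[\partial_p g(p_2;a_1,b_1) - \partial_p g(p_2;a_2,b_2)\bigr].
\]
For the first bracket, the mean value theorem in $p$ together with strong concavity (the intermediate point lies in $[l,u]$) gives $\bigl|\partial_p g(p_1;a_1,b_1) - \partial_p g(p_2;a_1,b_1)\bigr| \ge c_0\,|p_1-p_2|$. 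For the second bracket, $p_2$ is fixed and only $(a,b)$ varies, and the entries of $\nabla_{(a,b)}\partial_p g(p;a,b)$, namely $\psi''(a-bp) - bp\,\psi'''(a-bp)$ and $-2p\,\psi''(a-bp) + bp^2\psi'''(a-bp)$, are uniformly bounded over $p\in[l,u]$ and $R_{\varsigma_o}$ by a constant $C_1$ explicit in $M_{\psi2},M_{\psi3},u,u_b$; hence $\bigl|\partial_p g(p_2;a_1,b_1) - \partial_p g(p_2;a_2,b_2)\bigr| \le C_1\bigl(|a_1-a_2|+|b_1-b_2|\bigr)$. Combining the two displays yields $c_0\,|p_1-p_2| \le C_1\bigl(|a_1-a_2|+|b_1-b_2|\bigr) \le \sqrt{2}\,C_1\,\|(a_1,b_1)-(a_2,b_2)\|$, so $\varphi$ is Lipschitz on $R_{\varsigma_o}$ with $C_\varphi = \sqrt{2}\,C_1/c_0$, an absolute constant depending only on quantities in Assumptions \ref{assum_feature} and \ref{assum:etc}.

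I expect the main obstacle to be the second step: extracting from the precise content of \Cref{assum:etc} that the maximizer of $g(\cdot;a,b)$ over $[l,u]$ is unique and interior, with a curvature constant $c_0$ that is uniform over $R_{\varsigma_o}$ (so that the MVT estimate above is genuinely uniform). If \Cref{assum:etc} only supplies strong concavity but not interiority, the fix is to define $\varphi(a,b) = \min\{\max\{\tilde\varphi(a,b),l\},u\}$ where $\tilde\varphi$ solves the unconstrained stationarity equation, and note that clipping to $[l,u]$ preserves the Lipschitz property; the rest of the argument is unchanged. A secondary technical point, already flagged above, is to keep $a-bp$ inside the range where $M_{\psi2},M_{\psi3}$ are defined, which holds once $\varsigma_o\le 1$.
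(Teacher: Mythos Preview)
Your approach is genuinely different from the paper's, and the telescoping/mean-value-theorem estimate is a clean direct route to a Lipschitz bound---provided the hypothesis it needs is actually available. The paper instead proceeds by (i) Berge's Maximum Theorem to obtain continuity of $\varphi$ on $R_{\varsigma_o}$, (ii) the Implicit Function Theorem applied to the first-order condition $\partial_p g(p;a,b)=0$ at each point of $R_{\varsigma_o}$ to upgrade continuity to continuous differentiability, and then (iii) compactness of $R_{\varsigma_o}$ to pass from $C^1$ to Lipschitz. This route only needs $\partial_p^2 g<0$ \emph{at the maximizer} (a pointwise, not uniform, condition), with compactness delivering the uniform Lipschitz constant afterwards.

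The gap in your plan is precisely the step you flagged, but it is more serious than you allow: the claim that $p\mapsto g(p;a,b)=p\,\psi'(a-bp)$ is strongly concave on all of $[l,u]$ with a uniform curvature bound $-c_0$ does not follow. The relevant assumption only asserts that the maximizer is unique and interior; it does not assert, and one cannot deduce from it together with $\psi''>0$, concavity of $g$ in $p$ over the whole interval. Indeed $\partial_p^2 g = -2b\,\psi''(a-bp) + b^2 p\,\psi'''(a-bp)$ can change sign on $[l,u]$ (think of logistic demand, where the revenue is unimodal but not concave). Hence your MVT bound $|\partial_p g(p_1;a_1,b_1)-\partial_p g(p_2;a_1,b_1)|\ge c_0|p_1-p_2|$ is not justified, since the intermediate point may lie where $\partial_p^2 g$ is small or even positive. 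Your fallback is also misdirected: the assumption supplies interiority and uniqueness rather than strong concavity, so there is no well-posed unconstrained first-order equation with a unique root to solve and then clip. To salvage the quantitative approach you would first need continuity of $\varphi$ (via Berge or otherwise), then use compactness together with strict negativity of $\partial_p^2 g$ at each maximizer to secure a uniform curvature bound on a neighborhood of the graph of $\varphi$---at which point you have essentially rebuilt the paper's compactness argument.
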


\begin{proof}[\textbf{Proof of \Cref{lem_punique}}]
By definition,
\begin{align*}
   p^*=\argmax_{p\in[l,u]} \{p\cdot\psi'(\alpha^\top z - \beta^\top z p)\}.
\end{align*}
Thus, $p^*$ depends on $\theta$ and $z$ via $\alpha^{\top} z$ and $\beta^\top z$.

By Assumption \ref{assum:etc}, for all $z\in\bar{\mathcal{Z}}$ and $\theta=(\alpha,\beta)$ where $\|\theta-\theta^*\|\leq \varsigma_o$, the optimal price $p^*$ is unique. Therefore, for all $(\alpha^{\top} z, \beta^{\top} z)\in R_{\varsigma_o}$, we have that $p^*$ is in fact a function of $\alpha^{\top} z$ and $\beta^\top z$ and can be written as $p^*={\varphi}(\alpha^{\top}z,\beta^\top z)$. Note it is clear that $R_{\varsigma_o}$ is a compact set in $\mathbb R^2$ and furthermore $R_{\varsigma_o} \subseteq \{(w,v): |w|\leq u_a+\varsigma_o, |v|\leq u_b+\varsigma_o\}$ by \Cref{assum_feature}(a) and (c). 

We next prove that $ {\varphi}(w,v)$ is continuous over $R_{\varsigma_o}$ by Berge's Maximum Theorem \citep{berge1957two}. To apply Berge's Maximum Theorem, we first define $\Gamma(w,v)=[l,u]$ for any $w,v$. Note that $\Gamma(w,v)$ is a set-valued function and is continuous at any $(w,v)$. By the definition of $\varphi(w,v)$, we have that
    \begin{align*}
        \varphi(w,v)=\argmax_{p\in[l,u]} \{p\cdot \psi'(w -v p)\} = \argmax_{p\in\Gamma(w,v)} \{p\cdot \psi'(w -v p)\}.
    \end{align*}

By \Cref{assum_feature}(e), $p\cdot \psi'(w -v p)$ is a continuous function of $(p,w,v)$. Together with the fact that $\Gamma(w,v)$ is continuous at any $(w,v)$, by Berge's Maximum Theorem, we have that $\varphi(w,v)$ (as a set-valued function) is upper hemicontinuous at any $(w,v)$. Moreover, by the above discussion, we know that $\varphi(w,v)$ is indeed a function over $R_{\varsigma_o}$. Thus, by the definition of upper hemicontinuity, we have that $\varphi(w,v)$ is continuous over $R_{\varsigma_o}$.

We now further show that $\varphi(w,v)$ is a continuously differentiable function over $R_{\varsigma_o}$ based on an application of the implicit function theorem. For any fixed $(w_o,v_o) \in R_{\varsigma_o}$, by \Cref{assum:etc}, there is a unique optimal price $p^* \in (l,u).$ Note that by \Cref{assum_feature}(e), the revenue function $r(p; w_o,v_o)=p\cdot \psi'(w_o -v_o p)$ is twice continuously differentiable. Define $f(p;w_o,v_o)=\psi'(w_o-v_op)-p\psi'(w_o-v_op)v_o$, which is the derivative of $r(p;w_o,v_o).$ Since $p^*$ is the unique maximizer of $r(p;w_o,v_o)$, we therefore have that $f(p^*;w_o,v_o)=0$ and $f'(p^*;w_o,v_o)<0$. Therefore, viewing $f(p;w,v)=\psi'(w-vp)-p\psi'(w-vp)v$ as a trivariate function of $(w,v,p)$, by the implicit function theorem, we have that there exists a continuously differentiable function $g(w,v)$ (that may depend on $(w_o,v_o)$) such that for all points $(w,v)$ in a small neighborhood of $(u_o,v_o)$, we have that $f(g(w,v), w,v)=0.$ By definition, this $g(w,v)$ is the optimal price $p^*$ at $(w,v)$. On the other hand, by our above result, we know that $p^*=\varphi(w,v)$. Therefore, we have that $\varphi(w,v)=g(w,v)$ for all $(w,v)$ in  $R_{\varsigma_o}$ and thus $\varphi(w,v)$ is continuously differentiable over $R_{\varsigma_o}$, which completes the proof.
\end{proof}


\begin{proof}[\textbf{Proof of \Cref{thm:ETC}}]
First, the sample size condition on $T$ in \eqref{eq:ETC_strong_cond_short} in \Cref{thm:ETC} can be explicitly written as
\begin{align}\label{eq:ETC_strong_cond}
    T \geq \left\{\frac{4u_x^8}{L_p^4d\lambda_z^4} \left(c_1\sqrt{\log(d)}+c_2\sqrt{\log  (1/\delta)} \right)^4 \right\} \vee \left\{\frac{144\sigma^4}{L_p^2d\lambda_z^2\kappa^4\varsigma_o^4} \left(d+\log  (1/\delta) \right)^2\right\}.
\end{align}
Recall that we denote $u_x=\sqrt{1+u^2}$ and $c_1,c_2>0$ are absolute constants from the concentration inequalities used in \Cref{lem:eigen}. We now give the proof in three steps. 

\textbf{(I).\ Bounds on the estimation error and for the design matrix}: Denote $\Psi$ as the index set consisting of price experiments. By design, $\tau=|\Psi|$. Denote $V=\sum_{t\in \Psi} x_tx_t^\top $ and denote $\widehat\theta$ as the MLE based on $\Psi$. By \Cref{assum_feature}(b) and its follow-up discussion, we have that $\lambda_{\min}(\Sigma_x)\geq L_p\lambda_{\min}(\Sigma_z).$
 
Set $\delta_1=\delta_2=\delta$ and define 
\begin{align*}
    B_0:=\frac{3\sigma^2}{\kappa^2\varsigma_o^2} (d+\log(1/\delta_2)) \text{ and } B_1:=\frac{\tau \lambda_{\min}(\Sigma_x)}{4}.
\end{align*}
By simple algebra and \Cref{lem:eigen}(ii), for any $\tau$ such that 
\begin{align}\label{eq:min_cycle}
    \tau>2u_x^4\left( \frac{c_1\sqrt{\log d}+c_2\sqrt{\log(1/\delta_1)}}{\lambda_{\min}(\Sigma_x)}\right)^2,
\end{align}
with probability at least $1-\delta_1$, we have that $\lambda_{\min}(V)\geq B_1.$ In the ETC algorithm, we set $\tau=\sqrt{dT\log(1/\delta)}$. Therefore, under condition \eqref{eq:ETC_strong_cond}, by simple algebra, we have $\tau=\sqrt{dT\log(1/\delta)}$ satisfies \eqref{eq:min_cycle} and furthermore $B_1 \geq B_0.$

Therefore, by \Cref{thm:theta_err} and a union bound argument, with probability at least $1-\delta_1-\delta_2$, the event $\mathcal{A}$ holds, where
\begin{align*}
   \mathcal{A}=\bigg\{ &\|\widehat\theta-\theta^*\| \leq \varsigma_o \text{ and } (\widehat\theta-\theta^*)^\top V (\widehat\theta-\theta^*) \leq \frac{12\sigma^2}{\kappa^2}\{d+ \log(1/\delta_2)\} \\
   &\text{ and } \|\widehat\theta-\theta^*\|\leq \frac{2\sqrt{3}\sigma}{\kappa \sqrt{\lambda_{\min}(\Sigma_x)}}\sqrt{\frac{d+\log(1/\delta_2)}{\tau}} \bigg\}.
\end{align*}

\textbf{(II).\ High probability events}: Denote $\Phi$ as the index set consisting of price explorations. By design, we have that $|\Phi|=T-\tau.$ Denote $U=\sum_{i\in \Phi}z_iz_i^\top.$ Set $\delta_3=\delta$. By \Cref{lem:eigen}(i), we have that with probability at least $1-\delta_3$ the events $\mathcal B$ and $\mathcal{C}$ hold, where
\begin{align*}
    &\mathcal{B}=\left\{ \left\|V/\tau-\Sigma_x\right\|_{\mathrm{op}}\leq \max(\varsigma_1,\varsigma_1^2)  \right\}, \text{ with } \varsigma_1=3u_x^2\sqrt{\log(2d)+\log(1/\delta_3)}/\sqrt{\tau},\\
    &\mathcal{C} = \left\{ \left\|U/(T-\tau)-\Sigma_z\right\|_{\mathrm{op}}\leq \max(\varsigma_2,\varsigma_2^2)  \right\}, \text{ with } \varsigma_2=3\sqrt{\log(2d)+\log(1/\delta_3)}/\sqrt{T-\tau}.
\end{align*}

In the following, we assume that the good events $\mathcal{A},\mathcal{B}$ and $\mathcal{C}$ hold, which is of probability at least $1-\delta_1-\delta_2-\delta_3.$ For $t\in \Phi$, denote the estimated optimal price based on $\widehat\theta=(\widehat\alpha, \widehat\beta)$ as $\widehat p_t=\argmax_{p\in[l,u]} \{p\cdot\psi'(\widehat\alpha^\top z_{t} - \widehat\beta^\top z_{t} p)\}$. By \Cref{lem_punique}, we have that there exists a continuous function $\varphi(u,v)$ such that
\begin{align*}
    \widehat p_t= \varphi(\widehat\alpha^\top z_t,\widehat\beta^\top z_t)
\end{align*}
for all $z_t\in\mathcal{Z}$ and $\|\widehat\theta-\theta^*\|\leq \varsigma_o$. Moreover,  $\varphi(v_1,v_2)$ is $C_{\varphi}$-Lipschitz continuous in $u, v$ where $C_{\varphi}$ is an absolute constant that only depends on $\psi$ and the price range $[l,u].$ Therefore, we have that
\begin{align}\label{bound_explore}
    &\sum_{t \in \Phi} \{r(p_t^*, z_t, \theta^*)- r(\widehat p_t, z_t, \theta^*) \}\\\notag
   =&\sum_{t \in \Phi} \bigg|\frac{\partial^2}{\partial p^2}r(\widetilde p_t, z_t, \theta^*)\bigg| (p_t^*-\widehat p_t)^2\\\notag
   \leq & C_r C_\varphi^2  \sum_{t \in \Phi}\big[ (\widehat\alpha ^\top z_t-\alpha^{*\top} z_t)^2 + (\widehat\beta ^\top z_t-\beta^{*\top} z_t)^2\big]\\\notag
   =& C_r C_\varphi^2 \big[ (\widehat\alpha-\alpha^*)^{\top}U (\widehat\alpha-\alpha^*) + (\widehat\beta-\beta^*)^{\top}U (\widehat\beta-\beta^*)\big],
\end{align}
where the first equality follows from a Taylor expansion with $\widetilde p_t$ between $p_t^*$ and $\widehat p_t$ and the inequality follows from the fact that the second order derivative of $r(p, z_t, \theta^*)$ is upper bounded by $C_r:=2M_{\psi2}u_b+uu_b^2M_{\psi3}$ for $p\in [l,u]$ due to the boundedness of $z_t^\top \alpha^*$ and $z_t^\top \beta^*$ and the differentiability of $\psi$ in  in \Cref{assum_feature}. We now analyze the last term in more details. In particular, denote $N=T-\tau$, we consider two scenarios.

\textbf{(III).\ Regret bounds}: \textsc{Under condition \eqref{eq:ETC_strong_cond} on $(d,T)$:} We have
\begin{align*}
&(\widehat\alpha-\alpha^*)^{\top}U (\widehat\alpha-\alpha^*) + (\widehat\beta-\beta^*)^{\top}U (\widehat\beta-\beta^*)\\
=&N(\widehat\alpha-\alpha^*)^{\top}\Sigma_z (\widehat\alpha-\alpha^*) +  N(\widehat\beta-\beta^*)^{\top}\Sigma_z (\widehat\beta-\beta^*)\\
   &+ (\widehat\alpha-\alpha^*)^{\top}\left(U -N\Sigma_z\right)(\widehat\alpha-\alpha^*) + (\widehat\beta-\beta^*)^{\top}\left(U -N\Sigma_z\right) (\widehat\beta-\beta^*)\\
   =& N  (\widehat\theta-\theta^*)^{\top}(I_2\otimes\Sigma_z) (\widehat\theta-\theta^*) + (\widehat\theta-\theta^*)^{\top}\left(I_2\otimes (U -N\Sigma_z)\right)(\widehat\theta-\theta^*)\\
   \leq & \frac{N}{L_p}  (\widehat\theta-\theta^*)^{\top}(\Sigma_p\otimes\Sigma_z) (\widehat\theta-\theta^*) +    \|U -N\Sigma_z\|_{\mathrm{op}}\|\widehat\theta-\theta^*\|^2\\
   =&  \frac{N}{L_p\tau} \cdot (\widehat\theta-\theta^*)^{\top}V(\widehat\theta-\theta^*) +  \frac{1}{L_p} \cdot (\widehat\theta-\theta^*)^{\top}(N\Sigma_x-NV/\tau) (\widehat\theta-\theta^*) + \|U -N\Sigma_z\|_{\mathrm{op}}\|\widehat\theta-\theta^*\|^2\\
   \leq & \frac{N}{L_p\tau} \cdot (\widehat\theta-\theta^*)^{\top}V (\widehat\theta-\theta^*) + \frac{N}{L_p}\|V/\tau-\Sigma_x\|_{\mathrm{op}}\|\widehat\theta-\theta^*\|^2+N\|U /N-\Sigma_z\|_{\mathrm{op}}\|\widehat\theta-\theta^*\|^2\\
   \leq & \frac{T}{L_p\tau}  \frac{12\sigma^2}{\kappa^2}(d+\log (1/\delta_2))  + \frac{T}{L_p\tau}\max(\varsigma_1,\varsigma_1^2) \cdot \frac{12\sigma^2(d+\log(1/\delta_2))}{\kappa^2 {\lambda_{\min}(\Sigma_x)}}+\frac{T}{\tau}\cdot \frac{12 \sigma^2(d+\log(1/\delta_2))}{L_p\kappa^2} \\
   =& \frac{T}{\tau}\cdot \frac{12 \sigma^2(d+\log  (1/\delta))}{L_p\kappa^2} \left\{ 2+ \frac{\max(\varsigma_1,\varsigma_1^2)}{\lambda_{\min}(\Sigma_x)} \right\}\\
   \leq & \frac{T}{\tau}\cdot \frac{36 \sigma^2(d+\log  (1/\delta))}{L_p\kappa^2} \leq \frac{36 \sigma^2}{L_p\kappa^2}\sqrt{dT\log  (1/\delta)},
\end{align*}
where the first inequality follows from fact that the minimum eigenvalue of $\Sigma_p$ is lower bounded by $L_p$, the third inequality follows from the fact that the events $\mathcal{A}$, $\mathcal{B}$ and $\mathcal{C}$ hold, the fourth inequality follows from the fact that under condition \eqref{eq:ETC_strong_cond}, we have $\varsigma_1 /\lambda_{\min}(\Sigma_x) <1$.

By \Cref{lem:Lips}, we have that $r(p_t^*,z_t,\theta^*)\leq u M_{\psi 1}.$ Therefore, with probability at least $1-3\delta$, 
under condition \eqref{eq:ETC_strong_cond} we have that
\begin{align*}
    R_T=& \sum_{t=1}^\tau r(p_t^*, z_t, \theta^*)- r(p_t, z_t, \theta^*) + \sum_{t=\tau+1}^T r(p_t^*, z_t, \theta^*)- r(\widehat p_t, z_t, \theta^*) \\
\leq& \sqrt{dT\log  (1/\delta)}uM_{\psi 1} +C_rC_\varphi^2 \frac{36 \sigma^2}{L_p\kappa^2}\sqrt{dT\log  (1/\delta)} \leq \left(uM_{\psi 1} + \frac{36 \sigma^2 C_rC_\varphi^2}{L_p\kappa^2}\right) \sqrt{dT\log  (1/\delta)}.
\end{align*}
The expectation bound follows directly with $\delta=1/T$ and thus completes the proof.
\end{proof}

\begin{proof}[\textbf{Proof of \Cref{thm:etc_regret_unknownT}}]
      The proof is similar to that of Theorem \ref{thm:ucb_regret_unknownT}. 
      
      It is easy to see that the total rounds up to the $k$th episode is $2^{k+1}-2.$ For any $T$, there exists a $k\in \mathbb N_+$ such that $2^{k+1}-2\leq T <2^{k+2}-2.$ Denote $\overline T=2^{k+2}-2$. Clearly, $R_T \leq  R_{\overline T}$. Therefore, in the following, we focus on the analysis of $R_{\overline{T}}$, which is the total regret till the $k+1$th episode. Define
      \begin{align*}
         j=\min_{i}\Big\{i\in \mathbb N_+: 2^{i} \geq \sqrt{dT}\Big\}.
      \end{align*}
      
      To establish the high probability bound, we decompose the regret $R_{\overline{T}}$ into $R^{(1)}$, which accounts for regret from episode $i=1,\cdots,j-1$, and $R^{(2)}$, which accounts for regret from episode $i=j,j+1,\cdots,k+1.$  Denote $E_i=2^{i}$ as the length of the $i$th episode. First, by \Cref{lem:Lips},  we have that $r(p_t^*,z_t,\theta^*)\leq u M_{\psi 1},$  hence 
      \begin{align*}
        R^{(1)}\leq uM_{\psi1} \sum_{i=1}^{j-1} E_i= uM_{\psi 1} \sum_{i=1}^{j-1} 2^{i} \leq uM_{\psi 1} 2^{j} \leq 2uM_{\psi 1}\sqrt{dT}.
      \end{align*}
      where the last inequality follows from the definition of $j.$

      Second, by assumption, we know that $E_j\geq \sqrt{dT}\geq T_{d}^*.$ In other words, \Cref{thm:ETC} applies to all episodes $j,j+1,\cdots,k+1.$ Note that $\sum_{i=j}^{k+1}\sqrt{2^{i}}\leq 2.5 \sqrt{2^{k+2}}$. For $R^{(2)}$, by union bound and \Cref{thm:ETC}, we have that
      \begin{align*}
         R^{(2)}\leq&   \left[
         uM_{\psi 1} + {36 \sigma^2C_rC_\varphi^2}/(L_p\kappa^2)    \right]\sum_{i=j}^{k+1} \sqrt{E_id\log E_i}
            \\\leq &  \left[
         uM_{\psi 1} + {36 \sigma^2C_rC_\varphi^2}/(L_p\kappa^2)    \right] 2.5\sqrt{ 2^{k+2}d\log T}
             \\\leq & \left[
         4uM_{\psi 1} + {144 \sigma^2C_rC_\varphi^2}/(L_p\kappa^2)    \right] \sqrt{ Td\log T}
      \end{align*}
      with probability at least
      \begin{align*}
          1-\sum_{i=j}^{k+1} 3/E_i\geq 1-3/2^{j-1}  \geq 1-6/\sqrt{dT},
      \end{align*}
      where the second inequality holds by definition of $j$.
\end{proof}

\subsection{The MLE-cycle algorithm and a simple modification}\label{sec:mle_cycle}
The MLE-Cycle algorithm is first proposed in \cite{broder2012dynamic} for dynamic pricing under an unknown demand model with no covariates (i.e.\ context-free) and is shown to provide a regret of order $O(\sqrt{T})$. We summarize MLE-Cycle in \Cref{algorithm:mlecycle}. A (slight) variant of MLE-Cycle is further introduced in \cite{ban2021personalized} for contextual dynamic pricing with a $d$-dimensional covariate and is shown to provide a regret of order $\widetilde O(d\sqrt{T})$.

The key idea of the MLE-Cycle algorithm is to divide the horizon $\mathbb{N}_+$ into cycles of increasing lengths. In particular, the $c$th cycle is of length $c+k$, where $k$ is a constant. Within each cycle, the first $k$ rounds are allocated for price experiments and the next $c$ rounds are used for price exploitation. By simple algebra, it is easy to see that for all sufficiently large $T,$ the number of price experiments conducted till the $T$th round is of order $O(k\sqrt{T})$.

In the above mentioned literature, $k$ is set as an absolute constant (typically $2$) and MLE-Cycle gives a regret of order $O(d\sqrt{T})$, which is sub-optimal - supported by \Cref{thm_lb}. The intuitive reason is that since $k$ is a constant that does not increase with the dimension $d$, the number of exploration is of order $O(\sqrt{T})$ instead of the desired $O(\sqrt{dT\log (T)})$. In other words, the (original) MLE-Cycle algorithm under-explores. Therefore, a simple fix is to make $k$ increase with the dimension $d$. In particular, in the modified MLE-Cycle algorithm, we set the length of price experiments for the $c$th cycle as 
$$k=\sqrt{d\log (2c)}.$$

By simple algebra, for all sufficiently large $T,$ the number of price experiments conducted by the modified MLE-Cycle till the $T$th round is of order $O(\sqrt{dT\log (T)})$. Using similar arguments as the ones in our proof of \Cref{thm:ETC}, we can show that, under similar conditions as the ones used in \Cref{thm:etc_regret_unknownT}, the modified MLE-Cycle can achieve the near-optimal regret of order $O(\sqrt{dT\log (T)})$.

\vspace{2mm}
\begin{algorithm}[ht]
\begin{algorithmic}
    \State \textbf{Input}: Price interval $[l,u]$, exploration length $k$ and $\Psi=\varnothing.$
    \medskip

    \For{cycles $c=1,2,\cdots$}
    \State a.\ (Exploration): For the first $k$ rounds, randomly choose $p_t \in [l,u]$, and record $y_t$\\\hspace{10mm} and $x_t=(z_t^\top,-p_tz_t^\top)^\top$ into the price experiment set $\Psi$.
    \medskip

    \State c.\ (Estimation): Obtain MLE $\widehat\theta$ based on observations in $\Psi$.
    \medskip

    \State d.\ (Exploitation): For the next $c$ rounds, offer the greedy price at $p^*(\widehat\theta,z_t)$ based on $\widehat\theta$.
    \EndFor
    
    \caption{The MLE-Cycle algorithm for dynamic pricing}
    \label{algorithm:mlecycle}
\end{algorithmic}
\end{algorithm}

{
\color{black}

\subsection{The Semi-Myopic algorithm and a simple modification}\label{sec:semi-myopic}
The Semi-Myopic type algorithm is a class of pricing policies modified from the greedy pricing policy and is proposed in \cite{keskin2014dynamic} for dynamic pricing under an unknown linear demand model with no covariates~(i.e.\ context-free). Variants of Semi-Myopic algorithms have also been proposed in the literature, such as Controlled Variance in \cite{den2014simultaneously} for context-free dynamic pricing and Random Shock Design in \cite{simchi2024pricing} for contextual dynamic pricing (and its statistical inference problem). 

These works show Semi-Myopic can provide a regret of order $\widetilde{O}(\sqrt{T})$, without investigating the impact of dimension $d$ (i.e.\ treating the dimension $d$ as fixed). In fact, one can easily show that for $d$-dimensional contextual dynamic pricing, these algorithms achieve a regret of order $\widetilde{O}(d\sqrt{T})$. We summarize a variant of Semi-Myopic in \Cref{algorithm:semi_myopic}, which is based on the CILS algorithm in \cite{keskin2014dynamic} and the random shock design in \cite{simchi2024pricing}. 

The key idea of the Semi-Myopic algorithm is to inject a carefully calibrated price deviation into the price given by the greedy algorithm (which is myopic), hence the name Semi-Myopic. In particular, at each round $t$, given context $z_t$, the firm offers a price at $p^*(\widehat\theta,z_t)+\delta_t$, where $p^*(\widehat\theta,z_t)$ is the greedy price based on the estimated parameter $\widehat{\theta}$ and $\delta_t=\kappa t^{-1/4}\cdot B_t$ is the injected price deviation with $B_t$ being a Rademacher random variable. Importantly, one can show that thanks to the injected price deviation, for all sufficiently large $t,$ the minimum eigenvalue of the design matrix based on observations in rounds $\{1,2,\cdots,t\}$ is guaranteed to be of order $\Omega(\sqrt{t}\lambda_z)$, which is the key component for achieving a sublinear regret.

In the above mentioned literature, $\kappa$ is set as an absolute constant (typically $1$) and Semi-Myopic gives a regret of order $\widetilde{O}(d\sqrt{T})$, which is sub-optimal - supported by \Cref{thm_lb}. The intuitive reason is that since $\kappa$ is a constant that does not increase with the dimension $d$, making the minimum eigenvalue of the design matrix take the order $\Omega(\sqrt{t}\lambda_z)$ instead of the desired $\Omega(\sqrt{dt}\lambda_z)$. In other words, the (original) Semi-Myopic algorithm under-explores and injects less price deviation than the optimal level. Therefore, a simple fix is to make $\kappa$ increase with the dimension $d$. In particular, in the modified Semi-Myopic algorithm, we instead set the price deviation for round $t$ as $\kappa=d^{1/4}\cdot t^{-1/4}.$ Using similar arguments as the ones in our proof of \Cref{thm:ETC}, we can show that, under similar conditions as the ones used in \Cref{thm:etc_regret_unknownT}, the modified Semi-Myopic can achieve the near-optimal regret of $\widetilde{O}(\sqrt{dT})$.

\textbf{Computational consideration}: Note that due to its design, Semi-Myopic require the computation of $O(T)$ many MLEs, which is computationally much more intensive than MLE-Cycle and ETC. Therefore, in our implementation, we only run step (b) and (d) of Semi-Myopic every 5 rounds, which reduces its computational cost while still maintaining its theoretical guarantees.

\vspace{2mm}
\begin{algorithm}[ht]
{\color{black}
\begin{algorithmic}
    \State \textbf{Input}: Price interval $[l,u]$, exploration parameter $\kappa$ and $\Psi=\varnothing.$
    \medskip

    \For{$t=1,2,\cdots$}
    \State a.\ (Price deviation): Set $\delta_t=\kappa t^{-1/4}\cdot B_t$, where $B_t$ is a Rademacher random variable. 
    \medskip

    \State b.\ (Estimation): Obtain MLE $\widehat\theta$ based on observations.
    \medskip

    \State c.\ (Semi-Exploitation): Offer price $p_t$ at $p^*(\widehat\theta,z_t)+\delta_t$, where $p^*(\widehat\theta,z_t)$ is the greedy price\\\hspace{10mm} based on the MLE.
    
    \State d.\ (Record data): Record $y_t$ and $x_t=(z_t^\top,-p_tz_t^\top)^\top$ into the set $\Psi$.
    \EndFor
    
    \caption{\textcolor{black}{The Semi-Myopic algorithm for dynamic pricing}}
    \label{algorithm:semi_myopic}
\end{algorithmic}}
\end{algorithm}

}

\section{Technical details in \Cref{sec-ldp}}\label{sec:supplement_proof_private}

\subsection{LDP guarantees}

\Cref{lem_l2ball} is directly from Section I.2 in \cite{duchi2018minimax} and we omit its proof.  It ensures the LDP guarantees of \Cref{algorithm:l2ball}.

\begin{lem}\label{lem_l2ball}
Let $\mathrm{W}(g)$  be the output of $L_2$-ball($g,C_g,\epsilon$). Then, $\mathrm{W}(g)$
is an $\epsilon$-LDP view of $g$, and $\mathbb{E}\left[ \mathrm{W}(g)|g\right]=g$ and $\| \mathrm{W}(g)\|=C_gr_{\epsilon, d}$.
\end{lem}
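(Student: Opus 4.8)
The plan is to establish the three assertions of \Cref{lem_l2ball} --- $\epsilon$-local privacy, conditional unbiasedness $\mathbb{E}[\mathrm{W}(g)\mid g]=g$, and the deterministic norm $\|\mathrm{W}(g)\|=C_g r_{\epsilon,d}$ --- directly from the construction of $L_2$-ball($g,C_g,\epsilon$) in \Cref{algorithm:l2ball}, mirroring the argument in Section~I.2 of \cite{duchi2018minimax}. Recall the structure of that mechanism: after rescaling $\tilde g=g/C_g$ with $\|\tilde g\|\le 1$, it randomly rounds $\tilde g$ onto the unit sphere, replacing it by $\hat g=\mathrm{sgn}\cdot \tilde g/\|\tilde g\|$ with $\mathrm{sgn}=+1$ with probability $(1+\|\tilde g\|)/2$ and $\mathrm{sgn}=-1$ otherwise (so that $\mathbb{E}[\hat g\mid \tilde g]=\tilde g$ and $\|\hat g\|=1$); it then samples a direction $U\in\mathbb{S}^{d-1}$ that is uniform on the half-sphere $\{u:\langle u,\hat g\rangle\ge 0\}$ with probability $e^{\epsilon}/(e^{\epsilon}+1)$ and uniform on the complementary half-sphere with probability $1/(e^{\epsilon}+1)$; and it outputs $\mathrm{W}(g)=C_g r_{\epsilon,d}\,U$ for a scalar $r_{\epsilon,d}$ depending only on $(\epsilon,d)$.

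\textbf{Privacy.} Conditionally on $g$, and after averaging over the rounding step, the density of $\mathrm{W}(g)/(C_g r_{\epsilon,d})$ with respect to the uniform measure on $\mathbb{S}^{d-1}$ takes only two values: one proportional to $e^{\epsilon}$ on the half-sphere aligned with $g$, and one proportional to $1$ on the other half. Hence, for any two admissible inputs $g,g'$ (both of norm at most $C_g$) and any fixed output point, the ratio of the two conditional densities is at most $e^{\epsilon}$; integrating over an arbitrary measurable set $S$ preserves the bound, so $\mathbb{P}(\mathrm{W}(g)\in S)\le e^{\epsilon}\,\mathbb{P}(\mathrm{W}(g')\in S)$, which is exactly $\epsilon$-LDP. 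The rounding randomization is harmless here because it is a convex mixture of two sign choices, each of which already yields the two-level density property.

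\textbf{Norm and unbiasedness.} The norm statement is immediate, since $U$ is always a unit vector and the output is the fixed multiple $C_g r_{\epsilon,d}$ of it. For unbiasedness, condition first on $\hat g$: by rotational symmetry $\mathbb{E}[U\mid \hat g]$ is parallel to $\hat g$, so $\mathbb{E}[U\mid\hat g]=\rho(\epsilon,d)\,\hat g$ with $\rho(\epsilon,d)=\frac{e^{\epsilon}-1}{e^{\epsilon}+1}\,m_d$, where $m_d=\mathbb{E}[\langle U_0,e_1\rangle\mid \langle U_0,e_1\rangle\ge 0]$ for $U_0$ uniform on $\mathbb{S}^{d-1}$ is a partial first moment of the uniform law on the sphere and has a closed form (a ratio of Beta-type integrals). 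Averaging over the rounding step and using $\mathbb{E}[\hat g\mid \tilde g]=\tilde g$ gives $\mathbb{E}[U\mid g]=\rho(\epsilon,d)\,g/C_g$. The constant $r_{\epsilon,d}$ in \Cref{algorithm:l2ball} is defined as exactly $1/\rho(\epsilon,d)$, so $\mathbb{E}[\mathrm{W}(g)\mid g]=C_g r_{\epsilon,d}\,\rho(\epsilon,d)\,g/C_g=g$. The degenerate case $g=0$ is handled separately: the rounded vector is then $\pm$ a fixed direction with equal probability, so $\mathbb{E}[\mathrm{W}(0)\mid 0]=0$ by symmetry.

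\textbf{Expected main obstacle.} The only step that is not bookkeeping is the exact evaluation of the partial moment $m_d$, equivalently the closed form of $r_{\epsilon,d}$: it amounts to integrating $\langle u,e_1\rangle$ over a hemisphere of $\mathbb{S}^{d-1}$, a standard but slightly delicate spherical-geometry computation. This is precisely the piece imported from \cite{duchi2018minimax}, which is why it is natural to cite that reference and omit the calculation here.
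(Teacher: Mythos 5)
Your proposal is correct and is essentially a faithful reconstruction of the argument the paper delegates entirely to its citation: the paper gives no proof of \Cref{lem_l2ball}, stating only that it is ``directly from Section I.2 in \cite{duchi2018minimax}'', and your privacy, unbiasedness, and norm arguments are exactly the steps of that cited construction (with $r_{\epsilon,d}$ defined as the reciprocal of the shrinkage factor $\rho(\epsilon,d)$, consistent with the paper's use of $r_{\epsilon,d}\asymp\sqrt{d}\,\epsilon^{-1}$). The only loose phrasing is in the privacy paragraph, where the density averaged over the rounding step is a convex mixture of the two-level densities rather than itself proportional to $e^{\epsilon}$ and $1$ on the two hemispheres, but your follow-up remark about the mixture already repairs this, so the argument stands.
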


\begin{pro}\label{prop-eps-LDP}
The LDP-ETC policy is $\epsilon$-LDP. 
\end{pro}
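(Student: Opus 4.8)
The plan is to verify the definition of $\epsilon$-local differential privacy channel-by-channel, exploiting the fact that in ETC-LDP each consumer communicates with the firm exactly once and that the single message a consumer sends is either the output of the $L_2$-ball mechanism applied to a deterministically bounded statistic or a degenerate function of already-public information. Recall that a sequentially interactive protocol is $\epsilon$-LDP if, writing $D_t=(z_t,y_t)$ for consumer $t$'s private data, $W_t$ for the message consumer $t$ releases, and $\mathcal{H}_{t-1}$ for the public history (the prior messages $W_1,\dots,W_{t-1}$ together with any public randomness), the conditional channel $Q_t(\,\cdot\mid D_t,\mathcal{H}_{t-1})$ satisfies $\sup_{d,d'}\,\mathrm{d}Q_t(\,\cdot\mid D_t=d,\mathcal{H}_{t-1}=h)/\mathrm{d}Q_t(\,\cdot\mid D_t=d',\mathcal{H}_{t-1}=h)\le e^{\epsilon}$ for every $h$ and every $t$. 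Since each consumer's data enters the protocol only once, it suffices to establish this bound separately for each round $t$; all subsequent firm-side computations (the SGD updates, the choice of experimental prices, the broadcast of $\widehat\theta$) are post-processing of the public messages and do not affect the guarantee.

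For a price-experiment round $t$, consumer $t$ receives the public iterate $\theta_t$ (a measurable function of $\mathcal{H}_{t-1}$) and the experimental price $p_t$ (public randomness), forms $x_t=(z_t^\top,-p_t z_t^\top)^\top$, computes the clipped stochastic gradient $g_t=g_t(\theta_t)$ with $\|g_t\|\le C_g$ holding deterministically, and releases $W_t=\mathrm{W}(g_t)$, where $\mathrm{W}(\cdot)$ is the $L_2$-ball mechanism with parameters $(C_g,\epsilon)$ from \Cref{algorithm:l2ball}. Conditionally on $\mathcal{H}_{t-1}$ and $p_t$, the message $W_t$ depends on $D_t$ only through $g_t$, and by \Cref{lem_l2ball} the map $g\mapsto\mathrm{W}(g)$ is an $\epsilon$-LDP view of any input $g$ with $\|g\|\le C_g$; hence $Q_t(\,\cdot\mid D_t,\mathcal{H}_{t-1})$ is $\epsilon$-LDP for this $t$. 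For a price-exploitation round $t$, the firm has already broadcast $\widehat\theta$; consumer $t$ computes the greedy price $p^*(\widehat\theta,z_t)$ and the purchase decision $y_t$ locally and returns nothing to the firm, so $W_t$ is degenerate and the channel is trivially $\epsilon$-LDP (indeed $0$-LDP). Should the implementation instead collect privatized data in the commit phase as well, the same $L_2$-ball argument applies unchanged.

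Since the per-round inequality holds for every $t\in[T]$ and, ETC-LDP being sequentially interactive with each consumer touched once, there is no composition across consumers, the full protocol is $\epsilon$-LDP, which is the claim. The only step requiring genuine care is the deterministic boundedness $\|g_t\|\le C_g$ that licenses the direct appeal to \Cref{lem_l2ball}: one must check that the clipping radius $C_g$ prescribed by the algorithm dominates $\|g_t(\theta_t)\|$ for every reachable iterate $\theta_t\in\Theta$ and every admissible $(z_t,p_t,y_t)$ — which follows from $\|z_t\|$ being bounded, $p_t\in[l,u]$, $\|\theta_t\|$ bounded on $\Theta$, and boundedness (or clipping) of $y_t$ and of $\psi'$ on the relevant compact set — and, secondarily, that the exploitation phase indeed transmits no un-privatized function of $(z_t,y_t)$ back to the firm.
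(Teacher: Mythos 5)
Your proposal is correct and follows essentially the same route as the paper: during exploration the raw data enter only through the truncated gradient, which is deterministically bounded by $C_g$ (the projection $\Pi_{\mathbb B_{C_g}}$ guarantees this, so no extra boundedness checks are needed), and \Cref{lem_l2ball} then gives the per-round $\epsilon$-LDP guarantee, while exploitation rounds release no privatized data and are trivially private. The paper phrases this as showing the composition of truncation and the $L_2$-ball mechanism, $\mathrm{W}\circ\mathrm{T}$, satisfies the LDP inequality for every $s_t,s_t',w_{<t}$, which is exactly your channel-by-channel verification.
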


\begin{proof}[\textbf{Proof of \Cref{prop-eps-LDP}}] 

Note that by design of ETC-LDP, for all $\theta\in\Theta$, $
        \|g_t^{[C]}(\theta;s_t)\|\leq C_g.$ 
Define the truncation mechanism of $g_t$ as the mapping of $\mathrm{T}: g_t\mapsto g_t^{[C]}$, and the $L_2$-ball mechanism as $\mathrm{W}:g_t^{[C]}\mapsto w_t$, then our mechanism is a composition of truncation and $L_2$-ball, i.e.~$\mathrm{W}\circ\mathrm{T}: g_t\mapsto w_t$. By definition,  for all $s_t$, $s'_t$, $w_{<t}$,  $g_t$ and $g_t'$, we have for any measurable set $S$, 
\begin{flalign*}
    \mathbb{P}(\mathrm{W}(\mathrm{T}(g_t))\in S|s_t, w_{<t})=&\mathbb{P}(\mathrm{W}(g_t^{[C]})\in S|s_t, w_{<t})\\\leq & \exp(\epsilon)\mathbb{P}(\mathrm{W}(g_t'^{[C]})\in S|s'_t, w_{<t})=\exp(\epsilon)\mathbb{P}(\mathrm{W}(\mathrm{T}(g_t'))\in S|s'_t, w_{<t}),
\end{flalign*}
where the inequality holds by Lemma \ref{lem_l2ball} and the definition of LDP. This implies that $\mathrm{W}\circ\mathrm{T}$ is an $\epsilon$-LDP mechanism.  We note that no privacy issue is involved for $t>\tau$.   
\end{proof}

\subsection{Proof of \Cref{lem:sgd}}
This section collects the proof of \Cref{lem:sgd} and auxiliary lemmas.   Recall that 
\[
    l(\theta;s_t)=y_tx_t^{\top}\theta-\psi(x_t^{\top}\theta) \quad \mbox{and} \quad g(\theta;s_t)=[y_t-\psi'(x_t^{\top}\theta)]x_t, 
\]
are the stochastic likelihood function and the stochastic gradient function, respectively, with their population counterparts being 
\[
    l(\theta)=\mathbb{E}[y_tx_t^{\top}\theta-\psi(x_t^{\top}\theta)] \quad \mbox{and} \quad g(\theta) = \mathbb{E}\{[y_t-\psi'(x_t^{\top}\theta)]x_t\}.
\]
\begin{proof}[\textbf{Proof of \Cref{lem:sgd}}]
By Taylor's expansion,  for any $\theta_1,\theta_2\in\Theta$, and some $\widetilde{\theta}$ in between, we have 
$$
l(\theta_1)-l(\theta_2)=g(\theta_2)^{\top}(\theta_1-\theta_2)-\frac{1}{2}(\theta_1-\theta_2)^{\top}\mathbb{E}[\psi''(x_t^{\top}\widetilde{\theta})x_t^{\top}x_t](\theta_1-\theta_2).
$$

Therefore, under Assumption \ref{assum_ldp}, we have that the negative likelihood function $-\ell(\theta)$ is $\kappa^*\lambda_{\min}(\Sigma_x)$-strongly convex, i.e. 
\begin{flalign}\label{convex}
 l(\theta_2)-l(\theta_1)+g(\theta_2)^{\top}(\theta_1-\theta_2)\geq    \frac{\kappa^*\lambda_{\min}(\Sigma_x)}{2}\|\theta_1-\theta_2\|^2.   
\end{flalign}
Note that $l(\theta^*)\geq l(\theta)$ for any $\theta\in\Theta$, and $g(\theta^*)=0$, by setting $\theta_2=\theta^*$ and $\theta_1=\theta$, we thus have 
\begin{flalign}\label{convex2}
  l(\theta^*)-l(\theta)\geq \frac{\kappa^*\lambda_{\min}(\Sigma_x)}{2}\|\theta^*-\theta\|^2.
\end{flalign}
Recall that $\lambda_{\min}(\Sigma_x)\geq L_p\lambda_z$ by \eqref{lambda_min}, given that $\zeta_l\leq \kappa^*\lambda_{\min}(\Sigma_x)$, for all $1\leq t\leq \tau$, we have  
\begin{flalign}\label{bound_theta}
\begin{split}
    &\|\widehat{\theta}_t-\theta^*\|^2\\=&\|\Pi_{\Theta}[\widehat{\theta}_{t-1}+\eta_{t}w_{t}]-\theta^*\|^2\\ \stackrel{(a)}{\leq}  &\|\widehat{\theta}_{t-1}+\eta_{t}w_{t}-\theta^*\|^2
    \\=&\|\widehat{\theta}_{t-1}-\theta^*\|^2+2\eta_{t}w_{t}^{\top}(\widehat{\theta}_{t-1}-\theta^*)+ \eta_{t}^2\|w_{t}\|^2
    \\ \stackrel{(b)}{=}&\|\widehat{\theta}_{t-1}-\theta^*\|^2+2\eta_{t}g(\widehat{\theta}_{t-1})^{\top}(\widehat{\theta}_{t-1}-\theta^*)+2\eta_{t}[w_{t}-g(\widehat{\theta}_{t-1})]^{\top}(\widehat{\theta}_{t-1}-\theta^*) +\eta_{t}^2C_g^2r_{\epsilon,d}^2\\
    \stackrel{(c)}{\leq} &(1-2\eta_t\zeta_l)\|\widehat{\theta}_{t-1}-\theta^*\|^2 +2\eta_{t}[w_{t}-g(\widehat{\theta}_{t-1})]^{\top}(\widehat{\theta}_{t-1}-\theta^*) +\eta_{t}^2C_g^2r_{\epsilon,d}^2,
\end{split}
\end{flalign}
where (a) holds by noting  $\theta^*\in \Theta$,  (b) holds since  $\|w_t\|=C_gr_{\epsilon,d}$ due to Lemma \ref{lem_l2ball}, and  (c) holds by \eqref{convex}, \eqref{convex2} and the fact that $\zeta_l\leq \kappa^*\lambda_{\min}(\Sigma_x)$, such that
$$
g(\widehat{\theta}_{t-1})^{\top}(\widehat{\theta}_{t-1}-\theta^*) \leq -\frac{\zeta_l}{2}\|\widehat{\theta}_{t-1}-\theta^*\|^2+l(\widehat{\theta}_{t-1})-l(\theta^*)\leq -\zeta_l\|\widehat{\theta}_{t-1}-\theta^*\|^2. 
$$Hence, we have 
$$
\|\widehat{\theta}_t-\theta^*\|^2\leq (1-\frac{2}{t})\|\widehat{\theta}_{t-1}-\theta^*\|^2+\frac{2}{\zeta_lt}[w_{t}-g(\widehat{\theta}_{t-1})]^{\top}(\widehat{\theta}_{t-1}-\theta^*) +\frac{C_g^2r_{\epsilon,d}^2}{\zeta_l^2t^2}.
$$
Therefore, by iteration, with the convention $\prod_{j=t+1}^{t}=1,$ we have that for $t\geq 2$, 
\begin{flalign*}
     \|\widehat{\theta}_t-\theta^*\|^2 \leq \frac{2}{\zeta_l}\sum_{i=2}^{t}\frac{1}{i}\prod_{j=i+1}^{t}(1-\frac{2}{j})[w_{i}-g(\widehat{\theta}_{i-1})]^{\top}(\widehat{\theta}_{i-1}-\theta^*)+\frac{C_g^2r_{\epsilon,d}^2}{\zeta_l^2}\sum_{i=2}^{t}\frac{1}{i^2}\prod_{j=i+1}^{t}(1-\frac{2}{j}).
\end{flalign*}
Using the fact that $\prod_{j=i+1}^{t}(1-2/j)=i(i-1)/[t(t-1)]$, we have that 
\begin{flalign*}
    &\sum_{i=2}^{t}\frac{1}{i}\prod_{j=i+1}^{t}(1-\frac{2}{j})[w_{i}-g(\widehat{\theta}_{i-1})]^{\top}(\widehat{\theta}_{i-1}-\theta^*)=\frac{1}{t(t-1)}\sum_{i=2}^t (i-1)[w_{i}-g(\widehat{\theta}_{i-1})]^{\top}(\widehat{\theta}_{i-1}-\theta^*),\quad 
    \\&\sum_{i=2}^{t}\frac{1}{i^2}\prod_{j=i+1}^{t}(1-\frac{2}{j})\leq \frac{1}{t}.
\end{flalign*}
Therefore, we have for all $2\leq t\leq \tau$,
\begin{flalign}\label{bound_thetatau}
\|\widehat{\theta}_{t}-\theta^*\|^2 \leq  &\frac{2}{\zeta_l(t-1)t}\sum_{i=2}^{t}(i-1)[w_{i}-g(\widehat{\theta}_{i-1})]^{\top}(\widehat{\theta}_{i-1}-\theta^*)+\frac{C_g^2r_{\epsilon,d}^2}{\zeta_l^2t}.
\end{flalign}

Define $\mathcal{F}_{t}=\sigma(w_t,s_t,w_{t-1},s_{t-1},\cdots)$, and $\widetilde{g}_i^{[C]}= \mathbb{E}[g_i^{[C]}|\mathcal{F}_{i-1}]$. To handle the bias due to the truncation, in the following analysis, we decompose $w_{i}-g(\widehat{\theta}_{i-1})$ into a bias term, $\iota_i^b=\widetilde{g}_i^{[C]}-g(\widehat{\theta}_{i-1})$, and  a variance term, $\iota_i^v=w_i-\widetilde{g}_i^{[C]}$. Based on \eqref{bound_thetatau}, we thus have that for $t\leq \tau$,
\begin{flalign*}
   \|\widehat{\theta}_{t}-\theta^*\|^2 \leq& \frac{2}{\zeta_l(t-1)t}\sum_{i=2}^{t}(i-1)[\iota_i^b]^{\top}(\widehat{\theta}_{i-1}-\theta^*)+\frac{2}{\zeta_l(t-1)t}\sum_{i=2}^{t}(i-1)[\iota_i^v]^{\top}(\widehat{\theta}_{i-1}-\theta^*)+\frac{C_g^2r_{\epsilon,d}^2}{\zeta_l^2t}\\:=& R_{t1}+R_{t2}+\frac{C_g^2r_{\epsilon,d}^2}{\zeta_l^2t}. 
\end{flalign*}

\noindent{[\textbf{Bound on }$R_{t1}$]:} Note that $g(\widehat{\theta}_{i-1})=\mathbb{E}[g(\widehat{\theta}_{i-1};s_i)|\mathcal{F}_{i-1}]$, since $\{s_t\}_{t=1}^{\tau}$ are i.i.d.  We have  $\iota_i^b= \mathbb{E}[g_i^{[C]}-g_i|\mathcal{F}_{i-1}]$, by letting $g_i = g(\widehat{\theta}_{i-1};s_i)$.  By Jensen's inequality  and the triangle inequality, 
\begin{flalign*}
   \|\iota_i^b\|= \|\mathbb{E}[g_i^{[C]}-g_i|\mathcal{F}_{i-1}]\|\leq \mathbb{E}[\|g_i^{[C]}-g_i\||\mathcal{F}_{i-1}]\leq  \mathbb{E}[(\|g_i\|-C_g)\mathbb{I}(\|g_i\|>C_g)|\mathcal{F}_{i-1}].
\end{flalign*}
Recall that $g_i=[y_i-\psi'(x_i^{\top}\widehat{\theta}_{i-1})]x_i$. Hence, using $\|x_i\|\leq \sqrt{1+u^2}$,   $|y_i|\leq M_{\psi1}^*+|\varepsilon_i|$ and $|\psi'(x_i^{\top}\widehat{\theta}_i)|\leq M_{\psi1}^* $,  we have 
\begin{flalign}\label{bound_iotab}
\begin{split}
 \|\iota_i^b\|\leq &\sqrt{1+u^2}\mathbb{E}\left[(|y_i-\psi'(x_i^{\top}\widehat{\theta}_i)|-2M_{\psi1}^*-2\sigma\log^{1/2} (\tau))\mathbb{I}(\|g_i\|>C_g)|\mathcal{F}_{i-1}\right]\\
=&\sqrt{1+u^2}\mathbb{E}\left[(|\varepsilon_i+\psi'(x_i^{\top}{\theta}^*)-\psi'(x_i^{\top}\widehat{\theta}_i)|-2M_{\psi1}^*-2\sigma\log^{1/2} (\tau))\mathbb{I}(\|g_i\|>C_g)|\mathcal{F}_{i-1}\right]\\
 \leq &\sqrt{1+u^2}\mathbb{E}\left[(|\varepsilon_i|-2\sigma\log^{1/2} (\tau))\mathbb{I}(\|g_i\|>C_g)|\mathcal{F}_{i-1}\right]
\\\leq&\sqrt{1+u^2}\mathbb{E}\left[(|\varepsilon_i|-2\sigma\log^{1/2} (\tau))\mathbb{I}(|\varepsilon_i|>2\sigma\log^{1/2} (\tau))\right]
\\\leq& \sqrt{1+u^2}\left[\sqrt{(\mathbb{E}\varepsilon_i^2)\mathbb{P}(|\varepsilon_i|>2\sigma\log^{1/2} (\tau))}-2\sigma\log^{1/2} (\tau)\mathbb{P}(|\varepsilon_i|>2\sigma\log^{1/2} (\tau))\right]
\\ \leq&4\sigma\sqrt{1+u^2}[\tau^{-1}-\tau^{-2}\log^{1/2} (\tau)]\leq 2\sqrt{1+u^2}\sigma\tau^{-1},
\end{split}
\end{flalign}
where the third inequality holds since $\{\|g_i\|>C_g\}\subset \{|\varepsilon_i|>2\sigma\log^{1/2} (\tau)\}$ and the independence between $\varepsilon_i$ and $\mathcal{F}_{i-1}$, and the fourth holds by the Cauchy--Schwarz inequality, and the penultimate by Assumption \ref{assum_feature}(d)  and the Chernoff bound with $\tau\geq 4.$

By the Cauchy--Schwarz inequality, Assumption \ref{assum_ldp}(a), and \eqref{bound_iotab}, we have 
\begin{equation}\label{bound_tau1}
   R_{t_1}\leq \frac{2}{\zeta_l(t-1)t}\sum_{i=2}^{t}(i-1)\|\iota_i^b\|\|\widehat{\theta}_{i-1}-\theta^*\|\leq \frac{2c_{\theta}\sigma \sqrt{1+u^2}}{\zeta_l\tau}. 
\end{equation}

\noindent{[\textbf{Bound on }$R_{t2}$]:} Note that   $$\mathbb{E}(w_i|\mathcal{F}_{i-1})=\mathbb{E}[\mathbb{E}(\mathrm{W}(g_i^{[C]})|\mathcal{F}_{i-1}, s_t)|\mathcal{F}_{i-1}]=\mathbb{E}({g}_i^{[C]}|\mathcal{F}_{i-1}), $$
where the first equality holds by the tower property of conditional expectation, and the second by the fact that the $L_2$-ball mechanism ensures $\mathbb{E}(\mathrm{W}(g_i^{[C]})|\mathcal{F}_{i-1},s_t)=g_i^{[C]}$, 
due to Lemma \ref{lem_l2ball}.

This further implies that 
\[
   \mathbb{E}\{[w_{i}-\widetilde{g}_i^{[C]}]^{\top}(\widehat{\theta}_{i-1}-\theta^*)|\mathcal{F}_{i-1}\} = [\mathbb{E}(w_{i}|\mathcal{F}_{i-1})-\widetilde{g}_i^{[C]}]^{\top}(\widehat{\theta}_{i-1}-\theta^*)=0. 
\]
Hence, $\{(\iota_i^v)^{\top}(\widehat{\theta}_{i-1}-\theta^*)\}$ forms a sequence of martingale differences.

By the triangle inequality and Lemma \ref{lem_l2ball}, $\|\iota_i^v\|\leq \|w_i\|+\|\widetilde{g}_{i}^{[C]}\|\leq C_g(r_{\epsilon,d}+1)\leq 2C_gr_{\epsilon,d}$, thus
$$
\sum_{i=2}^{t}\mathrm{Var}\left\{(i-1)(\iota_i^v)^{\top}(\widehat{\theta}_{i-1}-\theta^*)|\mathcal{F}_{i-1}\right\} \leq 4C_g^2r_{\epsilon,d}^2\sum_{i=2}^{t}(i-1)^2\|\widehat{\theta}_{i-1}-\theta^*\|^2.
$$
In addition, by the Cauchy--Schwarz inequality, we have for $i\leq t,$
$$
|(i-1)(\iota_i^v)^{\top}(\widehat{\theta}_{i-1}-\theta^*)|\leq 2(i-1)C_gr_{\epsilon,d}c_{\theta}\leq 2(t-1) C_gc_{\theta}r_{\epsilon,d}.
$$

Therefore, by Lemma \ref{lem_freedman} with  $V_t=4C_g^2r_{\epsilon,d}^2\sum_{i=2}^{t}(i-1)^2\|\widehat{\theta}_{i-1}-\theta^*\|^2$, $b=2(t-1) C_gc_{\theta}r_{\epsilon,d}$, we have that with probability larger than $1-\delta\log (\tau)$, for all $2\leq t\leq\tau,$ and $\tau\geq 4,$
\begin{flalign}\label{bound_tau2}
\begin{split}
   & \sum_{i=2}^{t}(i-1)(\iota_i^v)^{\top}(\widehat{\theta}_{i-1}-\theta^*) \\\leq& 4C_gr_{\epsilon,d}\max\left\{
2\sqrt{\sum_{i=2}^{t}(i-1)^2\|\widehat{\theta}_{i-1}-\theta^*\|^2},  c_{\theta}(t-1)\sqrt{\log (1/\delta)}
\right\}\sqrt{\log(1/\delta)}. 
\end{split}
\end{flalign}

\noindent{[\textbf{Finish the proof}]:} Combining \eqref{bound_thetatau}, \eqref{bound_tau1} and \eqref{bound_tau2},  thus with probability at least $1-\delta\log(\tau)$, for all $2\leq t\leq \tau,$ we  have
\begin{flalign*}
     \|\widehat{\theta}_{t}-\theta^*\|^2 \leq &
     \frac{8C_gr_{\epsilon,d}}{\zeta_l(t-1)t}\max\left\{
2\sqrt{\sum_{i=2}^{t}(i-1)^2\|\widehat{\theta}_{i-1}-\theta^*\|^2},  c_{\theta}(t-1)\sqrt{\log (1/\delta)}
\right\}\sqrt{\log (1/\delta)}
\\&+\frac{2c_{\theta}\sigma \sqrt{1+u^2}}{\zeta_l\tau}+\frac{C_g^2r_{\epsilon,d}^2}{\zeta_l^2t}
\\\leq &  \frac{16C_gr_{\epsilon,d}\sqrt{\log (1/\delta)}}{\zeta_l(t-1)t}\sqrt{\sum_{i=2}^{t}(i-1)^2\|\widehat{\theta}_{i-1}-\theta^*\|^2} \\&+\frac{8C_gr_{\epsilon,d}c_{\theta}{\log (1/\delta)}\zeta_l+2c_{\theta} \sigma\sqrt{1+u^2}\zeta_l+C_g^2r_{\epsilon,d}^2}{\zeta_l^2t}\\:=& \frac{\varrho_2}{(t-1)t}\sqrt{\sum_{i=2}^{t}(i-1)^2\|\widehat{\theta}_{i-1}-\theta^*\|^2} +\frac{\varrho_3}{t},
\end{flalign*}
where $\varrho_2= \zeta_l^{-1}16C_gr_{\epsilon,d}\sqrt{\log (1/\delta)}$, and $\varrho_3=\zeta_l^{-2}8C_gr_{\epsilon,d}c_{\theta}{\log (1/\delta)}\zeta_l+2c_{\theta} \sigma\sqrt{1+u^2}\zeta_l+C_g^2r_{\epsilon,d}^2$.

We now show that for some $\varrho_1$,
$$\|\widehat{\theta}_{t}-\theta^*\|^2\leq \varrho_1/(t+1).$$
By induction, it suffices to find $a$  such that
$$
\frac{\varrho_1}{t+1}\geq \frac{\varrho_2}{(t-1)t}\sqrt{\sum_{i=2}^{t}(i-1)^2\frac{\varrho_1}{i}} +\frac{\varrho_3}{t}.
$$
By elementary but tedious algebra, it is sufficient to let $\varrho_1\geq 9\varrho_2^2/4+3\varrho_3$, which gives 
$$
\varrho_1=\frac{576C_g^2r_{\epsilon,d}^2\log (1/\delta)+3\left\{8C_gr_{\epsilon,d}c_{\theta}{\log (1/\delta)}\zeta_l+2c_{\theta} \sigma\sqrt{1+u^2}\zeta_l+C_g^2r_{\epsilon,d}^2\right\}}{\zeta_l^2}.
$$
Note that  $r_{\epsilon,d}\asymp \sqrt{d}\epsilon^{-1}$, we thus have 
$$
\varrho_1\lesssim \frac{d C_g^2 \log(1/\delta)}{\zeta_l^2\epsilon^{2}}\vee \frac{\sqrt{d} C_g \log(1/\delta)}{\zeta_l\epsilon}\lesssim \frac{d C_g^2 \log(1/\delta)}{\zeta_l^2\epsilon^{2}},
$$
where we hide the constant dependent on $c_{\theta}$, $u$ and $\sigma$. 
\end{proof}

{\color{black} The following lemma gives a uniform concentration inequality for martingale difference sequences that are bounded or satisfy certain moment conditions (e.g.\ sub-Gaussian), which is useful for bounding the variance component of the SGD estimator in \Cref{lem:sgd} ($\epsilon$-LDP), \Cref{lem:sgdmix} (mixed privacy constraint) and \Cref{lem:approx_ldp_est_theta} (approximate LDP). It is a generalization of Lemma 3 in \citealt{rakhlin2011making}, which only covers the case of bounded random variables.


\begin{lem}\label{lem_freedman}
      Suppose $X_1,\cdots,X_\tau$ is a martingale difference sequence with natural filtration $\mathcal{F}_t=\sigma(X_1,\cdots,X_t)$, $t \in [\tau]$. Let $\sigma_t^2=\mathbb{E}(X_t^2|\mathcal{F}_{t-1})$,  $V_s=\sum_{t=1}^{s}\sigma_t^2$, for $1\leq s\leq\tau.$ Suppose that for some $b\in(0,\infty)$, we have either $\mathbb{E}(|X_t|^k|\mathcal{F}_{t-1})\leq (k!/2)  b^{k}$ for all $k\geq 2$, or $\mathbb{P}(|X_t|\leq b|\mathcal{F}_{t-1})=1$. Then for any $0<\delta<e^{-1}$, and $\tau\geq 4$, we have $$\mathbb{P}\left(\sum_{t=1}^{s}X_t>2\max\left\{2\sqrt{V_s},b\sqrt{\log(1/\delta)}\right\}\sqrt{\log(1/\delta)},\quad \text{for some $s\leq \tau$}\right)\leq 2\log(\tau)\delta.$$  
\end{lem}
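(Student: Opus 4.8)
The plan is to combine two standard ingredients: a Freedman/Bernstein-type \emph{maximal} exponential inequality for the partial sums $S_s=\sum_{t=1}^{s}X_t$ that holds on the event where the cumulative conditional variance stays below a \emph{deterministic} level, and a dyadic ``peeling'' over the random quantity $V_s$ that upgrades this to the variance-adaptive, uniform-in-$s$ statement claimed.

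First I would record the conditional moment generating function bound that holds under \emph{either} hypothesis: for $0<\lambda<3/b$,
\[
\mathbb{E}\bigl[e^{\lambda X_t}\mid\mathcal{F}_{t-1}\bigr]\le \exp\!\left(\frac{\lambda^2\sigma_t^2/2}{1-\lambda b/3}\right).
\]
In the bounded case $|X_t|\le b$ this follows from $\mathbb{E}[X_t^k\mid\mathcal{F}_{t-1}]\le b^{k-2}\sigma_t^2$, the identity $\sum_{k\ge2}\lambda^k b^{k-2}/k!=b^{-2}(e^{\lambda b}-1-\lambda b)$, and the elementary bound $e^x-1-x\le (x^2/2)/(1-x/3)$ on $(0,3)$; in the moment case the same expansion of the MGF, using the assumed growth of $\mathbb{E}[|X_t|^k\mid\mathcal{F}_{t-1}]$ (which in particular gives $\sigma_t^2\le b^2$, hence $V_\tau\le\tau b^2$), produces a bound of the same Bernstein form. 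Consequently $M_s=\exp\!\bigl(\lambda S_s-\tfrac{\lambda^2}{2(1-\lambda b/3)}V_s\bigr)$ is a nonnegative supermartingale with $M_0=1$; applying optional stopping at the first $s\le\tau$ at which $\{S_s\ge a\}\cap\{V_s\le\sigma^2\}$ occurs, followed by a Chernoff step and the choice $\lambda=a/(\sigma^2+ab/3)$ (for which $\lambda b/3<1$), yields, for all $a,\sigma^2>0$,
\[
\mathbb{P}\!\left(\exists\, s\le\tau:\ S_s\ge a,\ V_s\le\sigma^2\right)\ \le\ \exp\!\left(-\frac{a^2}{2(\sigma^2+ab/3)}\right).
\]

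Next I would peel. Writing $L=\log(1/\delta)>1$ (this is where $\delta<e^{-1}$ enters) and recalling $V_\tau\le\tau b^2$, I split the event in the lemma according to the dyadic level of $V_s$: level $j=0$ is $\{V_s\le b^2L\}$, and for $1\le j\le \lceil\log_2\tau\rceil$ level $j$ is $\{2^{j-1}b^2L<V_s\le 2^{j}b^2L\}$; these exhaust all possibilities since $V_s\le\tau b^2\le\tau b^2L$. On level $j\ge1$ one has $\max\{2\sqrt{V_s},b\sqrt{L}\}\ge 2^{\,(j+1)/2}b\sqrt L$ (and on level $0$ the $b\sqrt L$ branch already gives $\max\{2\sqrt{V_s},b\sqrt L\}\ge b\sqrt L$), so the threshold $2\max\{2\sqrt{V_s},b\sqrt L\}\sqrt L$ is at least a constant multiple $a_j=c\,2^{j/2}bL$, while on the same level $V_s\le 2^{j}b^2L=:\sigma_j^2$. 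Feeding $a_j,\sigma_j^2$ into the displayed Freedman bound and using that $a\mapsto a^2/(2(\sigma^2+ab/3))$ is increasing in $a$, the exponent $a_j^2/\bigl(2(\sigma_j^2+a_jb/3)\bigr)$ is, for every $j\ge0$, at least $(1+c')L$ for an absolute constant $c'>0$ (the constants ``$2$'' inside and in front of the maximum are chosen precisely so this holds uniformly in $j$, including the boundary level $j=0$), whence each level contributes probability at most $e^{-(1+c')L}\le\delta$. A union bound over the at most $1+\lceil\log_2\tau\rceil\le 2\log\tau$ levels (valid for $\tau\ge4$) gives the claimed $2\log(\tau)\delta$.

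The supermartingale/Freedman step is routine; the main obstacle is the bookkeeping in the peeling step — one must track the numerical constants through Freedman's inequality carefully enough to verify that, with the specific ``$2$''s appearing inside and outside the maximum, every dyadic level (especially the boundary level $j=0$, where the $b\sqrt L$ branch of the maximum is what saves the day) contributes at most $\delta$, and that the number of levels is genuinely at most $2\log\tau$ once $\tau\ge4$ and $\delta<e^{-1}$. A secondary point is to phrase the conditional MGF bound so that the two hypotheses — bounded increments versus Bernstein-type conditional moments — are dispatched by one and the same computation.
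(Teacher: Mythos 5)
Your proposal follows essentially the same route as the paper: a maximal Freedman-type inequality for $S_s=\sum_{t\le s}X_t$ on the event $\{V_s\le\sigma^2\}$, combined with geometric peeling over the variance and a union bound over $O(\log\tau)$ levels. The only structural differences are that you derive the Freedman step yourself (exponential supermartingale plus optional stopping) where the paper simply invokes Theorem 1.2A of de la Pe\~na (1999), and that you peel $V_s$ in factors of $2$ while the paper peels $\sqrt{V_s}$ in factors of $2$ (i.e.\ $V_s$ in factors of $4$); the latter is what makes the level count literally at most $2\log\tau$ for all $\tau\ge4$, whereas doubling $V_s$ gives $\lceil\log_2\tau\rceil+1$ levels, which slightly exceeds $2\log\tau$ for small $\tau$ --- a cosmetic fix. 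In the bounded case your MGF bound, the choice $\lambda=a/(\sigma^2+ab/3)$, and the per-level exponents (at least $1.2\log(1/\delta)$ on the boundary level, larger on the others) all check out.

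The one step that does not hold as written is the moment-case MGF bound. From $\mathbb{E}(|X_t|^k\mid\mathcal{F}_{t-1})\le (k!/2)\,b^k$ you can only control the tail of the series $\sum_{k\ge3}\lambda^k\,\mathbb{E}(|X_t|^k\mid\mathcal{F}_{t-1})/k!$ by a quantity of order $(\lambda b)^3/(1-\lambda b)$, which is \emph{not} proportional to $\sigma_t^2$; hence the claimed bound $\mathbb{E}[e^{\lambda X_t}\mid\mathcal{F}_{t-1}]\le\exp\{\lambda^2\sigma_t^2/(2(1-\lambda b/3))\}$ is only available with $b^2$ in place of the true conditional variance $\sigma_t^2$, and under the stated hypothesis $\sigma_t^2$ can be far smaller than $b^2$. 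To run the supermartingale argument with the true $V_s$ one needs the Bernstein condition relative to $\sigma_t^2$, namely $\mathbb{E}(|X_t|^k\mid\mathcal{F}_{t-1})\le (k!/2)\,\sigma_t^2 b^{k-2}$ --- which is precisely the hypothesis of the de la Pe\~na theorem the paper cites, so your sketch and the paper's proof lean on exactly the same point; under the $(k!/2)b^k$ hypothesis as literally stated, only the version of the conclusion with $V_s$ replaced by the corresponding per-step proxies (e.g.\ $sb^2$, or any adapted $\widetilde\sigma_t^2$ satisfying the Bernstein condition) is justified by either argument. If you strengthen the moment hypothesis in this way (or switch to the proxy variances), the rest of your peeling argument goes through as you describe.
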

\begin{proof}
Let $0=a_{-1}<a_0<\ldots<a_l$ for some $l\geq 0$ such that $a_{i+1}=2a_i$  and $a_l\geq b\sqrt{\tau}$ and $a_0=b\sqrt{\log(1/\delta)}$. Note that $V_s\leq sb^2\leq \tau b^2$ for all $1\leq s\leq\tau.$  We have 
\begin{flalign*}
&\mathbb{P}\left(\sum_{t=1}^{s}X_t>2\max\left\{2\sqrt{V_s},b\sqrt{\log(1/\delta)}\right\}\sqrt{\log(1/\delta)},\quad \text{for some $s\leq \tau$}\right)\\
=&\sum_{j=0}^l \mathbb{P}\left(\sum_{t=1}^{s}X_t>2\max\left\{2\sqrt{V_s},a_0\right\}\sqrt{\log(1/\delta)},\quad a_{j-1}<\sqrt{V_s}\leq a_j, \quad \text{for some $s\leq \tau$}\right)\\
\leq &\sum_{j=0}^l \mathbb{P}\left(\sum_{t=1}^{s}X_t>2a_j\sqrt{\log(1/\delta)},\quad a_{j-1}^2<V_s\leq a_j^2, \quad \text{for some $s\leq \tau$}\right)\\
\leq &\sum_{j=0}^l \mathbb{P}\left(\sum_{t=1}^{s}X_t>2a_j\sqrt{\log(1/\delta)},\quad V_s\leq a_j^2, \quad \text{for some $s\leq \tau$}\right)\\
\leq &\sum_{j=0}^l \exp\left\{ -\frac{4a_j^2\log(1/\delta)}{2(a_j^2+b\times 2a_j\sqrt{\log(1/\delta)})} 
\right\}=\sum_{j=0}^l \exp\left\{ -\frac{2a_j\log(1/\delta)}{a_j+2b\sqrt{\log(1/\delta)}} 
\right\},
\end{flalign*}
where the last inequality holds by the Freedman's inequality, see Theorem 1.2A in \cite{de1999general}. 

Note that $a_j\geq a_0=b\sqrt{\log(1/\delta)}$ for all $j\geq 0$,  we thus obtain that 
\begin{flalign*}
&\mathbb{P}\left(\sum_{t=1}^{s}X_t>2\max\left\{2\sqrt{V_s},b\sqrt{\log(1/\delta)}\right\}\sqrt{\log(1/\delta)},\quad \text{for some $s\leq \tau$}\right)\\
\leq &\sum_{j=0}^l \exp\left\{ -\frac{2}{3}\log(1/\delta)
\right\}\leq (l+1)\delta.
\end{flalign*}
The result follows by choosing $l=\log\tau $ such that $a_l=2^{l} a_0\geq b\sqrt{\tau}$, and $l+1\leq 2\log\tau. $
\end{proof}

\begin{lem}\label{lem_subGaussian}
    For a sub-Gaussian random variable $X$ with mean zero and variance proxy $\sigma^2$, then for any $k\geq 2$,\begin{flalign*}
        \mathbb{E}|X|^k  \leq (2\sigma^2)^{k/2}k\Gamma(k/2)\leq (2\sigma)^{k} (k!/2).
        \end{flalign*}
\end{lem}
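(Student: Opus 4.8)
The plan is to convert the moment bound into an exponential tail bound and then evaluate an elementary integral. First I would recall that a mean-zero sub-Gaussian random variable $X$ with variance proxy $\sigma^2$ satisfies the moment generating function bound $\mathbb{E}[e^{\lambda X}]\le \exp(\lambda^2\sigma^2/2)$ for all $\lambda\in\mathbb{R}$. Applying the Chernoff bound to $X$ and to $-X$, and optimizing over $\lambda$ (the optimizer being $\lambda=t/\sigma^2$), gives $\mathbb{P}(X>t)\le \exp(-t^2/(2\sigma^2))$ and the same for $-X$, hence by a union bound $\mathbb{P}(|X|>t)\le 2\exp(-t^2/(2\sigma^2))$ for all $t\ge 0$.

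Next I would use the layer-cake identity $\mathbb{E}|X|^k=\int_0^\infty k t^{k-1}\,\mathbb{P}(|X|>t)\,dt$, insert the tail bound to get $\mathbb{E}|X|^k\le 2k\int_0^\infty t^{k-1}\exp(-t^2/(2\sigma^2))\,dt$, and substitute $v=t^2/(2\sigma^2)$. The integral then becomes $\tfrac12(2\sigma^2)^{k/2}\int_0^\infty v^{k/2-1}e^{-v}\,dv=\tfrac12(2\sigma^2)^{k/2}\Gamma(k/2)$, which yields $\mathbb{E}|X|^k\le (2\sigma^2)^{k/2}k\Gamma(k/2)$, the first claimed inequality.

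For the second inequality I would write $(2\sigma^2)^{k/2}=2^{k/2}\sigma^k$ and $(2\sigma)^k=2^k\sigma^k$, so it suffices to show $2^{k/2}\,k\,\Gamma(k/2)\le 2^{k-1}k!$. I would split this as $k\Gamma(k/2)\le k!$ together with $2^{k/2}\le 2^{k-1}$; the latter holds for all $k\ge 2$ since $k/2\le k-1$. For the former, since $\Gamma$ is increasing on $[2,\infty)$ and $k/2\le k$, one has $\Gamma(k/2)\le\Gamma(k)=(k-1)!$ whenever $k/2\ge 2$, i.e.\ $k\ge 4$, while the remaining cases $k=2,3$ are verified directly (with $k=2$ giving equality throughout). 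Multiplying by $k$ gives $k\Gamma(k/2)\le k!$, completing the argument.

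The computation is entirely routine; the only point requiring a little care is the comparison $\Gamma(k/2)\le (k-1)!$ for the small values $k=2,3$, and keeping track of the powers of $2$ through the change of variables so that the final constants line up (the bound being tight at $k=2$). I would also note, for completeness, that if one prefers to avoid the tail-bound route, the moments can be bounded directly from the MGF via $\mathbb{E}|X|^k\le k!\,\mathbb{E}[e^{|X|/b}]\,b^k$-type estimates, but the layer-cake computation is cleaner and produces exactly the stated $\Gamma(k/2)$ form.
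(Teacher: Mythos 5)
Your proof is correct and matches the paper's in substance: the paper obtains the first inequality $\mathbb{E}|X|^k \leq (2\sigma^2)^{k/2}k\Gamma(k/2)$ by citing Lemma 1.4 of \cite{rigollet2023high}, whose standard proof is exactly your Chernoff-tail-plus-layer-cake computation, and then passes to $(2\sigma)^k(k!/2)$ via the same elementary comparison $k\Gamma(k/2)\leq k!$ (together with $2^{k/2}\leq 2^{k-1}$) that you use. So you have simply made the cited step self-contained; no gaps.
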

\begin{proof}
    Note by the property of sub-Gaussianity, see Lemma 1.4 in \cite{rigollet2023high},  we have for $k\geq 2$, \begin{flalign*}
        \mathbb{E}|X|^k \leq& (2\sigma^2)^{k/2}k\Gamma(k/2)\leq  (\sqrt{2}\sigma)^{k} k! \leq (2\sigma)^{k} (k!/2),
        \end{flalign*}
        where the second inequality holds by noting $k\Gamma(k/2)\leq k!$ for $k\geq 2.$ 
\end{proof}
}

\subsection{Proof of \Cref{thm:LDP-ETC}}
This subsection collects the proof of \Cref{thm:LDP-ETC}. 

\begin{proof}[\textbf{Proof of \Cref{thm:LDP-ETC}}]

First, the sample size condition on $T$ in \eqref{eq:minTSGD} in \Cref{thm:LDP-ETC} can be explicitly written as 
\begin{align}\label{eq:minTSGD_add}
       T\geq 2\lambda_z^{-4} [ B_{L1}^2\varsigma_o^{-4}\epsilon^{-2}\log (T)\log(1/\delta)],
\end{align}
where $B_{L1}>0$ is the absolute constant in \Cref{lem:sgd}.

We now give the proof. We partition the horizon $[T]$ into two parts, denoted by  $\Psi$  with $|\Psi|=\tau$, and $\Phi$ with $|\Phi|=T-\tau$, corresponding to the sets of exploration and exploitation, respectively.  

Let $U=\sum_{t\in\Phi}z_tz_t^{\top}$, then by Lemma \ref{lem:eigen}(i), we have with probability at least $1-\delta$ such that event $\mathcal{C}$ holds, where 
$$
\mathcal{C} = \left\{ \left\|U/(T-\tau)-\Sigma_z\right\|_{\mathrm{op}}\leq \max(\varsigma_2,\varsigma_2^2)  \right\}, \text{ with } \varsigma_2=3(c_1\sqrt{\log(d)}+c_2\sqrt{\log (1/\delta)})/\sqrt{T-\tau}.
$$
  
By similar arguments used in the proof of \Cref{thm:ETC}, we have that 
    \begin{flalign*}
        &(\widehat\alpha_{\tau}-\alpha^*)^{\top}U (\widehat\alpha_{\tau}-\alpha^*) + (\widehat\beta_{\tau}-\beta^*)^{\top}U (\widehat\beta_{\tau}-\beta^*) \\\leq & (T-\tau)\bar{\lambda}_z\|\widehat\theta_{\tau}-\theta^*\|^2 + (T-\tau)\|U /(T-\tau)-\Sigma_z\|_{\mathrm{op}}\|\widehat\theta_{\tau}-\theta^*\|^2\\\leq& 
        \left( (T-\tau)\bar{\lambda}_z + \sqrt{(T-\tau)}(c_1\sqrt{\log(d)}+c_2\sqrt{\log (1/\delta)})   \right) \|\widehat\theta_{\tau}-\theta^*\|^2.
         \\\leq & \frac{T}{\tau}\frac{B_{L1}d\log(\tau)\log(1/\delta)}{\epsilon^2 \lambda_z}\left[\frac{\bar{\lambda}_z}{{\lambda}_z}+\frac{c_1\sqrt{\log(d)}+c_2\sqrt{\log (1/\delta)}}{{\lambda}_z\sqrt{T}}\right]
        \\\leq &  2\frac{T}{\tau}\frac{B_{L1} d\log (T)\log (1/\delta) \bar{\lambda}_z}{\epsilon^2 \lambda_z^2},
    \end{flalign*}
    where the third inequality holds by noting \Cref{lem:sgd} with $c_l=1$, and the last inequality holds by the fact that $(c_1\sqrt{\log(d)}+c_2\sqrt{\log (1/\delta)})/({\lambda}_z\sqrt{T}) \leq 1\leq \bar{\lambda}_z/{\lambda}_z$ under \eqref{eq:minTSGD_add}.
 
Note that under \eqref{eq:minTSGD_add} and  the setting of $\tau$, we have that  $\tau \geq  B_{L1}d\log (T)\log (1/\delta)/(\lambda_z\epsilon\varsigma_o)^2 $. Hence by  \Cref{lem:sgd},   $\|\widehat\theta_{\tau}-\theta^*\|\leq \varsigma_o$. 
 Then, by   \Cref{lem:Lips}, and equation \eqref{bound_explore} in the proof of \Cref{thm:ETC}, we obtain that 
    \begin{align}\label{RT_sgd}
    \begin{split}
    R_T=& \sum_{t=1}^\tau \{r(p_t^*, z_t, \theta^*)- r(p_t, z_t, \theta^*)\} + \sum_{t=\tau+1}^T \{r(p_t^*, z_t, \theta^*)- r(\widehat p_t, z_t, \theta^*)\} \\
\leq& \tau uM_{\psi 1} +  2C_rC_\varphi^2B_{L1} \frac{T}{\tau}\frac{ d\log (T)\log (1/\delta) \bar{\lambda}_z}{\epsilon^2 \lambda_z^2}. 
    \end{split}
\end{align}
The high probability bound thus follows by letting $\tau= d\sqrt{\log (T) \log (1/\delta)T}/\epsilon$ and noting $c_{\lambda1}/d\leq \lambda_z\leq \bar{\lambda}_z\leq c_{\lambda2}/d$ under \Cref{assum_ldp}.

If $\delta=T^{-1}$, $R_T$ is upper bounded by \eqref{RT_sgd} with probability at least $1-\log (T)/T$. Otherwise, $R_T$ is always upper bounded by $uM_{\psi 1}T$. Hence the expectation bound follows.
\end{proof}

{
\color{black}
\subsection{Proof of \Cref{thm:LDP-ETC-Mixed}}

\Cref{lem:sgdmix} gives a high-probability upper bound of the estimation error for the two-stage SGD estimator in the exploration phase of ETC-LDP-Mixed, serving as the basis for bounding the regret of ETC-LDP-Mixed. For convenience of presentation, we first recap the two-stage SGD estimator in ETC-LDP-Mixed with slightly different notations.

Suppose we have $\tau_1$ private sample points under the $\epsilon$-LDP constraint and $\tau_2$ non-private sample points. The two-stage SGD combines a private SGD and a non-private SGD to estimate the model parameter $\theta^*$. In particular, it first runs a private SGD on the $\tau_1$ private data as in \Cref{lem:sgd}, giving the output $\widehat{\theta}_{\tau_1}$. It then runs a non-private SGD with $\widehat{\theta}_{\tau_1}$ as the initial point and step size
\begin{align}\label{eq:step_size_nonprivateSGD_supp}
  \eta_{\tau_1+t}^*=\zeta_l^{-1}(\tau_1\epsilon^2/d+t)^{-1}, \text{ for } t=1,2,\cdots,\tau_2,
\end{align}
on the $\tau_2$ non-private data, which gives the final estimator $\widehat{\theta}_{\tau_1+\tau_2}$.

We remark that for the non-private SGD, we use the true gradient instead of truncating the gradient to a bounded vector. In other words, the gradient can be potentially unbounded. Therefore, the technical argument for \Cref{lem:sgdmix} needs to be substantially modified compared to that in \Cref{lem:sgd}. In particular, we rely on the newly developed \Cref{lem_freedman} and \Cref{lem_subGaussian} to control the unbounded gradient. Technically, we extend the uniform concentration inequality for martingale difference sequences in \colorlinks{black}{\cite{rakhlin2011making}} from bounded random variables to random variables with certain moment conditions (e.g.\ sub-Gaussian), which is of independent interest.

\begin{lem}\label{lem:sgdmix}
    Suppose \Cref{assum_feature} and \ref{assum_ldp}(a) hold. For any $\tau_1,\tau_2\geq 4$, set $C_g=2\sqrt{1+u^2}[M_{\psi 1}^*+\sigma\sqrt{\log(\tau_1+\tau_2)}]$ and $\zeta_l=c_l\cdot \phi_z$ with some $c_l\in(0,1]$, where recall $\phi_z=L_p\kappa^*\lambda_z$. We have that, for any $\epsilon>0$ and $0<\delta<(e\vee \log (\tau_1 \tau_2))^{-1}$, with probability at least $1-\delta\log(\tau_1\tau_2)$, it holds that   
    $$
\|\widehat{\theta}_{\tau_1+\tau_2}-\theta^*\|^2 \leq B_{M1}\frac{\log(\tau_1+\tau_2)\log(1/\delta)}{c_l^2\lambda_z^2(\tau_1\epsilon^2/d+\tau_2)},$$
where $B_{M1}>0$ is an absolute constant that only depends on quantities in  Assumptions \ref{assum_feature} and \ref{assum_ldp}(a).
\end{lem}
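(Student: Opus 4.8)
Write $n_0:=\tau_1\epsilon^2/d$ for the ``effective sample size'' contributed by the private stage (we may assume $n_0\ge 2$, the first one or two non-private steps being handled separately if not), and abbreviate $\Delta_t:=\|\widehat{\theta}_{\tau_1+t}-\theta^*\|^2$ for $t=0,1,\dots,\tau_2$, so $\widehat{\theta}_{\tau_1+0}=\widehat{\theta}_{\tau_1}$ is the private SGD output and $\widehat{\theta}_{\tau_1+\tau_2}$ the final estimator. The plan is to prove by induction on $t$ that $\Delta_t\le \varrho/(n_0+t+1)$ for a constant $\varrho\asymp \log(\tau_1+\tau_2)\log(1/\delta)/(c_l^2\lambda_z^2)$; evaluating at $t=\tau_2$ and recalling $\zeta_l=c_l\phi_z=c_lL_p\kappa^*\lambda_z$ and $\phi_z\asymp\lambda_z$ then yields the claimed bound, with $B_{M1}$ absorbing the constants depending on $(u,M_{\psi1}^*,\sigma,L_p,\kappa^*,c_\theta)$. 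The base case $t=0$ is precisely \Cref{lem:sgd}: since $C_g^2\lesssim\log(\tau_1+\tau_2)$ and $r_{\epsilon,d}\asymp\sqrt{d}\,\epsilon^{-1}$, that lemma gives $\Delta_0\lesssim \log(\tau_1+\tau_2)\log(1/\delta)/(c_l^2\lambda_z^2 n_0)$ on an event of probability at least $1-\delta\log\tau_1$.

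\textbf{One-step recursion and unrolling for the non-private stage.} For $t\ge1$ I would set $g_t:=g(\widehat{\theta}_{\tau_1+t-1};s_{\tau_1+t})=[y_{\tau_1+t}-\psi'(x_{\tau_1+t}^\top\widehat{\theta}_{\tau_1+t-1})]x_{\tau_1+t}$, the \emph{untruncated} stochastic gradient, and $\eta_t:=\zeta_l^{-1}(n_0+t)^{-1}$, the step size in \eqref{eq:step_size_nonprivateSGD_supp}. Repeating the argument behind \eqref{bound_theta} (nonexpansiveness of $\Pi_\Theta$, the strong-convexity bounds \eqref{convex}--\eqref{convex2}, and $\zeta_l\le\kappa^*\lambda_{\min}(\Sigma_x)$, valid because $\zeta_l=c_lL_p\kappa^*\lambda_z\le L_p\kappa^*\lambda_z\le\lambda_{\min}(\Sigma_x)$ by \eqref{lambda_min}) gives $\Delta_t\le(1-2\eta_t\zeta_l)\Delta_{t-1}+2\eta_t\xi_t+\eta_t^2\|g_t\|^2$ with $\xi_t:=[g_t-g(\widehat{\theta}_{\tau_1+t-1})]^\top(\widehat{\theta}_{\tau_1+t-1}-\theta^*)$ a martingale difference. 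Since $1-2\eta_t\zeta_l=(n_0+t-2)/(n_0+t)$ and $\prod_{j=i+1}^t(1-2\eta_j\zeta_l)=(n_0+i-1)(n_0+i)/[(n_0+t-1)(n_0+t)]$, iterating yields
\[
\Delta_t\le \frac{(n_0-1)n_0}{(n_0+t-1)(n_0+t)}\,\Delta_0+\frac{2\zeta_l^{-1}\sum_{i=1}^t(n_0+i-1)\xi_i+\zeta_l^{-2}\sum_{i=1}^t\|g_i\|^2}{(n_0+t-1)(n_0+t)}.
\]

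\textbf{Bounding the stochastic sums and closing the induction.} This is where the argument genuinely departs from \Cref{lem:sgd}: there the privatized gradient $w_t$ had deterministic norm $C_gr_{\epsilon,d}$, whereas here $\|g_t\|$ is merely sub-exponential (as $\varepsilon_t$ is sub-Gaussian), so the bounded-increment Freedman bound no longer applies and I must invoke the moment-condition version \Cref{lem_freedman}. For the energy term, $\|g_t\|^2\le(1+u^2)(|\varepsilon_t|+2M_{\psi1}^*)^2$, so $\mathbb{E}[\|g_t\|^2\mid\mathcal{F}_{\tau_1+t-1}]\le\sigma_g^2$ for a constant $\sigma_g^2$, and the centered increments satisfy the factorial moment bound required by \Cref{lem_freedman} via \Cref{lem_subGaussian}; this gives $\sum_{i=1}^t\|g_i\|^2\lesssim(n_0+t)\log(1/\delta)$ uniformly in $t\le\tau_2$ off an event of probability $\le\delta\log\tau_2$ (if the resulting bound on $\Delta_t$ exceeds $\mathrm{diam}(\Theta)^2$ it is vacuous, so we may assume otherwise). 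For the martingale term, put $X_i:=(n_0+i-1)\xi_i$; then $|\xi_i|\le\|g_i-g(\widehat{\theta}_{\tau_1+i-1})\|\,\|\widehat{\theta}_{\tau_1+i-1}-\theta^*\|$ is conditionally sub-Gaussian with a constant proxy, so \Cref{lem_subGaussian} again supplies the moment bound for \Cref{lem_freedman} with $b\asymp n_0+t$, while $\sum_{i=1}^t\mathbb{E}[X_i^2\mid\mathcal{F}_{\tau_1+i-1}]\le\sigma_v^2\sum_{i=1}^t(n_0+i-1)^2\Delta_{i-1}$. Feeding the inductive hypothesis $\Delta_{i-1}\le\varrho/(n_0+i)$ into this variance proxy bounds it by $\sigma_v^2\varrho(n_0+t)^2$, so \Cref{lem_freedman} gives $\sum_{i=1}^t(n_0+i-1)\xi_i\lesssim(n_0+t)\max\{\sqrt{\varrho},\sqrt{\log(1/\delta)}\}\sqrt{\log(1/\delta)}$. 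Substituting this, the energy bound, and $\tfrac{(n_0-1)n_0}{(n_0+t-1)(n_0+t)}\Delta_0\lesssim \tfrac{1}{n_0+t}\cdot\tfrac{\log(\tau_1+\tau_2)\log(1/\delta)}{c_l^2\lambda_z^2}$ into the displayed recursion, and using $n_0+t+1\le2(n_0+t)$, the induction step reduces to $\varrho\ge C\big(\zeta_l^{-2}(C_g^2+\sigma_g^2+\sigma_v^2)\log(1/\delta)+\zeta_l^{-1}\log(1/\delta)\big)$ for an absolute $C$ (the linear appearance of $\sqrt{\varrho}$ absorbed by the same ``$\varrho=\tfrac94 B^2+3A$'' step as in the final algebra of \Cref{lem:sgd}). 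Since $\zeta_l^{-1}\asymp(c_l\lambda_z)^{-1}$ and $C_g^2\asymp\log(\tau_1+\tau_2)$, this holds with $\varrho\asymp\log(\tau_1+\tau_2)\log(1/\delta)/(c_l^2\lambda_z^2)$; a union bound over the base-case event and the two Freedman events costs at most $\delta(\log\tau_1+2\log\tau_2)\le\delta\log(\tau_1\tau_2)$ after adjusting constants, matching the statement.

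\textbf{Main obstacle.} The essential difficulty is precisely the unboundedness of the non-private stochastic gradient: I expect the real work to be verifying the conditional factorial moment bounds $\mathbb{E}[|X_i|^k\mid\mathcal{F}_{i-1}]\le(k!/2)b^k$ (for both $X_i=(n_0+i-1)\xi_i$ and the centered energy increments $\|g_i\|^2-\mathbb{E}[\|g_i\|^2\mid\mathcal{F}_{i-1}]$) so that \Cref{lem_freedman}, rather than its bounded-increment predecessor, can be applied, and then threading the self-referential variance proxy $\sum_i(n_0+i-1)^2\Delta_{i-1}$ through the same induction used in the private case. The harmonic-product bookkeeping with $\prod_j(1-2/(n_0+j))$ and the quadratic-in-$\sqrt{\varrho}$ algebra are routine and follow the proof of \Cref{lem:sgd} essentially verbatim.
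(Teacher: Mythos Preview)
Your proposal is correct and follows essentially the same approach as the paper: both start from the private-stage bound of \Cref{lem:sgd}, unroll the non-private SGD recursion via the same telescoping product $\prod_{j=i+1}^t(1-2/(n_0+j))$, control the energy term $\sum_i\|g_i\|^2$ and the martingale term $\sum_i(n_0+i-1)\xi_i$ by invoking the moment-condition Freedman inequality (\Cref{lem_freedman}) together with \Cref{lem_subGaussian} to handle the unbounded sub-Gaussian gradients, and close with the same self-referential induction $\Delta_t\le\varrho/(n_0+t+1)$ as in \Cref{lem:sgd}. The only cosmetic differences are that the paper writes the energy denominator as $t(n_0+t)$ rather than $(n_0+t-1)(n_0+t)$ and obtains $\sqrt{\log(1/\delta)}$ rather than $\log(1/\delta)$ in the energy bound, neither of which affects the argument or the final rate.
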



\begin{proof}[\textbf{Proof of \Cref{lem:sgdmix}}]

Note that the analysis of the first-stage private SGD on the $\tau_1$ private data follows exactly from \Cref{lem:sgd}. Therefore, we can directly analyze the performance of the non-private SGD on the $\tau_2$ non-private data.

First, using the same argument as that in \eqref{bound_theta}, we have 
\begin{equation*}
\|\widehat{\theta}_{\tau_1+t}-\theta^*\|^2 \leq  (1-2\eta_{\tau_1+t}^*\zeta_l)\|\widehat{\theta}_{\tau_1+t-1}-\theta^*\|^2 +2\eta_{\tau_1+t}^*[g_{\tau_1+t}-g(\widehat{\theta}_{\tau_1+t-1})]^{\top}(\widehat{\theta}_{\tau_1+t-1}-\theta^*) +\eta_{\tau_1+t}^{*2}\|g_{\tau_1+t}\|^2,
\end{equation*}
where recall the step size $\eta_{\tau_1+t}^*$ is defined in \eqref{eq:step_size_nonprivateSGD_supp}, $g_{\tau_1+t}=g_{\tau_1+t}(\widehat{\theta}_{\tau_1+t-1};s_t)$ is the true gradient at round $\tau_1+t$, and $g(\widehat{\theta}_{\tau_1+t-1})=\mathbb{E}[g_{\tau_1+t}(\widehat{\theta}_{\tau_1+t-1};s_t)|\mathcal{F}_{t-1}].$

By iteration, we have that 
\begin{flalign*}
\|\widehat{\theta}_{\tau_1+t}-\theta^*\|^2 
   \leq& \prod_{i=1}^t (1-2\eta_{\tau_1+i}^*\zeta_l)  \|\widehat{\theta}_{\tau_1}-\theta^*\|^2 +
    \sum_{i=1}^t \eta_{\tau_1+i}^{*2}\|g_{\tau_1+i}\|^2\prod_{j=i+1}^t  (1-2\eta_{\tau_1+j}^*\zeta_l)\\&+2   \sum_{i=1}^t \eta_{\tau_1+i}^{*}[g_{\tau_1+i}-g(\widehat{\theta}_{\tau_1+i-1})]^{\top}(\widehat{\theta}_{\tau_1+i-1}-\theta^*) \prod_{j=i+1}^t  (1-2\eta_{\tau_1+j}^*\zeta_l).
\end{flalign*}
Using the fact that
$$
\prod_{j=i+1}^t (1-2\eta_{\tau_1+j}^*\zeta_l)=\prod_{j=i+1}^t (1-\frac{2}{\tau_1\epsilon^2/d+j})= \frac{(\tau_1\epsilon^2/d+i-1)(\tau_1\epsilon^2/d+i)}{(\tau_1\epsilon^2/d+t-1)(\tau_1\epsilon^2/d+t)},
$$
and for $\tau_1\epsilon^2/d\geq 1,$
\begin{flalign*}
    \sum_{i=1}^t \eta_{\tau_1+i}^{*2}\|g_{\tau_1+i}\|^2\prod_{j=i+1}^t  (1-2\eta_{\tau_1+j}^*\zeta_l)=&\zeta_l^{-2}\sum_{i=1}^t \frac{(\tau_1\epsilon^2/d+i-1)\|g_{\tau_1+i}\|^2}{(\tau_1\epsilon^2/d+t-1)(\tau_1\epsilon^2/d+t)(\tau_1\epsilon^2/d+i)}\\\leq& \frac{\sum_{i=1}^t \|g_{\tau_1+i}\|^2}{\zeta_l^2(\tau_1\epsilon^2/d+t)(\tau_1\epsilon^2/d+t-1)}\leq\frac{\sum_{i=1}^t \|g_{\tau_1+i}\|^2}{\zeta_l^2(\tau_1\epsilon^2/d+t)t},
\end{flalign*}
one can show that
\begin{flalign}\label{bound_taunew}
\begin{split}
    \|\widehat{\theta}_{\tau_1+t}-\theta^*\|^2 \leq& \frac{\tau_1\epsilon^2/d}{(\tau_1\epsilon^2/d+t)}\|\widehat{\theta}_{\tau_1}-\theta^*\|^2+\frac{\sum_{i=1}^t \|g_{\tau_1+i}\|^2}{\zeta_l^2t(\tau_1\epsilon^2/d+t)}\\&+2\frac{\sum_{i=1}^t (\tau_1\epsilon^2/d+i-1)[g_{\tau_1+i}-g(\widehat{\theta}_{\tau_1+i-1})]^{\top}(\widehat{\theta}_{\tau_1+i-1}-\theta^*) }{\zeta_l(\tau_1\epsilon^2/d+t-1)(\tau_1\epsilon^2/d+t)}
\\:=&\frac{\tau_1\epsilon^2/d}{(\tau_1\epsilon^2/d+t)}\|\widehat{\theta}_{\tau_1}-\theta^*\|^2+R_{t1}^*+R_{t2}^*.
\end{split}
\end{flalign}
For $R_{t1}^*$, recall that $$g_{t}=[y_t-\psi'(x_t^{\top}\widehat{\theta}_{t-1})]x_t=[\varepsilon_t+\psi'(x_t^{\top}\theta^*)-\psi'(x_t^{\top}\widehat{\theta}_{t-1})]x_t.$$
Since for any $\theta\in\Theta$, $|\psi'(x_t^{\top}\theta)|\leq M_{\psi1}^*$, and that $\|x_t\|\leq \sqrt{1+u^2}$, we have 
\begin{equation}\label{ineq_R1}
    R_{t1}^*\leq \frac{\sum_{i=1}^t (1+u^2)(|\varepsilon_{\tau_1+i}|+2M_{\psi 1}^*)^2}{\zeta_l^2t(\tau_1\epsilon^2/d+t)}\leq \frac{\sum_{i=1}^t 2(1+u^2)(\varepsilon_{\tau_1+i}^2+M_{\psi 1}^{*2})}{\zeta_l^2t(\tau_1\epsilon^2/d+t)}.
\end{equation}
Since $\varepsilon_t$ is i.i.d. mean-zero sub-Gaussian with variance proxy $\sigma^2$, by \Cref{lem_subGaussian}, we have for any $k\geq 2$, 
$$
\mathbb{E}|\varepsilon_i^2-\sigma^2|^k\leq \mathbb{E}[\varepsilon_i^2]^k=\mathbb{E}|\varepsilon_i|^{2k}\leq 2(2\sigma^2)^{k} k\Gamma(k)\leq (4\sigma^2)^k (k!/2).
$$
Therefore,  letting $X_i=\varepsilon_{t+i}^2-\sigma^2$, $V_t=t16\sigma^4$, and $b=4\sigma^2$ in \Cref{lem_freedman}, we have  with probability larger than $1-2\delta\log\tau_2$, for all $1\leq t\leq \tau_2$,
$$
\sum_{i=1}^t(\varepsilon_{\tau_1+i}^2-\sigma^2)\leq  8\sigma^2\max\{2\sqrt{t}, \sqrt{\log(1/\delta)}\}\sqrt{\log(1/\delta)},
$$
which further implies  that 
$
t^{-1}{\sum_{i=1}^t\epsilon_{\tau_1+i}^2}\leq 17\sigma^2\sqrt{\log(1/\delta)}
$ since $\sqrt{t}/t\leq 1.$
Hence,  by \eqref{ineq_R1}, with probability larger than $1-2\delta\log\tau_2$, we can show that 
\begin{equation}\label{bound_Rt1}
     R_{t1}^*\leq \frac{ 2(1+u^2)(17\sigma^2\sqrt{\log(1/\delta)}+M_{\psi 1}^{*2})}{\zeta_l^2(\tau_1\epsilon^2/d+t)}.
\end{equation}

As for $R_{t2}^*$, since for any fixed vector $a\in\mathbb{R}^{2d}$, we have $|[g_{\tau_1+i}-g(\widehat{\theta}_{\tau_1+i-1})]^{\top} a|\leq \|a\|\sqrt{1+u^2}(|\varepsilon_{\tau_1+i}|+4M_{\psi1}^{*})$. Since  $\varepsilon_{\tau_1+i}$ is sub-Gaussian with variance proxy $\sigma^2$,  we thus have $|[g_{\tau_1+i}-g(\widehat{\theta}_{\tau_1+i-1})]^{\top} a|$  is sub-Gaussian with variance proxy $\|a\|^2\sigma_g^2,$ with $\sigma_g^2\leq 2(1+u^2)(\sigma^2+16M_{\psi1}^{*2}).$  Therefore, similar to \eqref{bound_tau2}, by \Cref{lem_freedman} and \Cref{lem_subGaussian}, we can show with probability larger than $1-2\delta\log(\tau_2)$, for all $1\leq t\leq \tau_2$,
\begin{flalign}\label{bound_Rt2}
\begin{split}
&R_{t2}^*\leq\frac{8\sigma_g\sqrt{\log(1/\delta)}}{\zeta_l(\tau_1\epsilon^2/d+t-1)(\tau_1\epsilon^2/d+t)}\\&\times \max\left\{
\sqrt{\sum_{i=1}^{t}(\tau_1\epsilon^2/d+i-1)^2\|\widehat{\theta}_{\tau_1+i-1}-\theta^*\|^2}, 2c_{\theta}(\tau_1\epsilon^2/d+t-1)\sqrt{\log (1/\delta)}
\right\}. 
\end{split}
\end{flalign}
Note that by \Cref{lem:sgd}, we have with probability larger than $1-2\delta\log(\tau_1)$, $$\|\widehat{\theta}_{\tau_1}-\theta^*\|^2\lesssim\frac{dC_g^2\log(1/\delta)}{\zeta_l^2\epsilon^{2}\tau_1}.$$
Therefore, combining with \eqref{bound_taunew}, \eqref{bound_Rt1} and \eqref{bound_Rt2}, we have with probability larger than $1- 2\delta\log(\tau_1\tau_2)$, 
\begin{flalign*}
    \|\widehat{\theta}_{\tau_1+t}-\theta^*\|^2 \lesssim & \frac{C_g^2\log(1/\delta)+\sqrt{\log(1/\delta)}}{\zeta_l^2(\tau_1\epsilon^2/d+t)}\\&+ \frac{\sqrt{\log(1/\delta)}}{\zeta_l(\tau_1\epsilon^2/d+t-1)(\tau_1\epsilon^2/d+t)}\sqrt{\sum_{i=1}^{t}(\tau_1\epsilon^2/d+i-1)^2\|\widehat{\theta}_{\tau_1+i-1}-\theta^*\|^2}. 
\end{flalign*}
By similar  induction argument  in the proof of \Cref{lem:sgd}, we obtain that 
\begin{align}\label{eq:mix_sgd_withCg}
    \|\widehat{\theta}_{\tau_1+t}-\theta^*\|^2 \lesssim  \frac{C_g^2\log(1/\delta)}{\zeta_l^2(\tau_1\epsilon^2/d+t)},
\end{align}
which completes the proof.
\end{proof}

\begin{proof}[\textbf{Proof of \Cref{thm:LDP-ETC-Mixed}}]

First, the sample size condition on $T$ in \Cref{thm:LDP-ETC-Mixed} can be explicitly written as 
\begin{align}\label{eq:minTSGD_mixed}
       T\geq 2\lambda_z^{-4} [ B_{M1}^2\varsigma_o^{-4}\epsilon^{-2}\log (T)\log(1/\delta)],
\end{align}
where $B_{M1}>0$ is the absolute constant in \Cref{lem:sgdmix}.

In the following, without loss of generality, we assume $\epsilon^2/d<1/2$. Note that for $\epsilon^2/d\geq 1/2$, the regret under pure $\epsilon$-LDP is of the same order as that of the non-private data. In such case, one can treat all data as if they were private since up to a constant factor, the regret order remains unchanged. 


In the first-stage exploration phase (Stage I), we obtain a sequence of i.i.d. Bernoulli random variables, namely, $\{m_t\}_{t=1}^{\lfloor\sqrt{Td}\rfloor}$ while simultaneously executing the private SGD algorithm while storing any non-private data. Denote the estimator of the proportion of non-private data as 
$\widehat{p}^*=\lfloor\sqrt{Td}\rfloor^{-1}\sum_{t=1}^{\lfloor\sqrt{Td}\rfloor}m_t.$ The total exploration length is then set at
$$
\tau =  \tau(\widehat{p}^*)=\frac{\log T\sqrt{Td}}{\sqrt{\widehat{p}^*(1-\epsilon^2/d)+\epsilon^2/d}}
$$
as in \eqref{eq:tau_p_mixed}. It is clear that $\log T\sqrt{Td}\leq  \tau(\widehat{p}^*) \leq \log T\sqrt{T}d/\epsilon.$


We proceed by considering the effective sample size given by $\tau(\widehat{p}^*)$ under two different cases. Our result relies on the classical Chernoff bound where for any fixed $n$, and any $\delta>0$,
\begin{equation}\label{ineq_chernoff}
    \mathbb{P}\left(\left|n^{-1}\sum_{t=1}^{n}m_t-p^*\right|\geq \delta p^* \right)\leq 2\exp(-\delta^2np^*/3).
\end{equation}

\noindent\textbf{Case 1:  $p^*\leq \epsilon^2/d$.}  In this case, the effective sample size from the private data in the exploration stage will dominate. We  set $n=\lfloor \sqrt{Td}\rfloor$, and  $\delta=\epsilon^2/(2dp^*)\geq 1/2$ in \eqref{ineq_chernoff}, then 
\begin{equation}\label{bound_p1}
    \mathbb{P}\left(\left|n^{-1}\sum_{t=1}^{n}m_t-p^*\right|\geq  \epsilon^2/(2d)\right)\leq 2\exp(-\lfloor \sqrt{Td}\rfloor\delta\epsilon^2/(6d))\leq 2\exp(-\lfloor \sqrt{Td}\rfloor\epsilon^2/(12d))\leq 2/T,
\end{equation}
where the second inequality holds by noting $\delta\geq 1/2$ and the last inequality holds by noting $T\geq 144d\epsilon^{-4}\log^2T$.

Therefore, by \eqref{bound_p1}, with probability larger than $1-2/T$, we have $0\leq\widehat{p}^*\leq 3\epsilon^2/(2d)$, and hence 
\begin{equation*}
    \sqrt{2/5}\log T\sqrt{T}d/\epsilon \leq \tau(3\epsilon^2/(2d))\leq \tau(\widehat{p}^*)\leq \tau(0)=\log T\sqrt{T}d/\epsilon.
\end{equation*}
This implies that with high probability, we have
\begin{equation}\label{bound_explore1}
    \tau(\widehat{p}^*)\asymp\tau(p^*)\asymp\tau(0).
\end{equation}


Denote the number of private data at the exploration phase as $\tau_1(\widehat{p}^*)$, and hence the number of non-private data is $\tau(\widehat{p}^*)-\tau_1(\widehat{p}^*).$ By the Chernoff bound in \eqref{ineq_chernoff} with $n=\tau(\widehat{p}^*)>\sqrt{Td}$, and $\delta=\epsilon^2/(2dp^*)$, it is easy to show that with probability larger than $1-2/T$, we have 
$
\tau(\widehat{p}^*)-\tau_1(\widehat{p}^*)\leq  3\tau(\widehat{p}^*)\epsilon^2/(2d).
$
Since $\epsilon^2/d<1/2$, we have 
$\tau(\widehat{p}^*)-\tau_1(\widehat{p}^*)\leq 3\tau(\widehat{p}^*)/4,$ and $\tau_1(\widehat{p}^*)\geq \tau(\widehat{p}^*)/4$.
This further implies that 
$$
\tau(\widehat{p}^*)\epsilon^2/(4d)\leq \tau_1(\widehat{p}^*)\epsilon^2/d\leq \underbrace{\tau(\widehat{p}^*)-\tau_1(\widehat{p}^*)+\tau_1(\widehat{p}^*)\epsilon^2/d}_{\text{effective sample size}}\leq 5\tau(\widehat{p}^*)\epsilon^2/(2d).
$$
In other words, we have that the effective sample size takes the order
$$
\tau(\widehat{p}^*)-\tau_1(\widehat{p}^*)+\tau_1(\widehat{p}^*)\epsilon^2/d \asymp \tau(\widehat{p}^*)\epsilon^2/d\asymp\tau(0)\epsilon^2/d\asymp [p^*+(1-p^*)\epsilon^2/d]\tau(p^*).
$$

\noindent\textbf{Case 2: $p^*>\epsilon^2/d$.}
Setting $n=\lfloor \sqrt{Td}\rfloor$ and  $\delta=1/2$ in \eqref{ineq_chernoff}, we have that 
$$
\mathbb{P}\left(\left|n^{-1}\sum_{t=1}^{n}m_t-p^*\right|\geq  p^*/2\right)\leq 2\exp(-\lfloor \sqrt{Td}\rfloor p^*/12)\leq 2\exp(-\lfloor \sqrt{Td}\rfloor\epsilon^2/(12d))\leq 2/T,
$$
where the second inequality holds by noting $p^*>\epsilon^2/d$ and $T\geq 144d\epsilon^{-4}\log^2T$.

Therefore, with probability larger than $1-2/T$, we have 
$p^*/2\leq \widehat{p}^*\leq 3p^*/2,$  and this implies that  
$$
\sqrt{2/3}\tau(p^*)\leq\tau(3p^*/2)\leq\tau(\widehat{p}^*)\leq\tau(p^*/2)\leq\sqrt{2}\tau(p^*).
$$
In other words, we have with high probability
\begin{flalign}\label{bound_explore2}
\tau(\widehat{p}^*)\asymp\tau(p^*).
\end{flalign}
Recall we denote the number of private data at the exploration phase as $\tau_1(\widehat{p}^*)$ and the number of non-private data as $\tau(\widehat{p}^*)-\tau_1(\widehat{p}^*).$ By the Chernoff bound in \eqref{ineq_chernoff} with $n=\tau(\widehat{p}^*)>\sqrt{Td}$, and $\delta=1/2$, we know that with probability larger than $1-2/T$,  
\begin{align}\label{eq:chernoff_largep}
    \tau(\widehat{p}^*)p^*/2\leq \tau(\widehat{p}^*)-\tau_1(\widehat{p}^*)\leq 3\tau(\widehat{p}^*)p^*/2.
\end{align}
For $\epsilon^2/d<p^*\leq 1/2$, this implies that $$(1-p^*)\tau(\widehat{p})/2\leq (1-3p^*/2)\tau(\widehat{p})\leq \tau_1(\widehat{p})\leq (1-p^*/2)\tau(\widehat{p})\leq 3(1-p^*)\tau(\widehat{p})/2.$$
Therefore, we have that 
$$
\tau(\widehat{p}^*)[p^*+(1-p^*)\epsilon^2/d]/2\leq \underbrace{\tau(\widehat{p}^*)-\tau_1(\widehat{p}^*)+\tau_1(\widehat{p}^*)\epsilon^2/d}_{\text{effective sample size}}\leq 3\tau(\widehat{p}^*)[p^*+(1-p^*)\epsilon^2/d]/2.
$$
Similarly, for $p^*>1/2$, it is easy to see that \eqref{eq:chernoff_largep} implies that 
$$\tau(\widehat{p}^*)[p^*+(1-p^*)\epsilon^2/d]/4\leq\tau(\widehat{p}^*)p^*/2\leq \underbrace{\tau(\widehat{p}^*)-\tau_1(\widehat{p}^*)+\tau_1(\widehat{p}^*)\epsilon^2/d}_{\text{effective sample size}}\leq \tau(\widehat{p}^*) \leq  2\tau(\widehat{p}^*)[p^*+(1-p^*)\epsilon^2/d].$$

Combined with \eqref{bound_explore2}, we have that for $p^*>\epsilon^2/d,$ the effective sample size takes the order
$$
\tau(\widehat{p}^*)-\tau_1(\widehat{p}^*)+\tau_1(\widehat{p}^*)\epsilon^2/d\asymp [p^*+(1-p^*)\epsilon^2/d]\tau(\widehat{p}^*)\asymp [p^*+(1-p^*)\epsilon^2/d]\tau(p^*).
$$

Combining the two cases, for any $p^*\in [0,1]$, we have that with probability at least $1-2/T$, it holds that the effective sample size is
\begin{align*}
    \tau(\widehat{p}^*)-\tau_1(\widehat{p}^*)+\tau_1(\widehat{p}^*)\epsilon^2/d\asymp [p^*+(1-p^*)\epsilon^2/d]\tau(p^*).
\end{align*}

Therefore, by \Cref{lem:sgdmix}, in particular \eqref{eq:mix_sgd_withCg}, setting $C_g=2\sqrt{1+u^2}[M_{\psi 1}^*+\sigma \sqrt{\log \tau(0)}]$ and $\delta=1/T$, with $\tau_1=\tau_1(\widehat{p}^*)$ and $\tau_2=\tau(\widehat{p}^*)-\tau_1(\widehat{p}^*)$, we have with probability larger than $1-2\log T/T,$
$$
\|\widehat{\theta}_{\tau(\widehat{p}^*)}-\theta^*\|^2\lesssim  \frac{C_g^2\log(T)}{\zeta_l^2[\tau(\widehat{p}^*)-\tau_1(\widehat{p}^*)+\tau_1(\widehat{p}^*)\epsilon^2/d]} \asymp\frac{C_g^2\log(T)}{\zeta_l^2[p^*+(1-p^*)\epsilon^2/d]\tau(p^*)}.
$$
In terms of regret analysis, recall that $C_g^2\asymp \log T$, by similar arguments used in the proof of \Cref{thm:LDP-ETC},  we have that with probability larger than $1-(4+2\log T)/T$, 
\begin{flalign*}
R_T=&\sum_{t=1}^\tau r\left(p_t^*, z_t, \theta^*\right)-r\left(p_t, z_t, \theta^*\right)+\sum_{t=\tau+1}^T r\left(p_t^*, z_t, \theta^*\right)-r\left(\widehat{p}_t, z_t, \theta^*\right)
\\\leq& \sqrt{2}\tau(p^*)u M_{\psi 1}+2C_rC_{\varphi}^2 B_{Lm}\frac{T\log^2T\bar{\lambda}_z}{[p^*+(1-p^*)\epsilon^2/d]\tau(p^*)\lambda_z^2},
\end{flalign*}
provided that $T$ satisfies \eqref{eq:minTSGD_mixed}.

Taking the value of $\tau(p^*)=\log T\sqrt{Td}/\sqrt{p^*+(1-p^*)\epsilon^2/d}$, and noting 
$\lambda_z\asymp\bar{\lambda}_z\asymp d^{-1}$, we thus obtain that with high probability
$$
R_T\lesssim \frac{\log T\sqrt{Td}}{\sqrt{p^*+(1-p^*)\epsilon^2/d}},
$$
which completes the proof.
\end{proof}

}

{
\color{black}
\subsection{Dynamic pricing under approximate LDP}\label{subsec:approx_ldp_supplement}

\Cref{def:LDP_approx} defines the notion of approximate LDP for dynamic pricing, which is modified from the notion of (pure) LDP for dynamic pricing in \Cref{def:LDP} of the main text.
\begin{defn}[$(\epsilon,\Delta)$-LDP for dynamic pricing]\label{def:LDP_approx}
For any $\epsilon> 0$ and $\Delta\in(0,1]$, a pricing policy $\pi=\{Q_t,A_t\}_{t=1}^T$ satisfies $(\epsilon,\Delta)$-LDP if for any $t \in [T]$, $w_{<t}$, and $s_t,s_t'$, it holds that for any measurable set $W$, $Q_t(W|s_t,w_{<t})\leq e^{\epsilon}Q_t(W|s_t',w_{<t})+\Delta$.
\end{defn}

It is clear from the definition that $(\epsilon,\Delta)$-LDP is a relaxed privacy constraint than $\epsilon$-LDP, as it allows a $\Delta\in(0,1]$ probability of privacy leakage. In the following, we propose an ETC-LDP-Approx algorithm in \Cref{algorithm:dp_approx} and further establish its theoretical guarantees in \Cref{lem:approx_ldp_est_theta} and \Cref{thm:approx_ldp}. ETC-LDP-Approx in \Cref{algorithm:dp_approx} follows the same structure as ETC-LDP in \Cref{algorithm:dp}. The only difference lies in their privacy mechanism. In particular, to achieve $(\epsilon,\Delta)$-LDP, we employ the widely used Gaussian mechanism in ETC-LDP-Approx (see step a2) to privatize the truncated gradient $g_t^{[C]}$ into $w_t$, instead of the $L_2$-ball mechanism in ETC-LDP.

\vspace{2mm}
\begin{algorithm}[ht]
{\color{black}
\begin{algorithmic}
    \Indent \textbf{Input}: {Total rounds $T$, price interval $[l,u]$, parameter space $\Theta,$ privacy parameter $(\epsilon, \Delta)$,
    \\ exploration length $\tau$, learning rate of SGD $\zeta_l$, gradient truncation parameter $C_g$}
    \EndIndent
    
    \medskip
    \Indent a.~(Exploration) 
    \State Randomly sample an initial estimator $\widehat{\theta}_0$ from $\Theta$.
    \For {$t\in[\tau]$}
        \State a1.~(Experiment): uniformly choose $p_t$ from $[l,u]$, observe raw data $s_t=(z_t,y_t,p_t).$
        
        \State a2.~(Privatization): compute the truncated gradient
        \[
        g_t^{[C]} = g_t^{[C]}(\widehat{\theta}_{t-1};s_t) = \Pi_{\mathbb B_{C_g}}[g(\widehat\theta_{t-1};s_t)], \text{ with } g(\widehat\theta_{t-1};s_t) \mbox{ defined in \eqref{eq-truncated-gradient};}
        \]
        \State  let $w_t=g_t^{[C]}+\xi_t$, where $\xi_t\sim\mathcal{N}(0,\sigma^2_{\xi}\cdot I_{2d})$ with $\sigma_{\xi}^2={2C_g^2\log(1.25/\Delta)}/{\epsilon^2}$.
        
        \State a3.~(SGD): update estimation via $\widehat{\theta}_{t}=\Pi_{\Theta}[\widehat{\theta}_{t-1}+\eta_{t}w_t]$ with step size $\eta_{t}= (\zeta_l t)^{-1}$.
    \EndFor
    \EndIndent

    \medskip
    \Indent b.~(Exploitation)  
    \For {$t=\tau+1,\cdots,T$}
        \State  offer the greedy price at  $p_t=p^*(\widehat{\theta}_{\tau}, z_t)=\argmax\limits_{p\in[l,u]} r(p,z_t,\widehat\theta_\tau)$ based on $\widehat\theta_\tau.$
    \EndFor
    \EndIndent
    \caption{\textcolor{black}{The ETC-LDP-Approx algorithm for dynamic pricing under $(\epsilon,\Delta)$-LDP}}
    \label{algorithm:dp_approx}
\end{algorithmic}}
\end{algorithm} 

\Cref{lem:approx_ldp_est_theta} gives a high-probability upper bound of the estimation error for the $(\epsilon,\Delta$)-private SGD estimator in the exploration phase of ETC-LDP-Approx, serving as the basis for bounding the regret of ETC-LDP-Approx. In particular, it shows that the estimation error of $\theta^*$ under $(\epsilon,\Delta$)-LDP is of the same order as that under $\epsilon$-LDP in \Cref{lem:sgd} (up to a $O(\log(1.25/\Delta))$ term).

Due to the Gaussian privacy mechanism, the gradient of the SGD is unbounded. To handle this, similar to the proof of \Cref{lem:sgdmix}, we rely on the newly developed \Cref{lem_freedman} and \Cref{lem_subGaussian}, which extends the uniform concentration inequality for martingale difference sequences in \colorlinks{black}{\cite{rakhlin2011making}} from bounded random variables to random variables with certain moment conditions (e.g.\ sub-Gaussian), and thus is of independent interest.

\begin{lem}\label{lem:approx_ldp_est_theta}
    Under the conditions in \Cref{lem:sgd} holds. For any $\tau\geq 4$, set the learning rate $\zeta_l$ and truncation parameter $C_g$ same as in \Cref{lem:sgd}. We have that, for any $\epsilon>0, \Delta\in (0,1]$ and $0<\delta<(e\vee \log\tau)^{-1}$, with probability at least $1-8\delta\log\tau$, it holds that 
    $$
    \|\widehat{\theta}_{\tau}-\theta^*\|^2 \leq  B_{A1}\cdot\frac{d\log(1/\delta)\log(\tau)\log(1.25/\Delta)}{c_l^2\lambda_z^2\epsilon^2 \tau},
    $$
    where $B_{A1}>0$ is an absolute constant that only depends on quantities in Assumptions \ref{assum_feature} and \ref{assum_ldp}(a).
\end{lem}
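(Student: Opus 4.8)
The plan is to mirror the proof of \Cref{lem:sgd}, substituting the Gaussian mechanism for the $L_2$-ball mechanism and carefully tracking the additional stochastic fluctuations produced by the now \emph{unbounded} privatized gradient $w_t=g_t^{[C]}+\xi_t$. First, exactly as in \eqref{convex}--\eqref{bound_theta}, the non-expansiveness of $\Pi_{\Theta}$, the $\kappa^*\lambda_{\min}(\Sigma_x)$-strong convexity of the population negative log-likelihood (which follows from \Cref{assum_ldp}), and the choices $\eta_t=(\zeta_l t)^{-1}$, $\zeta_l\le\kappa^*\lambda_{\min}(\Sigma_x)$, give the one-step recursion
\[
\|\widehat{\theta}_t-\theta^*\|^2\le\Bigl(1-\tfrac{2}{t}\Bigr)\|\widehat{\theta}_{t-1}-\theta^*\|^2+\tfrac{2}{\zeta_l t}\bigl[w_t-g(\widehat{\theta}_{t-1})\bigr]^\top(\widehat{\theta}_{t-1}-\theta^*)+\tfrac{\|w_t\|^2}{\zeta_l^2 t^2}.
\]
Unrolling this with the identity $\prod_{j=i+1}^t(1-2/j)=i(i-1)/[t(t-1)]$ as in \eqref{bound_thetatau} reduces matters to bounding the martingale-type sum $\sum_{i=2}^t(i-1)\bigl[w_i-g(\widehat{\theta}_{i-1})\bigr]^\top(\widehat{\theta}_{i-1}-\theta^*)$ and the quadratic sum $\sum_{i=2}^t\|w_i\|^2$. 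I would then decompose $w_i-g(\widehat{\theta}_{i-1})=\iota_i^b+\iota_i^v+\xi_i$ into the truncation bias $\iota_i^b=\widetilde{g}_i^{[C]}-g(\widehat{\theta}_{i-1})$, the bounded centered truncated gradient $\iota_i^v=g_i^{[C]}-\widetilde{g}_i^{[C]}$, and the independent Gaussian privacy noise $\xi_i\sim\mathcal{N}(0,\sigma_\xi^2 I_{2d})$.

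The first two pieces are handled essentially verbatim as in \Cref{lem:sgd}: the truncation level $C_g$ is the same, so $\|\iota_i^b\|\lesssim\sigma\tau^{-1}$ as in \eqref{bound_iotab}, and $\{(\iota_i^v)^\top(\widehat{\theta}_{i-1}-\theta^*)\}$ is a martingale difference sequence bounded by $2C_gc_\theta$, so \Cref{lem_freedman} in its bounded form controls its weighted partial sums exactly as in \eqref{bound_tau2}. The genuinely new ingredient is the noise term: conditionally on $\mathcal{F}_{i-1}$, $\xi_i^\top(\widehat{\theta}_{i-1}-\theta^*)$ is mean-zero Gaussian with variance $\sigma_\xi^2\|\widehat{\theta}_{i-1}-\theta^*\|^2\le\sigma_\xi^2 c_\theta^2$, hence sub-Gaussian, so by \Cref{lem_subGaussian} it satisfies the factorial-moment hypothesis of \Cref{lem_freedman}; applying that lemma with $b\asymp\sigma_\xi c_\theta$ bounds $\sum_{i=2}^t(i-1)\xi_i^\top(\widehat{\theta}_{i-1}-\theta^*)$ by $\sigma_\xi\sqrt{\log(1/\delta)}\,\max\{\,\sqrt{\sum_{i=2}^t(i-1)^2\|\widehat{\theta}_{i-1}-\theta^*\|^2},\ c_\theta t\sqrt{\log(1/\delta)}\,\}$, uniformly over $t\le\tau$. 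For the quadratic sum I would use $\|w_i\|^2\le 2C_g^2+2\|\xi_i\|^2$ and control the i.i.d.\ sub-exponential sum $\sum_{i=2}^t\|\xi_i\|^2$ around its mean $2d\sigma_\xi^2 t$ by another (two-sided) application of the maximal inequality \Cref{lem_freedman} to $\pm(\|\xi_i\|^2-2d\sigma_\xi^2)$, whose conditional moments are of Bernstein type; this yields $\sum_{i=2}^t\|w_i\|^2\lesssim(C_g^2+d\sigma_\xi^2)\,t\log(1/\delta)$ uniformly in $t\le\tau$, contributing a term of order $(C_g^2+d\sigma_\xi^2)\log(1/\delta)/(\zeta_l^2 t)$.

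Assembling these estimates yields a self-improving recursion $\|\widehat{\theta}_t-\theta^*\|^2\le\frac{\varrho_2}{t(t-1)}\sqrt{\sum_{i=2}^t(i-1)^2\|\widehat{\theta}_{i-1}-\theta^*\|^2}+\frac{\varrho_3}{t}$ with $\varrho_2\asymp\zeta_l^{-1}(C_g+\sigma_\xi)\sqrt{\log(1/\delta)}$ and $\varrho_3\asymp\zeta_l^{-2}(C_g^2+d\sigma_\xi^2)\log(1/\delta)$, and the same induction as in \Cref{lem:sgd} (take $\varrho_1\ge\tfrac94\varrho_2^2+3\varrho_3$) gives $\|\widehat{\theta}_\tau-\theta^*\|^2\lesssim\varrho_1/(\tau+1)$. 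Plugging in $C_g^2\asymp\log\tau$, $\sigma_\xi^2=2C_g^2\log(1.25/\Delta)/\epsilon^2$, and $\zeta_l=c_l\phi_z$ with $\phi_z\asymp\lambda_z$, and using that $d\sigma_\xi^2$ dominates $C_g^2$ in the relevant regime, produces the stated bound $\lesssim\frac{d\log(1/\delta)\log\tau\log(1.25/\Delta)}{c_l^2\lambda_z^2\epsilon^2\tau}$; the probability budget $8\delta\log\tau$ comes from the two applications of \Cref{lem_freedman} to $\iota_i^v$ and to $\xi_i^\top(\widehat{\theta}_{i-1}-\theta^*)$ (each $2\delta\log\tau$) together with the two-sided control of $\sum\|\xi_i\|^2$ ($4\delta\log\tau$). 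The main obstacle will be the bookkeeping forced by the unbounded noise: in the pure-LDP case $\|w_t\|=C_gr_{\epsilon,d}$ is deterministic, whereas here both $\sum_{i\le t}\|w_i\|^2$ and the increments $\xi_i^\top(\widehat{\theta}_{i-1}-\theta^*)$ must be controlled with high probability \emph{uniformly over $t\le\tau$} before they can be fed into the induction — which is precisely what the maximal form of \Cref{lem_freedman}, with its Bernstein/sub-Gaussian moment hypotheses verified via \Cref{lem_subGaussian}, is designed to supply, and this verification is the one step with no counterpart in the proof of \Cref{lem:sgd}.
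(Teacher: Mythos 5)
Your proposal is correct and follows essentially the same route as the paper: the same one-step recursion and unrolling, the same decomposition of $w_i-g(\widehat{\theta}_{i-1})$ into truncation bias, bounded centered truncated gradient, and Gaussian privacy noise, with the noise increments $\xi_i^\top(\widehat{\theta}_{i-1}-\theta^*)$ and the $\chi^2$-type sum $\sum_i\|\xi_i\|^2$ controlled uniformly over $t\le\tau$ via \Cref{lem_freedman} (moment hypotheses checked through \Cref{lem_subGaussian}), followed by the same induction with $\varrho_1\asymp\varrho_2^2+\varrho_3$ and the substitution $\sigma_\xi^2=2C_g^2\log(1.25/\Delta)/\epsilon^2$. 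The only cosmetic difference is that the paper uses a one-sided (upper-tail) bound on $\sum_i\|\xi_i\|^2$, so its probability budget is slightly tighter than your two-sided accounting, which is immaterial for the stated $1-8\delta\log\tau$ guarantee.
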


\Cref{thm:approx_ldp} gives the regret upper bound of the ETC-LDP-Approx algorithm in \Cref{algorithm:dp_approx}, which shows that the regret upper bound for dynamic pricing under $(\epsilon,\Delta$)-LDP is of the same order as that under $\epsilon$-LDP in \Cref{thm:LDP-ETC} (up to a $O(\sqrt{\log(1.25/\Delta)})$ term). Its proof follows the same structure as that of \Cref{thm:LDP-ETC} and is thus omitted.

\begin{thm}\label{thm:approx_ldp}
Suppose conditions in \Cref{thm:LDP-ETC} hold. For any $\epsilon>0$ and $0<\delta<\{e\vee \log(T)\}^{-1}$, set the learning rate $\zeta_l$ and truncation parameter $C_g$ same as in \Cref{thm:LDP-ETC}. For any $T$ satisfying $T\geq B_{A2} \cdot \{\lambda_z^{-4} \epsilon^{-2}\log (T)\log(1/\delta) \},$ we have that, with probability at least $1-\delta\log(T)$, the regret of ETC-LDP-Approx is upper bounded by $ R_T\leq B_{A3} d\sqrt{\log (T)\log (1/\delta)\log(1.25/\Delta)T}/\epsilon$, where $B_{A2}, B_{A3}>0$ are absolute constants only depending on quantities in  Assumptions \ref{assum_feature}, \ref{assum:etc} and \ref{assum_ldp}.
\end{thm}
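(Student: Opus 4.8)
The plan is to mirror the three-part structure of the proof of \Cref{thm:LDP-ETC}, replacing the $L_2$-ball privacy analysis by a Gaussian-mechanism analysis and, correspondingly, replacing the bounded-noise SGD bound \Cref{lem:sgd} by its $(\epsilon,\Delta)$-LDP counterpart \Cref{lem:approx_ldp_est_theta}. For the privacy part I would argue exactly as in \Cref{prop-eps-LDP}: the release map at round $t\le\tau$ is the composition of the truncation $\mathrm T:g_t\mapsto g_t^{[C]}$, whose image lies in the ball $\mathbb B_{C_g}$, with the Gaussian mechanism $w_t=g_t^{[C]}+\xi_t$, $\xi_t\sim\mathcal N(0,\sigma_\xi^2 I_{2d})$. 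Since two truncated gradients differ in $\ell_2$ by at most a multiple of $C_g$, the choice $\sigma_\xi^2=2C_g^2\log(1.25/\Delta)/\epsilon^2$ makes $g_t^{[C]}\mapsto w_t$ satisfy $(\epsilon,\Delta)$-DP conditionally on $w_{<t}$ by the standard Gaussian-mechanism guarantee; post-processing (the projected SGD step, and the exploitation prices, which depend on the data only through the released $w_t$) preserves this, and rounds $t>\tau$ reveal nothing private. This yields \Cref{def:LDP_approx}.

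The substantive step is \Cref{lem:approx_ldp_est_theta}. I would begin from the same one-step contraction as in \Cref{lem:sgd}, namely $\|\widehat\theta_t-\theta^*\|^2\le(1-2\eta_t\zeta_l)\|\widehat\theta_{t-1}-\theta^*\|^2+2\eta_t[w_t-g(\widehat\theta_{t-1})]^\top(\widehat\theta_{t-1}-\theta^*)+\eta_t^2\|w_t\|^2$ with $w_t=g_t^{[C]}+\xi_t$. The truncation-bias term $\iota_t^b=\mathbb E[g_t^{[C]}\mid\mathcal F_{t-1}]-g(\widehat\theta_{t-1})$ is controlled by $\|\iota_t^b\|\lesssim\sigma\tau^{-1}$ precisely as in \Cref{lem:sgd}, via the choice $C_g=2\sqrt{1+u^2}(M_{\psi1}^*+\sigma\sqrt{\log\tau})$. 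Two features are genuinely new. First, $\eta_t^2\|w_t\|^2$ now carries $\eta_t^2\|\xi_t\|^2$ with $\|\xi_t\|^2/\sigma_\xi^2\sim\chi^2_{2d}$, so a union bound over $t\le\tau$ on the sub-exponential tail of $\chi^2_{2d}$ gives $\|\xi_t\|^2\lesssim d\sigma_\xi^2\log\tau\asymp dC_g^2\log(1.25/\Delta)\log\tau/\epsilon^2$ uniformly in $t$, which plays the role of the $C_g^2 r_{\epsilon,d}^2$ term in \Cref{lem:sgd} since $r_{\epsilon,d}\asymp\sqrt d/\epsilon$. Second, the martingale term now contains, besides the data-fluctuation piece $g_t^{[C]}-\mathbb E[g_t^{[C]}\mid\mathcal F_{t-1}]$, the noise piece $\xi_t^\top(\widehat\theta_{t-1}-\theta^*)$, which given $\mathcal F_{t-1}$ is centered Gaussian with variance $\sigma_\xi^2\|\widehat\theta_{t-1}-\theta^*\|^2\le\sigma_\xi^2 c_\theta^2$, hence sub-Gaussian and unbounded; I would control it with \Cref{lem_freedman} and the moment bound \Cref{lem_subGaussian} — exactly the device used in the proof of \Cref{lem:sgdmix}. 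Iterating the recursion via $\prod_{j=i+1}^t(1-2/j)=i(i-1)/[t(t-1)]$ and closing the induction $\|\widehat\theta_t-\theta^*\|^2\le\varrho_1/(t+1)$ as in \Cref{lem:sgd} gives $\varrho_1\lesssim dC_g^2\log(1/\delta)\log(1.25/\Delta)/(\zeta_l^2\epsilon^2)$; substituting $C_g^2\asymp\log\tau$ and $\zeta_l=c_l\phi_z$ with $\phi_z=L_p\kappa^*\lambda_z$ and evaluating at $t=\tau$ yields the bound of \Cref{lem:approx_ldp_est_theta}.

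With \Cref{lem:approx_ldp_est_theta} available, the regret argument is a transcription of the proof of \Cref{thm:LDP-ETC}. Split $[T]$ into the exploration set $\Psi$ of size $\tau=d\sqrt{\log(T)\log(1/\delta)\log(1.25/\Delta)T}/\epsilon$ and the exploitation set $\Phi$. The exploration regret is at most $\tau u M_{\psi1}$ by \Cref{lem:Lips}. For exploitation, the sample-size condition $T\ge B_{A2}\{\lambda_z^{-4}\epsilon^{-2}\log(T)\log(1/\delta)\}$ ensures (through \Cref{lem:approx_ldp_est_theta}) that $\|\widehat\theta_\tau-\theta^*\|\le\varsigma_o$, so \Cref{lem_punique} applies and a second-order Taylor expansion of the revenue function bounds $\sum_{t\in\Phi}\{r(p_t^*,z_t,\theta^*)-r(\widehat p_t,z_t,\theta^*)\}$ by a constant times $(\widehat\alpha_\tau-\alpha^*)^\top U(\widehat\alpha_\tau-\alpha^*)+(\widehat\beta_\tau-\beta^*)^\top U(\widehat\beta_\tau-\beta^*)$, $U=\sum_{t\in\Phi}z_tz_t^\top$; bounding $\|U/(T-\tau)-\Sigma_z\|_{\mathrm{op}}$ by \Cref{lem:eigen}(i) on a $1-\delta$ event and absorbing it with the sample-size condition, this is $\lesssim (T/\tau)\, d\log(T)\log(1/\delta)\log(1.25/\Delta)\,\bar\lambda_z/(\epsilon^2\lambda_z^2)$. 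Using $c_{\lambda1}/d\le\lambda_z\le\bar\lambda_z\le c_{\lambda2}/d$ from \Cref{assum_ldp} (so $\bar\lambda_z/\lambda_z^2\asymp d$) and substituting the chosen $\tau$ balances the two terms at $d\sqrt{\log(T)\log(1/\delta)\log(1.25/\Delta)T}/\epsilon$, which is the claimed bound; the failure probability aggregates the $O(\delta\log T)$ from \Cref{lem:approx_ldp_est_theta} and the $\delta$ from the eigenvalue event, giving $1-\delta\log(T)$ up to constants.

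The main obstacle is the middle step, \Cref{lem:approx_ldp_est_theta}: because the Gaussian privacy noise makes the SGD gradient unbounded, the bounded-difference martingale argument of \Cref{lem:sgd} breaks down, and one must instead invoke the sub-Gaussian Freedman-type inequality \Cref{lem_freedman} together with a uniform $\chi^2_{2d}$ tail bound for the $\sum_t\eta_t^2\|\xi_t\|^2$ contribution; once these are in place, the iteration-and-induction skeleton of \Cref{lem:sgd} carries over verbatim, and everything downstream — privacy, the regret decomposition, the design-matrix control, and the Lipschitz revenue bound — reduces to routine bookkeeping mirroring \Cref{thm:LDP-ETC}.
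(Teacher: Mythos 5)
Your proposal matches the paper's route: the paper proves the key estimation bound (\Cref{lem:approx_ldp_est_theta}) by exactly the decomposition you describe — one-step contraction, truncation bias as in \Cref{lem:sgd}, the Gaussian-noise martingale handled via \Cref{lem_freedman} and \Cref{lem_subGaussian}, and the $\sum_t\|\xi_t\|^2$ term controlled separately — and then declares the regret argument a verbatim transcription of \Cref{thm:LDP-ETC} with the exploration length rebalanced by the $\sqrt{\log(1.25/\Delta)}$ factor, which is precisely your Step 3. The only cosmetic difference is that you control the $\chi^2_{2d}$ contribution by a per-round tail union bound whereas the paper applies the Freedman-type inequality to the centered $\|\xi_t\|^2$; both give the same order, so your argument is correct and essentially identical to the paper's.
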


\subsubsection{Numerical experiments}
In this section, we further conduct numerical experiments to examine the finite sample performance of ETC-LDP-Approx under $(\epsilon,\Delta)$-LDP and compare it with ETC-LDP.

In particular, we use the exact same simulation setting in \Cref{subsubsec:ETC_LDP} of the main text. For ETC-LDP-Approx, we fix $\epsilon=1$ and vary the privacy leakage probability $\Delta\in \{10^{-2},10^{-3}\}$. The result under $\epsilon\in \{2,4\}$ is similar and thus omitted to conserve space.

\Cref{fig:ETCLDP_Approx_S1}(left) reports the mean regret $\overline{R}_{T,d,\epsilon}$ of ETC-LDP-Approx with $\Delta\in \{10^{-2},10^{-3}\}$ at different $(T,d)$ and $\epsilon=1$ under (S1). As can be seen, given a fixed $d$, the regret scales sublinearly with respect to $T$, and given a fixed horizon $T$, the regret grows with the dimension $d.$ Moreover, a smaller privacy leakage probability $\Delta$ leads to a higher regret, which is intuitive and confirms the theoretical results in \Cref{thm:approx_ldp}. 

For illustration, \Cref{fig:ETCLDP_Approx_S1}(middle) further gives the boxplot of $\{R_{T,d,\epsilon}^{(i)}\}_{i=1}^{500}$ achieved by ETC-LDP (i.e.\ pure $\epsilon$-LDP) and ETC-LDP-Approx (at $\Delta=10^{-2}$) for each $T\in \{(1,3,5,7,9)\times 10^5\}$ and $\epsilon=1$ with a fixed $d=6.$ As can be seen, the regret given by the two algorithms are similar with ETC-LDP giving slightly lower regret (and smaller variance). This confirms our theoretical results that the two algorithms share the same order of regret upper bound up to a $O(\sqrt{\log(1.25/\Delta)}$ term. \Cref{fig:ETCLDP_Approx_S1}(right) gives the boxplot of $\{R_{T,d,\epsilon}^{(i)}\}_{i=1}^{500}$ achieved by ETC-LDP (i.e.\ pure $\epsilon$-LDP) and ETC-LDP-Approx (at $\Delta=10^{-2}$) for each $d\in\{1,2,4,6\}$ and $\epsilon=1$ with a fixed $T=9\times 10^5$, where similar phenomenon is observed. In addition, \Cref{fig:ETCLDP_Approx_S2} gives the performance of ETC-LDP-Approx under (S2), where similar phenomenon as the one in \Cref{fig:ETCLDP_Approx_S1} is observed.

\begin{figure}[H]
\vspace{-5mm}
\hspace*{-6mm}
    \begin{subfigure}{0.32\textwidth}
	\includegraphics[angle=270, width=1.1\textwidth]{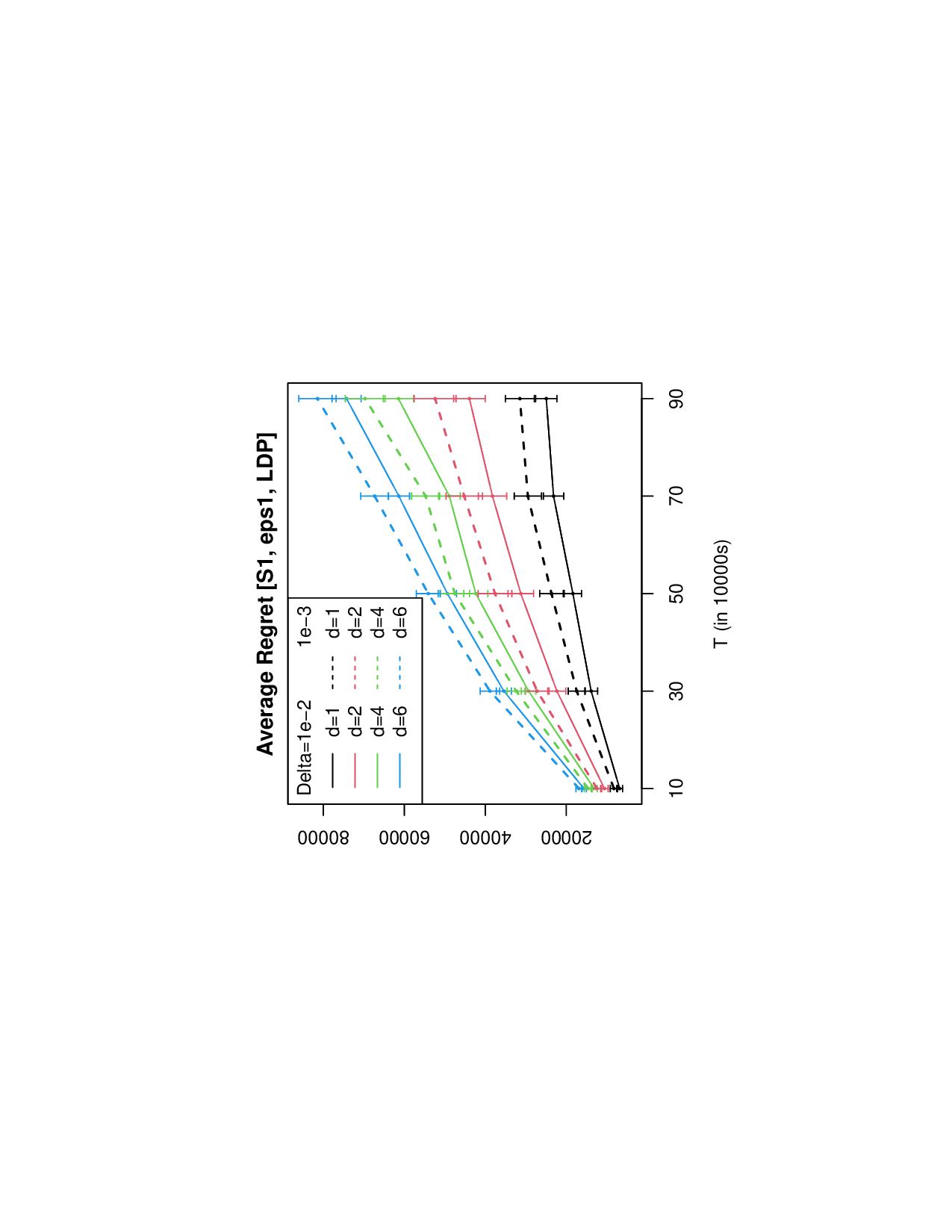}
	\vspace{-0.5cm}
    \end{subfigure}
    ~
    \begin{subfigure}{0.32\textwidth}
	\includegraphics[angle=270, width=1.1\textwidth]{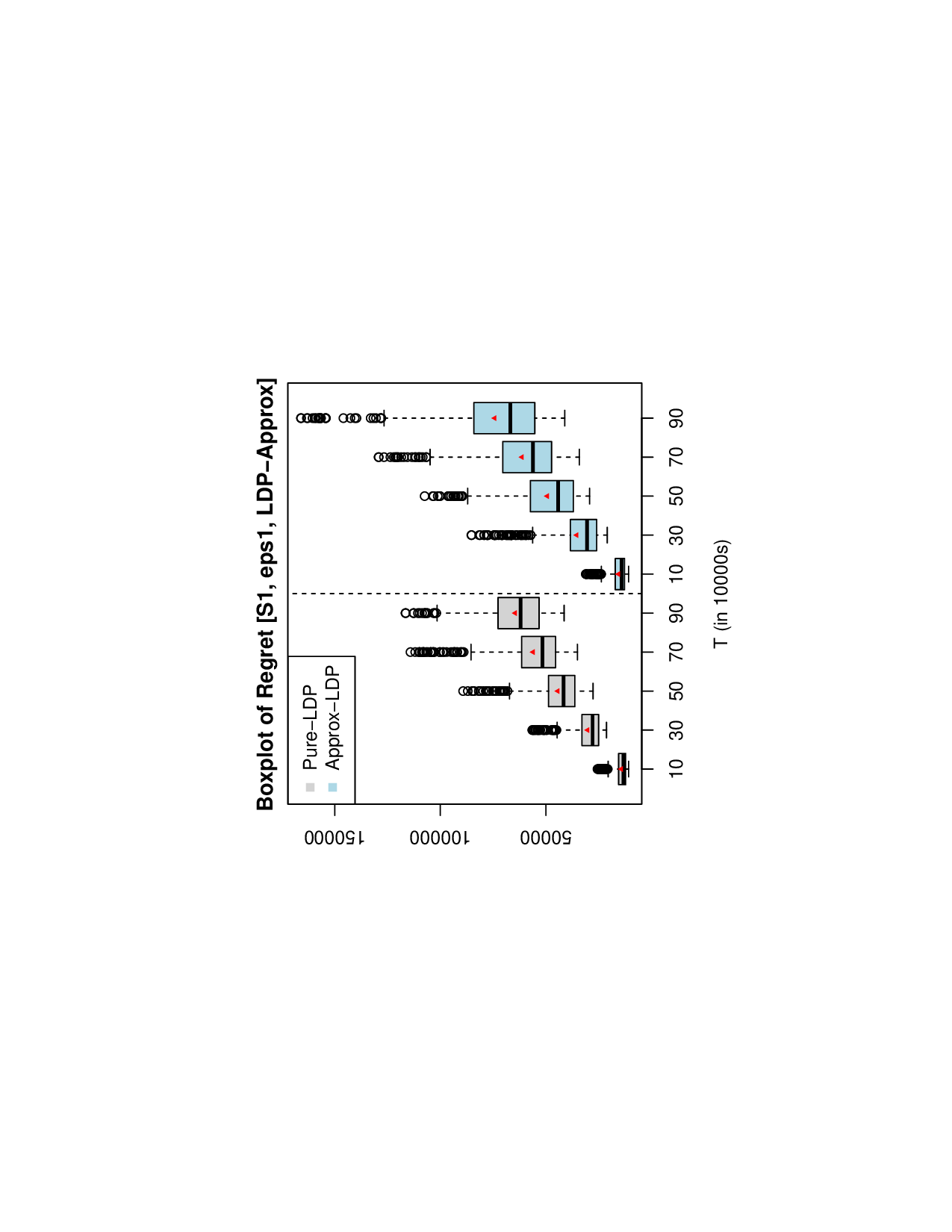}
	\vspace{-0.5cm}
    \end{subfigure}
    ~
    \begin{subfigure}{0.32\textwidth}
	\includegraphics[angle=270, width=1.1\textwidth]{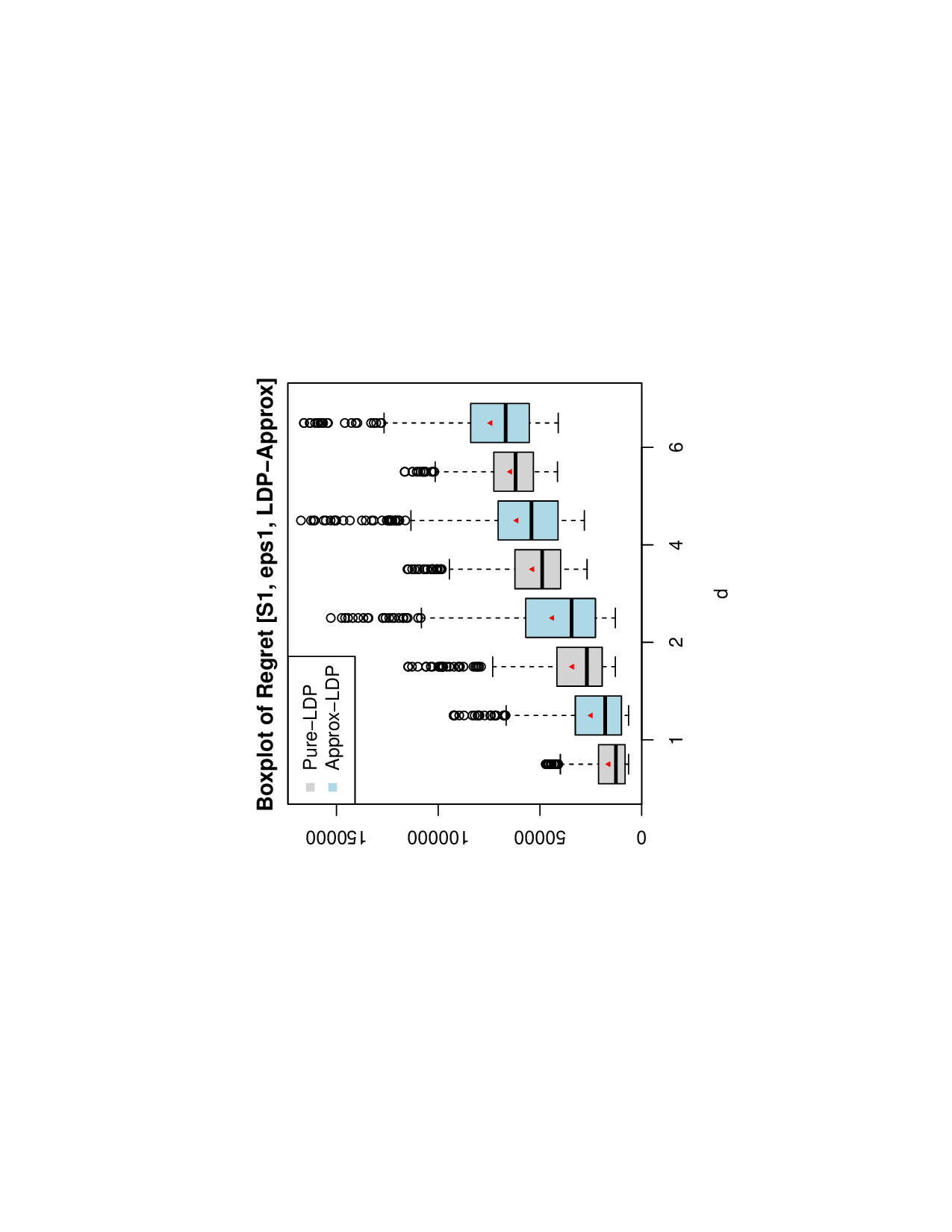}
	\vspace{-0.5cm}
    \end{subfigure}
    \caption{{\color{black} Performance of ETC-LDP-Approx under (S1). [Left]:  Mean regret (with C.I.) under different $(d,T)$ and $\epsilon=1$ for $\Delta=(10^{-2},10^{-3})$. [Middle]: Boxplot of regrets at different $T$ (with $d=6$ and $\epsilon=1$) for pure-LDP and approx-LDP $(\Delta=10^{-2})$. [Right]: Boxplot of regrets at different $d$ (with $T=9\times 10^{5}$ and $\epsilon=1$) for pure-LDP and approx-LDP $(\Delta=10^{-2})$.}}
    \label{fig:ETCLDP_Approx_S1}
\end{figure}

\begin{figure}[H]
\vspace{-5mm}
\hspace*{-6mm}
    \begin{subfigure}{0.32\textwidth}
	\includegraphics[angle=270, width=1.1\textwidth]{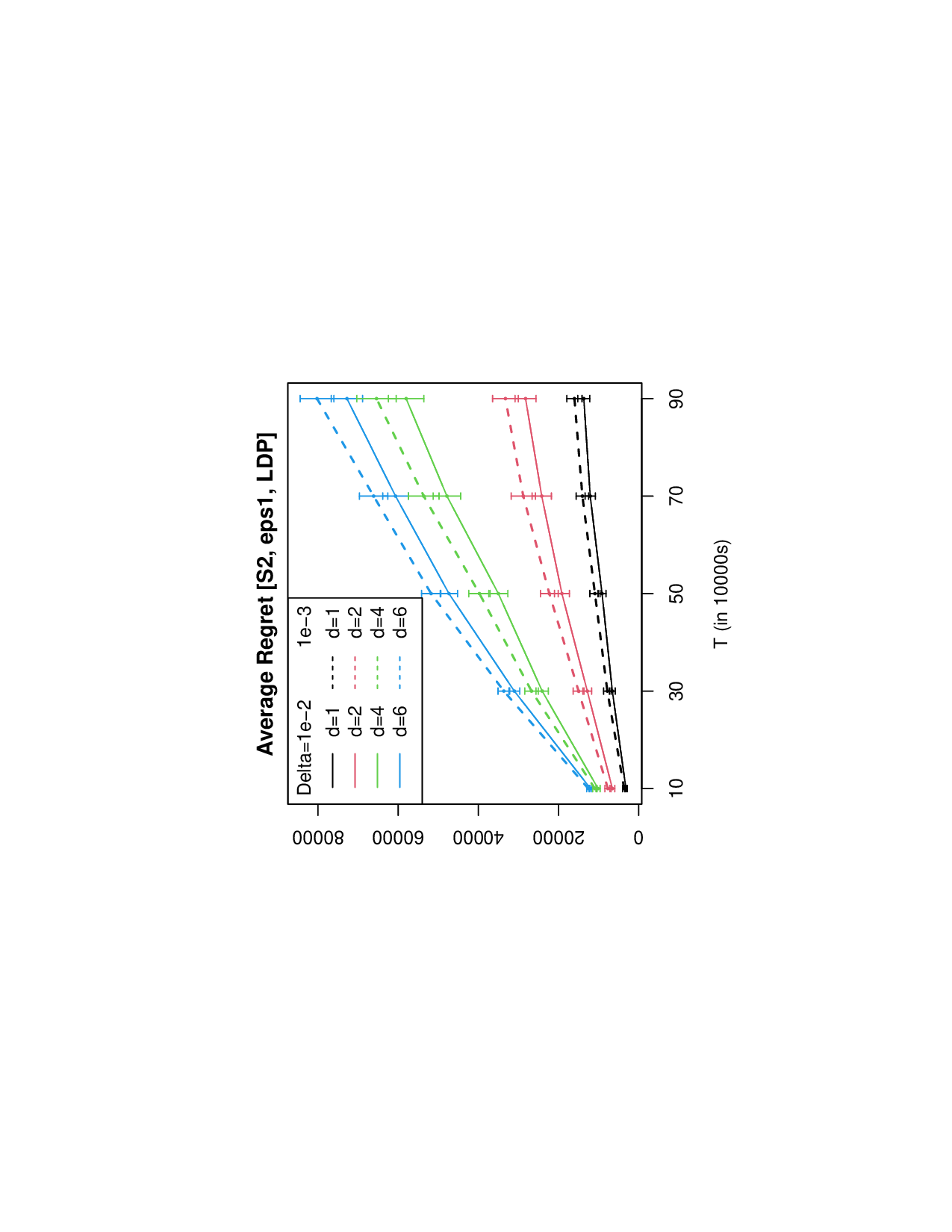}
	\vspace{-0.5cm}
    \end{subfigure}
    ~
    \begin{subfigure}{0.32\textwidth}
	\includegraphics[angle=270, width=1.1\textwidth]{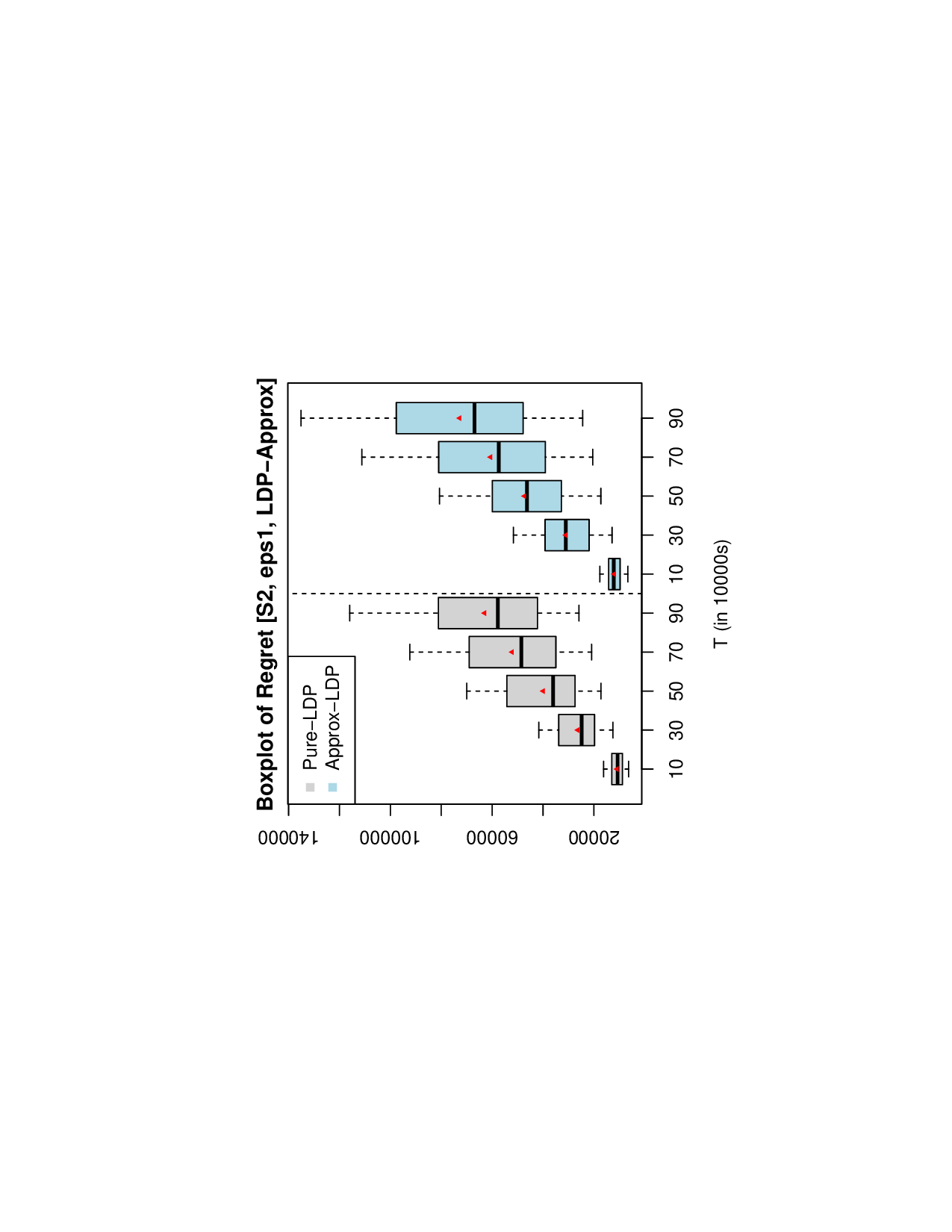}
	\vspace{-0.5cm}
    \end{subfigure}
    ~
    \begin{subfigure}{0.32\textwidth}
	\includegraphics[angle=270, width=1.1\textwidth]{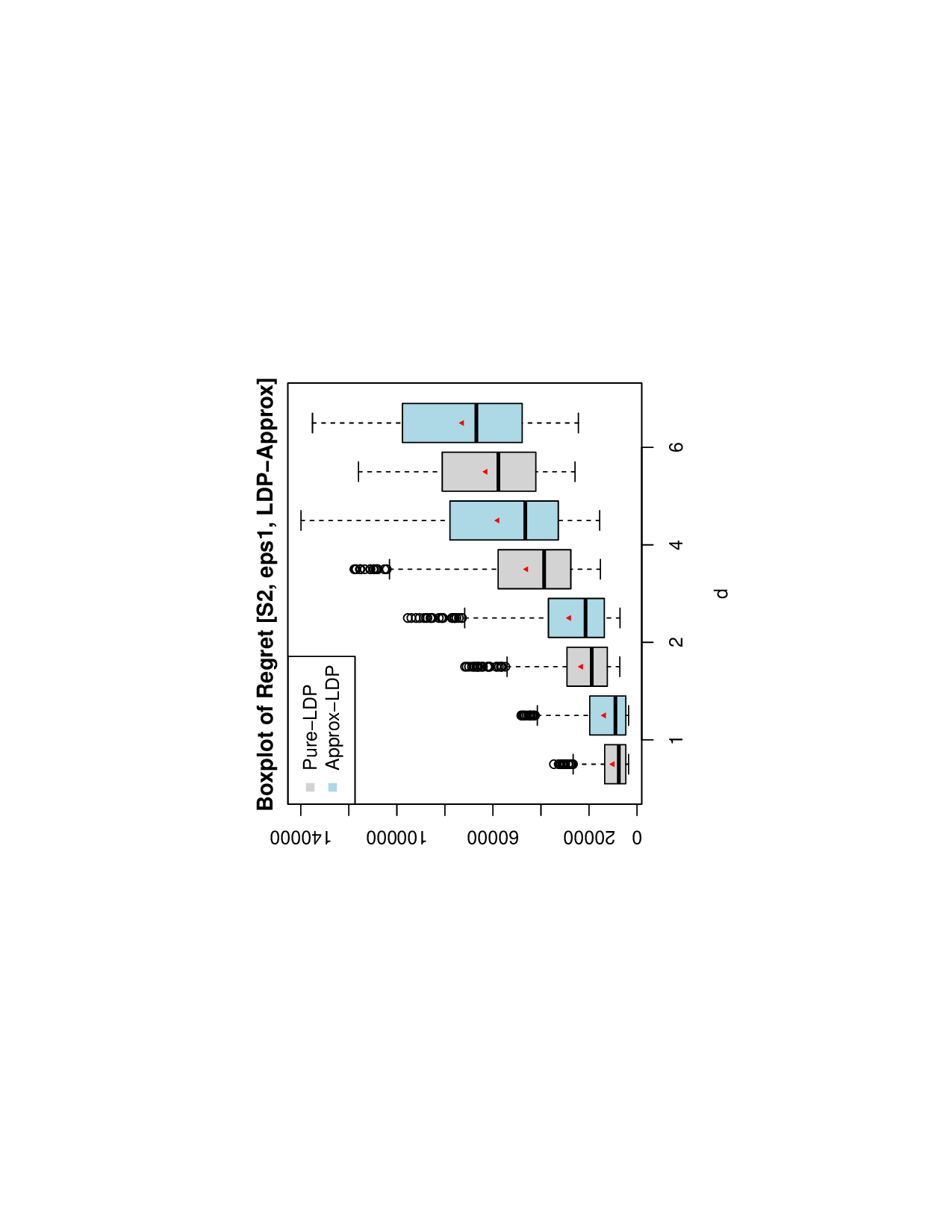}
	\vspace{-0.5cm}
    \end{subfigure}
    \caption{{\color{black} Performance of ETC-LDP-Approx under (S2). [Left]:  Mean regret (with C.I.) under different $(d,T)$ and $\epsilon=1$ for $\Delta=(10^{-2},10^{-3})$. [Middle]: Boxplot of regrets at different $T$ (with $d=6$ and $\epsilon=1$) for pure-LDP and approx-LDP $(\Delta=10^{-2})$. [Right]: Boxplot of regrets at different $d$ (with $T=9\times 10^{5}$ and $\epsilon=1$) for pure-LDP and approx-LDP $(\Delta=10^{-2})$.}}
    \label{fig:ETCLDP_Approx_S2}
\end{figure}

\subsubsection{Proof of \Cref{lem:approx_ldp_est_theta}}
\begin{proof}[\textbf{Proof of \Cref{lem:approx_ldp_est_theta}}]

Note that 
\begin{flalign*}
    \|\widehat{\theta}_t-\theta^*\|^2
    \leq& \|\widehat{\theta}_{t-1}+\eta_tg_t^{[C]}-\theta^*\|^2+2\eta_t\xi_t^{\top}[\widehat{\theta}_{t-1}-\theta^*+\eta_tg_t^{[C]}]+\eta_t^2\|\xi_t\|^2\\
\leq &    
    (1-2\eta_t\zeta_l)\|\widehat{\theta}_{t-1}-\theta^*\|^2+2\eta_t[g_t^{[C]}-g(\widehat{\theta}_{t-1})]^{\top}(\widehat{\theta}_{t-1}-\theta^*)+\eta_t^2C_g^2\\&+2\eta_t\xi_t^{\top}[\widehat{\theta}_{t-1}-\theta^*+\eta_tg_t^{[C]}]+\eta_t^2\|\xi_t\|^2
    \\= &  (1-2\eta_t\zeta_l)\|\widehat{\theta}_{t-1}-\theta^*\|^2+2\eta_t[g_t^{[C]}+\xi_t-g(\widehat{\theta}_{t-1})]^{\top}(\widehat{\theta}_{t-1}-\theta^*)+\eta_t^2C_g^2\\&+2\eta_t^2(\xi_t^{\top}g_t^{[C]})+\eta_t^2\|\xi_t\|^2
    \\\leq &  (1-2\eta_t\zeta_l)\|\widehat{\theta}_{t-1}-\theta^*\|^2+2\eta_t[g_t^{[C]}+\xi_t-g(\widehat{\theta}_{t-1})]^{\top}(\widehat{\theta}_{t-1}-\theta^*)+2\eta_t^2C_g^2+2\eta_t^2\|\xi_t\|^2,
\end{flalign*}
where the second inequality holds by similar arguments as \eqref{bound_theta},  and the last by Cauchy-Schwartz inequality and  that $\|g_t^{[C]}\|\leq C_g.$  

Then, following the same iteration arguments after equation \eqref{bound_thetatau}, we obtain 
\begin{flalign}\label{bound_theta_aLDP}
\begin{split}
     \|\widehat{\theta}_t-\theta^*\|^2\leq &\frac{2}{\zeta_l(t-1)t}\sum_{i=2}^t (i-1)[g_i^{[C]}-g(\widehat{\theta}_{i-1})]^{\top}(\widehat{\theta}_{i-1}-\theta^*)\\&+\frac{2}{\zeta_l(t-1)t}\sum_{i=2}^t (i-1)\xi_i^{\top}(\widehat{\theta}_{i-1}-\theta^*)+ \frac{2\sum_{i=2}^t\|\xi_i\|^2}{t(t-1)\zeta_l^2}+\frac{2C_g^2}{\zeta_l^2t}
     \\:=&\sum_{j=1}^{4}B_{t,j}.        
\end{split}
\end{flalign}
[\textbf{Bound for }$B_{t,1}$] Similar to the proof in \eqref{bound_tau1} and \eqref{bound_tau2}, we have with probability larger than $1-2\delta\log\tau$, for all $2\leq t\leq \tau$,
\begin{flalign}\label{bound_B1}
\begin{split}
    B_{t,1} \leq &
     \frac{16C_g}{\zeta_l(t-1)t}\max\left\{
\sqrt{\sum_{i=2}^{t}(i-1)^2\|\widehat{\theta}_{i-1}-\theta^*\|^2},  c_{\theta}(t-1)\sqrt{\log (1/\delta)}
\right\}\sqrt{\log (1/\delta)}
\\&+\frac{2c_{\theta}\sigma \sqrt{1+u^2}}{\zeta_l\tau}.
\end{split}
\end{flalign}
[\textbf{Bound for }$B_{t,2}$] The proof of $B_{t,2}$ is similar to the proof of $B_{t,1}$. Since $\xi_i$ is i.i.d. Gaussian with independent entries, hence for any given $a\in \mathbb{R}^{2d}$, $a^{\top}\xi_i$ is still Gaussian with zero mean and variance $\|a\|^2\sigma_\xi^2.$  Hence, we have  $\mathbb{E}[(i-1)\xi_i^{\top}(\widehat{\theta}_{i-1}-\theta^*)|\mathcal{F}_{i-1}]=0,$ and  by \Cref{lem_subGaussian}, we have 
$$\mathbb{E}\left[\left|(i-1)\xi_i^{\top}(\widehat{\theta}_{i-1}-\theta^*)\right|^2|\mathcal{F}_{i-1}\right]=[(i-1)\|\widehat{\theta}_{i-1}-\theta^*\|]^2 \sigma_{\xi}^2,
$$
and 
$$\mathbb{E}\left[\left|(i-1)\xi_i^{\top}(\widehat{\theta}_{i-1}-\theta^*)\right|^k|\mathcal{F}_{i-1}\right]\leq [2\sigma_{\xi}(i-1)\|\widehat{\theta}_{i-1}-\theta^*\|]^k (k!/2).
$$
Therefore, applying \Cref{lem_freedman} with  $V_t= 4\sigma_{\xi}^2\sum_{i=2}^t(i-1)^2\|\widehat{\theta}_{i-1}-\theta^*\|^2/[\zeta_l(t-1)t]^2$ and $b=8(t-1) c_{\theta}\sigma_{\xi}/[\zeta_l(t-1)t]$, we can show that with probability larger than $1-2\delta\log\tau$, for any $2\leq t\leq\tau$ and $\tau\geq 4$,
\begin{flalign}\label{bound_B2}
    B_{2,t}\leq \frac{16\sigma_{\xi}}{\zeta_l(t-1)t}\max\left\{\sqrt{\sum_{i=2}^t(i-1)^2\|\widehat{\theta}_{i-1}-\theta^*\|^2}, (t-1)c_{\theta}\sqrt{\log(1/\delta)}\right\}\sqrt{\log(1/\delta)}.
\end{flalign}

\noindent[\textbf{Bound for }$B_{t,3}$] It is easy to see that $\xi_t^2/\sigma_\xi^2$ follows a $\chi^2_{2d}$ distribution. Hence,  $
\mathbb{E}\left|\|\xi_t\|^2-\mathbb{E}\|\xi_t\|^2\right|^2= 4d \sigma_{\xi}^4,
$
and it can be shown that for $k\geq 2,$
$$
\mathbb{E}\left|\|\xi_t\|^2-\mathbb{E}\|\xi_t\|^2\right|^k\leq \mathbb{E}(\|\xi_t\|^{2})^k=(2\sigma_{\xi}^2)^k\Gamma(d+k)/\Gamma(d)\leq (2\sigma_{\xi}^2)^k (k!) [e(d+k)/k]^k\leq (2ed\sigma_{\xi}^2)^k (k!/2),
$$
where the first equality uses the moments of $\chi^2_{2d}$ random variables, and the second inequality holds by the upper bound of binomial coefficient $(d+k-1)!/[(d-1)!k!]\leq [e(d+k-1)/k]^k$.
Therefore, letting $X_i=\|\xi_i\|^2-\mathbb{E}\|\xi_i\|^2$, $V_t=4d(t-1)\sigma_\xi^4$, and $b=2e d \sigma_\xi^2$ in \Cref{lem_freedman}, we have  with probability larger than $1-2\delta\log\tau$, for all $2\leq t\leq \tau$,
$$
\sum_{i=2}^t(\|\xi_t\|^2-\mathbb{E}\|\xi_t\|^2)\leq 4\sigma_\xi^2\max\left\{2\sqrt{d(t-1)}, e d\sqrt{\log(1/\delta)}\right\}\sqrt{\log(1/\delta)}.
$$
Therefore, since $\sqrt{1/(t-1)}\leq 1$, we have with probability larger than $1-2\delta\log\tau$, for all $2\leq t\leq \tau$,
\begin{equation}\label{bound_B3}
\sum_{i=2}^t\|\xi_t\|^2/(t-1)\leq 8d\sigma_\xi^2 \log(1/\delta) + 2d\sigma_{\xi}^2\leq 10d\sigma_\xi^2 \log(1/\delta).
\end{equation}

Therefore, summarizing  \eqref{bound_theta_aLDP}, \eqref{bound_B1}, \eqref{bound_B2} and \eqref{bound_B3}, we obtain that with probability larger than $1-6\delta\log\tau$,  for any $2\leq t\leq\tau$ and $\tau\geq 4$,   
\begin{flalign*}
    \|\widehat{\theta}_t-\theta^*\|^2\leq & \frac{16C_g+16\sigma_{\xi}}{\zeta_l(t-1)t} \max\left\{\sqrt{\sum_{i=2}^t(i-1)^2\|\widehat{\theta}_{i-1}-\theta^*\|^2}, (t-1)c_{\theta}\sqrt{\log(1/\delta)}\right\}\sqrt{\log(1/\delta)}\\
    &+\frac{20d\sigma_{\xi}^2\log(1/\delta)+2C_g^2}{t  \zeta_l^2}+\frac{2c_{\theta}\sigma \sqrt{1+u^2}}{\zeta_l\tau}.
\end{flalign*}
By similar  induction argument as that in the proof of \Cref{lem:sgd}, and note the definition of $\sigma_{\xi}^2$, we obtain that 
$$
 \|\widehat{\theta}_t-\theta^*\|^2\lesssim \frac{dC_{g}^2\log(1/\delta)\log(1.25/\Delta)}{t \zeta_l^2\epsilon^2},
$$
which completes the proof.
\end{proof}
}

\section{Technical details in \Cref{sec-optimality}}\label{sec:supplement_proof_lb}
{\color{black}
\subsection{Roadmap of the proof}\label{subsec:roadmap_lb}
In this subsection, we first provide a roadmap of the technical proof for the lower bound construction under both non-private and private dynamic pricing, where we point out key steps and key lemmas used in the proof and discuss their intuition. There are in total 4 steps (Step 0 - Step 3).

\noindent {\bf  Step 0.  Lower bound construction. } [Section \ref{sec-lb-construction}]

We first construct a set of $2^d$ revenue functions, indexed by the hyper parameter $\bm{v}=(v_1,\cdots,v_d)^{\top}\in\{0,1\}^d$ such that distinguishing  $v_i=0$ from  $v_i=1$ for each coordinate can only depend on the information from the $i$th covariate. Specifically, let $z_t=(\mathbb{I}(M_t=1),\cdots,\mathbb{I}(M_t=d))^{\top}\in\{0,1\}^d$, where $M_t$ is a multinomial distribution on $\{1,\cdots,d\}$ with equal marginal probability.  Consider the following demand model indexed by the hyper parameter $\bm{v}=(v_1,\cdots,v_d)^{\top}\in\{0,1\}^d$, such that
\begin{flalign*}
   y_t=\mathbb{E}_{\bm{v}}[y_t|z_t,p_t]+\varepsilon_t \,\, \mbox{and} \,\,   \mathbb{E}_{\bm{v}}[y_t|z_t,p_t]:=\lambda_{\bm{v}}(z_t,p_t)= 2-p_t+\Delta\sum_{i=1}^d v_i(1-p_t) \mathbb{I}(M_t=i),
\end{flalign*}
where $\{\varepsilon_t\}_{t=1}^T$ is a sequence of independent $\mathcal{N}(0,1)$ random variables, and  $\Delta\in[0,1/2]$ quantifies the difficulty of the problem, and  will be chosen differently for the non-private and private cases. Note by our construction, the effective information for $v_i$ is contained through the $i$th element of $z_t$, i.e. $\mathbb{I}(M_t=i)$ only.

For $M_t=i$, an important property of our construction is the following connection of the instant regret and the policy induced price:\begin{flalign}\label{regret_case1}
    \begin{split}
        \mathrm{regret}_t^\pi = \begin{cases}
  (p_t^\pi-1)^2, & \text{if } v_i=0; \\
  (1+\Delta)(p_t^\pi-\frac{2+\Delta}{2+2\Delta})^2& \text{if } v_i=1.
\end{cases}
    \end{split}
\end{flalign}

\noindent {\bf  Step 1. Reduction to classification problem.} [\Cref{lem_class1} for the non-private case, and \Cref{lem_class}  for the private case.]

\noindent[\textsc{The non-private case, \Cref{lem_class1}}] We first construct a classifier using data $s_t=\{z_t,p_t,y_t\}$ such that  $\psi:(s_1,\cdots,s_T)\to \{\hat{v}_i\}_{i=1}^d \in \{0,1\}^d$. In \Cref{lem_class1}, we show, with high probability, for any non-anticipating policy $\pi$, 
\begin{equation}\label{regret_class}
        \sum_{t=1}^T \mathrm{regret}^{\pi}_t\geq \frac{1}{144}T \Delta^2 d^{-1}   \sum_{i=1}^d \mathbb{I}(\hat{v}_i\neq v_i).
\end{equation}
This implies that  if there is a low-regret  pricing policy, then there
is a classifier $\psi$ that achieves small classification errors. Therefore, this motivates us to use information-theoretical  tools (Assouad's Lemma in particular) to show a lower bound on the classification error. 

\noindent[\textsc{The private case, \Cref{lem_class}}] Under the LDP constraint,  we cannot directly use the classifier defined in the non-private setting as it utilizes non-privatized contextual information. Instead, we adopt the construction in Lemma 3 of \cite{chen2022differential} and augment the $\epsilon$-LDP policy $\pi$ to construct a $2\epsilon$-LDP policy $\pi'$.  In particular, we have to  additionally privatize the information used for classification, which necessitates elevating the regret analysis from an $\epsilon$-LDP policy to a $2\epsilon$-LDP policy. To address this, we privatize the information used in the classification process by introducing Laplacian noise into the response vector of the classification. \Cref{lem_class} then shows that,  with high probability,  for any non-anticipating $\epsilon$-LDP policy $\pi$, there exists a 2$\epsilon$-LDP classifier $\psi:(w_1,\cdots,w_T)\to \hat{\bm{v}}\in \{0,1\}^d$, such that  \eqref{regret_class} still holds.

\noindent {\bf Step 2. Assouad's Lemma to lower bound the classification error.}  [\Cref{lem_assouad1} for the non-private case, and \Cref{lem_assouad2}  for the private case.]

\noindent[\textsc{The non-private case, \Cref{lem_assouad1}}]
To introduce the method, we define some additional notations. For the non-private case, define $$
P_{+i}^\pi:=\frac{1}{2^{d-1}} \sum_{\boldsymbol{v}: {v}_i=1} P_{\boldsymbol{v}}^\pi, \quad P_{-i}^\pi:=\frac{1}{2^{d-1}} \sum_{\boldsymbol{v}: v_i=0} P_{\boldsymbol{v}}^\pi,
$$
with $P_{\bm{v}}^\pi$ being the joint distribution of $\{s_1,\cdots,s_T\}$ under model $\boldsymbol{v}$ and policy $\pi$. 
Then, we have 
\begin{flalign*}
    \begin{split}  
  \sup _{\boldsymbol{v} } \mathbb{E}_{\boldsymbol{v}}^{\pi, \psi}\left[\sum_{i=1}^d \mathbb{I}(\hat{v}_i\neq v_i)\right]\geq\frac{1}{2} \sum_{i=1}^d (1- \mathrm{TV}(P_{+i}^{\pi}, P_{-i}^{\pi}))\geq&\frac{1}{2} \sum_{i=1}^d \left(1-\sqrt{\frac{1}{4}D_{\mathrm{KL}}^{\mathrm{sy}}(P_{+i}^{\pi}, P_{-i}^{\pi})}\right)\\\geq&
  \frac{d}{2}  \left(1-\sqrt{\frac{1}{4d}\sum_{i=1}^d   D_{\mathrm{KL}}^{\mathrm{sy}}(P_{+i}^{\pi}, P_{-i}^{\pi})}\right), 
    \end{split}
\end{flalign*}
where the first inequality holds by Assouad's Lemma, the second by Pinsker's inequality, and the last by the Cauchy--Schwarz inequality. 

Therefore, it suffices to calculate the KL divergence between $P_{+i}^{\pi}$ and $P_{-i}^{\pi}$. Thanks to the Gaussian design of the demand model, and the chain-rule of KL divergence using the conditioning argument, we manage to show that 
$$
\sum_{i=1}^d D_{\mathrm{KL}}^{\mathrm{sy}}(P_{+i, t}^\pi\left(\cdot \mid s_{<t}\right)| P_{-i, t}^\pi\left(\cdot \mid s_{<t}\right))=\frac{\Delta^2}{2^d}\sum_{\boldsymbol{v}}\sum_{t=1}^T\mathbb E_{\boldsymbol{v}}^\pi \left\{ (p_t-1)^2\right\}.
$$
This further connects the classification error with the regret via \eqref{regret_case1}.

\noindent[\textsc{The private case, \Cref{lem_assouad2}}] This is the key component of our lower bound construction to derive the $\Omega(d\sqrt{T}/\epsilon)$ sharp bound under LDP. First, same as the non-private case, we begin by applying the standard Assouad's Lemma to connect the classification error with TV. Next, however, instead of using the KL divergence as in the non-private case, we resort to the sharper Hellinger distance to further bound TV. In fact, for two arbitrary distributions $P_1$ and $P_2$, we have 
$$
\{\mathrm{TV}(P_1, P_2)\}^2\leq 2 \mathrm{H}^2(P_1,P_2)\leq D_{\mathrm{KL}}(P_1,P_2),
$$
where $\mathrm{H}^2(P_1,P_2)=\frac{1}{2}\int_{\mathcal{X}}[\sqrt{P_1(\mathrm{d}x)-P_2(\mathrm{d}x)}]^2$ is the Hellinger distance between $P_1$ and $P_2$ on the support $\mathcal{X}.$  The above inequality implies that using the Hellinger distance allows for more delicate analysis than the KL divergence. 


Specifically, in the proof of \Cref{lem_assouad2}, we first show that, 
 $$
   \sup _{\boldsymbol{v}} \mathbb{E}_{\boldsymbol{v}}^{\pi, \psi}\left[\sum_{i=1}^d \mathbb{I}(\hat{v}_i\neq v_i)\right]\geq \frac{d}{2}\left(1-
 \sqrt{ \frac{2}{d  }\sum_{i=1}^d \mathrm{H}^2(M_{+i}^{\pi}, M_{-i}^{\pi})}
   \right),
    $$ 
here $M_+^{\pi}$ and $M_-^{\pi}$  can be viewed as the private version of $P_+^\pi$ and $P_-^\pi$ respectively. We then apply a chain-rule type inequality (Lemma 4 in \cite{jayram2009}) to the Hellinger distance of the joint distribution above, decomposing it into the instant Hellinger distance associated with each time point.

We then adapt the newly-developed information contraction bound technique in \cite{acharya2024unified}, which is designed for private statistical estimation problem under i.i.d.\ data, to further bound the sum of Hellinger distance under our setting, where the data can be adaptively generated by \textit{any} non-anticipating pricing policy with $\epsilon$-LDP. In particular, we manage to show that 
\begin{align}\label{eq:cost_of_privacy_equation}
    2\sum_{i=1}^d \mathrm{H}^2(M_{+i}^{\pi}, M_{-i}^{\pi})\leq \frac{21\Delta^2[\exp(2\epsilon)-1]^2}{d}\frac{1}{2^d}\sum_{\boldsymbol{v}}\sum_{t=1}^T\mathbb E_{\boldsymbol{v}}^\pi \left\{ (p_t-1)^2\right\},
\end{align}
which is our key result. Note that, compared with the non-private version, the bound above inflate by a factor of  $[\exp(2\epsilon)-1]^2/d\asymp \epsilon^2/d$, which is the price to pay for protecting privacy. 

The proof of \Cref{lem_assouad2} is a key component and novelty of our lower bound construction and we give a more detailed discussion of this step in \textbf{Step 2A} below, where we discuss the technical challenge and important modifications we made to adapt the information contraction bound in \cite{acharya2024unified} (designed for i.i.d. data) to the setting of private dynamic pricing (where data is adaptive).

We remark that our analysis in \Cref{lem_assouad2} is new and first time seen in the dynamic pricing literature. In fact, existing lower bound construction in contextual bandit~\citep{han2021generalized} and contextual (non-parametric) dynamic pricing~\citep{chen2022differential} under LDP constraints mainly rely on the well-known private Le Cam Lemma (Theorem 1 therein) and private Assouad's Lemma (Proposition 3 therein) developed in \cite{duchi2018minimax}, combined with KL divergence bounds. However, we find that this strategy (i.e.\ technical tools in \cite{duchi2018minimax} + KL divergence) can only derive a $\Omega(\sqrt{dT}/\epsilon)$ lower bound, which is not sharp in dimension $d.$ Instead, our proof strategy is to leverage the Hellinger distance and modify the information contraction bound in \cite{acharya2024unified} to derive the sharp $\Omega(d\sqrt{T}/\epsilon)$ bound.

\noindent\textbf{Step 3. Finish the proof}. Therefore, to summarize, for both the non-private and private cases, we effectively create a dilemma for the regret of any non-anticipating policy. In particular, for small values of $\Delta$, we can always identify a setting where the associated classification error is large due to the Assouad's Lemma. Conversely, for large values of 
$\Delta$, as is implied by \eqref{regret_case1}, the cumulative regret also becomes large. The result then follows by carefully balancing two scenarios and choosing an appropriate value of $\Delta$.

\noindent {\bf Step 2A. Key modification of \cite{acharya2024unified} for LDP dynamic pricing}  [Step II in the proof \Cref{lem_assouad2} and \Cref{lem_varphi}.]

We first introduce the information contraction bound result in \cite{acharya2024unified}, which applies to statistical estimation problems under $\epsilon$-LDP but for i.i.d.\ data only. We then discuss the key non-trivial modifications we made to tailor for dynamic pricing under $\epsilon$-LDP, where the data is no longer i.i.d.\ but instead can be adaptively collected by \textit{any} non-anticipating pricing policy $\pi.$

\textbf{Results in \cite{acharya2024unified}}. Let $\{{P}_{\bm{v}}: \bm{v}\in \{0,1\}^d \}$ be a family of distributions indexed by $\bm{v}$. Let $P_{+i}^T=\frac{1}{2^{d-1}}\sum_{v_i=1} P_{\bm{v}}$,  $P_{-i}^T=\frac{1}{2^{d-1}}\sum_{v_i=0} P_{\bm{v}}^{T}$, where $P_{\bm{v}}^T$ denotes the joint distribution of $T$ i.i.d. samples $\{x_t\in \mathcal{X}\}_{t=1}^T$ whose marginal distribution is $P_{\bm{v}}$.  Furthermore, define the   sequentially interactive $\epsilon$-LDP mechanism $Q$ that output privatized data $w_t$ given input $x_t$ and past information $w_{<t}$, i.e. 
$$
M_{t,\bm v}(S)=\int Q(S|w_1,\cdots, w_{t-1};x_t)\mathrm{d}M_{\bm{v}}(w_1,\cdots, w_{t-1}) \mathrm{d}P_{\bm{v}}x_t,
$$
where $M_{t,\bm v}(\cdot)$ denotes the  marginal distribution of $w_t$, and $M_{\bm{v}}(w_1,\cdots,w_{t})$  is the joint distribution  of $\{w_{s}\}_{s=1}^t$ indexed by $\bm{v}$. Define $M_{+i}^T=\frac{1}{2^{d-1}}\sum_{v_i=1} M_{\bm{v}}(w_{1:T})$,   $M_{-i}^T=\frac{1}{2^{d-1}}\sum_{v_i=0} M_{\bm{v}}(w_{1:T})$. 

Further define $\bm{v}^{\oplus i}$ as the vector with coordinates, 
$$\left(\bm{v}^{\oplus i}\right)_j= \begin{cases}v_j, & j \neq i, \\ 1-v_j, & \text { otherwise, }\end{cases}$$
which flips the $i$th coordinate of $\bm v$ from $v_i$ to $1-v_i.$ Then, the following two assumptions are imposed. 

\textsc{Acharya-Assumption 1 (Densities Exist).} For every \( \bm v \in \{0,1\}^d \) and \( i \in [d] \), it holds that \( p_{\bm{v}^{\oplus i}} \ll p_{\bm v} \), and there exist measurable functions \( \varphi_{\bm{v},i}: \mathcal{X} \to \mathbb{R} \) such that
\[
\frac{d {P}_{{\bm v}^{\oplus i}}}{d {P}_{\bm v}} = 1 + \varphi_{{\bm v},i}.
\]
The functions \( \varphi_{\bm v,i} \) are referred to as ``score function" and capture the change in density when the coordinate \( i \) is flipped. 

\textsc{Acharya-Assumption 2 (Orthogonality and Variance).} There exists some \( \alpha^2 \geq 0 \) such that, for all \( \bm{v} \in \{0,1\}^d \) and distinct \( i,j \in [d] \),
\[
\mathbb{E}_{{P}_{\bm v}} [\varphi_{\bm v,i} \varphi_{\bm v,j}] = 0 \quad \text{and} \quad \mathbb{E}_{{P}_{\bm v}} [\varphi_{\bm v,i}^2] \leq \alpha^2.
\]

The following result is taken from Corollary 1 in \cite{acharya2024unified}.

\textsc{[Corollary 1 in \cite{acharya2024unified}]} Suppose the family of distributions  $\{{P}_{\bm{v}}: \bm{v}\in \{0,1\}^d \}$ satisfies the Acharya-Assumptions 1 and 2 above. Then, for any sequentially $\epsilon$-LDP mechanism, we have 
$$
\frac{1}{d}\left\{\sum_{i=1}^d \mathrm{TV}(M_{+i}^T,M_{-i}^T)\right\}^2\leq \frac{21}{d} T\alpha^2[\exp(\epsilon)-1]^2.
$$
The proof of Corollary 1 uses similar idea in our Step 2 above by bounding the total variation distance with the Hellinger distance as an intermediate step. The multiplicative factor $[\exp(\epsilon)-1]^2/d\asymp \epsilon^2/d$ characterizes the cost of privacy.

\textbf{Key modifications made to tailor for dynamic pricing under $\epsilon$-LDP}. The result \cite{acharya2024unified} only applies to i.i.d.\ data and thus is not directly applicable to dynamic pricing under $\epsilon$-LDP, where the data $\{s_t=(y_t,z_t,p_t)\}$ can be adaptively generated by any non-anticipating pricing policy $\pi$. In particular, note that the price $p_t$ given by $\pi$ at any time point $t$ is a function of the current covariate $z_t$ and all historical private observations $w_{<t}=\{w_1,\cdots, w_{t-1}\}$ (see \eqref{eq:pricing} and \eqref{eq:privacy} in the main text), which clearly makes $\{s_t\}$ non-i.i.d.\ with complex inter-dependence. Therefore, non-trivial modification is required to adapt the technical tools in \cite{acharya2024unified} to our setting of dynamic pricing under $\epsilon$-LDP. Our main technical contributions are as follows.

First, to address the complex inter-dependence of the raw data $\{s_t\}$, we develop a novel decomposition argument and show that instead of analyzing the (intractable) marginal distribution of $s_t$ due to adaptivity of $\pi$, we can analyze the conditional distribution of $s_t$ given past private observations $w_{<t}$, denoted by $dP_{\bm{v}, t}^\pi (s_t|w_{<t})/ds_t$. Importantly, leveraging the two-part structure of the LDP pricing policy $\pi$, as described in \eqref{eq:pricing} and \eqref{eq:privacy} of the main text, we show that $dP_{\bm{v}, t}^\pi (s_t|w_{<t})/ds_t$ is in fact tractable. Denote $\varphi_{\bm{v},i}(s_t|w_{<t})$ as the score function defined as in Acharya-Assumption 1 above but for $dP_{\bm{v}, t}^\pi (s_t|w_{<t})/ds_t$, we show in \Cref{lem_varphi} that $\varphi_{\bm{v},i}(s_t|w_{<t})$ still exhibits nice structures. In particular, $\varphi_{\bm{v},i}(s_t|w_{<t})$ is a martingale difference and the orthogonality condition in Acharya-Assumption 2 above still holds for $\varphi_{\bm{v},i}(s_t|w_{<t})$. This is indeed intuitive as by our lower bound construction in Step 0, the instant information gain at time $t$ stems solely from the current active dimension $i$, with no contribution from other inactive dimensions.

Second, Acharya-Assumption 2 imposes a constant bound $\alpha$ on the variation of the score function $\varphi_{\bm{v},i}$, which is sufficient for lower bound construction for statistical estimation. However, this is not sufficient for our purposes as we need to establish a bound on the TV that is a function of the regret of $\pi$ as is in \eqref{eq:cost_of_privacy_equation}. We address this by deriving a new upper bound on the variance of $\varphi_{\bm{v},i}$. In particular, in the proof of \Cref{lem_varphi}, we show its variance can be further upper bounded by a quadratic term in both the policy-induced price and the optimal price, which subsequently leads to the derivation of the instant regret due to \eqref{regret_case1}.

}

\subsection{Lower bound construction}\label{sec-lb-construction}

At a high level, we construct a set of $2^d$ revenue functions, indexed by the hyper parameter $\bm{v}=(v_1,\cdots,v_d)^{\top}\in\{0,1\}^d$ such that distinguishing  $v_i=0$ from  $v_i=1$ for each coordinate can only depend on the information from the $i$th covariate. This
implies that learning the parameters associated with the $i$th covariate can only make use of the data sample of effective sample size of order $O(T/d)$. In addition, by carefully crafting the model parameters, the revenue functions in the set are difficult to distinguish. Therefore, it is very costly to differentiate these functions,  which will inevitably incur large regret.

{\bf Lower bound construction}
Let $z_t=(\mathbb{I}(M_t=1),\ldots,\mathbb{I}(M_t=d))^{\top}\in\{0,1\}^d$, where $M_t$ is a multinomial distribution on $\{1,\ldots,d\}$ with equal marginal probability.  Consider the following demand model indexed by the hyper parameter $\bm{v}=(v_1,\ldots,v_d)^{\top}\in\{0,1\}^d$, such that
\begin{flalign}\label{lb_setting}
   y_t=\mathbb{E}_{\bm{v}}[y_t|z_t,p_t]+\varepsilon_t \,\, \mbox{and} \,\,   \mathbb{E}_{\bm{v}}[y_t|z_t,p_t]:=\lambda_{\bm{v}}(z_t,p_t)= 2-p_t+\Delta\sum_{i=1}^d v_i(1-p_t) \mathbb{I}(M_t=i),
\end{flalign}
where $\{\varepsilon_t\}_{t=1}^T$ is a sequence of independent $\mathcal{N}(0,1)$ random variables, and  $\Delta\in[0,1/2]$ is a small quantity to be defined later. 

Therefore, the instant revenue function (conditional on $z_t,p_t$) is 
$$
r(p_t,z_t,\theta^*)= p_t\lambda_{\bm{v}}(z_t,p_t).
$$
This further implies that  conditional on  $\{M_t=i\}$ where $i\in\{1,\ldots,d\}$, we have that the optimal price takes the form
\begin{equation}\label{opt_price}
    p^*(z_t) = \begin{cases}
        1, & \text{if } v_i=0; \\
  \frac{2+\Delta}{2+2\Delta}, & \text{if } v_i=1.
\end{cases}
\end{equation}
For $M_t=i$, the instant regret associated with the policy $\pi$ is given by \begin{flalign}\label{regret_case}
    \begin{split}
        \mathrm{regret}_t^\pi = \begin{cases}
  (p_t^\pi-1)^2, & \text{if } v_i=0; \\
  (1+\Delta)(p_t^\pi-\frac{2+\Delta}{2+2\Delta})^2& \text{if } v_i=1.
\end{cases}
    \end{split}
\end{flalign}

Define $S_p=\{p_t: |p_t-1|\leq \Delta/6\}.$ For $\Delta\leq 1/2$, it is clear that combined with \eqref{regret_case}, we have for $M_t=i$, it holds that
\begin{equation}\label{bound_regret_t}
    \mathrm{regret}_t^\pi \geq \frac{\Delta^2}{36}, \quad \text{if } \begin{cases}
  &  p_t^\pi\in S_p^c \text{ and } v_i=0; \\
&  p_t^\pi\in S_p \text{ and } v_i=1.
\end{cases}
\end{equation}
The property in \eqref{bound_regret_t} plays an important role in the subsequent analysis.

\Cref{prop:ldp_in_assumption} states that our lower bound instances constructed later in the proof of Theorems~\ref{thm_lb} and \ref{thm:LDP_lb} satisfy the corresponding assumptions in the upper bound results.
\begin{pro}\label{prop:ldp_in_assumption}
    The lower bound construction in Theorems~\ref{thm_lb} and \ref{thm:LDP_lb} satisfies  \Cref{assum_feature} and \ref{assum:etc}. If {$T\geq c \epsilon^{-2}d^4$ }for some constant $c>0$, then  the parameter space in \Cref{thm:LDP_lb}  satisfies  \Cref{assum_ldp}.
\end{pro}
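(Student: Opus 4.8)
The plan is to recognise that the demand model \eqref{lb_setting} is the GLM \eqref{eq:GLM} with the quadratic cumulant $\psi(a)=a^2/2$ (so $\psi'(a)=a$, $\psi''(a)\equiv1$, $\psi'''(a)\equiv0$) and Gaussian, hence $1$-sub-Gaussian, noise, and then to verify each clause by elementary computation. Writing $x_t=(z_t^\top,-p_tz_t^\top)^\top$ and, for $\bm v\in\{0,1\}^d$, $\theta^*_{\bm v}=(\alpha^{*\top}_{\bm v},\beta^{*\top}_{\bm v})^\top$ with $\alpha^*_{\bm v}=2\mathbf{1}_d+\Delta\bm v$, $\beta^*_{\bm v}=\mathbf{1}_d+\Delta\bm v$, one has $\mathbb E_{\bm v}[y_t|z_t,p_t]=z_t^\top\alpha^*_{\bm v}-p_tz_t^\top\beta^*_{\bm v}=\psi'(x_t^\top\theta^*_{\bm v})$, so both lower-bound instances --- the only difference between \Cref{thm_lb} and \Cref{thm:LDP_lb} being the value $\Delta\in[0,1/2]$ --- are admissible GLMs of this form. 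This gives \Cref{assum_feature}(d),(e) with $\sigma=1$ and $\kappa=M_{\psi2}=M_{\psi3}=1$ at once. For \Cref{assum_feature}(a)--(c): the contexts $z_t=e_{M_t}$ are i.i.d.\ with $\|z_t\|=1$ and $\Sigma_z:=\mathbb E[z_tz_t^\top]=d^{-1}I_d\succ0$, so $\lambda_z=1/d>0$; since $z_t^\top\alpha^*_{\bm v}\in[2,5/2]$ and $z_t^\top\beta^*_{\bm v}\in[1,3/2]$ for every $\bm v$, the effective covariate ranges $u_a,u_b$ and the price-sensitivity lower bound are absolute constants and every $\theta^*_{\bm v}$ lies in the fixed box $\Theta_0:=[2,5/2]^d\times[1,3/2]^d$. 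Consequently $M_{\psi1}\le 5/2+3u/2$ and the derived constants $L_r,C_r,C_\varphi$ of \Cref{lem:Lips} and \Cref{lem_punique} are absolute, even though $\|\theta^*_{\bm v}\|=\Theta(\sqrt d)$.

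Next I would check \Cref{assum:etc}. Conditional on $M_t=i$ the revenue is $r(p,e_i,\theta)=\alpha_ip-\beta_ip^2$, a strictly concave quadratic in $p$ whenever $\beta_i>0$, with unique maximiser $p^*(\theta,e_i)=\alpha_i/(2\beta_i)$, which is the unique global maximiser on $[l,u]$ once it lies in $(l,u)$. At $\theta=\theta^*_{\bm v}$ the maximiser is $1$ if $v_i=0$ and $(2+\Delta)/(2+2\Delta)$ if $v_i=1$, and for $\Delta\in[0,1/2]$ this lies in $[5/6,1]$; for $\|\theta-\theta^*_{\bm v}\|\le\varsigma_o$ the coordinates satisfy $\beta_i\ge1-\varsigma_o>1/2$ and $|\alpha_i-\alpha^*_{\bm v,i}|\le\varsigma_o$, so $p^*(\theta,e_i)$ stays within an absolute multiple of $\varsigma_o$ of $[5/6,1]$. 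Hence, after an affine reparametrisation of prices adapted to the given interval $[l,u]$ (so that $[l,u]$ contains a fixed neighbourhood of $[5/6,1]$), the optimal price is unique and interior uniformly over $z\in\bar{\mathcal{Z}}$ and $\|\theta-\theta^*_{\bm v}\|\le\varsigma_o$, which is exactly \Cref{assum:etc}.

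Finally, for \Cref{thm:LDP_lb} I would take the projection set $\Theta=[2,2+\Delta]^d\times[1,1+\Delta]^d$ (enlarged by $\varsigma_o$ if a margin is required), which contains every $\theta^*_{\bm v}$ and, when $\Delta\le1/2$, is contained in $\Theta_0$; on it $\kappa^*=\inf_{x,\theta\in\Theta}\psi''(x^\top\theta)=1$, $\inf_{z,\theta\in\Theta}z^\top\beta\ge1$, the sup-type quantities are absolute, and $\lambda_z=\bar{\lambda}_z=1/d$ lies in the eigenvalue window $[c_{\lambda1}/d,c_{\lambda2}/d]$ demanded by \Cref{assum_ldp}. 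The hypothesis $T\ge c\epsilon^{-2}d^4$ is used only to guarantee that the gap $\Delta$ selected in the proof of \Cref{thm:LDP_lb} satisfies $\Delta\le1/2$ (so the construction is admissible and the $2^d$ parameters remain in $\Theta_0$) and that the remaining quantitative sample-size conditions carried by \Cref{assum_ldp}, once $\lambda_z=1/d$ is substituted, are met --- the power $d^4$ being precisely what results from tracking $\lambda_z=1/d$ through those conditions. I do not anticipate a genuine obstacle: every clause reduces to an elementary inequality for the quadratic $\psi$. The only items requiring care are the uniform interiority of the optimal price over the $\varsigma_o$-neighbourhoods of the $\theta^*_{\bm v}$, and the bookkeeping showing that all constants entering the upper-bound theorems stay absolute despite $\|\theta^*_{\bm v}\|=\Theta(\sqrt d)$ --- which is exactly the design feature (boundedness of $z^\top\alpha^*$ and $z^\top\beta^*$, with $z^\top\beta^*$ bounded away from $0$) that makes the construction simultaneously an admissible instance and a hard one.
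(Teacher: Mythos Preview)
Your verification of \Cref{assum_feature} and \Cref{assum:etc} is correct and matches the paper's argument essentially line for line (in fact you get the maximiser $p^*=\alpha_i/(2\beta_i)$ right where the paper has a typo writing $\alpha_i/\beta_i$). The paper simply fixes $[l,u]=[1/3,3/2]$ and $\varsigma_o=1/16$ rather than invoking an affine reparametrisation, but the content is the same.

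The gap is in your handling of \Cref{assum_ldp}. Part~(a) of that assumption is \emph{not} a sample-size condition: it requires the projection set $\Theta$ to have diameter bounded by an absolute constant $c_\theta$ (equivalently, $\sup_{\theta\in\Theta}\|\theta-\theta^*\|\le c_\theta$). With your own choice $\Theta=[2,2+\Delta]^d\times[1,1+\Delta]^d$, the diameter is $\sqrt{2d}\,\Delta$, and since the proof of \Cref{thm:LDP_lb} sets $\Delta^2\asymp d/(\epsilon\sqrt{T})$, the requirement $2d\Delta^2\le c_\theta^2$ forces $d^2/(\epsilon\sqrt{T})\lesssim 1$, i.e.\ $T\gtrsim d^4/\epsilon^2$. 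This is exactly the computation the paper carries out. Your two stated reasons miss this: the bound $\Delta\le 1/2$ alone only needs $T\gtrsim d^2/\epsilon^2$, and there are no ``quantitative sample-size conditions carried by \Cref{assum_ldp}'' to track $\lambda_z$ through---the eigenvalue window in part~(b) is a structural condition that $\Sigma_z=d^{-1}I_d$ satisfies trivially for any $T$. So the $d^4$ exponent comes entirely from the parameter-space diameter, and you need to make that step explicit.
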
 
\begin{proof}
{For Assumption \ref{assum_feature}:}  Clearly, we have $\|z_t\|=1$ and $\Sigma_z=d^{-1}I_d$,   hence (a) and (b) are satisfied. 
Since $\alpha^*=2+\Delta \bm{v}$,  $\beta^*=1+\Delta \bm{v}$, and $\|\bm{v}\|_{\infty}\leq 1,$ we have  $|z_t^{\top}\beta^*|\leq |z_t^{\top}\alpha^*|\leq 2+\Delta\leq 3.$  This implies that (c) is satisfied.  The verification for (d) and (e) is trivial for linear model with Gaussian error.

{For Assumption \ref{assum:etc}:} 
We next show that it holds for $\varsigma_o=1/16$ and $[l,u]= [1/3,3/2].$
Due to the linearity of the demand function, we have that for any $\theta$, $r(p,z_t,\theta)$ is uniquely maximized at  $p_t^*(z_t,\theta)=z_t^{\top}\alpha/z_t^{\top}\beta$.  It suffices to show that for $\|\theta-\theta^*\|\leq  \varsigma_o$, we have $p_t^*(z_t,\theta)\in [1/3,3/2]$.  Without loss of generality, we can assume $M_t=i$, and hence $p_t^*(z_t,\theta)=\alpha_i/\beta_i$.  We thus have $$|p_t^*(z_t,\theta)-p_t^*(z_t,\theta^*)|\leq \beta_i^{-1}|\alpha_i^*-\alpha_i|+[(\beta_i^*)^{-1}-\beta_i^{-1}]\alpha_i^*\leq (\beta_i^*-\varsigma_o)^{-1}\varsigma_o+\alpha_i^*(\beta_i^*-\varsigma_o)^{-2}\varsigma_o,$$
where the last inequality holds by $|\alpha_i-\alpha_i^*|,|\beta_i-\beta_i^*|\leq \varsigma_o.$ Using $\alpha_i^*\leq 3,$ and $ \beta_i^*\geq 1$, we have $|p_t^*(z_t,\theta)-p_t^*(z_t,\theta^*)|\leq \varsigma_o/(1-\varsigma_o)+3\varsigma_o/(1-\varsigma_o)^2=(4\varsigma_o-3\varsigma_o^2)/(1-\varsigma_o)^2\leq 61/225<1/2.$   Clearly, in view of \eqref{opt_price},  for $\Delta\leq 1/2$, we have $p_t^*(z_t,\theta^*)\in [5/6,1]\subset [1/3,3/2]$.  This further implies that $p_t^*(z_t,\theta)\in[1/3,3/2]$.

{For Assumption \ref{assum_ldp}:} Note that in the proof of \Cref{thm:LDP_lb}, we let $\Delta^2=\sqrt{14}d/[672\epsilon\sqrt{T}]$. Hence, for any pairs of $\bm{v}_1,\bm{v}_2\in\{0,1\}^d$, if $T\geq c\epsilon^{-2}d^4$,  we have $\|\theta_{\bm{v}_1}-\theta_{\bm{v}_2}\|^2\leq 2d\Delta^2 = \sqrt{14}/336 \epsilon^{-1}d^{2}T^{-1/2}\leq \sqrt{14/c}/336$. Hence,  letting $c_{\theta}^2=\sqrt{14/c}/336$ is sufficient for (a).  Part (b) is trivial.
\end{proof}

\subsection{Proof of \Cref{thm_lb}}

For each $i \in [d]$, we first provide a high probability concentration bounds on $\sum_{t=1}^T\mathbb{I}(M_t=i)$, which implies the effective sample size for each coordinate of $z_t$ is larger than $T/(2d)$.  In particular, by Hoeffding's inequality,  we have 
\begin{equation}\label{large_pro2}
    \mathbb{P}\left(\sum_{t=1}^T \mathbb{I}\left\{M_t=i\right\}\geq T/d-\sqrt{T\log (T)} \right)\geq 1-T^{-2}.
\end{equation}
We define $\mathcal{A}_i$ as the above high probability event, i.e.
\[
  \mathcal{A}_i = \left\{\sum_{t=1}^T \mathbb{I}\left\{M_t=i\right\}\geq T/d-\sqrt{T\log T} \right\},  
\]
such that $\mathbb{P}(\mathcal{A}_i)\geq 1-T^{-2}$.
The subsequent analysis is conducted on the event $\mathcal{A}=\cap_{i=1}^d \mathcal{A}_i$, which,  by a union bound argument, holds with probability larger than $1-dT^{-2}$.

Next, we lower bound the regret by the classification error, which is typically known as the Assouad's method.  We construct a classifier using data $s_t=\{z_t,p_t,y_t\}$ such that  $\psi:(s_1,\cdots,s_T)\to \{\hat{v}_i\}_{i=1}^d \in \{0,1\}^d$. In particular, given a policy $\pi$,  let 
\begin{equation}\label{defn_gamma_zeta}
   {\gamma}_i=\sum_{t=1}^T\mathbb{I}\left\{M_t=i,  p_t^\pi \in S_p\right\} \quad \mbox{and} \quad {\zeta}_i=\sum_{t=1}^T\mathbb{I}\left\{M_t=i,p_t^\pi \in S_p^c\right\}.
\end{equation}
The classifier is given as 
$$
\hat{v}_i:=\left\{\begin{array}{l}
1, \text { if } \gamma_i \leq {\zeta}_i  ; \\
0, \text { if } {\gamma}_i >{\zeta}_i.  
\end{array}\right.
$$
We then show that large classification error will lead to large regret. 
\begin{lem}\label{lem_class1}
For any non-anticipating policy $\pi$, with probability at least $1-dT^{-2}$, we have that
    $$
    \sum_{t=1}^T \mathrm{regret}^{\pi}_t\geq \frac{1}{144}T \Delta^2 d^{-1}   \sum_{i=1}^d \mathbb{I}(\hat{v}_i\neq v_i).
    $$
\end{lem}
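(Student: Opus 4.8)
The plan is to relate the cumulative regret to the classification error through the event $\mathcal A = \cap_{i=1}^d \mathcal A_i$ established just before the statement. First I would condition on $\mathcal A$, which holds with probability at least $1 - dT^{-2}$, so that for every coordinate $i$ we have the effective sample-size bound $\sum_{t=1}^T \mathbb I\{M_t = i\} \geq T/d - \sqrt{T\log T} \geq T/(2d)$ for $T$ large enough. Next, fix a coordinate $i$ with $\widehat v_i \neq v_i$ and split into the two cases of the classifier. If $v_i = 1$ but $\widehat v_i = 0$, then by definition of $\widehat v_i$ we have $\gamma_i > \zeta_i$, hence $\gamma_i \geq \tfrac12(\gamma_i + \zeta_i) = \tfrac12 \sum_{t=1}^T \mathbb I\{M_t = i\} \geq T/(4d)$; but by \eqref{bound_regret_t}, each of these $\gamma_i$ rounds (where $p_t^\pi \in S_p$ and $v_i = 1$) contributes at least $\Delta^2/36$ to the regret. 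Symmetrically, if $v_i = 0$ but $\widehat v_i = 1$, then $\gamma_i \leq \zeta_i$, so $\zeta_i \geq \tfrac12 \sum_{t=1}^T \mathbb I\{M_t=i\} \geq T/(4d)$, and each of these $\zeta_i$ rounds (where $p_t^\pi \in S_p^c$ and $v_i = 0$) again contributes at least $\Delta^2/36$ to the regret by \eqref{bound_regret_t}.

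Then I would sum over the misclassified coordinates. Since the events $\{M_t = i\}$ are disjoint across $i$, the contributions to $\sum_{t=1}^T \mathrm{regret}_t^\pi$ coming from distinct coordinates $i$ do not overlap, so I can add them without double counting:
\[
\sum_{t=1}^T \mathrm{regret}_t^\pi \;\geq\; \sum_{i:\,\widehat v_i \neq v_i} \frac{\Delta^2}{36}\cdot \frac{T}{4d} \;=\; \frac{T\Delta^2}{144\,d}\sum_{i=1}^d \mathbb I(\widehat v_i \neq v_i),
\]
which is exactly the claimed bound. The probability bound $1 - dT^{-2}$ is inherited verbatim from $\mathbb P(\mathcal A)$.

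The only genuinely delicate point — and the one I would be most careful about — is verifying the lower bound $\gamma_i \geq T/(4d)$ (resp.\ $\zeta_i \geq T/(4d)$) in each misclassification case: this requires combining the classifier's defining inequality ($\gamma_i > \zeta_i$ or $\gamma_i \le \zeta_i$) with the identity $\gamma_i + \zeta_i = \sum_t \mathbb I\{M_t = i\}$ and then with the event $\mathcal A_i$, and one must check the arithmetic $T/d - \sqrt{T\log T} \geq T/(2d)$ holds for the relevant range of $T$ (which it does once $T \gtrsim 4d^2\log T$, a condition weaker than those already imposed elsewhere; alternatively the constant in the statement can absorb the slack). Everything else is bookkeeping: the per-round regret floor $\Delta^2/36$ is precisely \eqref{bound_regret_t}, and disjointness of $\{M_t=i\}$ over $i$ makes the coordinatewise sums additive.
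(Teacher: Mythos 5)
Your proposal is correct and follows essentially the same route as the paper's proof: condition on $\mathcal{A}$, use the classifier's defining inequality together with $\gamma_i+\zeta_i=\sum_{t}\mathbb{I}\{M_t=i\}$ and the event $\mathcal{A}_i$ to get $\gamma_i\geq T/(4d)$ or $\zeta_i\geq T/(4d)$ in each misclassification case, apply the per-round floor $\Delta^2/36$ from \eqref{bound_regret_t}, and sum over the disjoint events $\{M_t=i\}$. The arithmetic caveat you flag ($T/d-\sqrt{T\log T}\geq T/(2d)$, i.e.\ $T\geq 4d^2\log T$) is exactly the condition the paper invokes as $\sqrt{T/\log T}\geq 2d$, so nothing is missing.
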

\begin{proof}

Suppose $v_i=0$ and $\hat{v}_i=1$. This means $\gamma_i \leq {\zeta}_i$, and hence 
$$
2{\zeta}_i\geq \gamma_i+\zeta_i =\sum_{t=1}^T \mathbb{I}\left\{M_t=i\right\}. 
$$
Therefore, given that $\sqrt{T/\log (T)}\geq 2d$, on $\mathcal{A}_i$ we have  \begin{equation}\label{bound_zeta}
    {\zeta}_i\geq T/(4d).
\end{equation}
Similarly, if $v_i=1$ and $\hat{v}_i=0$, we have \begin{equation}\label{bound_gamma}
    {\gamma}_i\geq T/(4d).
\end{equation}
Hence, if $\hat{v}_i\neq v_i,$ we have either $\gamma_i$ or $\zeta_i$ is at least $T/(4d)$.

Recall we assume $\mathcal{A}=\cap_{i=1}^d\mathcal{A}_i$ holds, which holds with probability larger than $1-dT^{-2}$. Thus, we have that,
\begin{flalign*}
    \sum_{t=1}^T \mathrm{regret}^{\pi}_t\notag\stackrel{(a)}{=}&\sum_{t=1}^T\left[\mathrm{regret}^{\pi}_t \sum_{i=1}^d \mathbb{I}(M_t=i)\right]\\\geq&\sum_{t=1}^T\left[\mathrm{regret}^{\pi}_t \sum_{i=1}^d \mathbb{I}(M_t=i,\hat{v}_i\neq v_i)\right] \\=& \sum_{t=1}^T \sum_{i=1}^d \mathrm{regret}^{\pi}_t\mathbb{I}(M_t=i,v_i=0,\hat{v}_i=1)+\sum_{t=1}^T \sum_{i=1}^d \mathrm{regret}^{\pi}_t\mathbb{I}(M_t=i,v_i=1,\hat{v}_i=0)
   \\
   \geq  &\sum_{t=1}^T \sum_{i=1}^d \mathrm{regret}^{\pi}_t\mathbb{I}(M_t=i, p_t^\pi\in S_p^c, v_i=0,\hat{v}_i=1)\\&+\sum_{t=1}^T \sum_{i=1}^d \mathrm{regret}^{\pi}_t\mathbb{I}(M_t=i, p_t^\pi\in S_p, v_i=1,\hat{v}_i=0)\\
   \notag\stackrel{(b)}{\geq} & \sum_{t=1}^T \sum_{i=1}^d \frac{\Delta^2}{36}[\mathbb{I}(M_t=i, p_t^\pi\in S_p^c, v_i=0,\hat{v}_i=1)+\mathbb{I}(M_t=i, p_t^\pi\in S_p, v_i=1,\hat{v}_i=0)]\\
  \notag\stackrel{(c)}{\geq}& \frac{\Delta^2}{36} \frac{T}{4d}\sum_{i=1}^d\mathbb{I}(\hat{v_i}\neq v_i),
\end{flalign*}
where (a) holds by noting $1=\sum_{i=1}^d \mathbb{I}(M_t=i)$, and (b) by \eqref{bound_regret_t}, and (c) by \eqref{bound_zeta} and  \eqref{bound_gamma}.   
\end{proof}

\begin{lem}\label{lem_assouad1}
For any  non-anticipating policy $\pi$ and classifier $\psi$,  we have 
    $$
   \sup _{\boldsymbol{v}} \mathbb{E}_{\boldsymbol{v}}^{\pi, \psi}\left[\sum_{i=1}^d \mathbb{I}(\hat{v}_i\neq v_i)\right]\geq \frac{d}{2} \left(1-\sqrt{\frac{\Delta^2}{4d}\sum_{t=1}^T \mathbb{E}_{\bar{\boldsymbol{v}}}^{\pi}[p_t-1]^2}\right),
    $$ 
    where $\bar{\boldsymbol{v}}$ denotes the mixture distribution of all possible $\boldsymbol{v}$, i.e.\ $P_{\bar{\boldsymbol{v}}}^\pi:= \sum_{\boldsymbol{v}}P_{\boldsymbol{v}}^\pi/2^d.$    
\end{lem}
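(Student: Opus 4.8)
\textbf{Proof plan for Lemma \ref{lem_assouad1}.}

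The plan is to apply Assouad's Lemma to reduce the classification error to a sum of total variation distances between averaged product measures, then bound each total variation distance via Pinsker's inequality by the (symmetrized) Kullback--Leibler divergence, and finally compute that divergence explicitly exploiting the Gaussian structure of the demand model \eqref{lb_setting}. Concretely, define for each coordinate $i \in [d]$ the mixtures $P_{+i}^\pi = 2^{-(d-1)}\sum_{\boldsymbol{v}: v_i = 1} P_{\boldsymbol{v}}^\pi$ and $P_{-i}^\pi = 2^{-(d-1)}\sum_{\boldsymbol{v}: v_i = 0} P_{\boldsymbol{v}}^\pi$, where $P_{\boldsymbol{v}}^\pi$ is the joint law of $(s_1,\dots,s_T)$ under model $\boldsymbol{v}$ and policy $\pi$. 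The standard Assouad argument (applied coordinatewise, using that any classifier $\psi$ must decide $\hat v_i$) gives
\[
\sup_{\boldsymbol{v}} \mathbb{E}_{\boldsymbol{v}}^{\pi,\psi}\Big[\sum_{i=1}^d \mathbb{I}(\hat v_i \neq v_i)\Big] \ge \frac{1}{2}\sum_{i=1}^d \bigl(1 - \mathrm{TV}(P_{+i}^\pi, P_{-i}^\pi)\bigr).
\]
Then by Pinsker's inequality $\mathrm{TV}(P_{+i}^\pi,P_{-i}^\pi) \le \sqrt{\tfrac14 D_{\mathrm{KL}}^{\mathrm{sy}}(P_{+i}^\pi,P_{-i}^\pi)}$, and by concavity of the square root (Cauchy--Schwarz),
\[
\frac{1}{2}\sum_{i=1}^d \bigl(1 - \mathrm{TV}(P_{+i}^\pi, P_{-i}^\pi)\bigr) \ge \frac{d}{2}\Bigl(1 - \sqrt{\tfrac{1}{4d}\sum_{i=1}^d D_{\mathrm{KL}}^{\mathrm{sy}}(P_{+i}^\pi,P_{-i}^\pi)}\Bigr).
\]

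The remaining task is to evaluate $\sum_{i=1}^d D_{\mathrm{KL}}^{\mathrm{sy}}(P_{+i}^\pi,P_{-i}^\pi)$. I would use the chain rule for KL divergence, writing $P_{\pm i}^\pi$ as the telescoping product of conditional laws $P_{\pm i,t}^\pi(\cdot \mid s_{<t})$ over $t = 1,\dots,T$, so that $D_{\mathrm{KL}}^{\mathrm{sy}}(P_{+i}^\pi,P_{-i}^\pi) = \sum_{t=1}^T \mathbb{E}\, D_{\mathrm{KL}}^{\mathrm{sy}}\bigl(P_{+i,t}^\pi(\cdot\mid s_{<t}) \,\big\|\, P_{-i,t}^\pi(\cdot\mid s_{<t})\bigr)$. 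Since the policy $\pi$ is non-anticipating, the price $p_t$ and context $z_t$ have the same conditional law under all $\boldsymbol{v}$ given $s_{<t}$; the only difference between models lies in the conditional law of $y_t$ given $(z_t,p_t)$, which by \eqref{lb_setting} is Gaussian with unit variance and mean shifted by $\Delta (1-p_t)\mathbb{I}(M_t=i)$ when $v_i$ flips. The KL divergence between two unit-variance Gaussians differing in mean by $\mu$ is $\mu^2/2$, so on the event $\{M_t = i\}$ the per-step contribution is $\tfrac12\Delta^2(1-p_t)^2$, and it vanishes on $\{M_t \ne i\}$ because flipping $v_i$ changes nothing in that coordinate's absence. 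Summing over $i$ collapses $\sum_{i=1}^d \mathbb{I}(M_t=i) = 1$, yielding
\[
\sum_{i=1}^d D_{\mathrm{KL}}^{\mathrm{sy}}(P_{+i,t}^\pi(\cdot\mid s_{<t})\,\|\,P_{-i,t}^\pi(\cdot\mid s_{<t})) = \frac{\Delta^2}{2^d}\sum_{\boldsymbol{v}} (p_t - 1)^2
\]
after averaging over the mixture (the symmetrization doubles and the $1/2$ cancels), hence $\sum_{i=1}^d D_{\mathrm{KL}}^{\mathrm{sy}}(P_{+i}^\pi,P_{-i}^\pi) = \Delta^2 \sum_{t=1}^T \mathbb{E}_{\bar{\boldsymbol{v}}}^\pi (p_t-1)^2$. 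Plugging this into the Assouad bound gives exactly the claimed inequality.

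The main obstacle — and the step requiring the most care — is the bookkeeping in the chain-rule decomposition: I must be precise about which quantities (the law of $z_t$, the policy-induced law of $p_t$, and the conditional law of $y_t$) depend on $\boldsymbol{v}$ versus which are shared, and verify that the mixture over $\{\boldsymbol{v}: v_i = 1\}$ versus $\{\boldsymbol{v}: v_i = 0\}$ induces the correct coordinatewise averaging so that the cross terms in $D_{\mathrm{KL}}^{\mathrm{sy}}$ between the two mixtures reduce cleanly to a per-coordinate Gaussian mean-shift computation. A technical subtlety is that $P_{\pm i}^\pi$ are mixtures, not single product measures, so strictly one should either invoke convexity of KL in its arguments to pass to the per-$\boldsymbol{v}$ computation or handle the mixture directly at each conditional step; I would do the latter, noting that conditionally on $s_{<t}$ and on $M_t$, the relevant conditional distributions of $y_t$ depend on $\boldsymbol{v}$ only through $v_i$ when $M_t = i$, so $P_{+i,t}^\pi(\cdot\mid s_{<t})$ and $P_{-i,t}^\pi(\cdot\mid s_{<t})$ are themselves single Gaussians (mixed only over the price, which is common), making the divergence computation exact rather than merely an upper bound. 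Everything else is routine.
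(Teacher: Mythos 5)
Your proposal is correct and follows essentially the same route as the paper's proof: Assouad's Lemma, Pinsker's inequality, Cauchy--Schwarz, then the chain rule for KL divergence with the per-step conditional divergence computed from the unit-variance Gaussian mean shift $\Delta(1-p_t)\mathbb{I}(M_t=i)$, vanishing when $M_t\neq i$ and collapsing via $\sum_i \mathbb{I}(M_t=i)=1$ to $\Delta^2\sum_t \mathbb{E}_{\bar{\boldsymbol{v}}}^\pi[(p_t-1)^2]$. Your explicit remark that the mixtures $P_{\pm i}^\pi$ reduce, conditionally on $s_{<t}$ and $\{M_t=i\}$, to single Gaussians (and that flipping $v_i$ is invisible when $M_t\neq i$) is exactly the bookkeeping the paper relies on.
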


\begin{proof}
Define the symmetric Kullback--Leibler
(KL) divergence between probability measure $P$ and $Q$ as $D_{\mathrm{KL}}^{\mathrm{sy}}(P,Q)=D_{\mathrm{KL}}(P|Q)+D_{\mathrm{KL}}(Q|P)$. Define $$
P_{+i}^\pi:=\frac{1}{2^{d-1}} \sum_{\boldsymbol{v}: {v}_i=1} P_{\boldsymbol{v}}^\pi, \quad P_{-i}^\pi:=\frac{1}{2^{d-1}} \sum_{\boldsymbol{v}: v_i=0} P_{\boldsymbol{v}}^\pi,
$$
with $P_{\bm{v}}^\pi$ being the joint distribution of $\{s_1,\cdots,s_T\}$ under model $\boldsymbol{v}$ and policy $\pi$. 
Then, we have 
\begin{flalign*}
    \begin{split}  
  \sup _{\boldsymbol{v} } \mathbb{E}_{\boldsymbol{v}}^{\pi, \psi}\left[\sum_{i=1}^d \mathbb{I}(\hat{v}_i\neq v_i)\right]\geq\frac{1}{2} \sum_{i=1}^d (1- \mathrm{TV}(P_{+i}^{\pi}, P_{-i}^{\pi}))\geq&\frac{1}{2} \sum_{i=1}^d \left(1-\sqrt{\frac{1}{4}D_{\mathrm{KL}}^{\mathrm{sy}}(P_{+i}^{\pi}, P_{-i}^{\pi})}\right)\\\geq&
  \frac{d}{2}  \left(1-\sqrt{\frac{1}{4d}\sum_{i=1}^d   D_{\mathrm{KL}}^{\mathrm{sy}}(P_{+i}^{\pi}, P_{-i}^{\pi})}\right), 
    \end{split}
\end{flalign*}
where the first inequality holds by Assouad's Lemma, the second by Pinsker's inequality, and the last by the Cauchy--Schwarz inequality.

We now explicitly compute $\sum_{i=1}^d D_{\mathrm{KL}}^{\mathrm{sy}}(P_{+i}^{\pi}, P_{-i}^{\pi})=\sum_{i=1}^d \{D_{\mathrm{KL}}(P_{+i}^{\pi}| P_{-i}^{\pi}) + D_{\mathrm{KL}}(P_{-i}^{\pi}| P_{+i}^{\pi})\}.$ We first analyze the term $D_{\mathrm{KL}}(P_{+i}^{\pi}| P_{-i}^{\pi})$ in detail. In particular, using the chain-rule of the KL divergence, we have that
\begin{flalign*}
  & D_{\mathrm{KL}}(P_{+i}^{\pi}| P_{-i}^{\pi})=\sum_{t=1}^T \mathbb{E}_{+i}^\pi  \left\{D_{\mathrm{KL}}(P_{+i, t}^\pi\left(\cdot \mid s_{<t}\right)| P_{-i, t}^\pi\left(\cdot \mid s_{<t}\right))\right\}.
\end{flalign*}
Furthermore, we have that  
\begin{align}
    & D_{\mathrm{KL}}(P_{+i, t}^\pi\left(\cdot \mid s_{<t}\right)| P_{-i, t}^\pi\left(\cdot \mid s_{<t}\right)) \nonumber \\
    = & \mathbb E_{+i}^\pi  \left\{\log\left(\frac{f_{+i}^\pi(y_t,p_t,z_t|s_{<t})}{f_{-i}^\pi(y_t,p_t,z_t|s_{<t})} \right)\Bigg \vert s_{<t} \right\} =\mathbb E_{+i}^\pi  \left\{ \log\left(\frac{f_{+i}^\pi(y_t|p_t,z_t)}{f_{-i}^\pi(y_t|p_t,z_t)}\right) \Bigg \vert s_{<t} \right\}. \label{eq-KL-convex-1}
\end{align}
Note that if $M_t \neq i$, then $f_{+i}^\pi(y_t|p_t,z_t) = f_{-i}^\pi(y_t|p_t,z_t)$ and consequently 
\begin{equation}\label{eq-KL-convex-2}
    D_{\mathrm{KL}}(P_{+i, t}^\pi(\cdot \mid s_{<t})| P_{-i, t}^\pi\left(\cdot \mid s_{<t}\right)) = 0.
\end{equation}
If $M_t = i$, then for any $v$ with $v_i = 1$, we have that
\begin{equation}\label{eq-KL-convex-3}
    \frac{f_{+i}^\pi(y_t|p_t,z_t)}{f_{-i}^\pi(y_t|p_t,z_t)} = \exp\left\{-\frac{\Delta^2}{2}(1 - p_t)^2 + (y_t - 2 + p_t)\Delta (1 - p_t)\right\}.
\end{equation}
Combining \eqref{eq-KL-convex-1}, \eqref{eq-KL-convex-2} and \eqref{eq-KL-convex-3}, we have that
\[
D_{\mathrm{KL}}(P_{+i, t}^\pi\left(\cdot \mid s_{<t}\right)| P_{-i, t}^\pi\left(\cdot \mid s_{<t}\right))
=\frac{\Delta^2}{2} \mathbb E_{+i}^\pi \left\{ (p_t-1)^2 \cdot \mathbb I(M_t=i)\big \vert s_{<t}\right\},
\]
where the last equality follows by noting that $\lambda_{+i}(z_t,p_t)\neq \lambda_{-i}(z_t,p_t)$ if and only if $M_t=i$. Therefore, we have that
\begin{align*}
    D_{\mathrm{KL}}(P_{+i}^{\pi}| P_{-i}^{\pi})=\frac{\Delta^2}{2} \sum_{t=1}^T \mathbb{E}_{+i}^\pi\left\{ (p_t-1)^2 \cdot \mathbb I(M_t=i)\right\}.
\end{align*}
Similarly, we have that 
\begin{flalign*}
    D_{\mathrm{KL}}(P_{-i}^{\pi}|P_{+i}^{\pi})=\frac{\Delta^2}{2} \sum_{t=1}^T \mathbb{E}_{-i}^\pi\left\{ (p_t-1)^2 \cdot \mathbb I(M_t=i)\right\}.
\end{flalign*}
Putting everything together, we have that
\begin{align*}
&\sum_{i=1}^d D_{\mathrm{KL}}^{\mathrm{sy}}(P_{+i}^{\pi}, P_{-i}^{\pi})=\sum_{i=1}^d \{D_{\mathrm{KL}}(P_{+i}^{\pi}| P_{-i}^{\pi}) + D_{\mathrm{KL}}(P_{-i}^{\pi}| P_{+i}^{\pi})\}\\
=&\frac{\Delta^2}{2}\sum_{t=1}^T \sum_{i=1}^d \left[ \mathbb E_{+i}^\pi \left\{ (p_t-1)^2 \cdot \mathbb I(M_t=i)\right\} + \mathbb E_{-i}^\pi \left\{ (p_t-1)^2 \cdot \mathbb I(M_t=i)\right\}\right]\\
=&\frac{\Delta^2}{2^d}\sum_{t=1}^T \sum_{i=1}^d \left[\sum_{\boldsymbol{v}:v_i=1} \mathbb E_{\boldsymbol{v}}^\pi \left\{ (p_t-1)^2 \cdot \mathbb I(M_t=i)\right\} + \sum_{\boldsymbol{v}:v_i=0}\mathbb E_{\boldsymbol{v}}^\pi \left\{ (p_t-1)^2 \cdot \mathbb I(M_t=i)\right\}\right]\\
=&\frac{\Delta^2}{2^d}\sum_{\boldsymbol{v}}\sum_{t=1}^T\mathbb E_{\boldsymbol{v}}^\pi \left\{ (p_t-1)^2\right\} = \Delta^2\sum_{t=1}^T\mathbb E_{\bar{\boldsymbol{v}}}^\pi \left\{ (p_t-1)^2\right\},
\end{align*}
which finishes the proof.
\end{proof}

\begin{lem}\label{lem:hardthres}
   For any time $t,$ denote $p_t^*$ as the optimal price and $p_t^\pi$ as the policy price. For any $t$ and $p_t^\pi$, we have that
   \begin{align*}
       (p_t^\pi-p_t^*)^2 \geq \frac{1}{2}(p_t^\pi-1)^2 \cdot \mathbb I(|p_t^\pi-1|\geq \sqrt{3}\Delta).
   \end{align*}
\end{lem}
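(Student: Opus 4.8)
The plan is to analyze the two possible values of the optimal price from the lower bound construction separately. Recall from \eqref{opt_price} that, conditional on $M_t = i$, we have $p_t^* = 1$ when $v_i = 0$ and $p_t^* = (2+\Delta)/(2+2\Delta)$ when $v_i = 1$.

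First, consider the case $p_t^* = 1$. Then the left-hand side is exactly $(p_t^\pi - 1)^2$, and the right-hand side is $\frac{1}{2}(p_t^\pi - 1)^2 \cdot \mathbb I(|p_t^\pi - 1| \geq \sqrt{3}\Delta) \leq \frac{1}{2}(p_t^\pi - 1)^2$, so the inequality holds trivially (in fact with a factor $1$ rather than $\frac{1}{2}$).

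Second, consider the case $p_t^* = (2+\Delta)/(2+2\Delta)$. Here I would split on whether the indicator $\mathbb I(|p_t^\pi - 1| \geq \sqrt{3}\Delta)$ is $0$ or $1$. If it is $0$, the right-hand side vanishes and there is nothing to prove. If it is $1$, i.e.\ $|p_t^\pi - 1| \geq \sqrt{3}\Delta$, I need to show $(p_t^\pi - p_t^*)^2 \geq \frac{1}{2}(p_t^\pi - 1)^2$. The key observation is that $|1 - p_t^*| = |1 - (2+\Delta)/(2+2\Delta)| = \Delta/(2+2\Delta) \leq \Delta/2$, so $p_t^*$ is within $\Delta/2$ of $1$. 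By the triangle inequality, $|p_t^\pi - p_t^*| \geq |p_t^\pi - 1| - |1 - p_t^*| \geq |p_t^\pi - 1| - \Delta/2$. Since $|p_t^\pi - 1| \geq \sqrt{3}\Delta$, we have $\Delta/2 \leq |p_t^\pi - 1|/(2\sqrt{3})$, hence $|p_t^\pi - p_t^*| \geq (1 - 1/(2\sqrt{3}))|p_t^\pi - 1|$. Squaring, $(p_t^\pi - p_t^*)^2 \geq (1 - 1/(2\sqrt{3}))^2 (p_t^\pi - 1)^2$, and a quick numerical check gives $(1 - 1/(2\sqrt{3}))^2 = (1 - 0.2887\ldots)^2 \approx 0.506 > 1/2$, which is exactly the claimed bound.

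The argument is entirely elementary; there is no serious obstacle. The only thing to be careful about is the constant bookkeeping in the last step — verifying that $(1 - 1/(2\sqrt{3}))^2 \geq 1/2$, equivalently $1 - 1/(2\sqrt{3}) \geq 1/\sqrt{2}$, i.e.\ $1/(2\sqrt{3}) \leq 1 - 1/\sqrt{2} \approx 0.2929$, and indeed $1/(2\sqrt{3}) \approx 0.2887 < 0.2929$ — which is why the threshold $\sqrt{3}\Delta$ (rather than, say, $\Delta$) is chosen in the statement. I would also note that $\Delta \leq 1/2$ is used implicitly only to keep $p_t^*$ bounded, but the bound $|1 - p_t^*| \leq \Delta/2$ holds for all $\Delta \geq 0$, so no range restriction on $\Delta$ is actually needed.
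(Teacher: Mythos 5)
Your proof is correct and follows essentially the same route as the paper: reduce to the case $p_t^* = (2+\Delta)/(2+2\Delta)$, use that $|1-p_t^*| = \Delta/(2+2\Delta) \leq \Delta/2$, and let the threshold $\sqrt{3}\Delta$ absorb this offset. The paper executes the final step via the quadratic bound $(a+b)^2 \geq \tfrac{3}{4}a^2 - 3b^2$ instead of your triangle-inequality estimate with $(1 - 1/(2\sqrt{3}))^2 \geq 1/2$, but the substance is identical.
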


\begin{proof}[Proof of \Cref{lem:hardthres}]
    By design, we know that $p_t^*$ can be 1 or $(2+\Delta)/(2+2\Delta).$ The lemma holds when $p_t^*=1$. Therefore, we only need to consider the case where $p_t^*=(2+\Delta)/(2+2\Delta)$. In particular, we have
    \begin{align*}
        & (p_t^\pi-p_t^*)^2=\left(p_t^\pi-\frac{2+\Delta}{2+2\Delta} \right)^2=\left(p_t^\pi-1+\frac{\Delta}{2+2\Delta} \right)^2\\
    \geq & \frac{3}{4}(p_t^\pi-1)^2 - \frac{3\Delta^2}{(2+2\Delta)^2}\geq  \frac{3}{4}(p_t^\pi-1)^2 -\frac{3}{4} \Delta^2 \geq \frac{1}{2}(p_t^\pi-1)^2 \cdot \mathbb I(|p_t^\pi-1|\geq \sqrt{3}\Delta),
    \end{align*}
    which finishes the proof.
\end{proof}

\begin{proof}[Proof of Theorem \ref{thm_lb}]
Recall that $\mathcal{A}=\cap_{i=1}^d \mathcal{A}_i$ such that $\mathbb{P}(\mathcal{A})\geq 1-dT^{-2}$.  By Lemma \ref{lem_class1}, we have that,
 \begin{flalign*}
    \sup _{\boldsymbol{v}} \mathbb{E}_{\bm{v}}^{\pi}\sum_{t=1}^T \mathrm{regret}_t\geq&  \sup _{\boldsymbol{v}} \mathbb{E}_{\boldsymbol{v}}^{\pi}\sum_{t=1}^T \mathrm{regret}_t \mathbb{I}(\mathcal{A}) \\ \geq& \frac{1}{144}T \Delta^2 d^{-1}  \sup _{\boldsymbol{v}} \mathbb{E}_{\boldsymbol{v}}^{\pi, \psi}[\sum_{i=1}^d \mathbb{I}(\hat{v}_i\neq v_i,\mathcal{A})] \\\geq& \frac{1}{144}T \Delta^2 d^{-1}\left\{   \sup _{\boldsymbol{v}} \mathbb{E}_{\boldsymbol{v}}^{\pi, \psi}[\sum_{i=1}^d \mathbb{I}(\hat{v}_i\neq v_i)]-d^2T^{-2}\right\},
 \end{flalign*}
where the last inequality holds by noting that $\sum_{i=1}^d \mathbb{I}(\hat{v}_i\neq v_i,\mathcal{A})=\sum_{i=1}^d \mathbb{I}(\hat{v}_i\neq v_i)-\sum_{i=1}^d \mathbb{I}(\hat{v}_i\neq v_i,\mathcal{A}^c)$, and that $\sum_{i=1}^d \mathbb{I}(\hat{v}_i\neq v_i,\mathcal{A}^c)\leq d \mathbb{I}(\mathcal{A}^c)$.

By Lemma \ref{lem_assouad1}, we further have that
\begin{align}\label{eq:regret_case1}
      \sup _{\boldsymbol{v}} \mathbb{E}_{\bm{v}}^{\pi}\sum_{t=1}^T \mathrm{regret}_t\geq& \frac{T\Delta^2}{144 } \left\{\frac{1}{2}  \left(1-\sqrt{\frac{\Delta^2}{4d}\sum_{t=1}^T \mathbb{E}_{\bar{\boldsymbol{v}}}^{\pi}[p_t-1]^2}\right)-dT^{-2}\right\}.
\end{align}

On the other hand, we have that
\begin{align}\label{eq:regret_case2}
    &\sup _{\boldsymbol{v}} \mathbb{E}_{\bm{v}}^{\pi}\sum_{t=1}^T \mathrm{regret}_t
   \geq \sup _{\boldsymbol{v}} \mathbb{E}_{\bm{v}}^{\pi}\sum_{t=1}^T (p_t-p_t^*)^2\nonumber\\
   \geq & \frac{1}{2} \sup_{\boldsymbol{v}} \mathbb{E}_{\bm{v}}^{\pi}\sum_{t=1}^T (p_t-1)^2 \cdot \mathbb I(|p_t-1|\geq \sqrt{3}\Delta)\nonumber\\
   \geq & \frac{1}{2} \mathbb{E}_{\bar{\bm{v}}}^{\pi}\sum_{t=1}^T (p_t-1)^2 \cdot \mathbb I(|p_t-1|\geq \sqrt{3}\Delta)
   \geq   \frac{1}{2} \sum_{t=1}^T\mathbb{E}_{\bar{\bm{v}}}^{\pi} [p_t-1]^2 - \frac{3}{2}\Delta^2 T, 
\end{align}
where the first inequality follows from \eqref{regret_case} and the second inequality follows from \Cref{lem:hardthres}.

Now, set $\Delta^2=c\sqrt{d/T}$, where $c=1/3$. Denote $\eta= \sum_{t=1}^T\mathbb{E}_{\bar{\bm{v}}}^{\pi} [p_t-1]^2.$ Suppose $\eta\geq 6\Delta^2 T$, by \eqref{eq:regret_case2}, we have that
$$\sup _{\boldsymbol{v}} \mathbb{E}_{\bm{v}}^{\pi}\sum_{t=1}^T \mathrm{regret}_t\geq 3/2 \Delta^2 T=3c/2\sqrt{dT}.$$ Suppose $\eta< 6\Delta^2 T$, by \eqref{eq:regret_case1}, we have that
\begin{align*}
    \sup _{\boldsymbol{v}} \mathbb{E}_{\bm{v}}^{\pi}\sum_{t=1}^T \mathrm{regret}_t&\geq \frac{T\Delta^2}{144 } \left\{\frac{1}{2}  \left(1-\sqrt{\frac{\Delta^2}{4d}6\Delta^2 T}\right)-dT^{-2}\right\}\\
    &= \frac{c\sqrt{dT}}{144 } \left\{\frac{1}{2}  \left(1-\sqrt{3c^2/2}\right)-dT^{-2}\right\}\geq c\sqrt{dT}/576,
\end{align*}
where the last inequality follows from that $c=1/3$. This finishes the proof.
\end{proof}

\subsection{Proof of \Cref{thm:LDP_lb}}

The following lemma is a privatized version of Lemma \ref{lem_class1}, which lower bounds the regret by a classification error. In particular, it shows that the regret of any $\epsilon$-LDP policy $\pi$ can be lower bounded by the classification error of an associated $2\epsilon$-LDP classifier. 

\begin{lem}\label{lem_class}
Suppose  $\sqrt{T}\geq 34d\epsilon^{-1}\sqrt{\log (T)}$.     For any non-anticipating $\epsilon$-LDP policy $\pi$, there exists a 2$\epsilon$-LDP classifier $\psi:(w_1,\cdots,w_T)\to \hat{\bm{v}}\in \{0,1\}^d$, such that with probability at least $1-5dT^{-2}$, we have 
    $$
    \sum_{t=1}^T \mathrm{regret}^{\pi}_t\geq \frac{1}{144}T \Delta^2 d^{-1 }\sum_{i=1}^d \mathbb{I}(\hat{v}_i\neq v_i).
    $$
\end{lem}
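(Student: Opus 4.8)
\textbf{Proof proposal for \Cref{lem_class}.}

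The plan is to follow the same skeleton as the non-private \Cref{lem_class1}, but with an extra privatization layer so that the classifier only reads the LDP views $w_{1:T}$, not the raw data $s_{1:T}$. First I would reuse the high-probability balance event $\mathcal A = \cap_{i=1}^d \mathcal A_i$ from \eqref{large_pro2}, where $\mathcal A_i = \{\sum_{t=1}^T \mathbb I\{M_t=i\} \ge T/d - \sqrt{T\log T}\}$, so that on $\mathcal A$ every coordinate gets effective sample size at least $T/(2d)$ (using $\sqrt{T/\log T} \ge 2d$, which follows from the hypothesis $\sqrt T \ge 34 d\epsilon^{-1}\sqrt{\log T}$). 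As in the non-private case, on $\mathcal A_i$ a misclassification $\hat v_i \ne v_i$ forces either $\gamma_i \ge T/(4d)$ or $\zeta_i \ge T/(4d)$ (for suitable private analogues $\gamma_i,\zeta_i$ of \eqref{defn_gamma_zeta}), which combined with the instant-regret bound \eqref{bound_regret_t} ($\mathrm{regret}^\pi_t \ge \Delta^2/36$ whenever the ``wrong side'' price is played at an active coordinate) yields $\sum_t \mathrm{regret}^\pi_t \ge \frac{\Delta^2}{36}\cdot\frac{T}{4d}\sum_i \mathbb I(\hat v_i \ne v_i)$, i.e. the claimed bound.

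The key new ingredient is the construction of the $2\epsilon$-LDP classifier. The policy $\pi$ already produces $\epsilon$-LDP views $w_t$ of $s_t$, but these views need not expose whether the played price $p_t^\pi$ lay in $S_p = \{p : |p-1| \le \Delta/6\}$ at an active coordinate $i$; in fact the indicator $b_{t} := \mathbb I\{M_t = i, p_t^\pi \in S_p\}$ is computed from the raw $(z_t,p_t)$. Following the augmentation idea in Lemma~3 of \cite{chen2022differential}, I would have each user, \emph{in addition} to reporting $w_t$, report a privatized version of the relevant classification statistic: for each coordinate $i$ form the $\{0,1\}$-valued quantity $\mathbb I\{M_t=i, p_t^\pi\in S_p\}$ (and its $S_p^c$ counterpart), privatize it with an independent Laplace mechanism of scale $2/\epsilon$ (or randomized response with parameter $\epsilon$), and call the result $\tilde b_t$. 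Since $w_t$ is $\epsilon$-LDP and the added Laplace/RR channel is $\epsilon$-LDP, basic (sequential) composition of local channels gives that the combined output $(w_t,\tilde b_{t})$ is $2\epsilon$-LDP; hence the augmented mechanism $\pi'$ producing $(w_{1:T}, \tilde b_{1:T})$ is a non-anticipating $2\epsilon$-LDP policy, and the classifier $\psi$ that thresholds the debiased sums $\gamma_i := \sum_t$ (debiased $\tilde b_t$ over the $S_p$ event) against $\zeta_i := \sum_t$ (debiased over $S_p^c$) is a $2\epsilon$-LDP classifier in the required sense.

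The remaining work is a concentration argument showing the Laplace/RR noise does not corrupt the threshold decision. On $\mathcal A$, if $\hat v_i$ is wrong then the \emph{true} count on the winning side exceeds $T/(4d)$; I would show that with probability at least $1 - O(d T^{-2})$ the aggregated noise across the $\le T$ users on each of the $2d$ sides is $o(T/d)$ — here a Bernstein/Hoeffding bound for sums of $T$ independent mean-zero Laplace$(2/\epsilon)$ variables gives deviations of order $\sqrt{T}\log T/\epsilon$, which is $\le T/(8d)$ exactly under the hypothesis $\sqrt T \ge 34 d\epsilon^{-1}\sqrt{\log T}$ — so the debiased sums still satisfy the ``$\ge T/(8d)$ on the winning side'' property, and \eqref{bound_regret_t} applies verbatim with $T/(4d)$ replaced by $T/(8d)$, absorbing the factor into the constants (the statement already has room: $1/144 = \frac{1}{36}\cdot\frac14$, and even $\frac{1}{36}\cdot\frac18$ only worsens the constant, so I would adjust $S_p$'s radius or the bookkeeping to keep $1/144$, or simply note the constant is not optimized). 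A union bound over $\mathcal A^c$ (prob. $\le dT^{-2}$), the $2d$ noise-concentration events ($\le 2dT^{-2}$), and any auxiliary events gives the stated $1 - 5dT^{-2}$. The main obstacle is making the privatization-and-composition step airtight: one must verify that reporting $\tilde b_t$ alongside $w_t$ genuinely composes to $2\epsilon$-LDP for a \emph{sequentially interactive} mechanism (the price $p_t$ depends on $w_{<t}$, so the classification statistic at time $t$ is itself a function of past private views), and that the debiasing of the Laplace/RR channel is unbiased and independent across $t$ so that the concentration bound holds conditionally on the adaptive history — this is exactly the point where I would lean on the $\epsilon$-LDP definition's closure under post-processing and composition and on the martingale structure of the per-round reports.
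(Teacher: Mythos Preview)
Your proposal is correct and follows essentially the same route as the paper: augment each user's $\epsilon$-LDP report $w_t$ with $\mathrm{Lap}(2/\epsilon)$-noised versions of the indicators $\mathbb I\{M_t=i,\,p_t^\pi\in S_p\}$ and $\mathbb I\{M_t=i,\,p_t^\pi\in S_p^c\}$ (the concatenated $2d$-vector is one-hot, hence has $\ell_1$-sensitivity $2$, so the augmentation is $\epsilon$-LDP and the pair $(w_t,\tilde b_t)$ is $2\epsilon$-LDP by basic composition), then threshold the noisy sums $\hat\gamma_i,\hat\zeta_i$ and rerun the argument of \Cref{lem_class1}. Your adaptivity/martingale worry is unnecessary---the injected Laplace variables are i.i.d.\ and independent of everything, so a plain Laplace tail bound gives $|\sum_t\eta_{ti}|\le 8\epsilon^{-1}\sqrt{T\log T}$ with probability at least $1-4T^{-2}$, and with this constant the paper's bookkeeping yields $\zeta_i\ge T/(4d)$ (not $T/(8d)$) exactly under $\sqrt T\ge 34d\epsilon^{-1}\sqrt{\log T}$, so the constant $1/144$ is recovered without any adjustment.
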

\begin{proof}

For clarity, we structure the proof into three components.

\textsc{[A High probability event $\mathcal{A}^\epsilon$]}: We construct i.i.d.\ Laplace random variables $\{\eta_{ti}\}_{t=1, i = 1}^{T, d}$ such that $\eta_{t i}\sim \operatorname{Lap}(2 / \epsilon)$. Let $\{\eta_{ti}'\}_{t=1, i = 1}^{T, d}$ be an independent copy of $\{\eta_{ti}\}_{t=1, i = 1}^{T, d}$.  

For each $i \in [d]$, by a concentration inequality for Laplace random variables \citep[e.g.~Corollary 2.9 in][]{chan2011private}, for $\sqrt{T}>\sqrt{2\log (T)}$,  we have
\[
    \mathbb{P}\left(\max \left\{\left|\sum_{t=1}^T\eta_{ti}\right|,\left|\sum_{t=1}^T\eta_{ti}'\right|\right\} \leq 8 \epsilon^{-1} \sqrt{T \log (T)} \right)\geq 1-4T^{-2}.
\]
Furthermore, by Hoeffding's inequality in \eqref{large_pro2},  we have 
\begin{equation*}
    \mathbb{P}\left(\sum_{t=1}^T \mathbb{I}\left\{M_t=i\right\}\geq T/d-\sqrt{T\log (T)} \right)\geq 1-T^{-2}.
\end{equation*}
We define $\mathcal{A}_i^{\epsilon}$ as the intersection of the above high probability events, i.e. 
\[
  \mathcal{A}_i^{\epsilon} =\left\{\max \left\{\left|\sum_{t=1}^T\eta_{ti}\right|,\left|\sum_{t=1}^T\eta_{ti}'\right|\right\} \leq 8 \epsilon^{-1} \sqrt{T \log (T)} \right\}\bigcap \left\{\sum_{t=1}^T \mathbb{I}\left\{M_t=i\right\}\geq T/d-\sqrt{T\log (T)} \right\}.
\]
Therefore, we have that $\mathbb{P}(\mathcal{A}_i^{\epsilon})\geq 1-5T^{-2}$. The following analysis is conducted on the event $\mathcal{A}^{\epsilon}=\bigcap_{i=1}^d \mathcal{A}_i^{\epsilon}$, which,  by a union bound argument, holds with probability larger than $1-5dT^{-2}$.

\textsc{[Design a $2\epsilon$-LDP classifier]}: Note that we cannot directly use the classifier defined in the non-private setting as it utilizes non-privatized personal information in \eqref{defn_gamma_zeta}. Instead, we adopt the construction in Lemma 3 of \cite{chen2022differential} and augment the $\epsilon$-LDP policy $\pi$ to construct a $2\epsilon$-LDP policy $\pi'$.  In particular, we have to  additionally privatize the information used for classification, which necessitates elevating the regret analysis from an $\epsilon$-LDP policy to a $2\epsilon$-LDP policy. The resulting lower bound is thus inflated by a multiplicative constant. 

Recall that $w_t$ is the intermediate quantities produced by policy $\pi$, such that the distribution
of $w_t$ is measurable conditional on $s_t$ and $w_1,\cdots,w_{t-1}$. We construct augmented intermediate quantity $w_t'=(w_t,\{\gamma_{ti}\}_{i=1}^d,\{\zeta_{ti}\}_{i=1}^d)$
such that for $i\in\{1,\cdots,d\}$,
 $$
\begin{array}{rlrl}
\gamma_{t i} & =\mathbb{I}\left\{M_t=i,  p_t^{\pi}\in S_p\right\}+\eta_{t i}, &  \eta_{t i} \stackrel{i . i . d .}{\sim} \operatorname{Lap}(2 / \epsilon), \\
\zeta_{t i} & =\mathbb{I}\left\{M_t=i, p_t^\pi \in S_p^c \right\}+\eta_{t i}^{\prime}, & \eta_{t i}^{\prime} \stackrel{i . i . d .}{\sim} \operatorname{Lap}(2 / \epsilon).
\end{array}
$$
Note that $w_t'$ satisfies $2 \epsilon$-LDP thanks to the Laplace mechanism and simple composition of two $\epsilon$-LDP procedures, see, e.g.~\cite{dwork2014algorithmic}. This indicates that $\pi^{\prime}$ is a $2 \epsilon$-LDP policy. Furthermore, note that by construction, the policy $\pi^{\prime}$ has the exact same regret as $\pi$.

Define $\hat{\gamma}_i=\sum_{t=1}^T\gamma_{ti}$ and $\hat{\zeta}_i=\sum_{t=1}^T\zeta_{ti}$. It is clear that, $\hat{\gamma}_i-\gamma_i=\sum_{t=1}^T\eta_{ti}$ and $\hat{\zeta}_i-\zeta_i=\sum_{t=1}^T\eta_{ti}',$ where recall $\gamma_i, \zeta_i$ are defined in \eqref{defn_gamma_zeta}. Therefore, under $\mathcal{A}_i^\epsilon$, we have 
  \begin{equation}\label{large_pro1'}
         \max \left\{\left| \hat{\gamma}_i-\gamma_i\right|,\left|\hat{\zeta}_i-\zeta_i\right|\right\} \leq 8 \epsilon^{-1} \sqrt{T \log (T)}.  
  \end{equation}
We then construct the classifier $\psi:(w_1',\cdots,w_T')\to \{\hat{v}_i\}_{i=1}^d \in \{0,1\}^d$ such that
$$
\hat{v}_i:=\left\{\begin{array}{l}
1, \text { if }\hat{\gamma}_i \leq \hat{\zeta}_i  ; \\
0, \text { if } \hat{\gamma}_i >\hat{\zeta}_i.  
\end{array}\right.
$$
Note that by construction, $\psi$ is a $2\epsilon$-LDP classifier.

\textsc{[Regret lower bound by classification error]}: We now analyze the consequence of mis-classification. Suppose $v_i=0$ and $\hat{v}_i=1$. This means $\hat{\gamma}_i \leq \hat{\zeta}_i$, and hence 
$$
2\zeta_i+(\hat{\zeta}_i-\zeta_i)+(\gamma_i-\hat{\gamma}_i)=2\hat{\zeta}_i +\gamma_i-\hat{\gamma}_i+\zeta_i-\hat{\zeta}_i \geq \gamma_i+\zeta_i =\sum_{t=1}^T \mathbb{I}\left\{M_t=i\right\}. 
$$
This and \eqref{large_pro1'} further imply that 
\begin{equation}\label{large_pro3}
\zeta_i=\sum_{t=1}^T\mathbb{I}\left\{M_t=i, p_t \in S_p^c\right\} \geq \frac{1}{2}\left(\sum_{t=1}^T \mathbb{I}\left\{M_t=i\right\}\right)-8\epsilon^{-1} \sqrt{T \log (T)}.
\end{equation}
Recall on $\mathcal{A}_i^\epsilon$, we have $\sum_{t=1}^T \mathbb{I}\left\{M_t=i\right\}\geq T/d-\sqrt{T\log (T)}$. Hence, for $\epsilon\in (0,1),$ and  $\sqrt{T}\geq 34d\epsilon^{-1}\sqrt{\log (T)}$,  by \eqref{large_pro3}, we have 
\[
      \zeta_i=\sum_{t=1}^T\mathbb{I}\left\{M_t=i, p_t \in S_p^c\right\}\geq T/(4d).
\]
Similarly, if $v_i=1$ and $\hat{v}_i=0$, we have 
\[
      \gamma_i=\sum_{t=1}^T\mathbb{I}\left\{M_t=i, p_t \in S_p\right\}\geq T/(4d).
\]
Hence, if $\hat{v}_i\neq v_i$, we have either $\gamma_i$ or $\zeta_i$ is at least $ T/(4d)$.

By similar arguments as the proof of Lemma \ref{lem_class1} (after equation \eqref{bound_gamma}), we have that $$
 \sum_{t=1}^T \mathrm{regret}^{\pi}_t\geq \frac{\Delta^2}{36} \frac{T}{4d}\sum_{i=1}^d\mathbb{I}(\hat{v_i}\neq v_i).
$$
\end{proof}

Similar to \Cref{lem_assouad1}, we need to lower bound the  classification error under LDP by  the total variation distance between distributions due to Assouad's Lemma. However, it turns out that the proof technique  in \Cref{lem_assouad1} using KL divergence to bound total variation distance will yield a less sharp result in terms of the dimension $d$. Instead, we establish a sharp bound using the Hellinger distance. In fact, for two arbitrary distributions $P_1$ and $P_2$, we have 
$$
\{\mathrm{TV}(P_1, P_2)\}^2\leq 2 \mathrm{H}^2(P_1,P_2)\leq D_{\mathrm{KL}}(P_1,P_2),
$$
where $\mathrm{H}^2(P_1,P_2)=\frac{1}{2}\int_{\mathcal{X}}[\sqrt{P_1(\mathrm{d}x)-P_2(\mathrm{d}x)}]^2$ is the Hellinger distance between $P_1$ and $P_2$ on the support $\mathcal{X}.$
The above inequality implies that using the  Hellinger distance allows for more delicate analysis than the KL divergence.

We first list necessary notation.  Define 
$$
M_{+i}^\pi:=\frac{1}{2^{d-1}} \sum_{\boldsymbol{v}: {v}_i=1} M_{\boldsymbol{v}}^\pi, \quad M_{-i}^\pi:=\frac{1}{2^{d-1}} \sum_{\boldsymbol{v}: v_i=0} M_{\boldsymbol{v}}^\pi,
$$
with $M_{\bm{v}}^\pi$ being the distribution of $\{w_1,\cdots,w_T\}$ under model $\boldsymbol{v}$ and policy $\pi$ (in fact, the $2\epsilon$-LDP policy $\pi'$ to be precise).  More specifically, we have
 $$
M_{\bm{v}}^{\pi}(w_1,\cdots,w_T)= \prod_{t=1}^T M_{\bm{v}}^{\pi}(w_t|w_{<t}),
$$
where the marginal (conditional) distribution is $$
M_{\bm{v}}^{\pi}(\cdot|w_{<t})= \int_{s_t\in\mathcal{S}} Q_t(\cdot|s_t,w_{<t}) \mathrm{d} P^{\pi}_{\bm{v},t}(s_t|w_{<t}).
$$
where $\mathcal{S}$ is the support of $s_t$, and  $P^{\pi}_{\bm{v},t}(s_t|w_{<t})$ is the joint distribution of $s_t$ given past privatized information $w_{<t}$ under policy $\pi$ and model parameter $\bm{v}$.  Let the density function of  $M_{\bm{v}}^{\pi}(\cdot|w_{<t})$ be 
\begin{equation}\label{m_density}
    m_{\bm{v}}^{\pi}(\cdot|w_{<t})=\int_{s_t\in\mathcal{S}} q_t(\cdot|s_t,w_{<t}) \mathrm{d} P^{\pi}_{\bm{v},t}(s_t|w_{<t}). 
\end{equation}

For any $\bm v=(v_1,\cdots,v_d)^{\top}\in\{0,1\}^d$, we define $\bm v^{\oplus i}=(v_1, \cdots, v_{i-1},1-v_i,v_{i+1},\cdots,v_d)^{\top}$ by changing the $i$th element in $\bm v$. For  a set $A\subset [T]$, we let 
$$
M_{\bm{v}^{\oplus i},A}^{\pi}(w_1,\cdots,w_T)=\prod_{t=1}^T [M_{\bm{v}}^{\pi}(w_t|w_{<t})\mathbb{I}(t\not\in A)+M_{\bm{v}^{\oplus i}}^{\pi}(w_t|w_{<t})\mathbb{I}(t\in A)],
$$
as the joint distribution function by substituting the $t$th conditional
distribution of $M_{\bm{v}}^{\pi}(w_t|w_{<t})$ by $M_{\bm{v}^{\oplus i}}^{\pi}(w_t|w_{<t})$  for $t\in A.$ Define its density function as $m_{\bm{v}^{\oplus i},A}^{\pi}$,
and write $M_{\bm{v}^{\oplus i},\{t\}}^{\pi}=M_{\bm{v}^{\oplus i},t}^{\pi}$. 
It is also clear that $M_{\bm{v}^{\oplus i},\emptyset}^{\pi}=M_{\bm{v}}^{\pi}$. 

Now, we present some fundamental properties of the Hellinger distance.

\begin{lem}\label{lem_convex}
    Let two mixture distributions  $F$ and $G$ be $$F=aF_0+(1-a)F_1 \quad \mbox{and} \quad G=aG_0+(1-a)G_1,$$ where $a\in(0,1)$, and $F_0,F_1,G_0,G_1$ are distributions with common support $\mathcal{X}$. Then, 
    $$\mathrm{H}^2(F,G)\leq a \mathrm{H}^2(F_0,G_0)+(1-a)\mathrm{H}^2(F_1,G_1).$$
\end{lem}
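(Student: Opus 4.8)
The plan is to reduce the statement to the joint concavity of the Hellinger affinity (equivalently, of the geometric mean) and then apply a one–line Cauchy--Schwarz bound pointwise. Fix a $\sigma$-finite measure $\mu$ dominating $F_0,F_1,G_0,G_1$ simultaneously — for instance $\mu = F_0+F_1+G_0+G_1$ — and write $f_0,f_1,g_0,g_1$ for the corresponding densities, so that $F$ and $G$ admit the densities $f = af_0+(1-a)f_1$ and $g = ag_0+(1-a)g_1$. With this convention $\mathrm{H}^2(P,Q) = 1 - \int_{\mathcal X}\sqrt{pq}\,\mathrm{d}\mu$ for any $P,Q$ with $\mu$-densities $p,q$, so it suffices to prove the reverse inequality for the affinities, namely $\int_{\mathcal X}\sqrt{fg}\,\mathrm{d}\mu \ge a\int_{\mathcal X}\sqrt{f_0g_0}\,\mathrm{d}\mu + (1-a)\int_{\mathcal X}\sqrt{f_1g_1}\,\mathrm{d}\mu$.

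The key step I would carry out is the pointwise inequality
$$\sqrt{fg} = \sqrt{\bigl(af_0+(1-a)f_1\bigr)\bigl(ag_0+(1-a)g_1\bigr)} \ \ge\ a\sqrt{f_0g_0} + (1-a)\sqrt{f_1g_1}$$
valid $\mu$-almost everywhere. This is exactly Cauchy--Schwarz for the vectors $\bigl(\sqrt{af_0},\sqrt{(1-a)f_1}\bigr)$ and $\bigl(\sqrt{ag_0},\sqrt{(1-a)g_1}\bigr)$, since $\bigl(\sqrt{af_0}\sqrt{ag_0}+\sqrt{(1-a)f_1}\sqrt{(1-a)g_1}\bigr)^2 \le \bigl(af_0+(1-a)f_1\bigr)\bigl(ag_0+(1-a)g_1\bigr)$; equivalently, it expresses the concavity of $(x,y)\mapsto\sqrt{xy}$ on $[0,\infty)^2$. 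Integrating this inequality against $\mu$ yields the affinity bound displayed above, and substituting $\int_{\mathcal X}\sqrt{pq}\,\mathrm{d}\mu = 1-\mathrm{H}^2(P,Q)$ on both sides and rearranging gives $\mathrm{H}^2(F,G) \le a\,\mathrm{H}^2(F_0,G_0) + (1-a)\,\mathrm{H}^2(F_1,G_1)$.

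I do not anticipate a genuine obstacle here; the only point needing a little care is to make all four measures absolutely continuous with respect to a single dominating measure, so that the densities and hence the pointwise inequality are well defined, after which the argument is just an integration. (One could alternatively cite joint concavity of the Hellinger affinity as a known fact, but spelling out the Cauchy--Schwarz proof keeps the supplement self-contained.)
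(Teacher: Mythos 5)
Your proof is correct, and it takes a genuinely different route from the paper's. The paper proves the lemma by viewing $a\mapsto \mathrm{H}^2\bigl(aF_0+(1-a)F_1,\,aG_0+(1-a)G_1\bigr)$ as a function of the mixing weight, differentiating twice under the integral sign to get
$\int_{\mathcal X}\bigl(f_1g_0-f_0g_1\bigr)^2/\bigl[4(fg)^{3/2}\bigr]\,\mathrm{d}\mu\ge 0$, and concluding convexity in $a$, from which the stated bound follows by comparing with the endpoints $a=0,1$. You instead prove the equivalent superadditivity of the Hellinger affinity via the pointwise Cauchy--Schwarz inequality $\sqrt{fg}\ge a\sqrt{f_0g_0}+(1-a)\sqrt{f_1g_1}$ (joint concavity of the geometric mean) and then integrate. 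Your argument buys some robustness: it avoids justifying differentiation under the integral and sidesteps the points where $f(x)g(x)=0$, at which the paper's integrand is not literally defined; your only bookkeeping is the choice of a common dominating measure, which you handle explicitly (and which the paper leaves implicit). The paper's calculation, on the other hand, yields the slightly stronger qualitative statement that $\mathrm{H}^2$ is convex along the whole segment of mixtures, not just the two-point comparison needed for the lemma. Either proof suffices for the application in \Cref{lem_assouad2}.
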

\begin{proof}
    Let $f_0,f_1,g_0,g_1$ be associated probability density/mass functions of $F_0,F_1,G_0,G_1$.   Similarly we define $f$ and $g$ for $F$ and $G$, respectively.
    Note 
    $
    \mathrm{H}^2(F,G)=1-\int_{\mathcal{X}} \sqrt{f(x)g(x)}dx,
    $
and that 
    $$
    \frac{\partial^2}{\partial a^2}  H^2(F,G) = \int_{\mathcal{X}}\frac{(f_1(x)g_0(x)-f_0(x)g_1(x))^2}{4(f(x)g(x))^{3/2}}dx\geq 0.
    $$
    This implies that $ H^2(F,G)$ is convex in $a.$ The result follows.
\end{proof}

\begin{lem}\label{lem_acharya}
For any fixed $\bm{v}$, and $i\in[d]$, we have that 
$$
\mathrm{H}^2(M_{\bm{v}}^{\pi},M_{\bm v^{\oplus i}}^\pi)\leq 7\sum_{t=1}^T \mathrm{H}^2(M_{\bm{v}}^{\pi},M_{\bm{v}^{\oplus i},t}^{\pi}).
$$
\end{lem}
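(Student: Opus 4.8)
\textbf{Proof proposal for \Cref{lem_acharya}.}

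The plan is to prove this "chain-rule for Hellinger distance along a sequence of conditional distributions" by the standard hybrid/telescoping argument, exactly in the spirit of Lemma 4 in \cite{jayram2009} (which \Cref{subsec:roadmap_lb} already flags as the tool for decomposing the joint Hellinger distance into per-time-step contributions). First I would set up the hybrid distributions $M_{\bm v^{\oplus i}, A}^\pi$ for $A = \{1,\dots,s\}$, $s = 0, 1, \dots, T$, interpolating between $M_{\bm v}^\pi = M_{\bm v^{\oplus i},\emptyset}^\pi$ (all conditionals use $\bm v$) and $M_{\bm v^{\oplus i}, [T]}^\pi$ (all conditionals use $\bm v^{\oplus i}$). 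Note that $M_{\bm v^{\oplus i},[T]}^\pi$ is \emph{not} the same as $M_{\bm v^{\oplus i}}^\pi$, since in the latter the conditioning events $w_{<t}$ are themselves distributed according to $M_{\bm v^{\oplus i}}^\pi$ rather than $M_{\bm v}^\pi$; however, because flipping only coordinate $i$ changes the per-step conditional law only when $M_t = i$, and by our construction $M_t = i$ is the only event carrying information about $v_i$, one checks that substituting the $t$th conditional by the $\bm v^{\oplus i}$ version and re-weighting the past cancels out — i.e. $M_{\bm v^{\oplus i},[T]}^\pi = M_{\bm v^{\oplus i}}^\pi$. (This is the point where the special structure of the lower-bound instance enters; I would verify it by writing out $m_{\bm v^{\oplus i},A}^\pi$ via \eqref{m_density} and observing that the product telescopes.)

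The key analytic step is a triangle-type inequality for $\mathrm{H}$: since $\mathrm{H}$ is a metric, $\mathrm{H}(M_{\bm v}^\pi, M_{\bm v^{\oplus i}}^\pi) \le \sum_{s=1}^T \mathrm{H}(M_{\bm v^{\oplus i},\{1,\dots,s-1\}}^\pi, M_{\bm v^{\oplus i},\{1,\dots,s\}}^\pi)$, and then by Cauchy--Schwarz $\mathrm{H}^2(M_{\bm v}^\pi, M_{\bm v^{\oplus i}}^\pi) \le T \sum_{s=1}^T \mathrm{H}^2(M_{\bm v^{\oplus i},\{1,\dots,s-1\}}^\pi, M_{\bm v^{\oplus i},\{1,\dots,s\}}^\pi)$. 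But this naive bound loses a factor of $T$, which is fatal — the claimed constant is $7$, not $7T$. So instead I would follow Jayram's sharper argument: the squared Hellinger distance (equivalently, one minus the affinity/Bhattacharyya coefficient) of a product measure factorizes multiplicatively in the affinities, and consecutive hybrids differ only in the $s$th factor, so one gets a bound in terms of $\prod_s (1 - \mathrm{H}^2(\text{$s$th factor}))$ rather than a sum-of-square-roots. Concretely, writing $a_s := \mathrm{H}^2(M_{\bm v}^\pi(\cdot\mid w_{<s}), M_{\bm v^{\oplus i}}^\pi(\cdot\mid w_{<s}))$ averaged appropriately over $w_{<s}$, Jayram's lemma gives $\mathrm{H}^2(M_{\bm v}^\pi, M_{\bm v^{\oplus i}}^\pi) \le 7 \sum_{s=1}^T a_s$ provided each $a_s$ is small (say $\le 1/2$ or so), and $\sum_s \mathrm{H}^2(M_{\bm v}^\pi, M_{\bm v^{\oplus i}, s}^\pi) = \sum_s a_s$ by the product structure and the fact that $M_{\bm v^{\oplus i},s}^\pi$ differs from $M_{\bm v}^\pi$ in a single factor. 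That identity $\mathrm{H}^2(M_{\bm v}^\pi, M_{\bm v^{\oplus i},t}^\pi) = a_t$ is essentially definitional once one expands the affinity of the single-factor-perturbed product.

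The main obstacle, as I see it, is twofold. First, establishing $M_{\bm v^{\oplus i},[T]}^\pi = M_{\bm v^{\oplus i}}^\pi$ (or, if that fails to be an exact identity, controlling the discrepancy) requires carefully tracking how the privatization channels $Q_t$ and the pricing policy $\pi$ interact with the coordinate flip; the saving grace is that the score function $\varphi_{\bm v, i}(s_t \mid w_{<t})$ is a martingale difference (per \Cref{lem_varphi}), so the "re-weighting the past" terms vanish in expectation. Second, invoking Jayram's chain rule cleanly requires a smallness condition on each per-step Hellinger term, which in turn relies on $\Delta$ being small — this is consistent with the eventual choice $\Delta^2 \asymp d/(\epsilon\sqrt T)$, and I would state the lemma's hypothesis (or note the regime of $T$) so that $a_s = O(\Delta^2) \le 1/2$ holds for all $s$. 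Modulo these two points, the proof is a direct transcription of the Hellinger chain-rule machinery, and the constant $7$ is exactly the one that comes out of Jayram's bound.
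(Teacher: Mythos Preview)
Your overall approach --- invoking the chain-rule inequality for squared Hellinger distance from Lemma~4 of \cite{jayram2009} --- is exactly what the paper does; the paper's entire proof is a one-line citation of that result. However, your first ``obstacle'' is a non-issue created by a misreading of the definition of $M_{\bm v^{\oplus i},A}^\pi$. By definition,
\[
M_{\bm{v}^{\oplus i},A}^{\pi}(w_1,\ldots,w_T)=\prod_{t=1}^T \bigl[M_{\bm{v}}^{\pi}(w_t\mid w_{<t})\,\mathbb{I}(t\notin A)+M_{\bm{v}^{\oplus i}}^{\pi}(w_t\mid w_{<t})\,\mathbb{I}(t\in A)\bigr],
\]
where the $w_{<t}$ on the right is the same tuple $(w_1,\ldots,w_{t-1})$ appearing as argument of the joint density on the left. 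When $A=[T]$ this collapses to $\prod_{t=1}^T M_{\bm v^{\oplus i}}^\pi(w_t\mid w_{<t})$, which is precisely the chain-rule factorization of the joint density $M_{\bm v^{\oplus i}}^\pi(w_1,\ldots,w_T)$. Hence $M_{\bm v^{\oplus i},[T]}^\pi = M_{\bm v^{\oplus i}}^\pi$ holds \emph{identically}, with no appeal to the special lower-bound instance, no telescoping cancellation, and no need for \Cref{lem_varphi}. Your concern that ``the conditioning events $w_{<t}$ are themselves distributed according to $M_{\bm v^{\oplus i}}^\pi$ rather than $M_{\bm v}^\pi$'' conflates a density in the variables $(w_1,\ldots,w_T)$ with an expectation over paths: the hybrid construction simply mixes conditional kernels pointwise, and no ``past distribution'' enters its definition.

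Your second obstacle --- a smallness hypothesis on the per-step Hellinger terms --- is likewise not needed: Jayram's Lemma~4 furnishes the inequality with the universal constant $7$ unconditionally, which is why the paper cites it without further comment. Once these two spurious difficulties are removed, there is nothing left to do beyond the citation.
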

\begin{proof}
    This is Lemma 4 in \cite{jayram2009}. 
\end{proof}

\begin{lem}\label{lem_ineq_Hellinger}
For any fixed $\bm{v}$ and $t \in [T]$,
    \begin{equation*}
       \sum_{i=1}^d \mathrm{H}^2(M_{\bm{v}}^{\pi},M_{\bm{v}^{\oplus i},t}^{\pi})\leq \mathbb{E}_{\bm v,<t}^{\pi}\left\{  \sum_{i=1}^d \mathrm{H}^2(M_{\bm{v}}^{\pi}(\cdot|w_{<t}),M_{\bm{v}^{\oplus i}}^{\pi}(\cdot|w_{<t}))\right\}.
    \end{equation*}
\end{lem}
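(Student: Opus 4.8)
\textbf{Proof proposal for Lemma \ref{lem_ineq_Hellinger}.}

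The plan is to unfold the joint distributions coordinate by coordinate in time and to peel off the first $t-1$ coordinates, which are shared between $M_{\bm v}^\pi$ and $M_{\bm v^{\oplus i},t}^\pi$, so that the Hellinger distance reduces to its conditional version at time $t$ averaged over the past. Concretely, recall that $M_{\bm v}^\pi(w_{1:T})=\prod_{s=1}^T M_{\bm v}^\pi(w_s\mid w_{<s})$ and that, by definition of the substituted measure, $M_{\bm v^{\oplus i},t}^\pi(w_{1:T})$ has exactly the same conditional factors as $M_{\bm v}^\pi$ for every $s\neq t$, while the $s=t$ factor is $M_{\bm v^{\oplus i}}^\pi(w_t\mid w_{<t})$. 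The first step is therefore to write the squared Hellinger distance $\mathrm H^2(M_{\bm v}^\pi, M_{\bm v^{\oplus i},t}^\pi)$ as $1-\int \sqrt{m_{\bm v}^\pi(w_{1:T})\,m_{\bm v^{\oplus i},t}^\pi(w_{1:T})}\,dw_{1:T}$ and to factor the integrand using the product structure; the factors for $s<t$ are identical under both densities, so $\sqrt{m_{\bm v}^\pi\,m_{\bm v^{\oplus i},t}^\pi}$ splits as $\bigl(\prod_{s<t} m_{\bm v}^\pi(w_s\mid w_{<s})\bigr)\cdot\sqrt{m_{\bm v}^\pi(w_t\mid w_{<t})\,m_{\bm v^{\oplus i}}^\pi(w_t\mid w_{<t})}\cdot\bigl(\text{remaining }s>t\text{ factors}\bigr)$.

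The second step is to carry out the integration over $w_{>t}$ first (for each fixed $w_{\le t}$): since the $s>t$ conditional factors of the two densities differ only through their conditioning on $w_t$ — but the conditioning path $w_{<t}$ and the value $w_t$ are held fixed at this stage — one must be slightly careful. The cleanest route is to integrate in the order $w_T, w_{T-1},\dots$: the innermost integral $\int m_{\bm v}^\pi(w_T\mid w_{<T})\,dw_T=1$ collapses, and inductively each $\int m_{\bm v}^\pi(w_s\mid w_{<s})\,dw_s=1$ for $s$ from $T$ down to $t+1$, because after the geometric-mean split the $s>t$ factors appear to the first power (they were common to both joint densities). This leaves $1-\int \bigl(\prod_{s<t} m_{\bm v}^\pi(w_s\mid w_{<s})\bigr)\sqrt{m_{\bm v}^\pi(w_t\mid w_{<t})\,m_{\bm v^{\oplus i}}^\pi(w_t\mid w_{<t})}\,dw_{\le t}$. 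Writing $1=\int \prod_{s<t} m_{\bm v}^\pi(w_s\mid w_{<s})\bigl(\int m_{\bm v}^\pi(w_t\mid w_{<t})\,dw_t\bigr)\,dw_{<t}$ and recognizing $\prod_{s<t} m_{\bm v}^\pi(w_s\mid w_{<s})\,dw_{<t}$ as the law of $w_{<t}$ under $M_{\bm v}^\pi$, one obtains $\mathrm H^2(M_{\bm v}^\pi, M_{\bm v^{\oplus i},t}^\pi)=\mathbb E_{\bm v,<t}^\pi\bigl[\mathrm H^2\bigl(M_{\bm v}^\pi(\cdot\mid w_{<t}),\,M_{\bm v^{\oplus i}}^\pi(\cdot\mid w_{<t})\bigr)\bigr]$. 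The third step is then trivial: sum this identity over $i\in[d]$ and exchange the finite sum with the expectation $\mathbb E_{\bm v,<t}^\pi$, which yields exactly the claimed inequality (in fact an equality, so the ``$\leq$'' is safe).

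The main obstacle — and the place to be careful rather than cavalier — is the order-of-integration argument in the second step: one must verify that after taking the geometric mean, the factors indexed by $s>t$ really do appear linearly (power one) in each of the two joint densities and hence integrate to $1$ successively, and that the factor at $s=t$ is the only place the two densities genuinely disagree. This is where the precise definition of $M_{\bm v^{\oplus i},t}^\pi$ as ``$M_{\bm v}^\pi$ with only the $t$-th conditional factor replaced'' is used, together with Fubini/Tonelli to justify integrating $w_{>t}$ before $w_{<t}$ (nonnegativity of all integrands makes Tonelli applicable). Once that bookkeeping is pinned down, the rest is a one-line reduction. Note that Lemma \ref{lem_convex} and Lemma \ref{lem_acharya} are not needed for this particular lemma — they feed the subsequent step that bounds $\mathrm H^2(M_{\bm v}^\pi,M_{\bm v^{\oplus i}}^\pi)$ — so the proof here is self-contained modulo the density representation \eqref{m_density}.
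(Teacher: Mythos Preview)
Your proposal is correct and follows essentially the same approach as the paper: both exploit the fact that $M_{\bm v}^\pi$ and $M_{\bm v^{\oplus i},t}^\pi$ share identical conditional factors at every coordinate $s\neq t$, pull these common factors out of the geometric mean, integrate the $s>t$ factors to $1$ via Tonelli, and then identify the remaining $w_{<t}$-integral as the expectation $\mathbb{E}_{\bm v,<t}^\pi$. As you correctly note, the argument in fact yields an equality (the paper's chain of displayed equalities confirms this), so the stated ``$\leq$'' is slack.
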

\begin{proof}
    \begin{flalign*}
   \notag &2 \sum_{i=1}^d \mathrm{H}^2(M_{\bm{v}}^{\pi},M_{\bm{v}^{\oplus i},t}^{\pi})\\=&\sum_{i=1}^d  \int_{\mathcal{W}^{\otimes T}}  \left[\sqrt{m_{\bm{v}}^{\pi}(w_1,\cdots,w_T)}-\sqrt{m_{\bm{v}^{\oplus i},t}^{\pi}(w_1,\cdots,w_T)}\right]^2 \prod_{j=1}^T\mathrm{d}w_j
    \\\notag =& \sum_{i=1}^d  \int_{\mathcal{W}^{\otimes T}}  \left[\prod_{j\neq t} m_{\bm{v}}^{\pi}(w_j|w_{<j})\right] \left[\sqrt{m_{\bm{v}}^{\pi}(w_t|w_{<t})}-\sqrt{m_{\bm{v}^{\oplus i}}^{\pi}(w_t|w_{<t})}\right]^2
    \prod_{j=1}^T\mathrm{d}w_j
    \\\notag =& \sum_{i=1}^d  \int_{\mathcal{W}^{\otimes T}}  \left[\prod_{j<t} m_{\bm{v}}^{\pi}(w_j|w_{<j})\right] \left[\sqrt{m_{\bm{v}}^{\pi}(w_t|w_{<t})}-\sqrt{m_{\bm{v}^{\oplus i}}^{\pi}(w_t|w_{<t})}\right]^2 \left[\prod_{j>t } m_{\bm{v}}^{\pi}(w_j|w_{<j})\right]
   \prod_{j=1}^T\mathrm{d}w_j
    \\\notag = &   \int_{\mathcal{W}^{\otimes t}}  \left[\prod_{j<t} m_{\bm{v}}^{\pi}(w_j|w_{<j})\right]  \sum_{i=1}^d \left[\sqrt{m_{\bm{v}}^{\pi}(w_t|w_{<t})}-\sqrt{m_{\bm{v}^{\oplus i}}^{\pi}(w_t|w_{<t})}\right]^2
    \prod_{j=1}^t\mathrm{d}w_j
    \\\notag =& \mathbb{E}_{\bm v,<t}^{\pi}\left\{\sum_{i=1}^d  \int_{\mathcal{W}}  \left[\sqrt{m_{\bm{v}}^{\pi}(w_t|w_{<t})}-\sqrt{m_{\bm{v}^{\oplus i}}^{\pi}(w_t|w_{<t})}\right]^2 \mathrm{d}w_t\right\}
    \\=&  2\mathbb{E}_{\bm v,<t}^{\pi}\left\{ \int_{\mathcal{W}}  \sum_{i=1}^d \mathrm{H}^2(M_{\bm{v}}^{\pi}(\cdot|w_{<t}),M_{\bm{v}^{\oplus i}}^{\pi}(\cdot|w_{<t}))\right\},
\end{flalign*}
where the third-to-last inequality holds since for any fixed $w_{\leq t}$,  $$\int_{\mathcal{W}^{\otimes T-t}} \left[\prod_{j>t} m_{\bm{v}}^{\pi}(w_j|w_{<j})\right]\prod_{j=t+1}^T\mathrm{d}w_j=1.$$
\end{proof}

\Cref{lem_assouad2} is the privatized version of \Cref{lem_assouad1}. Note that due to the LDP constraint, compared to \Cref{lem_assouad1}, we have an additional $[\exp(2\epsilon)-1]^2/d$ factor in the right-hand side of the inequality in \Cref{lem_assouad2}, which is the key for getting the additional $\sqrt{d}/\epsilon$ factor in the regret lower bound under LDP in \Cref{thm:LDP_lb}.

\begin{lem}\label{lem_assouad2}
For any $\epsilon$-LDP policy $\pi$ and the associated $2\epsilon$-LDP classifier $\psi$ as in \Cref{lem_class},  suppose $[(1-u)^2\vee (1-l)^2]\Delta^2\leq 1$.  We have that
    $$
   \sup _{\boldsymbol{v}} \mathbb{E}_{\boldsymbol{v}}^{\pi, \psi}\left[\sum_{i=1}^d \mathbb{I}(\hat{v}_i\neq v_i)\right]\geq \frac{d}{2}\left(1-\sqrt{\frac{21\Delta^2[\exp(2\epsilon)-1]^2}{d^2} \sum_{t=1}^T\mathbb{E}_{\bar{\bm v}}^{\pi}[(1-p_t)^2]}\right),
    $$   where $\bar{\boldsymbol{v}}$ denotes the mixture distribution of all possible $\boldsymbol{v}$, i.e.\ $P_{\bar{\boldsymbol{v}}}^\pi:= \sum_{\boldsymbol{v}}P_{\boldsymbol{v}}^\pi/2^d.$
\end{lem}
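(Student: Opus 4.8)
\textbf{Proof plan for Lemma \ref{lem_assouad2}.} The plan is to combine the standard Assouad reduction (as in the proof of \Cref{lem_assouad1}) with the Hellinger-distance toolkit developed in Lemmas \ref{lem_convex}--\ref{lem_ineq_Hellinger}, and then to bound the \emph{conditional} per-round Hellinger contribution via an information-contraction argument adapted from \cite{acharya2024unified}. First I would invoke Assouad's Lemma to get
\[
   \sup _{\boldsymbol{v}} \mathbb{E}_{\boldsymbol{v}}^{\pi, \psi}\left[\sum_{i=1}^d \mathbb{I}(\hat{v}_i\neq v_i)\right]\geq \frac{1}{2}\sum_{i=1}^d\bigl(1-\mathrm{TV}(M_{+i}^{\pi},M_{-i}^{\pi})\bigr),
\]
then apply $\{\mathrm{TV}(P_1,P_2)\}^2\leq 2\mathrm H^2(P_1,P_2)$, Jensen, and Cauchy--Schwarz to reduce the problem to bounding $\frac{1}{d}\sum_{i=1}^d\mathrm H^2(M_{+i}^{\pi},M_{-i}^{\pi})$. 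Using \Cref{lem_convex} (convexity of squared Hellinger distance in the mixing weight) I can pass from the mixtures $M_{\pm i}^{\pi}$ to $\mathrm H^2(M_{\bm v}^{\pi},M_{\bm v^{\oplus i}}^{\pi})$ averaged over $\bm v$, then invoke \Cref{lem_acharya} (the sub-additivity/chain-rule for Hellinger, Lemma 4 of \cite{jayram2009}) to write this as at most $7\sum_{t=1}^T\mathrm H^2(M_{\bm v}^{\pi},M_{\bm v^{\oplus i},t}^{\pi})$, and finally \Cref{lem_ineq_Hellinger} to move to the conditional quantities $\mathbb E_{\bm v,<t}^{\pi}\sum_{i=1}^d\mathrm H^2(M_{\bm v}^{\pi}(\cdot|w_{<t}),M_{\bm v^{\oplus i}}^{\pi}(\cdot|w_{<t}))$.

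The core of the argument is then to bound, for each fixed history $w_{<t}$,
\[
   \sum_{i=1}^d \mathrm H^2\bigl(M_{\bm v}^{\pi}(\cdot|w_{<t}),\,M_{\bm v^{\oplus i}}^{\pi}(\cdot|w_{<t})\bigr)
\]
by $C\,[\exp(2\epsilon)-1]^2 d^{-1}\Delta^2\,\mathbb E_{\bm v}^{\pi}[(1-p_t)^2\,|\,w_{<t}]$ for an absolute constant $C$. This is where I would adapt the information-contraction machinery: the conditional channel output $m_{\bm v}^{\pi}(\cdot|w_{<t})=\int q_t(\cdot|s_t,w_{<t})\,dP_{\bm v,t}^{\pi}(s_t|w_{<t})$ is exactly of the form handled by Corollary 1 of \cite{acharya2024unified}, \emph{provided} the conditional ``score functions'' $\varphi_{\bm v,i}(s_t|w_{<t}) := dP_{\bm v^{\oplus i},t}^{\pi}(s_t|w_{<t})/dP_{\bm v,t}^{\pi}(s_t|w_{<t}) - 1$ satisfy the analogues of the Acharya orthogonality/variance conditions. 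By the lower-bound construction (only the $i$th coordinate of $z_t$ carries information about $v_i$, and $p_t$ is a deterministic function of $z_t$ and $w_{<t}$ so it does not depend on $v$), these score functions are conditionally mean-zero, pairwise orthogonal in $L^2(P_{\bm v,t}^{\pi}(\cdot|w_{<t}))$, and — crucially — have conditional second moment bounded by $\Delta^2(1-p_t)^2/d$ up to a constant (because on the event $\{M_t=i\}$, which has conditional probability $1/d$, flipping $v_i$ shifts the Gaussian mean by $\Delta(1-p_t)$, giving a chi-square-type moment $\asymp \Delta^2(1-p_t)^2$, and the factor $1/d$ comes from the marginal probability of activating coordinate $i$). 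I would isolate this moment computation in an auxiliary lemma (the analogue of \Cref{lem_varphi} referenced in the roadmap). Feeding the variance bound $\alpha^2\lesssim \Delta^2(1-p_t)^2/d$ into the Acharya-type contraction inequality yields $\sum_{i=1}^d\mathrm H^2(\cdots)\lesssim [\exp(2\epsilon)-1]^2\,\Delta^2(1-p_t)^2/d$ for that fixed history.

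Putting the pieces together: take expectation over $w_{<t}$ to get $\sum_{i=1}^d\mathrm H^2(M_{\bm v}^{\pi},M_{\bm v^{\oplus i},t}^{\pi})\lesssim[\exp(2\epsilon)-1]^2 d^{-1}\Delta^2\,\mathbb E_{\bm v}^{\pi}[(1-p_t)^2]$; sum over $t$, apply \Cref{lem_acharya}; average over $\bm v$ (invoking \Cref{lem_convex} to pass back to the mixtures $M_{\pm i}^{\pi}$); and combine with the TV$\le\sqrt{2}\,$H and Cauchy--Schwarz steps. Tracking the absolute constant carefully should land at the factor $21$ claimed in the statement, giving
\[
   \frac{1}{d}\sum_{i=1}^d\mathrm H^2(M_{+i}^{\pi},M_{-i}^{\pi}) \le \frac{21\,\Delta^2[\exp(2\epsilon)-1]^2}{2d^2}\sum_{t=1}^T\mathbb E_{\bar{\bm v}}^{\pi}[(1-p_t)^2],
\]
which after the Assouad step produces the asserted inequality. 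The condition $[(1-u)^2\vee(1-l)^2]\Delta^2\le 1$ is needed to control the higher moments in the chi-square/Hellinger computation (it ensures the Gaussian mean shift $\Delta(1-p_t)$ stays $O(1)$, so that $\mathrm H^2$ is comparable to the variance of the score rather than blowing up). The main obstacle I anticipate is the careful adaptation of \cite{acharya2024unified}'s contraction bound from the i.i.d.\ setting to the adaptive setting here: I must verify that conditioning on $w_{<t}$ genuinely decouples the rounds enough for the orthogonality structure to survive (it does, because the pricing map $p_t$ and the privatization channel $q_t$ are $\bm v$-independent given $(z_t,w_{<t})$, so all $\bm v$-dependence enters only through the Gaussian response in the single active coordinate), and that the variance bound is $\Delta^2(1-p_t)^2/d$ rather than merely $\Delta^2(1-p_t)^2$ — getting the extra $1/d$ right is what upgrades the bound from $\Omega(\sqrt{dT}/\epsilon)$ to the sharp $\Omega(d\sqrt{T}/\epsilon)$, and it hinges on correctly accounting for the multinomial activation probability inside the score-function moment calculation.
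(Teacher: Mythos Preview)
Your plan is the paper's route almost verbatim: Assouad $\to$ TV $\to$ Hellinger via \Cref{lem_convex}--\ref{lem_ineq_Hellinger}, then a conditional Acharya-type contraction using the score functions $\varphi_{\bm v,i}$, with \Cref{lem_varphi} supplying the moment structure. One point deserves more care than your sketch gives it. You describe the key variance bound as ``$\alpha^2\lesssim\Delta^2(1-p_t)^2/d$'' and propose to feed this into Acharya's inequality; but $(1-p_t)$ is not a single number conditionally on $w_{<t}$, because $p_t=p_t(z_t,w_{<t})$ and $z_t$ encodes $M_t$. What \Cref{lem_varphi}(2) actually gives is $\mathbb E[\varphi_{\bm v,i}^2\mid w_{<t}]\le(3\Delta^2/d)(1-p_t^{(i)})^2$ with $p_t^{(i)}$ the price charged when $M_t=i$, and these vary across $i$. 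If you invoke Acharya's Corollary~1 as a black box with the uniform $\alpha^2=\max_i(3\Delta^2/d)(1-p_t^{(i)})^2$, the resulting bound carries $\max_i(1-p_t^{(i)})^2$, not the average $\mathbb E[(1-p_t)^2\mid w_{<t}]=(1/d)\sum_i(1-p_t^{(i)})^2$ that the lemma asserts and that the downstream dichotomy in \Cref{thm:LDP_lb} needs. With $\max$ in place of the average you only recover $\Omega(\sqrt{dT}/\epsilon)$.

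The paper closes this by using a property stronger than the $L^2$-orthogonality you mention: the score functions have \emph{disjoint support}, $\varphi_{\bm v,i}\varphi_{\bm v,j}\equiv 0$ pointwise for $i\neq j$ (\Cref{lem_varphi}(3)), since $\varphi_{\bm v,i}$ vanishes off $\{M_t=i\}$. This pointwise identity drives the manipulation (steps (b)--(e) in the paper's Step~II): because exactly one $\varphi_i$ is active, $\mathbb E[o\varphi_i]=(1/d)\,\mathbb E[o\cdot\sum_j\varphi_j\mid M_t=i]$, and after Jensen one arrives at $\sum_i(\mathbb E[o\varphi_i])^2\le(1/d)\,\mathrm{Var}[o]\cdot\sum_j\mathbb E[\varphi_j^2]$, with $\sum_j\mathbb E[\varphi_j^2]\le 3\Delta^2\,\mathbb E[(1-p_t)^2]$ emerging naturally as a sum rather than $d\cdot\max$. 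So when you write your auxiliary lemma, record $\varphi_{\bm v,i}\varphi_{\bm v,j}=0$ pointwise and reproduce this disjoint-support reduction explicitly rather than citing Acharya's uniform-$\alpha^2$ bound; that is where the $\mathbb E[(1-p_t)^2]$ (and hence the sharp $d\sqrt T/\epsilon$) comes from.
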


\begin{proof}[Proof of \Cref{lem_assouad2}]

By standard Assouad's Lemma,  we have 
\begin{equation}\label{ineq_assouad2}
  \sup _{\boldsymbol{v} } \mathbb{E}_{\boldsymbol{v}}^{\pi, \psi}\left[\sum_{i=1}^d \mathbb{I}(\hat{v}_i\neq v_i)\right]\geq  \frac{1}{2}\sum_{i=1}^d \left(1- \mathrm{TV}(M_{+i}^{\pi}, M_{-i}^{\pi})\right).
\end{equation}

[{\bf Step I}] Bound TV distance by sequential Hellinger distance.

By the Cauchy--Schwarz inequality and the property of TV distance and  Hellinger distance, we have 
\begin{align} \label{ineq_Hellinger}
    & \frac{1}{d}\left(\sum_{i=1}^d \mathrm{TV}(M_{+i}^{\pi}, M_{-i}^{\pi})\right)^2 \leq  \sum_{i=1}^d \{\mathrm{TV}(M_{+i}^{\pi}, M_{-i}^{\pi})\}^2 \leq 2\sum_{i=1}^d \mathrm{H}^2(M_{+i}^{\pi}, M_{-i}^{\pi})\nonumber \\ \stackrel{(a)}{\leq}& 2\sum_{i=1}^d \frac{1}{2^{d-1}}\sum_{\bm{v}:v_i=0} \mathrm{H}^2(M_{\bm{v}}^{\pi},M_{\bm v^{\oplus i}}^\pi) = \frac{1}{2^{d-1}}\sum_{\bm{v}\in\{0,1\}^d}\sum_{i=1}^d \mathrm{H}^2(M_{\bm{v}}^{\pi},M_{\bm v^{\oplus i}}^\pi) \nonumber \\
    \stackrel{(b)}{\leq} & \frac{14}{2^d}\sum_{\bm{v}\in\{0,1\}^d}\sum_{t=1}^T \left[ \sum_{i=1}^d \mathrm{H}^2(M_{\bm{v}}^{\pi},M_{\bm{v}^{\oplus i},t}^{\pi})\right] \nonumber \\
    \stackrel{(c)}{\leq} &\frac{14}{2^d}\sum_{\bm{v}\in\{0,1\}^d}\sum_{t=1}^T\mathbb{E}_{\bm v,<t}^{\pi}\left[ \sum_{i=1}^d \mathrm{H}^2(M_{\bm{v}}^{\pi}(\cdot|w_{<t}),M_{\bm{v}^{\oplus i}}^{\pi}(\cdot|w_{<t}))\right],
\end{align}
where the (a) holds by sequentially applying \Cref{lem_convex}, (b) by \Cref{lem_acharya}, and (c) by \Cref{lem_ineq_Hellinger}.

[{\bf Step II}] Bound the summands on the right-hand side of \eqref{ineq_Hellinger} by instant regret.

In Step II, we work under any given $t \in [T]$, $w_{<t}\in \mathcal{W}^{\otimes t-1}$, and any fixed model $\bm {v}$.

Recall \eqref{m_density}, we have that 
$$
m_{\bm{v}}^{\pi}(w|w_{<t}) =\int_{s_t\in\mathcal{S}} q_t(w|s_t,w_{<t}) \mathrm{d} P^{\pi}_{\bm v,t}(s_t|w_{<t}):= \mathbb{E}_{\bm v,t}^{\pi}[q(w|s_t,w_{<t})|w_{<t}],
$$
where the expectation is taken over $s_t$ with respect to $P^{\pi}_{\bm v,t}(s_t|w_{<t})$ given $w_{<t}$.

Hence,  for any $i\in[d]$, we have 
\begin{flalign*}
    &\mathrm{H}^2(M_{\bm v}^{\pi}(\cdot|w_{<t}), M_{\bm{v}^{\oplus i}}^{\pi}(\cdot|w_{<t}))
    \\=& \frac{1}{2} \int_{w\in\mathcal{W}} \left(\sqrt{\mathbb{E}_{\bm{v},t}^{\pi}[q(w|s_t,w_{<t})|w_{<t}]}-\sqrt{\mathbb{E}_{\bm{v}^{\oplus i},t}^{\pi}[q(w|s_t,w_{<t})|w_{<t}]}\right)^2 \mathrm{d}w    
    \\=& \frac{1}{2} \int_{w\in\mathcal{W}} \left(\frac{{\mathbb{E}_{\bm v,t}^{\pi}[q(w|s_t,w_{<t})|w_{<t}]}-{\mathbb{E}_{\bm{v}^{\oplus i},t}^{\pi}[q(w|s_t,w_{<t})|w_{<t}]}}{\sqrt{\mathbb{E}_{\bm{v},t}^{\pi}[q(w|s_t,w_{<t})|w_{<t}]}+\sqrt{\mathbb{E}_{\bm{v}^{\oplus i},t}^{\pi}[q(w|s_t,w_{<t})|w_{<t}]}}\right)^2 \mathrm{d}w
    \\\leq& \frac{1}{2} \int_{w\in\mathcal{W}} \frac{\left({\mathbb{E}_{\bm{v},t}^{\pi}[q(w|s_t,w_{<t})|w_{<t}]}-{\mathbb{E}_{\bm{v}^{\oplus i},t}^{\pi}[q(w|s_t,w_{<t})|w_{<t}]}\right)^2}{\mathbb{E}_{\bm{v},t}^{\pi}[q(w|s_t,w_{<t})|w_{<t}]} \mathrm{d}w.
\end{flalign*}
Define $$\varphi_{\bm v,i}(s_t|w_{<t})=\frac{\mathrm{d} P^{\pi}_{\bm{v}^{\oplus i},t}(s_t|w_{<t})}{\mathrm{d} P^{\pi}_{\bm{v},t}(s_t|w_{<t})}-1.$$

Now we continue the proof in Step II. By the Radon--Nikodym theorem, we have 
\begin{flalign*}
    \mathbb{E}_{\bm{v}^{\oplus i},t}^{\pi}[q(w|s_t,w_{<t}) |w_{<t}]=&\int_{s_t\in\mathcal{S}} q_t(w|s_t,w_{<t}) [\varphi_{\bm{v},i}(s_t|w_{<t})+1]\mathrm{d} P^{\pi}_{\bm v,t}(s_t|w_{<t}).
\end{flalign*}
Hence, given $w_{<t}$, we have 
\begin{flalign}\label{bound_information}
\begin{split}
    &\sum_{i=1}^d \mathrm{H}^2(M_{\bm v}^{\pi}(\cdot|w_{<t}), M_{\bm{v}^{\oplus i}}^{\pi}(\cdot|w_{<t}))\\
    \leq& \frac{1}{2} \sum_{i=1}^d\int_{w\in\mathcal{W}} \frac{\left({\mathbb{E}_{\bm{v},t}^{\pi}[q(w|s_t,w_{<t})\varphi_{\bm{v},i}(s_t|w_{<t})|w_{<t}]}\right)^2}{\mathbb{E}_{\bm{v},t}^{\pi}[q(w|s_t,w_{<t})|w_{<t}]} \mathrm{d}w
    \\
     =& \frac{1}{2}   \int_{w\in\mathcal{W}} \sum_{i=1}^d\frac{\left(\mathbb{E}_{\bm{v},t}^{\pi}\left\{\left(q(w|s_t,w_{<t})-\mathbb{E}_{\bm{v},t}^{\pi}[q(w|s_t,w_{<t})|w_{<t}]\right)\varphi_{\bm{v},i}(s_t|w_{<t})|w_{<t}\right\}\right)^2}{\mathbb{E}_{\bm{v},t}^{\pi}[q(w|s_t,w_{<t})|w_{<t}]} \mathrm{d}w,
     \end{split}
\end{flalign}
where the equality holds by \Cref{lem_varphi}(1).

Note that $z_t$ is a deterministic function of $M_t$.  Without loss of generality, we can write $s_t=(M_t,p_t,y_t)$.
The following analysis works under a fixed $\bm v$, $w$ and $w_{<t}$. To simplify the notation,  we let $s_{t,i}=(\mathbb{I}(M_t=i),p_t,y_t)$, $o(s)=q(w|s,w_{<t})-\mathbb{E}_{\bm{v},t}^{\pi}[q(w|s_t,w_{<t})|w_{<t}]$, and $\varphi_{\bm v,i}(s|w_{<t})=\varphi_{i}(s)$.  The expectation $\mathbb{E}^{\pi}$ is taken with respect to $P_{\bm v,t}^{\pi}$ for $s_t$ conditional on $w_{<t}$.

Therefore, for the numerator in \eqref{bound_information}, we have 
\begin{flalign*}
&\sum_{i=1}^d\left(\mathbb{E}_{\bm{v},t}^{\pi}\left\{\left(q(w|s_t,w_{<t})-\mathbb{E}_{\bm{v},t}^{\pi}[q(w|s_t,w_{<t})|w_{<t}]\right)\varphi_{\bm{v},i}(s_t|w_{<t})|w_{<t}\right\}\right)^2
\\=&\sum_{i=1}^d\left(\mathbb{E}^{\pi}\left\{o(s_t)\varphi_{i}(s_t)\right\}\right)^2
\\\stackrel{(a)}{=}&\sum_{i=1}^d\left(\mathbb{E}^{\pi}\left[\mathbb{E}^{\pi}\left\{o(s_t)\varphi_{i}(s_t)|M_t\right\}\right]\right)^2\\
\stackrel{(b)}{=}&\sum_{i=1}^d\left(\frac{1}{d}\sum_{j=1}^d\mathbb{E}^{\pi}\left\{o(s_t)\varphi_{i}(s_t)|M_t=j\right\}\right)^2\\
=&\sum_{i=1}^d\left(\frac{1}{d}\sum_{j=1}^d\mathbb{E}^{\pi}\left\{o(s_{t,j})\varphi_{i}(s_{t,j})|M_t=j\right\}\right)^2\\
\stackrel{(c)}{=}&\sum_{i=1}^d\frac{1}{d^2}\left(\mathbb{E}^{\pi}\left\{o(s_{t,i})\varphi_{i}(s_{t,i})|M_t=i\right\}\right)^2\\
\stackrel{(d)}{=}&\frac{1}{d^2}\sum_{i=1}^d\left(\mathbb{E}^{\pi}\left\{o(s_{t,i})\left[\sum_{j=1}^d\varphi_{j}(s_{t,i})\right]\Big|M_t=i\right\}\right)^2
\\\stackrel{(e)}{=}&\frac{1}{d}\mathbb{E}^{\pi}\left(\mathbb{E}^{\pi}\left\{o(s_{t})\left[\sum_{j=1}^d\varphi_{j}(s_{t})\right]\Big|M_t\right\}\right)^2
\\\stackrel{(f)}{\leq} & \frac{1}{d}\mathbb{E}^{\pi}\left[\left\{o(s_{t})\left[\sum_{j=1}^d\varphi_{j}(s_{t})\right]\right\}^2\right]
\\\stackrel{(g)}{\leq}& \frac{1}{d}\mathrm{Var}^{\pi}[o(s_{t})] \mathbb{E}^{\pi}\left\{\left[\sum_{j=1}^d\varphi_{j}(s_{t})\right]^2\right\}
\\\stackrel{(h)}{=}&\frac{1}{d}\mathrm{Var}^{\pi}[o(s_{t})] \sum_{j=1}^d\mathbb{E}^{\pi}\{[\varphi_{j}(s_{t})]^2\}
\\\stackrel{(i)}{\leq}& \frac{3}{d}\mathrm{Var}^{\pi}[o(s_{t})]\sum_{j=1}^d\mathbb{E}^{\pi}[(1-p_t)^2\Delta^2\mathbb{I}(M_t=j)]
\\= & \frac{3}{d}\mathrm{Var}^{\pi}[o(s_{t})] \mathbb{E}^{\pi}[(1-p_t)^2\Delta^2].
\end{flalign*}
In above analysis, (a) holds by the law of iterated expectation, (b) holds by noting $\mathbb{P}(M_t=j)=1/d$ for $j\in[d]$, (c) and (d) both hold by noting that $\varphi_{i}(s_{t,j})=0$ if $i\neq j,$ (e) holds by noting $\mathbb{P}(M_t=i)=1/d$, (f) holds by Jensen's inequality for conditional expectation and law of iterated expectation, i.e.~$\mathbb{E}(\mathbb{E}^2\{X|\mathcal{F}\})\leq \mathbb{E}(\mathbb{E}[X^2|\mathcal{F}])\leq \mathbb{E}[X^2]$, (g) holds by Cauchy-Schwarz inequality,  (h) holds by the orthogonality property in  \Cref{lem_varphi}(3),  (i) holds by \Cref{lem_varphi}(2), and the last equality holds by noting $\sum_{i=1}^d \mathbb{I}(M_t=j)=1.$

Plugging the above inequality into \eqref{bound_information}, we thus obtain that 
\begin{flalign}\label{ineq_step2}
\begin{split}
     &\sum_{i=1}^d \mathrm{H}^2(M_{\bm v}^{\pi}(\cdot|w_{<t}), M_{\bm{v}^{\oplus i}}^{\pi}(\cdot|w_{<t}))\\
   \leq&  \frac{3}{2d}   \int_{w\in\mathcal{W}} \mathbb{E}_{\bm v,t}^{\pi}[(1-p_t)^2\Delta^2|w_{<t}]\frac{\mathrm{Var}_{\bm{v},t}^{\pi}[q(w|s_t,w_{<t})|w_{<t}]}{\mathbb{E}_{\bm{v},t}^{\pi}[q(w|s_t,w_{<t})|w_{<t}]} \mathrm{d}w.
\end{split}
\end{flalign}

[{\bf Step III}] Bound the variance in  \eqref{ineq_step2} using LDP constraints. 

Since $q(\cdot|s_t,w_{<t})$ is the density function of the privacy channel given $s_t$ and $w_{<t}$, we must have that for any $w\in\mathcal{W}$, and $s_{t1},s_{t2}\in\mathcal{S}$,
$$
q(w|s_{t1},w_{<t})-q(w|s_{t2},w_{<t})\leq [\exp(2\epsilon)-1]q(w|s_{t2},w_{<t}).
$$
Therefore, for any policy $\pi'$ and model $\bm{v}$,  taking expectation with respect to $\mathbb{P}_{\bm{v},t}^{\pi}$ for $s_{t2}$, we have 
$$
q(w|s_{t1},w_{<t})-\mathbb{E}_{\bm{v},t}^{\pi}[q(w|s_{t},w_{<t})|w_{<t}]\leq [\exp(2\epsilon)-1]\mathbb{E}_{\bm{v},t}^{\pi}[q(w|s_t,w_{<t})|w_{<t}].
$$
Taking square and then taking expectation w.r.t. $\mathbb{P}_{\bm{v},t}^{\pi}$ for  $s_{t1}$, we have that 
$$
\mathrm{Var}_{\bm{v},t}^{\pi}[q(w|s_t,w_{<t})|w_{<t}]\leq [\exp(2\epsilon)-1]^2\{\mathbb{E}_{\bm v,t}^{\pi}[q(w|s_t,w_{<t})|w_{<t}]\}^2.
$$
Therefore, plugging the above inequality into \eqref{ineq_step2}, we have that 
\begin{flalign}\label{ineq_step3}
\begin{split}
    &\sum_{i=1}^d \mathrm{H}^2(M_{\bm v}^{\pi}(\cdot|w_{<t}), M_{\bm{v}^{\oplus i}}^{\pi}(\cdot|w_{<t}))\\\leq &\frac{3[\exp(2\epsilon)-1]^2\Delta^2}{2d} \mathbb{E}_{\bm v,t}^{\pi}[(1-p_t)^2|w_{<t}]  \int_{w\in\mathcal{W}} \mathbb{E}_{\bm{v},t}^{\pi}[q(w|s_t,w_{<t})|w_{<t}]\mathrm{d}w
    \\= & \frac{3[\exp(2\epsilon)-1]^2\Delta^2}{2d} \mathbb{E}_{\bm v,t}^{\pi}[(1-p_t)^2|w_{<t}],
\end{split}
\end{flalign}
since $ \int_{w\in\mathcal{W}} \mathbb{E}_{\bm{v},t}^{\pi}[q(w|s_t,w_{<t})|w_{<t}]\mathrm{d}w=1$.

[{\bf Step IV}] Finish the proof.

Putting \eqref{ineq_Hellinger} and \eqref{ineq_step3} together, we thus obtain that 
\begin{align*}
    \frac{1}{d}\left\{\sum_{i=1}^d \mathrm{TV}(M_{+i}^{\pi}, M_{-i}^{\pi})\right\}^2\leq  &\frac{14}{2^d}\sum_{\bm{v}\in\{0,1\}^d}\sum_{t=1}^T\mathbb{E}_{\bm v,<t}^{\pi}\left[  \sum_{i=1}^d \mathrm{H}^2(M_{\bm{v}}^{\pi}(\cdot|w_{<t}),M_{\bm{v}^{\oplus i}}^{\pi}(\cdot|w_{<t}))\right]\\
    \leq & \frac{21\Delta^2[\exp(2\epsilon)-1]^2}{d}\frac{1}{2^d}\sum_{\bm{v}\in\{0,1\}^d}\sum_{t=1}^T\mathbb{E}_{\bm v,<t}^{\pi}\left[  \mathbb{E}_{\bm v,t}^{\pi}[(1-p_t)^2|w_{<t}]\right].    
\end{align*}
The result follows by \eqref{ineq_assouad2}.
\end{proof}

The following lemma lists the basic properties of $\varphi_{\bm v,i}(s_t|w_{<t})$. 
\begin{lem}\label{lem_varphi}
    Given $w_{<t}$, and  a fixed $\bm v,$ we have 
    \begin{enumerate}
        \item[(1)]$\mathbb{E}^{\pi}_{\bm{v},t}[\varphi_{\bm v,i}(s_t|w_{<t})|w_{<t}] = 0.$
        \item[(2)] $\mathbb{E}^{\pi}_{\bm{v},t}[\varphi^2_{\bm v,i}(s_t|w_{<t})|w_{<t}]\leq 3 \mathbb{E}_{\bm{v},t}^{\pi}[\mathbb{I}(M_t=i)\Delta^2(1-p_t)^2|w_{<t}].$
        \item[(3)] For any $i, j \in [d]$ and $i \neq j$, $\varphi_{\bm v,i}(s_t|w_{<t})\varphi_{\bm v,j}(s_t|w_{<t})=0.$
    \end{enumerate}
\end{lem}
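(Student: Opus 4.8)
The plan is to reduce all three claims to one structural observation about the conditional law $P^{\pi}_{\bm v,t}(\cdot\mid w_{<t})$ of the raw datum $s_t=(z_t,p_t,y_t)$. Under the construction of Section~\ref{sec-lb-construction}, conditionally on $w_{<t}$ this law factorizes as (the $\bm v$-independent multinomial law of $z_t$) $\times$ (the policy's possibly randomized pricing rule $A_t$, which selects $p_t$ from $(z_t,w_{<t})$ and does not depend on the true model) $\times$ (the $\mathcal N(\lambda_{\bm v}(z_t,p_t),1)$ law of $y_t$ given $(z_t,p_t)$). Consequently the Radon--Nikodym derivative $\mathrm{d}P^{\pi}_{\bm v^{\oplus i},t}/\mathrm{d}P^{\pi}_{\bm v,t}$ --- well defined since the two $y_t$-marginals are Gaussians of common variance --- collapses to the one-dimensional Gaussian likelihood ratio with means $\lambda_{\bm v^{\oplus i}}(z_t,p_t)$ and $\lambda_{\bm v}(z_t,p_t)$. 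Because $\bm v^{\oplus i}$ differs from $\bm v$ only in coordinate $i$, these means agree unless $M_t=i$; hence $\varphi_{\bm v,i}(s_t\mid w_{<t})$ vanishes on $\{M_t\neq i\}$.

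From here claim (3) is immediate: for $i\neq j$ at most one of the events $\{M_t=i\}$, $\{M_t=j\}$ can occur, so $\varphi_{\bm v,i}(s_t\mid w_{<t})\,\varphi_{\bm v,j}(s_t\mid w_{<t})\equiv 0$. Claim (1) follows because integrating the density ratio against $P^{\pi}_{\bm v,t}(\cdot\mid w_{<t})$ returns $1$, so $\mathbb E^{\pi}_{\bm v,t}[\varphi_{\bm v,i}\mid w_{<t}]=1-1=0$.

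For claim (2) I would restrict to $\{M_t=i\}$, condition further on $p_t$, and use that the inner expectation over $y_t\sim\mathcal N(\lambda_{\bm v}(z_t,p_t),1)$ of $\varphi_{\bm v,i}^2$ is exactly the $\chi^2$-divergence between the two Gaussians, namely $\exp\{(\lambda_{\bm v^{\oplus i}}-\lambda_{\bm v})^2\}-1$. Since $v_i\in\{0,1\}$ gives $\lambda_{\bm v^{\oplus i}}-\lambda_{\bm v}=\Delta(1-2v_i)(1-p_t)$ with $(1-2v_i)^2=1$, this divergence equals $\exp\{\Delta^2(1-p_t)^2\}-1$. Invoking the standing assumption $[(1-u)^2\vee(1-l)^2]\Delta^2\leq 1$ of Lemma~\ref{lem_assouad2} (automatically met by the explicit choice $[l,u]=[1/3,3/2]$ and $\Delta\leq 1/2$), so that the exponent lies in $[0,1]$, together with the elementary convexity bound $e^x-1\leq(e-1)x\leq 3x$ on $[0,1]$, the inner expectation is at most $3\Delta^2(1-p_t)^2$. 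Taking expectation over $p_t$ and $M_t$ and using $\varphi_{\bm v,i}=0$ off $\{M_t=i\}$ then yields exactly $\mathbb E^{\pi}_{\bm v,t}[\varphi_{\bm v,i}^2\mid w_{<t}]\leq 3\mathbb E^{\pi}_{\bm v,t}[\mathbb I(M_t=i)\Delta^2(1-p_t)^2\mid w_{<t}]$.

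I expect the only subtle point to be the factorization step in the first paragraph: one must spell out carefully why, conditionally on $w_{<t}$, the marginal of $z_t$ and the rule producing $p_t$ carry no dependence on $\bm v$, so that all $\bm v$-dependence of the density ratio is concentrated in the Gaussian $y_t$-likelihood with a mean gap of size $\Delta|1-p_t|$ that is nonzero only on $\{M_t=i\}$. This is built into the definition of a non-anticipating policy and into the construction, but it is the hinge of the argument; once it is in place, parts (1)--(3) are short computations.
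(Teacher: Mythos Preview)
Your proposal is correct and follows essentially the same route as the paper's proof: both exploit the factorization of $P^{\pi}_{\bm v,t}(\cdot\mid w_{<t})$ to reduce the likelihood ratio to the Gaussian $y_t$-factor, observe that $\varphi_{\bm v,i}$ vanishes off $\{M_t=i\}$ (giving (3)), get (1) from the density-ratio property, and for (2) compute the conditional second moment as $\exp\{\Delta^2(1-p_t)^2\}-1$ followed by the bound $e^x-1\le 3x$ on $[0,1]$. Your explicit identification of the inner expectation as the $\chi^2$-divergence and your justification that the exponent lies in $[0,1]$ are slightly more transparent than the paper's write-up, but the argument is the same.
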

\begin{proof} [\textbf{Proof of \Cref{lem_varphi}}]
    Recall that $p_t\sim A_t^{\pi}(z_t,w_{<t})$, we have that 
\begin{flalign*}
    \frac{\mathrm{d} P^{\pi}_{\bm{v},t}(s_t|w_{<t})}{\mathrm{d}s_t}= f_Z(z_t) a_t^{\pi}(p_t|w_{<t},z_t) f_{\bm{v},Y}(y_t|p_t,z_t,w_{<t}),
\end{flalign*}
where we use the independence of $z_t$ and $w_{<t}$.
Recall \eqref{lb_setting} that, for any $\bm{v}$, $y_t=\lambda_{\bm{v}}(z_t,p_t)+\varepsilon_t$, where $\varepsilon_t\sim\mathcal{N}(0,1)$ is independence of $z_t$ and $p_t$. 
Under $P^{\pi}_{\bm{v},t}(s_t|w_{<t})$, we thus have 
\begin{flalign*}
    \varphi_{\bm{v},i}(s_t|w_{<t})=&\exp\{-[y_t-\lambda_{\bm{v}^{\oplus i}}(z_t,p_t)]^2/2+[y_t-\lambda_{\bm{v}}(z_t,p_t)]^2/2\}-1
    \\=& \exp\left\{
[\lambda_{\bm{v}^{\oplus i}}(p_t,z_t)-\lambda_{\bm{v}}(p_t,z_t)][2\varepsilon_t+\lambda_{\bm v}(p_t,z_t)-\lambda_{\bm{v}^{\oplus i}}(p_t,z_t)]/2\right\}-1\\
=& \exp(-\mathbb{I}(M_t=i)\Delta^2(1-p_t)^2/2)\exp((-1)^{v_i}\mathbb{I}(M_t=i)\varepsilon_t\Delta(1-p_t)))-1.
\end{flalign*}
Recall that $z_t$ is deterministic on $M_t$.  By design of \eqref{lb_setting}, it holds that $\varphi_{\bm{v},i}(s_t|w_{<t})=0$ if $M_t\neq i$.  Therefore, claim (3) follows easily. 

As for the claims (1) and (2), by the law of iterated expectation, and the moment generating function of $\varepsilon_t$, we have that,
\[
    \mathbb{E}_{\bm{v},t}^{\pi}[ \varphi_{\bm{v},i}(s_t|w_{<t})|w_{<t}]= \mathbb{E}_{\bm{v},t}^{\pi}\{\mathbb{E}_{\bm{v},t}^{\pi}[ \varphi_{\bm{v},i}(s_t|w_{<t})|p_t,z_t,w_{<t}]|w_{<t}\}=0.
\]
In addition,  we have 
\begin{flalign*}
\begin{split}
   \mathbb{E}_{\bm{v},t}^{\pi}[ \varphi_{\bm{v},i}^2(s_t|w_{<t})|w_{<t}]=&\mathbb{E}_{\bm{v},t}^{\pi}[\exp(\mathbb{I}(M_t=i)\Delta^2(1-p_t)^2)|w_{<t}]-1\\\leq& 3 \mathbb{E}_{\bm{v},t}^{\pi}[\mathbb{I}(M_t=i)\Delta^2(1-p_t)^2|w_{<t}],
\end{split}
\end{flalign*}
where the  inequality holds since $\exp(x)-1\leq 3x$ for $0<x\leq 1.$ 
\end{proof}

\begin{proof}[\textbf{Proof of \Cref{thm:LDP_lb}}]
The proof is built on the results in Lemmas~\ref{lem_class} and \ref{lem_assouad2}. Recall that $\mathcal{A}^\epsilon=\bigcap_{i=1}^d \mathcal{A}_i^\epsilon$ with $\mathbb{P}(\mathcal{A}^\epsilon)\geq 1-5dT^{-2}$. By Lemma \ref{lem_class}, we have that,
 \begin{flalign*}
    \sup _{\boldsymbol{v}} \mathbb{E}_{\boldsymbol{v}}^{\pi, \psi}\sum_{t=1}^T \mathrm{regret}_t\geq&  \sup _{\boldsymbol{v}} \mathbb{E}_{\boldsymbol{v}}^{\pi, \psi}\sum_{t=1}^T \mathrm{regret}_t \mathbb{I}(\mathcal{A}^\epsilon) \\ \geq& \frac{1}{144}T \Delta^2 d^{-1}  \sup _{\boldsymbol{v}} \mathbb{E}_{\boldsymbol{v}}^{\pi, \psi}\left[\sum_{i=1}^d \mathbb{I}(\hat{v}_i\neq v_i,\mathcal{A}^\epsilon)\right] \\\geq& \frac{1}{144}T \Delta^2 d^{-1} \left\{  \sup _{\boldsymbol{v}}\mathbb{E}_{\boldsymbol{v}}^{\pi, \psi}\left[\sum_{i=1}^d \mathbb{I}(\hat{v}_i\neq v_i)\right]-5d^2T^{-2}\right\},
 \end{flalign*}
where the last inequality holds by noting that $\sum_{i=1}^d \mathbb{I}(\hat{v}_i\neq v_i,\mathcal{A}^\epsilon)=\sum_{i=1}^d \mathbb{I}(\hat{v}_i\neq v_i)-\sum_{i=1}^d \mathbb{I}(\hat{v}_i\neq v_i,(\mathcal{A}^{\epsilon})^c)$, and that $\sum_{i=1}^d \mathbb{I}(\hat{v}_i\neq v_i,(\mathcal{A}^{\epsilon})^c)\leq d \mathbb{I}((\mathcal{A}^{\epsilon})^c)$.

Using Lemma \ref{lem_assouad2},  we further have that
\begin{flalign}\label{eq:regret_case1privacy}
      \sup _{\boldsymbol{v}} \mathbb{E}_{\boldsymbol{v}}^{\pi}\sum_{t=1}^T \mathrm{regret}_t\geq& \frac{T\Delta^2}{144} \left\{\frac{1}{2} \left(1-\sqrt{\frac{21\Delta^2[\exp(2\epsilon)-1]^2}{d^2} \sum_{t=1}^T\mathbb{E}_{\bar{\bm v}}^{\pi}[(1-p_t)^2]}\right)-5dT^{-2}\right\}. 
\end{flalign}
On the other hand, by \eqref{eq:regret_case2}, we have 
\begin{flalign}\label{eq:regret_case2privacy}
 \sup _{\boldsymbol{v}} \mathbb{E}_{\bm{v}}^{\pi}\sum_{t=1}^T \mathrm{regret}_t \geq   \frac{1}{2} \sum_{t=1}^T\mathbb{E}_{\bar{\bm{v}}}^{\pi} [p_t-1]^2 - \frac{3}{2}\Delta^2 T.    
\end{flalign}

Set $\Delta^2=c\epsilon^{-1}\sqrt{d^2/T}$ with  $c=\sqrt{14}/672$ and denote $\eta=\sum_{t=1}^T\mathbb{E}_{\bar{\bm{v}}}^{\pi} [p_t-1]^2$. Suppose $\eta>6\Delta^2T$, we have by \eqref{eq:regret_case2privacy} that 
$$
\sup _{\boldsymbol{v}} \mathbb{E}_{\bm{v}}^{\pi}\sum_{t=1}^T \mathrm{regret}_t\geq (3c)/2\epsilon^{-1}d\sqrt{T}= \frac{\sqrt{14}}{448} \epsilon^{-1}d\sqrt{T}.
$$

Suppose otherwise, i.e.~$\eta\leq 6\Delta^2T$, then by \eqref{eq:regret_case1privacy}, we have 
\begin{flalign*}
   \sup _{\boldsymbol{v}} \mathbb{E}_{\boldsymbol{v}}^{\pi}\sum_{t=1}^T \mathrm{regret}_t\geq&\frac{T\Delta^2}{144}\left\{\frac{1}{2} \left(1-\sqrt{\frac{126(e^{2\epsilon}-1)^2\Delta^4T}{d^2}}\right)-5dT^{-2}\right\}\\=& \frac{c\epsilon^{-1}d\sqrt{T}}{144}\left\{\frac{1}{2} \left(1-\sqrt{126}c\epsilon^{-1}(e^{2\epsilon}-1)\right)-5dT^{-2}\right\}\\
   \geq& \frac{c\epsilon^{-1}d\sqrt{T}}{144}\left\{\frac{1}{2}(1-\frac{1}{2})-\frac{1}{8}\right\} = \frac{\sqrt{14}\epsilon^{-1}d\sqrt{T}}{774144},
\end{flalign*}
where the last inequality holds by noting $\epsilon^{-1}(e^{2\epsilon}-1)\leq 8$ for $\epsilon\in(0,1)$ and $dT^{-2}<1/40$. 

Here note we require  $T\geq [(1-u)^2\vee (1-l)^2]d^2\epsilon^{-2}/(48\times 672)$ to ensure that the condition in \Cref{lem_assouad2}.  
\end{proof}

\bibliographystyle{apalike}
\bibliography{reference}

\end{document}